\def\eqref#1{equation~\ref{#1}}
\def\1{\bm{1}}
\DeclareMathAlphabet{\mathsfit}{\encodingdefault}{\sfdefault}{m}{sl}
\SetMathAlphabet{\mathsfit}{bold}{\encodingdefault}{\sfdefault}{bx}{n}
\DeclareMathOperator*{\argmin}{arg\,min}
\NewDocumentCommand{\citepx}{m}{%
  (\citet{#1},~\citeyear{#1})%
}
\NewDocumentCommand{\citetx}{m}{%
  \citet{#1}~(\citeyear{#1})%
}
\theoremstyle{plain}
\newtheorem{theorem}{Theorem}[section]
\newtheorem{proposition}[theorem]{Proposition}
\newtheorem{lemma}[theorem]{Lemma}
\theoremstyle{definition}
\newtheorem{definition}[theorem]{Definition}
\newtheorem{assumption}[theorem]{Assumption}
\theoremstyle{remark}
\begin{document}
\begin{CJK*}{UTF8}{bkai}

\newgeometry{top=3cm,bottom=3cm,left=3cm,right=3cm}

\begin{titlepage}
  \thispagestyle{empty}
  \centering

  {\Large National Yang Ming Chiao Tung University\\
  Department of Computer Science\par}

  \vspace{2.5cm}

  {\LARGE\bfseries
  Playstyle and Artificial Intelligence:\\
  An Initial Blueprint Through the Lens of Video Games\par}

  \vspace{1.2cm}

  {\large
  Chiu-Chou Lin\\
  \texttt{dsobscure@outlook.com}\par}

  \vspace{0.6cm}
  {\large Hsinchu 30010, Taiwan\par}

  \vfill

  {\large PhD Dissertation\par}
  {\large July 2025\par}

  \vspace{1.2cm}
  \begin{minipage}{0.9\textwidth}
    \small
    \textit{Note.} This is a public version prepared for arXiv of the original PhD dissertation:  
    Lin, Chiu-Chou (2025). \textit{Playstyle and Artificial Intelligence: An Initial Blueprint Through the Lens of Video Games}.  
    PhD dissertation, National Yang Ming Chiao Tung University.  
    The official university submission may differ in formatting (e.g., watermark, committee/authorization pages).
    \end{minipage}

\end{titlepage}

\restoregeometry




\frontmatter
\pagenumbering{roman}
{\fontfamily{ptm}\selectfont

\chapter*{Abstract}
\addcontentsline{toc}{chapter}{Abstract} 

\normalsize 

Contemporary artificial intelligence (AI) development largely centers on rational decision-making, valued for its measurability and suitability for objective evaluation. Yet in real-world contexts, an intelligent agent’s decisions are shaped not only by logic but also by deeper influences such as beliefs, values, and preferences. The diversity of human decision-making styles emerges from these differences, highlighting that “style” is an essential but often overlooked dimension of intelligence.

This dissertation introduces \textit{playstyle} as an alternative lens for observing and analyzing the decision-making behavior of intelligent agents, and examines its foundational meaning and historical context from a philosophical perspective. By analyzing how beliefs and values drive intentions and actions, we construct a two-tier framework for style formation: the \textit{external interaction loop} with the environment and the \textit{internal cognitive loop} of deliberation. On this basis, we formalize style-related characteristics and propose measurable indicators such as \textit{style capacity}, \textit{style popularity}, and evolutionary dynamics.

The study focuses on three core research directions: (1) \textbf{Defining and measuring playstyle}, proposing a general playstyle metric based on discretized state spaces, and extending it to quantify strategic diversity and competitive balance; (2) \textbf{Expressing and generating playstyle}, exploring how reinforcement learning and imitation learning can be used to train agents exhibiting specific stylistic tendencies, and introducing a novel approach for human-like style learning and modeling; and (3) \textbf{Practical applications}, analyzing the potential of these techniques in domains such as game design and interactive entertainment.

Finally, the dissertation outlines future extensions, including the role of style as a core element in building artificial general intelligence (AGI). By investigating stylistic variation, we aim to rethink autonomy, value expression, and even offer a tangible perspective on the ultimate philosophical question: \textit{What is the soul?}

\textbf{Keywords:} Playstyle, Artificial Intelligence, Decision-Making Style, Video Game, Style Measurement, Balance and Diversity, Agent Behavior
\newpage



\addcontentsline{toc}{chapter}{Contents} \tableofcontents \newpage

\renewcommand{\numberline}[1]{Figure~#1\hspace*{1em}}
\addcontentsline{toc}{chapter}{List of Figures} \listoffigures \newpage

\renewcommand{\numberline}[1]{Table~#1\hspace*{1em}}
\addcontentsline{toc}{chapter}{List of Tables} \listoftables \newpage

\mainmatter
\pagenumbering{arabic} 



\chapter{Introduction}
\section{The Beginning of the Beginning}

Have you ever pondered the purpose or meaning of your life?
While such questions may
appear overwhelmingly grand or confined to philosophical speculation, they can be decomposed into more concrete forms:
“What do you want to do?” “What are you hoping will happen?”
These are not merely occasional reflections, but continuous and pervasive questions—consciously or unconsciously posed—as long as awareness persists. They lie at the heart of what we call intelligence: how we define intentionality, decision-making, and agency. 

This deep-rooted questioning reveals an essential structure within intelligence itself: the ability to make decisions.
Even in the absence of explicit deliberation, every action, intentional or incidental, reflects a choice shaped by cognition and awareness \citepx{russell2020aima}.
Decision-making, in this sense, underlies both the trivial and the profound aspects of existence.

Consider a trivial plan such as deciding to eat later in the day—it still reflects motivation, whether arising from hunger, habit, or implicit expectation.
Even a day spent doing “nothing in particular” is, in fact, a sequence of choices—active or passive—that collectively shape the trajectory of one’s life.
Ultimately, life itself can be seen as the cumulative outcome of all such decisions, conscious or unconscious.
Viewed this way, even random or passive acts become implicit answers to the most fundamental questions of existence.

If life is a series of choices, and if every choice implies a direction, a value, or a stance toward the world, then these actions inevitably raise a deeper question:
\textbf{What kind of world do we live in that makes such choices possible and meaningful?}
This line of reflection leads us into the domain of metaphysics, often regarded as the “first philosophy” \citepx{aristotle_metaphysics}.
Metaphysics seeks to understand the fundamental nature of reality, within which \textit{ontology} stands as a central branch:
the systematic investigation of what exists, how entities are categorized, and what it means \textit{to be}.
Ontology thus provides the conceptual foundation for understanding existence—not only what is, but also why and how entities act, interact, and persist.

At this juncture, two oft-cited aphorisms—emerging from classical and existentialist thought, come to mind:

\begin{displayquote}
\textit{“To be is to do.”}\footnote{Attributed to Socrates (c. 470–399 BCE); though not a direct quotation, the phrase captures the ethical, action-centered core of Socratic philosophy.}

\medskip

\textit{“To do is to be.”}\footnote{Attributed to Jean-Paul Sartre (1905–1980), reflecting the existentialist belief that existence is realized through action.}
\end{displayquote}

Though not canonical quotations, these phrases serve to highlight a central insight:
\textbf{being and doing are inseparable.}
To act is to affirm existence; to exist is, in some sense, to choose.

This returns us to the question of motivation, which lies at the intersection of ontology and agency:
\textbf{Why do we act?} \textbf{What compels one choice over another?}

In asking why we act, one may be tempted to ask further: “Why that reason?” “Why that desire?”
But such inquiry soon reaches a limit—a stopping point beyond which no further justification seems possible. In theological metaphysics—most notably in the writings of Thomas Aquinas—this recalls the concept of a “first cause”:
an unmoved mover that requires no prior explanation \citepx{aquinas_summa}. Likewise, human decision-making can often be traced back to irreducible foundations: needs, desires, and goals—
internal sources of motion that resist further reduction.

Even in mundane decisions—what to eat, where to go, what to pursue—each act reveals fragments of an underlying structure of intention.
These small moments echo the same questions we've traced through philosophy: What drives action? How is agency realized? What does it mean to choose?

Seen in this light, the study of playstyle and artificial intelligence becomes, at its core, an inquiry into decision-making, motivation, and the embodied expression of being through action.

Artificial intelligence, long envisioned as a means of emulating or extending human decision-making, is deeply rooted in the very questions explored above.
Early systems followed fixed instructions, but modern AI agents perceive, reason, and act—driven by internal objectives, shaped by environmental feedback, and refined through adaptation \citepx{russell2020aima}.
More recently, even these objectives are no longer treated as given.
Increasingly, AI systems are tasked not merely with choosing actions, but with inferring goals and aligning with human values \citepx{russell2022value_alignment}—an evolution that reopens the most fundamental philosophical questions about intention, agency, and moral reasoning.

Thus, this dissertation begins from these conceptual roots and expands outward.
While models, algorithms, and empirical validations are indispensable, the core of this inquiry lies in four enduring yet under-defined concepts: \textbf{style}, \textbf{diversity}, \textbf{balance}, and \textbf{creativity}.

Ultimately, I hope this journey offers new perspectives on the timeless philosophical questions,\footnote{Popularized by Paul Gauguin’s 1897 painting \textit{D'où venons-nous ? Que sommes-nous ? Où allons-nous ?}}

\begin{displayquote}
\textit{Who am I?}
\textit{Where did I come from?}
\textit{Where am I going?}
\end{displayquote}

—and perhaps, through the study of decision-making styles and reflective agency in artificial forms, we may offer insight toward what might be considered an understanding of the soul.

From this perspective, playstyle is not merely an external behavioral pattern.
Rather, it is a reflective projection of the internal architecture of beliefs, values, and intentions that constitute an intelligent agent.

\section{Belief, Value, and the Self}

In our ongoing effort to trace the origins of behavior, motivation, and intentionality, we eventually encounter a conceptual boundary—a point beyond which further questioning seems impossible.
This epistemic limit recurs across disciplines: in mathematics, as axioms accepted without proof; in physics, as the problem of initial conditions or the origin of the universe; in ethology, as the layered hierarchy of behavioral causation \citepx{tinbergen1963aims}; and in theology or metaphysics, as the notion of a “first cause.”

Each of these fields arrives, in its own way, at something foundational: a point of origin that halts infinite regress and enables structured explanation.
In the case of intelligent agents, I propose that such a point corresponds to \textbf{belief}: a foundational and irreducible premise that underlies decision-making.
Belief acts as the internal starting point—something an agent accepts as true or useful—which permits action to begin without further justification.

Mathematics offers a compelling analogy. To avoid logical paradoxes such as Russell’s Paradox, formal systems like the Peano axioms define natural numbers based on a finite set of unprovable yet accepted assumptions \citepx{peano_axioms}. These axioms—such as a defined base element and a rule of succession—are not adopted because they are provable, but because they furnish a necessary and practical foundation for reasoning.

Yet even the most rigorous axiom systems have intrinsic limits. Gödel’s incompleteness theorems demonstrate that any sufficiently expressive formal system inevitably contains true statements that cannot be proven within the system itself \citepx{godel1931}. In other words, some truths always lie beyond formal justification; all reasoning ultimately rests on unprovable foundations—assumptions that are not derived, but posited.

This necessity of unprovable assumptions is not unique to mathematics. In cognitive terms, it directly parallels the role of belief in intelligent behavior. Just as formal systems require axioms, intelligent agents require beliefs—fundamental commitments that arrest the regress of justification and enable coherent action within bounded resources. In practice, such foundational beliefs need not be rational, justified, or even correct; it is their presence—not their correctness—that permits action to begin.

Beliefs serve as repositories of pre-validated cognitive units from which decisions can be rapidly drawn.
For instance, when deciding what to eat for dinner, individuals typically select from familiar options without extensive deliberation, relying on beliefs shaped by past experience.

To formalize this notion in artificial systems, knowledge representation offers a useful analogy \citepx{russell2020aima}.
Structured encodings of belief allow machines to explicitly store, retrieve, and reason about information—an operational proxy for human-like belief systems.

This formal approach is not new. Early symbolic AI systems employed propositional and first-order logic to encode knowledge as declarative sentences, enabling logical manipulation and inference \citepx{russell2020aima}.
This tradition reflects a long-standing intuition: that intelligent behavior emerges from the structured processing of beliefs.

Despite its formal rigor, symbolic AI faces a critical challenge: the construction of knowledge structures has historically depended on intensive human effort.
Expert-built knowledge bases often fail to align with the richness of lived human experience or the fluid complexity of real-world dynamics.
In contrast, machine learning systems autonomously extract patterns from data—whereas symbolic AI lacked scalable mechanisms to acquire and adapt beliefs.

Thus, while knowledge representation provided a formal framework, it fell short in addressing how beliefs are formed and adapted.
Recognizing this limitation is essential: beliefs are not static entities, but dynamic constructs that evolve through continuous interaction with the environment.
The central challenge, then, is not merely how to represent beliefs, but how to build systems capable of continually updating and refining them in response to real-world feedback.

This difficulty is further compounded by the limitations of language itself as a representational medium.
While language enables the explicit encoding of knowledge, it also imposes constraints—ambiguities, contextual dependencies, and semantic variability across individuals.
As Wittgenstein later argued in his \textit{Philosophical Investigations} \citepx{wittgenstein1953philosophical}, meaning is not fixed by syntactic form alone, but arises from use within particular “language games.”
The same expression can carry different meanings depending on the speaker, context, or shared background assumptions.
In this light, symbolic knowledge systems that rely on rigid logical structures may fail to capture the fluid, situated, and socially grounded nature of belief and meaning.

To address this challenge, I propose that the interrelation among belief, value, and the self can be visualized as a dynamic feedback loop, as shown in Figure~\ref{fig:belief_value_self}.

\begin{figure}[htbp]
\centering
\includegraphics[width=\textwidth]{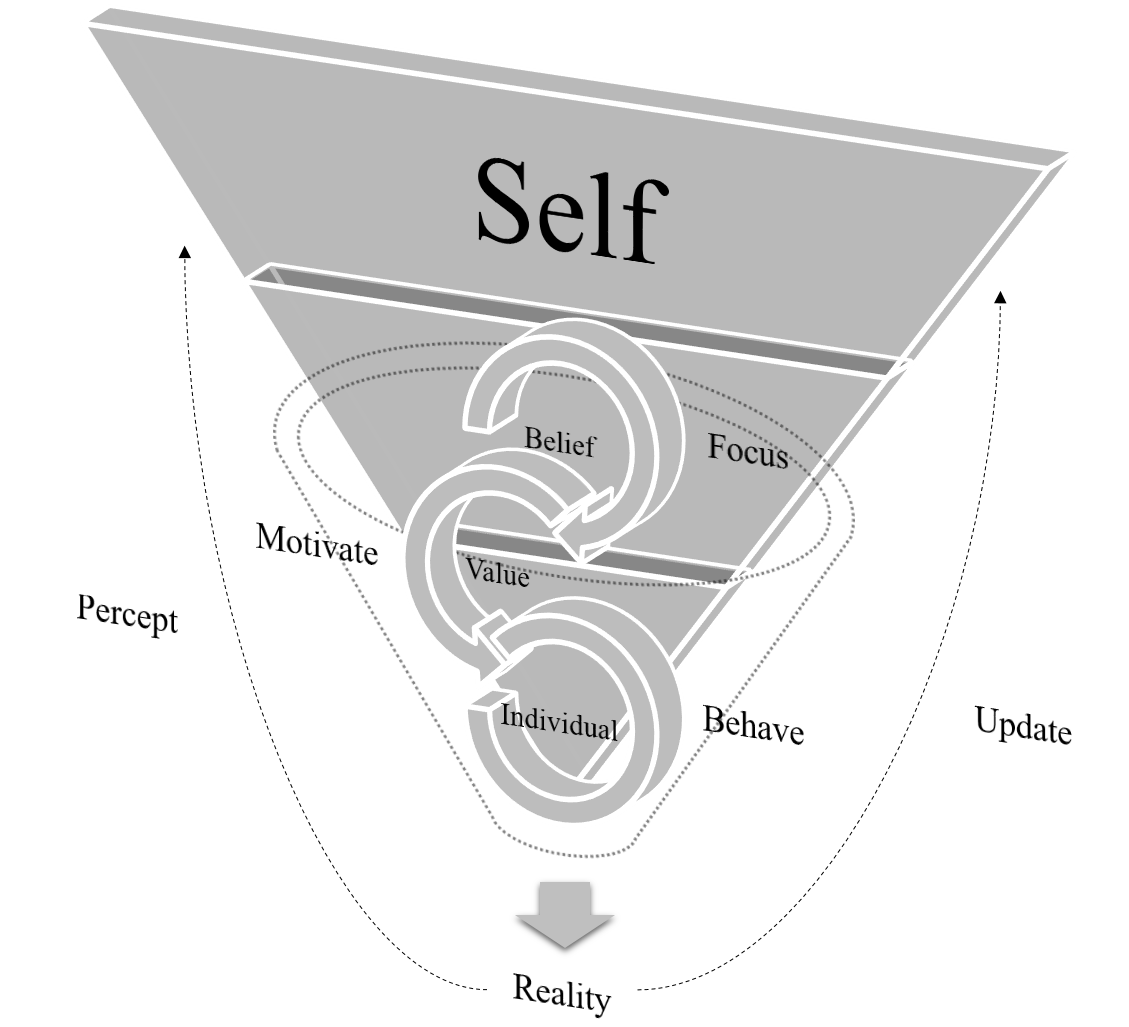}
\caption[The Dynamic Interplay of Belief, Value, and the Self]{This diagram illustrates the internal structure and outward expression of an intelligent agent.
At its core, the \textit{Self} comprises a layered architecture of \textbf{Belief}, \textbf{Value}, and \textbf{Individual}, through which decisions are motivated and behaviors are enacted.
These components form a dynamic feedback loop: beliefs inform values, values shape the agent’s individual actions, and behaviors interact with \textbf{Reality}.
Perception and feedback from the environment guide the continual updating of beliefs, sustaining the evolution of the self.
Arrows labeled \textit{Focus}, \textit{Behave}, \textit{Update}, \textit{Percept}, and \textit{Motivate} trace the ongoing cycle through which the agent engages with the world.
}
\label{fig:belief_value_self}
\end{figure}

Building on the belief–decision connection, I argue that intelligence fundamentally seeks to realize value—and that value, in essence, serves to validate belief. From this perspective, intelligent existence may be understood as a continuous process of belief validation through value generation.

Consider an academic example: novice researchers often emphasize technical novelty or the quantity of results. Yet experienced reviewers tend to ask a deeper question: What problem are you trying to solve? This concern reflects an implicit search for motivation—not as an afterthought, but as a manifestation of belief. Without a clearly stated motivation, technical merit appears adrift. Belief, expressed through motivation, anchors the value of a work.

Upon closer examination, value is not simply the result of solving problems. Though problem-solving appears to produce value, I suggest the causal flow is reversed: belief comes first, value arises from its validation, and problems are constructed as instruments for that validation.

In this framework, beliefs act not as externally justified premises, but as generative anchors of agency. They need not be rational, provable, or even accurate. Their functional role is to initiate behavior. Value emerges as the outcome of successful validation, while problems serve as structured tests through which belief manifests and is refined.

Beyond isolated instances of value, intelligent agents operate within broader value systems—structured hierarchies of evaluative priorities that guide decisions over time. These systems determine which beliefs to validate, how to resolve conflicts among them, and ultimately shape the agent’s behavioral tendencies. Divergence in value systems gives rise to differing preferences and behaviors, even under similar external conditions.

This also clarifies why value cannot be reduced to objective metrics such as money. Currency derives its worth from collective belief in the authority that issues it. In hyperinflationary contexts such as Zimbabwe, money lost its meaning the moment public belief collapsed.

Value, therefore, rests upon belief. Something is deemed worthwhile insofar as it is believed to resolve a need or fulfill a desired purpose.

This leads to a critical realization: the belief–value dynamic forms a self-reinforcing loop, with belief as both its origin and regulator. At the center of this loop lies the concept of the self.

I propose that the self is best understood as the internal system of beliefs maintained by an intelligent agent. It is not a passive repository, but a dynamic structure in which beliefs are formed, revised, and activated to guide behavior. Variations in these internal belief–value structures give rise to distinct decision-making styles, accounting for behavioral diversity across both human and artificial agents.

Clarifying the interdependence among belief, value, and the self thus provides a conceptual foundation for this dissertation. The following chapters revisit this triad across the domains of behavior, style, learning, and creativity—probing not only how beliefs guide intelligent action, but also how artificial systems might acquire a meaningful sense of self through the continual process of belief validation.

As Martin, Sugarman, and Hickinbottom emphasize, understanding selfhood requires more than modeling internal beliefs—it demands attention to the interpretive, purposive, and moral dimensions of agency \citepx{martin2009persons}.
Their critique of reductive psychological models highlights the danger of equating intelligence with computational logic alone.
In contrast, this dissertation approaches the construction of an artificial self not merely as a mechanistic aggregation of functions, but as an evolving center of belief-validated value—a structure through which action becomes meaningful and identity is sustained.

This view implicitly rejects Spinoza’s vision of freedom, which lies not in purposive agency, but in the rational acceptance of necessity \citepx{spinoza1677ethics}.
For Spinoza, to be free is not to construct a coherent self or to pursue value-driven goals, but to align with the deterministic unfolding of nature.
Yet if we aim to model agents capable of choice, learning, and self-directed growth, a value-centered conception of agency becomes not only useful, but necessary.

\section{Intention and Behavior}

Building upon the previous discussion, one might ask:
\textbf{If an intelligent agent possesses a structured set of beliefs and continually refines them through interaction with the world, is this sufficient to account for intelligent behavior?}
Or is there something missing—an internal source of initiative that cannot be reduced to environmental feedback alone?

This missing element is what we call \textbf{agency}.
Without it, an agent would merely react to external stimuli, passively adjusting its internal state without originating any independent action or direction.
The belief–value–self loop, while powerful, risks becoming a closed adaptive mechanism unless complemented by a principle of internally generated intention.

This distinction between reactivity and agency is central to Daniel Dennett’s theory of the \textbf{intentional stance}, which proposes that intelligent behavior is best understood not as a chain of stimulus-response reactions, but as the expression of internal beliefs, desires, and intentions \citepx{dennett1987intentional}.
Similarly, Andy Clark emphasizes that real intelligence involves \textbf{proactive world-modeling} and selective attention to affordances—not merely passive processing of input \citepx{clark2015surfing}.
Both thinkers argue that agency entails the ability to formulate internal perspectives, simulate possibilities, and direct behavior accordingly—not just react to the present.

In contrast, early behaviorist models such as Pavlov’s classical conditioning \citepx{pavlov1927conditioned} and Thorndike’s law of effect \citepx{thorndike1898animal} conceived behavior purely in terms of stimulus-response pairings and reinforcement.
These frameworks offered powerful tools for studying observable behavior but left no room for internal deliberation, competing goals, or autonomous intention.
As a result, they fail to capture the kind of self-originated activity that characterizes agents capable of learning not just what is rewarded, but what is meaningful.

This brings us to a deeper philosophical question:
\textbf{Is intelligence merely reactive, or must it inherently involve self-originated intention?}
In other words, can we explain intelligent action purely through environmental adaptation, or must we also account for the agent's internal source of purpose?

The classical view, most famously expressed by Ren\'e Descartes in his proposition \textit{Cogito, ergo sum} ("I think, therefore I am") \citepx{descartes_meditations}, asserts that conscious thought—thinking that originates from within—is the foundation of existence.
For Descartes, the self is not passive but initiatory: to exist is to begin from a thinking subject.
In contrast, Thomas Hobbes conceived human behavior as fundamentally reactive, governed by mechanistic principles of self-preservation and external causation \citepx{hobbes1651leviathan}.
In this view, intention is not a generative force but a byproduct of environmental necessity.

Edmund Husserl, extending and refining the Cartesian tradition of Descartes, proposed that \textbf{consciousness is inherently intentional}—that it is always directed toward some object, goal, or meaning \citepx{husserl_ideas}.
This idea shifts the emphasis from mere thinking to \textbf{directedness}: a conscious agent is not just aware, but aims, chooses, and orients toward something.
From this perspective, intentionality is not an optional enhancement of intelligence—it is its defining structure.

To operationalize this insight in the context of artificial agents, I propose that belief systems function not as a single reactive loop, but as two interacting cycles:
\begin{itemize}
\item An \textbf{external loop}, where beliefs are updated through perception, action, and feedback from the environment; and
\item An \textbf{internal loop}, where beliefs interact, evaluate, and regulate one another—even in the absence of external input.
\end{itemize}

These loops jointly constitute the agent’s capacity for intention.
The external loop enables learning from the world, while the internal loop enables directedness toward goals and the selection of action—even before or without external prompting.

The external loop governs how agents learn from their environment: they perceive stimuli, choose actions, receive feedback, and adapt their beliefs accordingly.
Yet this perception is not direct access to reality. In standard models of artificial agents, the environment is only partially observable: agents receive incomplete, noisy, and often ambiguous signals.
To act meaningfully, an agent must construct what we may call a \textbf{subjective perceived state}—a filtered internal representation of the world, grounded in limited observation and structured by prior beliefs \citepx{russell2020aima}.

This framework mirrors a long-standing philosophical tradition.
Empiricists such as \citetx{locke1689essay} and \citetx{berkeley1710treatise} argued that all knowledge originates from experience, not from innate structures.
In contrast, \citetx{kant1781critique} proposed that experience itself is \textbf{co-constructed} by sensory input and a priori cognitive frameworks—such as space, time, and causality—through which we interpret the world.
From this view, agents do not perceive the world “as it is,” but always \textbf{through a lens} structured by prior categories of understanding.

Leibniz offered a complementary picture: each agent is a \textbf{monad}, a self-contained center of perception that reflects the universe from its own internal logic \citepx{leibniz1714monadology}.
Though abstract and metaphysical, this notion resonates with modern agent-based AI systems: each agent maintains a \textbf{private, bounded, and structured perspective} on the environment, updating it continuously but never accessing totality.

In this light, action selection is not based on objective knowledge, but on interpretation, shaped by perception, prior belief, cognitive bias, and uncertainty.
The environment, in turn, evolves in response to the agent’s actions, providing feedback that enables further belief revision.
This cycle of perception, belief update, and feedback constitutes the agent’s \textbf{external loop}: the interactive process through which the world becomes knowable and actionable.

While external interaction drives belief formation, \textbf{the internal loop governs belief activation}.
Beliefs are not all used simultaneously. In practice, agents face competing goals, limited resources, and ambiguous priorities.
When multiple beliefs conflict or cannot be jointly acted upon, agents must resolve tension through \textbf{context-sensitive selection}.
This process of prioritization gives rise to \textbf{intention}: the commitment to one course of action over others, based on internal criteria.

This idea was foreshadowed by William James, who proposed that volition emerges not from singular willpower, but from an \textbf{internal competition of impulses}, with attention functioning like a spotlight that selects among them \citepx{james1890principles}.
In this view, action is not merely the product of dominant stimuli, but the outcome of \textbf{internal deliberation} across coexisting tendencies.
Jerome Bruner deepened this idea by emphasizing that meaning is not discovered, but constructed—through the \textbf{selective activation of interpretive frameworks} shaped by cultural and experiential contexts \citepx{bruner1990acts}.

These ideas are reflected in psychological models such as Maslow’s hierarchy of needs \citepx{maslow1943}, where motivations are structured into layers and selectively activated based on situational salience.
Although Buck later critiqued its linear rigidity \citepx{buck1988human}, the essential insight remains: \textbf{motivational systems are dynamic}, and behavior emerges from real-time modulation of competing internal drivers.

Rather than analyzing agents by their full belief system, it is often more informative to focus on \textbf{localized decision events}—each defined by a specific subset of beliefs activated by the present context.
This view aligns with Bernard Baars’ \textbf{global workspace theory}, which portrays consciousness as a spotlight: only a limited portion of mental content becomes globally accessible, enabling deliberate action \citepx{baars1988cognitive}.
In this sense, intention arises from a structured selection process—not a single belief, but a network of prioritized beliefs activated to guide behavior.

Each such decision-making instance can be analyzed through three interrelated components:
\begin{enumerate}
\item \textbf{Intention}: the internal motivation and preference that initiate the deliberation process.
\item \textbf{Expected Behavior}: the action an agent anticipates it will take, given its current beliefs and internal models.
\item \textbf{Actual Behavior}: the executed action and its observable impact in the environment.
\end{enumerate}

This triadic structure captures the essential gap between intention and realization: it allows us to analyze not only what the agent did, but what it aimed to do—and how belief and context shaped the difference. This alignment between intention and action becomes especially critical in environments with uncertainty, resource constraints, or conflicting preferences, where agents must choose among trade-offs.
As \citetx{ellul1990reason} warns, a civilization that pursues technical efficiency and observable outcomes while neglecting alignment with deeper intention risks falling into \textbf{futility}—what the Book of Ecclesiastes calls \textit{hevel} (often translated as “vanity”)—a state not born of inaction, but of action disconnected from purpose. Such disconnection may yield prolific behavior yet leave no enduring meaning, as the driving force behind action becomes divorced from what truly matters.

Such an event-level decomposition forms the analytical basis for modeling \textbf{playstyle}.
Since playstyle is not defined by a single action, but by \textbf{patterns} of decision-making, we infer it by aggregating many such instances over time.
By observing which beliefs are repeatedly activated, which behaviors are preferred or avoided, and how intentions align (or misalign) with outcomes, we can reconstruct an agent’s \textbf{macro-level profile of stylistic tendencies}.

Yet this approach reveals a critical challenge: the \textbf{opacity of preference orderings}.
Unlike binary propositions, preferences are not fixed logical entities—they are constructed through the interaction of multiple beliefs, goals, and situational constraints.
A single behavior may be consistent with multiple possible preference structures, making it difficult—even impossible—to unambiguously reconstruct internal priorities from external observations.

This difficulty is not merely technical, but philosophical.
As \citetx{davidson1963actions} argued, reasons for action function both as explanations and as justifications—but they are never fully transparent to outside observers.
\citetx{nagel1986nowhere} deepened this insight by distinguishing between the internal, subjective frame of reasons and the external, objective stance he termed “the view from nowhere.”
From this perspective, intention can never be fully inferred from behavior alone; it remains partially hidden within the structure of selfhood.

This brings us to a broader question: what does it mean to possess a self?
Rather than viewing the self as a fixed set of traits or responses, contemporary psychological theorists such as \citetx{martin2009persons} argue that selfhood consists in a dynamic system of values, goals, and interpretive frameworks.
These cannot be reduced to mere neurobiological or behavioral patterns, but must be understood as evolving structures of meaning and agency.

In summary, this section offers a key refinement:
the style of an intelligent agent is not a static reflection of its belief system, but a \textbf{dynamic pattern of selective expression}—shaped by intention, governed by internal and external constraints, and embedded within a fluid architecture of value.
Understanding this requires attending to the agent’s dual-loop structure:
one loop of interaction with the world, and another of inward deliberation and belief regulation.

\textbf{We will return to this dual-loop model in Chapter~5, where it will be formalized into a general framework for modeling playstyle and agent individuality.}

\section{From Philosophy to AI: Bridging Foundations}

By this point in our discussion, it should be clear that many seemingly abstract philosophical issues are deeply entangled with the foundations of artificial intelligence. Indeed, whenever we attempt to step outside familiar assumptions and re-examine the very nature of AI, we find ourselves inevitably drawn into philosophical and cognitive scientific territory. These questions are not remote or esoteric; rather, they form the implicit scaffolding beneath the algorithms, architectures, and benchmarks that define AI practice—often unnoticed, yet profoundly shaping its development.

Much of contemporary AI research focuses on what AI can do and how well it performs specific tasks. However, when we revisit the classical fourfold categorization of AI, as presented in standard textbooks \citepx{russell2020aima}, we immediately encounter the deeper complexities at play. Revisiting these categories also raises a teleological question: What are AI systems ultimately for? What kind of intelligence do we wish to cultivate?

The four traditional definitions of AI are:
\begin{itemize}
\item \textbf{Thinking Humanly}: Emulating human-like thinking processes.
\item \textbf{Thinking Rationally}: Reasoning in a logically consistent manner.
\item \textbf{Acting Humanly}: Behaving in ways indistinguishable from human actions.
\item \textbf{Acting Rationally}: Choosing actions that achieve the best outcomes based on given goals and information.
\end{itemize}

Among these, \textbf{acting rationally} is the most straightforward to formalize mathematically, through tools such as decision theory, optimization methods, and reinforcement learning frameworks. \textbf{Acting humanly}, while more complex, still allows for empirical modeling based on observable behavior. A classic milestone in this direction is the Turing Test \citepx{Turing1950}, which evaluates intelligence based on whether a machine’s behavior is indistinguishable from that of a human in conversational interaction.

The categories involving “thinking” are far more contentious. \textbf{Thinking rationally} requires logical coherence and internal consistency—yet real-world human cognition is often shaped by heuristic shortcuts, cognitive biases, and bounded rationality.
A historical example of this category appears in the Soviet scientific programme on AI \citepx{kirtchik2023soviet}, which framed artificial intelligence primarily as an extended control tool. From this view, a machine cannot truly “think”; any reasoning it performs is ultimately the product of the human designer’s own thinking, embedded into the system’s rules and procedures. Such a stance aligns with \textbf{thinking rationally}, as the emphasis is on executing human-specified logical and control processes whose correctness derives from formal reasoning. It is not \textbf{acting rationally}, because the system is not expected to autonomously determine optimal actions from goals and observations; rather, it follows a pre-defined reasoning structure, prioritizing interpretability and controllability over autonomous performance. While this approach was coherent in the era of expert systems and symbolic search—where reasoning steps were explicit and traceable—it becomes harder to sustain in modern AI, where statistical and deep learning models often operate as opaque black boxes, beyond full explanation even by their creators.

\textbf{Thinking humanly}, on the other hand, plunges deeper into philosophical controversy, as our scientific understanding of human cognition—spanning psychology, neuroscience, and cognitive science—remains incomplete and fragmented. It invokes enduring debates about the mind-body problem, the nature of intentionality, and the explanatory gap between physical substrates and subjective experience. Key issues such as consciousness, intuition, motivation, and subjective experience continue to resist unified modeling approaches.

If we interpret "thinking humanly" as the ambition to simulate human cognitive architecture, we observe two principal trajectories of research:
\begin{itemize}
\item \textbf{Cognitive modeling}: Constructing symbolic reasoning systems and abstract knowledge structures that mirror conceptual processes.
\item \textbf{Biological modeling}: Emulating the functional properties of the brain, most prominently via artificial neural networks.
\end{itemize}

Both directions ultimately converge on the same fundamental question: \textit{What is the true nature of human intelligence?} Understanding playstyle—our central focus in this work—is deeply entangled with this question.

Accordingly, this dissertation is titled \textit{Playstyle and Artificial Intelligence: An Initial Blueprint Through the Lens of Video Games} not merely to explore how AI can define, measure, and utilize playstyle as a behavioral pattern, but to advance a deeper claim: \textbf{playstyle, as a manifestation of decision-making style, bridges the structural basis of human intelligence with the generative mechanisms of artificial intelligence.}

Throughout our investigation, we will:
\begin{itemize}
\item Analyze the underlying beliefs, values, and motivations that drive human decision-making;
\item Examine how AI systems can learn, imitate, and express distinctive playstyles;
\item Explore the emergence, plasticity, and creativity of playstyle across different contexts and tasks.
\end{itemize}

Choosing video games as our primary domain of observation is both deliberate and significant. Games—particularly those that afford strategic diversity and creative freedom—offer a uniquely fertile environment for the expression of human intelligence. Within these constructed environments, players operate with relative autonomy, often free from real-world institutional or social constraints, allowing their internal decision-making tendencies and stylistic preferences to manifest with clarity.

Thus, video games serve as a natural laboratory for examining the logical structures, expressive variations, and stylistic patterns of intelligent behavior. By leveraging AI tools to model, interpret, and predict these variations, we aim to deepen not only our understanding of playstyle, but also our broader grasp of decision-making, learning, creativity, and the evolving interplay between human and artificial intelligence.

\section{Behavior, Goal, and Style}

Before proceeding to a concrete exploration of playstyle, it is essential to articulate several foundational questions—spanning philosophy, cognitive science, and artificial intelligence—that shape the conceptual scaffolding of this dissertation. These questions form the thematic thread that weaves through the chapters that follow.

\textbf{First, what is a goal?}
Building on the reasoning developed earlier, we may interpret a goal as the expected outcome associated with validating a localized subset of the self—i.e., a belief configuration that motivates action.
From an engineering perspective, goals can be operationalized as optimization targets (e.g., in reinforcement learning), preference models (e.g., in inverse reinforcement learning), or structured constraints in planning. Later chapters will explore whether such formalizations can preserve both cognitive grounding and computational tractability, and how they may be communicated or transferred between agents.

\textbf{Second, if a goal can be defined, can it be taught or learned by artificial agents?}
Beyond pursuing predefined tasks, an agent could potentially internalize goal structures that encode preference hierarchies or motivational priors.
Possible methods include preference learning, reward modeling, and goal-conditioned policies, which would allow style to serve as a medium for transmitting cognitive architectures or value systems across agents.

\textbf{Third, if agents can hold multiple goals, how should they be reconciled?}
When goals coexist, the key is to understand their underlying \textbf{preference structure}—how goals are prioritized, weighted, or conditioned on context.
Preferences may be stable or dynamic, explicit or implicit, and can be inferred from observed behavior or embedded in the agent’s design.
From a computational perspective, this can be modeled through multi-objective optimization, preference ranking, or context-sensitive weighting.
Later chapters will examine how multi-objective trade-offs are addressed in \textbf{human-like decision-making}, and how differences in preference structure can shape \textbf{game strategy} and stylistic patterns.

\textbf{Fourth, what is the relationship between goals and style?}
Goals primarily describe the intention and the final outcome of a decision, whereas style, while considering the outcome, also attends to the process by which the outcome is achieved.
In this sense, style can be seen as a more general construct than a goal—one that encompasses both the end state and the path taken.

\textbf{Finally, what is style itself?}
As introduced in Chapter\~2, style is defined as the mapping from an environment in which there exist two or more recognized decision strategies or utility functions, to their corresponding outcomes.
This definition captures style as a structural property of decision spaces, enabling both measurement and comparison across agents.

Though these questions may appear distinct, they are deeply interwoven. Together, they form the central thread of this dissertation. In the chapters that follow, we will engage them systematically—beginning with philosophical inquiry, and progressing toward computational methodologies for modeling, measuring, and applying playstyle in intelligent systems.

\section{Organization Preview}

In the preceding sections, we have established a conceptual foundation for understanding playstyle as a form of intelligent decision-making—deeply intertwined with belief structures, intention, selfhood, and the architecture of artificial agents. The remainder of this book develops a systematic blueprint for modeling, measuring, and applying playstyle, moving from conceptual analysis to technical realization.

The dissertation is organized into four main parts, each addressing a distinct dimension of the problem:

\bigskip

\noindent \textbf{Part I: Playstyle — Concepts and Blueprint}\\
We begin by examining style in daily life, strategic behavior, and human cognition. Playstyle is introduced as a concrete form of decision-making style, and we propose an analytical framework to categorize and differentiate styles across domains. This part also clarifies how playstyle relates to, and differs from, phenomena such as artistic style or generative style in AI systems. Serving as the philosophical and structural basis, it equips readers with the key conceptual tools for the rest of the book.

\bigskip

\noindent \textbf{Part II: Essence of Playstyle}\\
Here we formally define and quantify playstyle using methods ranging from heuristic rules to data-driven modeling. We develop metrics for assessing stylistic difference, agent diversity, and behavioral stability. We also examine the rationality and dynamics of multiple playstyles, analyzing diversity and balance not only as theoretical properties but also as practical concerns in game design and strategic systems.

\bigskip

\noindent \textbf{Part III: Expression of Playstyle}\\
This part explores how playstyle can be learned, imitated, and creatively generated by artificial agents. We review decision-making systems—including deep learning and reinforcement learning architectures—highlighting how stylistic individuality may emerge from different learning paradigms. Special attention is given to imitation learning and the boundaries of stylistic innovation, bridging human playstyle and machine behavior.

\bigskip

\noindent \textbf{Part IV: Applications and Future}\\
We conclude by examining the real-world applications of playstyle analysis: in games (e.g., player modeling, matchmaking, personalization), in social systems (e.g., interaction patterns, recommendation engines), and in broader user experience (UX) contexts. The final chapter returns to foundational questions: Can playstyle serve as a structural pathway toward general intelligence—or even as a proxy for selfhood or soul in artificial agents?

\bigskip

Taken together, these four parts aim not only to construct a methodology for analyzing playstyle, but also to initiate a broader dialogue across artificial intelligence, cognitive science, and philosophy. Through the lens of playstyle, we revisit enduring questions: What constitutes intelligence? How is individuality expressed? And what might it mean to build agents with preferences, values, and recognizable behavioral identity? In doing so, each part progressively transforms these questions into formal models, measurable quantities, and practical approaches.

\part{Playstyle: Concepts and Blueprint}
\chapter{Understanding Style}
\noindent\textbf{Key Question of the Chapter:} \
\textit{What exactly is style?}

\noindent\textbf{Brief Answer:}  
In this dissertation, we take the position that \textbf{style arises when multiple viable options exist, and the distinguishable differences among these options—together with their effects on outcomes—constitute the corresponding style}. This framing emphasizes that style is meaningful only in the presence of choice, and that it can be understood both through the available alternatives and the consequences they produce.

\section{Styles in Daily Life}

We live in a world saturated with styles—patterns of expression that shape how we see, act, and relate to others. From the visual elegance of Gothic cathedrals \citepx{von2020gothic} and the expressive boldness of Fauvism \citepx{millard1976fauvism}, to the rhythmic complexity of jazz music and the structured discipline of classical music \citepx{burkholder2019history, caplin1998classical}, stylistic diversity defines our cultural and aesthetic experiences. Styles also manifest in dynamic activities: in sports, individualistic playing styles contrast sharply with team-oriented tactics, such as those observed in basketball \citepx{oliver2004basketball} and football strategies \citepx{wilson2013inverting}. Even in the mundane aspects of daily life—such as cuisine, clothing, hairstyle, makeup, architecture, and modes of transport—stylistic variation is omnipresent \citepx{civitello2011cuisine, young2019art, steele2015berg}.

Fashion and color choices, in particular, offer salient examples of everyday style. In clothing, individuals continuously navigate between expressing individuality and conforming to social norms \citepx{simmel1957fashion, steele2015berg}. Fashion styles not only embody aesthetic preferences but also serve as markers of identity, affiliation, and aspiration. They operate simultaneously as personal signals and social codes—linking subjective preference with shared cultural meaning. Likewise, color selection—whether in apparel, design, or environment—plays a central role in stylistic expression and emotional experience \citepx{itten1973art, valdez1994effects}.

These examples reveal a generalizable property: \textbf{style exists when multiple feasible options are available, and when these options lead to discernible differences in expression or outcome.} This applies equally to tangible artifacts like architecture and to intangible patterns like performance tempo or color composition.

In some domains, styles are clearly delineated through formal criteria. Gothic architecture, for instance, is characterized by pointed arches, flying buttresses, and verticality—reflecting an orderly medieval conception of space \citepx{von2020gothic}. In contrast, styles in other areas may be highly subjective and open to interpretation. Modern literary forms such as magical realism and minimalism illustrate how distinct aesthetic and narrative choices can coexist, with readers perceiving different stylistic cues from the same work.

Styles may be mutually exclusive, as seen in the contrast between classical symphonic music and the aggressive energy of heavy metal \citepx{weinstein2000heavy, walser2015running}. Yet they can also hybridize, creating new forms—such as the fusion of rock and hip-hop into rap-rock \citepx{watkins2005hip, miller1981rolling}.
From a decision-making perspective, such exclusivity and hybridity correspond to different structural relationships between strategies—either selecting one path at the exclusion of others, or blending elements from multiple strategies into a coherent whole.

While the acknowledgment of diverse styles is almost inevitable, a deeper question arises: \textbf{What is the value of style?} Is style merely a proliferation of options, or does it fulfill more fundamental functional, emotional, or cultural roles? Historical patterns suggest that style cannot persist merely through novelty. To endure, it must resonate—anchored in underlying beliefs or shared meaning.

The case of Fauvism is illustrative. Despite initial rejection for its vivid colors and simplified forms, its radical experimentation later proved deeply influential in the evolution of modern art \citepx{millard1976fauvism, hellmanzik2009artistic}. Such examples suggest that behind every lasting style lies a \textbf{belief}—a conviction that this particular form of expression is meaningful, valuable, or in some sense “right.”

In the following sections, we will explore the significance and necessity of style, proposing a hierarchical classification to organize different types of style and understand their relationship to decision-making contexts. Discussions of style innovation will be reserved for Chapter\~10, after a solid foundation on definition and manifestation has been laid.

\section{Why Do Styles Exist?}

Given our intuitive recognition of stylistic diversity in daily life, a fundamental question naturally arises: Why do we need styles, and why are they worth valuing?

A simple way to approach this is through negation: What would the world look like if styles did not exist? Style presupposes that multiple acceptable options are available—if every problem had only a single optimal solution, style would be irrelevant. In other words, \textbf{style emerges only when a situation admits more than one viable path to success}.

Once such alternatives exist, choice becomes possible, and those choices are filtered through an agent’s \textbf{beliefs and preferences}. A key driver is \textbf{perceived usability}, which combines functional adequacy with subjective satisfaction \citepx{norman2013design}. From this perspective, perceived usability is a projection of belief: a solution is deemed usable because it fits the agent’s expectations, needs, or values.

Belief-based validation need not rely on formal proof. Choices can stem from rational calculation, emotional resonance, cultural identification, or ingrained habit. As discussed in Chapter~1, \textbf{belief serves as a stopping condition for deliberation}: once a belief sufficiently supports a choice, action proceeds without further scrutiny.

Over time, repeated belief-driven selections shape what we recognize as \textbf{individual personality}. Different constellations of beliefs and values naturally yield different choices, and the accumulation of such differences manifests as stylistic divergence. This is evident across domains—from artistic schools and culinary traditions to martial arts and fashion—where styles endure not just through novelty but through their symbolic, emotional, or philosophical alignment with their adherents’ values.

Thus, \textbf{style is not arbitrary}; it represents each agent’s implicit answer to the deeper question: \textit{What is worth choosing?} In aesthetics, \citetx{goodman1978ways} framed stylistic systems as tools of “worldmaking,” shaping how meaning and reality are constructed. In sociology, \citetx{bourdieu1984distinction} showed that styles operate as markers of symbolic capital, embedding preferences within social structures \citepx{bennett2009culture}.

These insights lead directly to the next step: if styles arise from belief-driven choices among multiple valid options, how can we systematically describe their relationships? In the next section, we introduce two key dimensions—\textit{capacity} and \textit{popularity}—as structural measures for mapping and analyzing the space of styles.

\section{Dimensions and Types of Styles}

To understand and compare different styles more concretely, it is useful to represent stylistic variation within a conceptual space. Here, I propose two principal dimensions that shape the structure of this space: \textbf{Capacity} and \textbf{Popularity}. These dimensions allow us to position styles not as isolated categories, but as locations within a continuous space, offering a geometric intuition for their relationships and evolution.

The first dimension, \textbf{Capacity}, reflects the expressive richness or internal flexibility of a style. A high-capacity style permits a wide range of behaviors, configurations, or interpretations under its umbrella, while a low-capacity style enforces tighter restrictions. For instance, a general strategic archetype in a game such as “rush” or “control” may allow for diverse implementations, whereas a highly specific team composition or weapon build limits player decisions to a narrow path. From a set-theoretic perspective, styles can be seen as defined by constraint sets, and greater capacity corresponds to looser or fewer constraints.

The second dimension, \textbf{Popularity}, refers to how widely a style is recognized, adopted, or practiced by agents in a given environment. Styles with high popularity are broadly accepted and frequently used, while low-popularity styles remain niche or emergent. In multiplayer games, for example, a champion or strategy frequently picked in ranked matches reflects high popularity, whereas off-meta builds or unconventional tactics occupy the lower end of this axis. While popularity may seem straightforward to quantify by user counts or pick rates, it often reflects deeper social, historical, or cultural dynamics, and can vary significantly across player communities.

These two dimensions—\textit{how much expression a style permits} and \textit{how much a style is used}—together define a 2D space in which we can analyze and visualize the distribution of playstyles. This is not a ranking of superiority, but a structural mapping of stylistic variation.

Notably, this 2D structure aligns with the dual-loop framework proposed earlier. \textbf{Capacity} captures the extent to which a playstyle can manifest distinct observable outcomes in the external loop—i.e., whether differences in style are empirically detectable through interaction. In contrast, \textbf{Popularity} reflects how observers recognize or gravitate toward a playstyle, rooted in the internal loop of subjective interpretation and belief-driven salience.

Together, these two orthogonal dimensions are sufficient to partition the space of styles into meaningful quadrants, providing a compact yet expressive tool for comparative analysis. While more complex taxonomies may be possible, this framework captures a key philosophical and functional distinction: the difference between styles that \textit{are} different, and those that \textit{feel} different. This justifies why a two-dimensional formulation is not arbitrary, but grounded in the fundamental duality of perception and expression.

With these two dimensions established, we can begin to systematically compare different styles not in terms of superiority, but through their structural properties and functional roles. For example, when evaluating two gameplay archetypes—such as a highly flexible support role versus a narrowly defined burst-damage build—we can ask: Which allows more stylistic variation (capacity)? Which is more frequently adopted in real play (popularity)?

To further analyze such comparisons, we can interpret styles as constraint-defined subsets within a broader behavior space. Here, tools from set theory provide useful formal intuitions: a style that introduces additional restrictions over another (e.g., limited weapon types, fixed formation rules) can be seen as a proper subset of the more general style, and thus possesses lower capacity. This allows us to infer relative expressiveness without needing to quantify absolute style space volume.

Popularity, by contrast, is often more straightforward to approximate—via pick rates, user preferences, or adoption curves—but may conceal complex internal diversity. Two styles with equal player counts may differ drastically in geographic distribution, skill level, or subcultural context. These comparative ambiguities are addressed in later chapters on style balancing and population diversity.

\begin{figure}[ht]
\centering
\includegraphics[width=\textwidth]{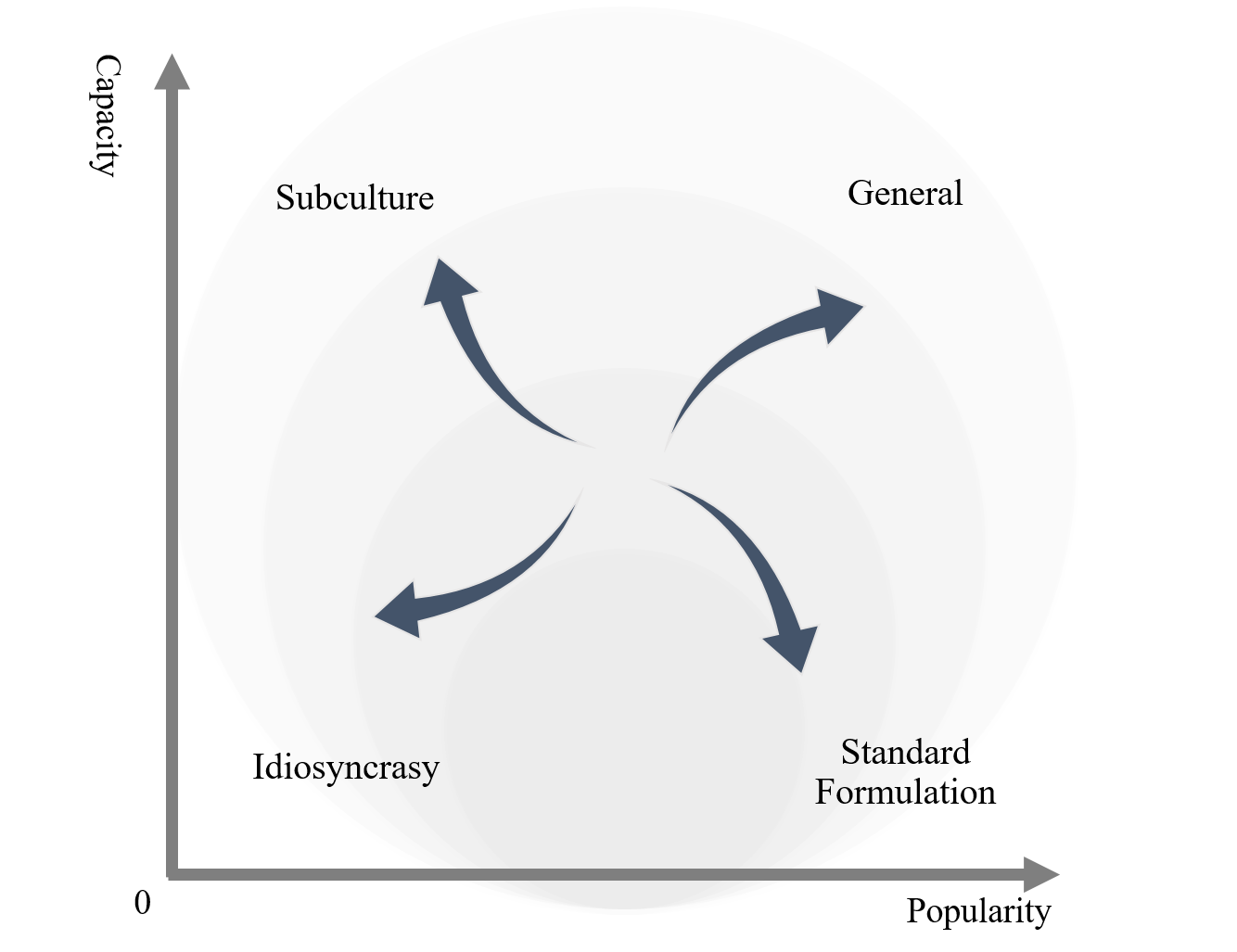}
\caption[Style Dimensions and Types]{A conceptual style map based on Capacity and Popularity. Each vector denotes a stylistic shift or transformation path.}
\label{fig:capacity_popularity}
\end{figure}

\begin{itemize}
    \item \textbf{General} (high capacity, high popularity): These styles are highly expressive and widely accepted. They offer broad flexibility, allowing players to pursue diverse strategies or expressions within a widely recognized framework. \\
    \textit{Example:} Minecraft \\
Minecraft is an open-world sandbox game where players build, explore, survive, and create freely. Its design accommodates a vast range of playstyles—creative, competitive, exploratory, or narrative-driven—making it a paradigm of stylistic openness with broad cultural acceptance.

    \item \textbf{Subculture} (high capacity, low popularity): These styles allow deep strategic expression but are mainly appreciated within niche communities. They often require domain-specific knowledge or preferences to be appreciated. \\
    \textit{Example:} Turn-based Strategy Games (e.g., Fire Emblem, Into the Breach)\\
	These games emphasize spatial planning, turn sequencing, and unit synergy. Players adopt distinct tactical preferences (e.g., sacrificial attacks, defensive walls, hit-and-run tactics), but the style tends to be less accessible to casual audiences due to its cognitive complexity and pacing.

    \item \textbf{Idiosyncrasy} (low capacity, low popularity): These styles are highly constrained and rarely adopted. They reflect extreme or unusual preferences, often requiring high skill, patience, or a taste for unconventional challenges. \\
    \textit{Example:} I Wanna Be The Guy\\
	This cult-classic platformer is infamous for its punishing design, unpredictable traps, and frequent restarts. It appeals to a small community that embraces pain as part of the experience. The resulting style—relentless trial-and-error perseverance—is unique but largely inaccessible to most players.

    \item \textbf{Standard Formulation} (low capacity, high popularity): These styles are simple, stable, and widely used. Despite their limited expressive space, they gain popularity through standardization, efficiency, or cultural convention.\\
    \textit{Example:} Sudoku\\
	Sudoku is a logic-based number puzzle with fixed rules and minimal stylistic room. Its predictability and clear evaluative criteria make it a universal form of structured reasoning, valued more for its clarity than personal expression.
\end{itemize}

This classification framework does not imply value judgment. Each region of the style space has functional significance—some styles act as scaffolds for creativity, others as stabilizers of coordination. Understanding where a style lies helps clarify its affordances, social dynamics, and evolutionary potential.

\textit{Crucially, this conceptual framing is not merely philosophical—it provides the foundation for a computational approach to style.} In Chapter~\ref{ch:discrete_measurement}, we show how \textbf{capacity} and \textbf{popularity} can be integrated into learning objectives to measure and generate stylistic diversity. This reinforces our central claim: style is not just an aesthetic phenomenon, but a structurally analyzable and technically actionable construct.

\section{Styles in Generative Applications}

Having established a conceptual framework for understanding style in terms of \textbf{capacity} and \textbf{popularity}, we now extend this analysis to styles that emerge not from human agents, but from artificial intelligence systems. Can such \textit{machine-generated styles} be interpreted using the same dimensions? Do these styles merely replicate patterns from training data, or can they reflect deeper structures—preferences, constraints, or even a primitive form of intention?

To explore these questions, we introduce the notion of a \textbf{style boundary}: a conceptual region that defines the range of variation a system considers stylistically valid. This boundary is not a fixed formula, but a flexible zone of plausible outputs—shaped by training distributions, architectural priors, and emergent dynamics.

\medskip

Early generative models such as Variational Autoencoders (VAEs) \citepx{vae} aimed to reconstruct inputs via latent feature spaces. Though their sampling enabled limited interpolation, their outputs remained largely tethered to known data distributions. Generative Adversarial Networks (GANs) \citepx{gan}, in contrast, introduced a competitive learning dynamic in which a generator produces outputs to fool a discriminator, which in turn learns to distinguish real from fake \citepx{conditional_gan, wgan}.

We reinterpret GANs as implicitly learning a \textbf{style boundary}—a flexible frontier of plausibility in high-dimensional space. Any sample within this frontier, even if novel, is treated as stylistically coherent. This reframing allows us to analyze not only what AI systems generate, but \textit{where} in the style space their generations reside.

Later models such as InfoGAN and StyleGAN \citepx{info_gan, style_gan} expanded this boundary through disentangled latent factors and hierarchical control, enabling fine-grained exploration of stylistic variation. Users gained greater access to distinct stylistic dimensions, suggesting that style boundaries are not only learned, but modular and navigable.

Style transfer and domain translation models like UNIT \citepx{unit} and CycleGAN \citepx{cycle_gan} can be interpreted as learning \textbf{mappings between style boundaries}, even in the absence of paired data. These mappings suggest that boundaries are transformable and that style space itself possesses a latent geometry amenable to structured traversal.

Diffusion models \citepx{diffusion} offer yet another perspective: by iteratively denoising random inputs, they trace smooth trajectories through style space—preserving local coherence while allowing for flexible movement. Models such as Stable Diffusion \citepx{stable_diffusion} demonstrate how prompt-based conditioning can steer generation along culturally meaningful axes of style.

Large-scale pre-trained models like GPT \citepx{gpt1, gpt3, rlhf}, DALL-E \citepx{dalle1, dalle3}, and Midjourney also operate within learned stylistic boundaries shaped by massive corpora of human-authored data. Their outputs align with culturally accepted patterns, suggesting they navigate high-popularity regions of style space. Cross-modal interfaces like CLIP \citepx{clip} and Imagen \citepx{imagen} further allow users to probe and traverse these spaces through language.

This leads to a central insight: the \textbf{capacity} of a generative system corresponds to the breadth and granularity of its style boundary, while \textbf{popularity} reflects how the system’s outputs resonate socially or culturally. For example, “Ghibli-style” visuals have gained traction not merely due to model capability, but because of communal amplification—a feedback loop between generative capacity and social reception.

In contrast, rule-based \textbf{Procedural Content Generation (PCG)} \citepx{pcg} represents a more rigid form of stylistic design. Here, boundaries are manually engineered through templates and constraints, offering precision but limited expressive emergence. PCG, in this view, is \textit{explicit boundary crafting}, whereas learning-based models engage in \textit{implicit boundary discovery}.

\medskip

Taken together, these developments validate our analytic framework. The \textbf{style boundary} concretizes capacity as the expressive range of valid outputs, while popularity reflects patterns of cultural uptake. This unified lens allows us to interpret generative systems not as mere statistical mimics, but as active participants in the construction, negotiation, and circulation of style.

\medskip

Yet as generative systems exhibit growing diversity and novelty, deeper questions arise: Can machines possess stylistic preferences? Might their outputs reflect not only consistency, but \textbf{agent-like identity}? What constitutes \textit{intention} in such systems, and at what point do we attribute “style” not just to outputs, but to the models themselves?

These philosophical and technical questions will return in later chapters—particularly in discussions of playstyle measurement, imitation, and creative agency. For now, the capacity-popularity-style boundary framework provides our bridge between human stylistic expression and the emergent behaviors of generative AI.

\chapter{Decision-Making with Style}
\noindent\textbf{Key Question of the Chapter:} \
\textit{What exactly is the style of decision-making?}

\noindent\textbf{Brief Answer:}  
In a given decision-making environment dynamic, different preferences correspond to different utility functions, thereby shaping the outcomes of decision-making.
When these preferences remain consistent throughout the decision process—meaning that, from the agent’s perspective in each encountered state, the chosen action continues to align with its own evaluation—this consistency of decision-making constitutes a decision-making style.
In Chapter~5, we will formalize this property mathematically.

\section{Goals and Decision Outcomes}

Building on the previous chapter's set-theoretic framework for understanding style, we now turn our attention to a specific class of styles—those that arise in the process of decision-making. Unlike the vivid and immediately perceptible styles of painting, music, or vocal performance, decision-making styles are often subtle and indirect. They do not reside in observable forms, but instead manifest through the choices an agent makes and the resulting outcomes over time.

At the core of this perspective is the recognition that every style must ultimately relate to some \textbf{goal or problem}. That is, style does not emerge in a vacuum; it arises from variation in how agents approach decision problems. Given that different agents hold different preferences, beliefs, or heuristics, there will naturally be multiple recognizable ways to respond to the same challenge—thus giving rise to stylistic variation.

Formally, a decision-making scenario involves an agent selecting an action given a particular state. At any given moment, the only variable the agent can directly control is its choice. While the state may encode traces of the agent’s previous decisions (and hence its style), it is itself fixed at the moment of decision. In sequential decision problems, the agent can only influence future states indirectly through its actions.

This leads to a foundational question: \textit{What is the objective of decision-making?} In artificial intelligence, this is typically modeled through the maximization of a \textbf{utility function}—a mathematical representation of the agent's preferences over possible outcomes \citepx{russell2020aima}. However, utility functions are often unknown, nontrivial to define, or computationally expensive to evaluate in practice. In many domains—such as search, planning, or game-playing—agents instead rely on an \textbf{evaluation function}, which approximates the utility of intermediate states and guides decision-making under limited resources.

Designing effective evaluation functions is both critical and challenging. Even if a search or planning algorithm is theoretically optimal, it cannot outperform the quality of the signal it receives from its evaluation function. This issue is starkly illustrated by the historical contrast between AI performance in chess and Go.

In chess, several well-defined and quantifiable features—such as material advantage, positional control, and piece activity—enabled the construction of relatively accurate handcrafted evaluation functions. Coupled with high-speed search algorithms like alpha-beta pruning, these evaluations allowed systems such as Deep Blue to achieve superhuman performance, culminating in its victory over the world champion in 1997 \citepx{campbell2002deep}.

By contrast, the game of Go posed a far greater challenge. The high branching factor, long-term dependencies, and abstract value of territory made it extremely difficult to hand-engineer evaluation functions that could reliably assess board positions. For decades, even amateur-level human players easily outperformed Go programs.

This gap was only bridged with the advent of \textbf{Monte Carlo Tree Search (MCTS)} \citepx{mcts}, a paradigm shift in game AI. Rather than relying on explicit evaluation formulas, MCTS estimates state values through randomized playouts—simulations that project the likely outcomes of different actions. This approach was further refined by the \textbf{Upper Confidence Bound applied to Trees (UCT)} algorithm \citepx{uct}, which integrated exploration–exploitation tradeoffs from multi-armed bandit theory. These innovations enabled systems to evaluate decisions not by static heuristics, but by dynamic statistical rollouts—even in domains with deep, abstract structure such as Go \citepx{uct_go}.

Parallel to these advances, researchers also sought to quantify decision-making styles in Go using empirical methods. \citetx{go_mm_elo}, for instance, proposed learning Elo-like ratings over local move patterns, effectively modeling decision effectiveness as a function of stylistic motifs. This bridged the divide between handcrafted expert features and data-driven learning, enabling models to interpret style as both descriptive and predictive.

A major breakthrough came with \textbf{AlphaGo} \citepx{alpha_go}, which integrated deep neural networks into both move selection and value estimation. Trained on expert human games and coupled with MCTS, AlphaGo surpassed the strongest human players. Its successor, \textbf{AlphaGo Zero} \citepx{alpha_go_zero}, dispensed with human data altogether—learning purely from self-play and reinforcement learning, it constructed its own evaluation function from scratch and discovered superhuman strategies inaccessible to human intuition.

These developments reveal a crucial truth: the \textbf{utility function}—or more precisely, its implicit estimation—is central to intelligent behavior. Yet in most real-world domains beyond games, agents do not have access to well-defined reward structures or clear victory conditions. How, then, can agents construct evaluation functions aligned with meaningful goals?

Here, the conceptual groundwork laid in earlier chapters offers guidance. We argued that value stems from belief, and that decision-making is the process of validating beliefs through behavior. From this perspective, \textbf{preferences} emerge as ordered beliefs—structured commitments to one outcome over another. These preferences form the latent substrate from which an internal evaluation function can be inferred.

Indeed, recent developments in AI have begun to operationalize this view. \textbf{Reinforcement Learning from Human Feedback (RLHF)} \citepx{rlhf}, for example, allows agents to learn by observing ranked preferences or trajectory comparisons provided by human evaluators. Rather than relying on hand-specified rewards, agents align their behavior with human value structures—effectively learning an evaluation function from social feedback.

In this light, decision-making style is not just a pattern of choice—it is a signature of internal structure. It encodes what an agent values, how it estimates success, and which trade-offs it is willing to accept. To analyze style is to reverse-engineer these implicit architectures.

In the sections that follow, we discuss this notion and explore how decision-making styles can be modeled, measured, and learned—particularly within artificial agents. Through this, we aim to show that style is more than behavior: it is a window into the agent’s evaluative core, a trace of belief expressed through choice.

\section{Playstyle as a Decision Style}

In this work, I refer to decision-making style as \textit{Playstyle}, not merely because the discussion is situated within the domain of video games, but because the term \textit{Play} emphasizes intentionality, agency, and the freedom to choose. This framing directs our attention to an agent’s active engagement with alternatives, situating playstyle as a reflection of how decisions are formed and expressed within interactive environments.

To clarify this focus, consider a video game context: if changes in background music, graphical aesthetics, or atmospheric design do not influence a player’s actual choices, then these elements—while contributing to the overall experience—do not constitute playstyle \citepx{playstyle_distance}. Our analysis centers strictly on those features that shape the decision-making process itself.

This perspective aligns with foundational thinking in game design. In \textit{The Art of Game Design} \citepx{art_of_game_design}, games are described as deeply versatile media, capable of encompassing nearly every form of human activity. From a broader philosophical lens, one might even adopt the view that reality resembles a structured simulation \citepx{bostrom2003are}, in which case understanding playstyle becomes synonymous with understanding how agents act, choose, and express values in any structured decision environment.

\medskip

\noindent\textbf{Examples of Playstyle in Action.}
Consider a few canonical examples:

\begin{itemize}
    \item In chess: Should I begin with a pawn to e4 or a knight to f3? How do I adapt after the opponent's first move?
    \item In Rock-Paper-Scissors: Based on past plays, what should I throw next to outguess my opponent?
    \item In poker: Should I raise, call, or fold, given my hand and the betting context?
\end{itemize}

In each case, multiple viable decisions are available. The \textit{Playstyle} reflects how the agent navigates this decision space, exhibiting tendencies such as aggression, caution, unpredictability, or pattern exploitation.

\medskip

\noindent\textbf{Distinguishing Playstyle from General Style.}
While this decision-centric definition is intuitive, it risks being overly inclusive. Without constraints, even a generative image model that chooses pixel values one-by-one might be seen as exhibiting playstyle. To avoid such dilution, we introduce an additional criterion: \textbf{Playstyle must involve interaction with an external entity}—an environment, another agent, or a structured system capable of providing feedback.

That is, the agent must:
\begin{enumerate}
    \item make a decision, and
    \item receive a response from a system not fully under its control.
\end{enumerate}

The agent’s behavior is thus evaluated externally; the meaning and consequence of its decision are interpreted by the world. This separates playstyle from purely internal or generative styles, which may reflect aesthetic or structural preferences, but lack environmental consequence or feedback.

\medskip

\noindent\textbf{Core Definition.}
We may now define:
\begin{quote}
\textit{Playstyle is the decision-making style of an agent, expressed through its interaction with a responsive environment, where multiple viable choices exist and external consequences provide structure and meaning to the agent's actions.}
\end{quote}

\noindent\textbf{Consistency and Evolution.}  
In this work, we treat the agent as the principal of playstyle formation. Internally, the agent defines its \textbf{beliefs} and \textbf{preferences}, which together shape its decision-making process. We make a working assumption that when an agent exhibits a given playstyle, there is \textit{temporal consistency}—that is, each decision the agent makes is one it would still endorse if it could re-evaluate the same situation from its current state of knowledge.

In reality, only the agent itself can know whether this consistency is upheld. Playstyle can also be \textit{dynamically evolving}: as the agent expands its cognition or acquires new information, it may adjust its beliefs and preferences. As long as it continues to endorse its past decisions—even when viewed through its updated perspective—we may still consider this to be the same playstyle.

This property directly influences how playstyles evolve. When an agent attempts to repudiate previously held beliefs or preferences, it breaks this consistency, often incurring a \textbf{psychological cost} in changing style. In human behavior, this cost is observable in the cognitive rigidity of belief revision: younger individuals tend to adapt more easily because they have fewer entrenched decisions to overturn, whereas older individuals may find it harder to accept changes that imply rejecting long-trusted values.

\medskip
Thus, playstyle captures not only the agent’s expressive pattern in interactive decision-making, but also the \textit{continuity or rupture} in how its belief–preference structure is maintained or revised over time.

\section{Importance and Applications}

Given that a playstyle reflects an agent’s belief–preference structure and its consistent expression across decision points, a natural question follows: \textit{Why does playstyle matter?} Beyond surface-level variation, what is its practical and philosophical significance?

\subsection{In Games: Expression, Diversity, and Engagement}

Consider the realm of competitive games—such as esports strategy, character builds in role-playing games (RPGs), or tactics in real-time combat. Imagine a game in which a universally optimal solution is known and strictly dominates all others. In such a setting, meaningful decision-making vanishes. The validation of internal beliefs through action collapses, and the space for expressing individual values and preferences disappears \citepx{player_modeling}. Even when players retain agency in name, the game reduces to mere execution.

A canonical example is unrestricted Gomoku. Once the perfect black-piece winning strategy is discovered, the game may still hold initial interest as players test and verify its logic. But over time, play devolves into repetition—showcasing only brute-force precision or memorization. The opportunity for exploration, stylistic divergence, or personal innovation fades. In such cases, the act of choosing ceases to be expressive—decisions no longer reflect beliefs, only compliance.  
By contrast, games such as chess or Go, while theoretically solvable, remain intractable for human computation, preserving vast spaces for stylistic difference. These domains allow strategic preferences to flourish even under optimal-play theory.

Games that support multiple viable playstyles offer a richer and more enduring experience. These environments empower agents—human or artificial—to validate internal values through behavior. One could argue that the very act of play becomes meaningful only when such expressive space exists. This view aligns with research emphasizing that behavioral diversity enhances exploration, adaptability, and engagement \citepx{diayn, policy_similarity_metric, trajectory_diversity}.

\subsection{In Game Development and AI Systems}

Game designers have long recognized the importance of stylistic variation. From the earliest console generations, a proliferation of genres—platformers, strategy, stealth, simulation—emerged to satisfy diverse player preferences \citepx{game_develop_essential}. Over time, technologies such as dynamic AI difficulty, adaptive NPC behavior, and flexible matchmaking further embraced the value of stylistic variation \citepx{art_of_game_design}.

Crucially, the ability to offer distinct playstyles supports not only engagement but also identity formation. Players are drawn to signature ways of playing—not simply for efficiency, but to reflect who they are. This diversity underpins a global industry approaching 200 billion USD in valuation—making it comparable in size to the film or music sectors \citepx{gaming_industry_report_2025}. These practical implications will be revisited in Chapter~7 and Chapter~11, where playstyle modeling supports personalization, player profiling, and balanced game ecosystems.

\subsection{In Broader Decision-Making Contexts}

Beyond gaming, the importance of playstyle extends to general AI. As discussed in Chapter~1, classical paradigms like “acting rationally” and “acting humanly” can be reframed through the lens of playstyle. When agents face multiple valid decisions under uncertainty, the pattern of their choices reflects underlying beliefs, heuristics, and values—that is, their decision-making style.

In reinforcement learning, stylistic diversity across policies is increasingly recognized as essential for generalization, robustness, and emergent capabilities \citepx{gdi, diayn, agent57, lbc}. Multi-agent systems benefit from stylistic variety to promote exploration and adaptive synergy. RL frameworks like \textbf{Reinforcement Learning from Human Feedback (RLHF)} \citepx{rlhf} rely precisely on subtle preference modeling—capturing the playstyle of human evaluators as a guiding signal for alignment.

Moreover, playstyle modeling is central to human-centered AI. In recommendation engines, dynamic pricing, adaptive interfaces, and personalized content generation, systems must infer and respond to individual behavioral tendencies. Modeling playstyle enables systems to resonate with human users—not just functionally, but meaningfully.

Emerging fields such as behavioral stylometry provide additional validation. For instance, researchers have shown that patterns in chess moves can reveal cognitive traits and even psychological dispositions \citepx{chess_style}. Such approaches can be naturally connected with the \textbf{capacity} and \textbf{popularity} framework introduced in Chapter~2, extending style analysis into non-game domains while preserving its structural interpretability.

\subsection{Conclusion}

In summary, playstyle is not merely an aesthetic label—it is a functional interface between belief, behavior, and identity. It reflects how agents, human or artificial, express value systems through interactive decision-making. By modeling how decisions encode structure, preference, and trade-off resolution, we gain a powerful tool for understanding cognition, enhancing interaction, and building systems that are not only intelligent, but individually expressive.

\chapter{Blueprint of Playstyle}
\noindent\textbf{Key Question of the Chapter:} \\
\textit{What are the key issues in playstyle worth exploring?} \\
\textbf{Brief Answer:} Playstyle research can be organized into three mutually reinforcing components—Definition and Measurement, Expression and Innovation, and Applications and Future—forming a blueprint for systematic investigation.

\section{An Overview of the Playstyle Components}

\begin{figure}[htbp]
    \centering
    \includegraphics[width=0.6\textwidth]{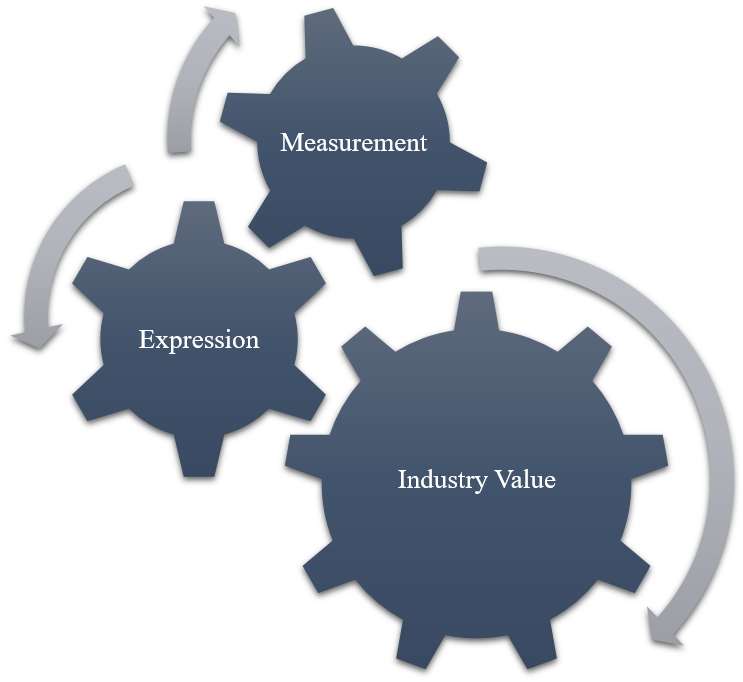}
    \caption[The interplay between the three key components of playstyle]{The interplay between the three key components of playstyle: Measurement, Expression, and Industry Value. Measurement enables Expression; Expression expands applications toward Industry Value; Industry Value, in turn, raises new questions that refine Measurement.}
    \label{fig:playstyle_framework}
\end{figure}

Before advancing further into the study of playstyle, it is essential to clarify the core components that structure this field and render it suitable for systematic, scientific investigation. This chapter introduces a conceptual \textbf{blueprint} that divides the inquiry into three tightly interconnected components:
\begin{enumerate}
    \item \textbf{Definition, Measurement, and Rationality}
    \item \textbf{Expression and Innovation through AI}
    \item \textbf{Applications and Future Outlook}
\end{enumerate}

\noindent Each component serves a distinct role, yet all mutually reinforce one another in shaping a unified research framework.

First, we establish the foundations of \textbf{definition and measurement}. While previous chapters approached playstyle from philosophical and intuitive perspectives, such approaches lack the precision required for scientific progress and technical application. This part develops formal tools to define and quantify style, enabling rigorous analysis. On this basis, key concerns such as diversity, balance, and rational decision-making naturally emerge. Later, in Chapter~\ref{ch:discrete_measurement}, we demonstrate how abstract concepts such as \textit{capacity} and \textit{popularity} can be embedded into loss functions for style-sensitive models. This concretely shows that our conceptual framework is not merely theoretical—it translates directly into practical modeling strategies.

Second, we turn to \textbf{expression and innovation}, focusing on how playstyle can be manifested and extended by intelligent systems, particularly AI agents. This includes not only reproducing known styles but also generating novel stylistic behaviors. The discussion naturally connects to deeper questions about creativity: What enables the emergence of new styles? How can we distinguish between variation and innovation? While the primary focus is on machine behavior, the analysis remains grounded in human analogs to better understand the boundaries of artificial creativity.

Third, we examine the \textbf{practical value and industrial applications} of playstyle. We begin with video games—the most immediate and intuitive domain—exploring how style modeling informs player behavior, AI design, community dynamics, and streaming culture. We then expand the discussion to other areas where preferences, styles, and individuality matter: recommendation systems, personalized content generation, adaptive pricing, and beyond.

These three pillars are mutually reinforcing. Robust definitions and measurement techniques are prerequisites for identifying and evaluating stylistic patterns. Without them, expressive modeling becomes arbitrary, and innovation lacks grounding. Likewise, industrial application provides a reality check, ensuring that our methods retain practical relevance and social utility. The feedback from real-world deployments also raises new research questions that loop back into refining measurement and theory.

In sum, this chapter provides a structural overview of the rest of this book. It functions as a bridge between the philosophical foundations of Chapter~1 and the formal, technical investigations to come. By organizing the study of playstyle around measurement, expression, and application, we aim to offer not only a coherent research agenda, but also a general framework for advancing this emerging field.

\section{Foundations of Definition, Measurement, and Rationality}

The first conceptual gear in the blueprint of playstyle research concerns the formalization of \textbf{definition}, the development of methods for \textbf{measurement}, and the articulation of a coherent framework for \textbf{rational evaluation}.

We begin with an intuitive premise: humans can effortlessly recognize stylistic differences—in art, music, gameplay, or behavior. This intuitive capacity forms the entry point for a more rigorous investigation. In this part, we examine how such recognition can extend beyond surface features to more latent, decision-based playstyles. We also draw a clear distinction between stylistic variation in generative outputs and playstyle as a form of intentional, interactive decision-making.

Next, we present a structured overview of existing methods for measuring playstyle. These range from heuristic, rule-based approaches to advanced data-driven models, culminating in a discrete-state modeling method introduced in this work. Each approach is analyzed based on its assumptions, expressive power, and applicability. By organizing these techniques into distinct categories, we aim to map the methodological landscape and clarify how playstyle can be operationalized across domains.

Building on these measurement foundations, we turn to two central but often vaguely defined concepts: \textbf{diversity} and \textbf{balance}. While both terms are widely used in game design and AI, they are rarely accompanied by formal definitions or actionable metrics. Here, we offer precise formulations: diversity is defined as the range of viable stylistic expressions within a system; balance is defined as the rational comparability and coexistence of such styles in a shared environment. These formulations support both theoretical analysis and practical control of style dynamics in competitive or collaborative systems.

Taken together, this part lays the critical conceptual and computational groundwork for what follows. Without clear definitions, the existence of playstyle cannot be verified; without measurement, it cannot be analyzed or improved; and without rational criteria, style comparisons become arbitrary. These foundations enable the study of playstyle as a scientific and design-oriented discipline, bridging philosophy, machine learning, and interactive systems.

\section{Foundations of AI Expression and Innovation}

Once the definition and measurement of playstyle have been established, the next logical step is to examine how playstyle can be expressed, manipulated, and extended—particularly through artificial intelligence.

We begin by revisiting the foundations of decision-making in AI. An intelligent system must first identify its goals and develop strategies to achieve them. The chapter explores the evolution of deep learning and reinforcement learning not merely as a chronological sequence of techniques, but as a reflection of the shifting motivations and design constraints that have shaped the field. Rather than presenting a catalog of methods, we focus on the broader trends that have guided the transition from passive models to adaptive agents, from perception to decision, and from task completion to style-sensitive behavior.

On this foundation, we analyze how AI systems can replicate human playstyles—whether through statistical modeling, trajectory imitation, or latent style inference from expert demonstrations. This naturally leads to the question of \textbf{human-likeness}: Can AI behave not only effectively, but with intention, expressiveness, or even personality? When does imitation become identity?

More provocatively, we explore whether AI can move beyond imitation to exhibit genuine \textbf{creativity}. While innovation is traditionally associated with human agents, we argue that creativity need not be limited to biological systems. Under appropriate conditions—namely, the presence of a structured style space, the capacity to traverse it meaningfully, and mechanisms for evaluating novelty—AI systems can generate stylistic outcomes that are not only novel but valuable. From this perspective, playstyle innovation becomes not only possible but \textit{measurable}.

This work takes a clear stance: AI is capable of creative expression, provided that we construct suitable evaluative frameworks. Indeed, current technologies are rapidly approaching the threshold at which AI-generated innovations are not merely plausible, but observable. By the time this dissertation is read, the question of whether AI can be creative may no longer be philosophical—it may be empirical.

In sum, this section positions playstyle at the frontier of computational creativity. It forges a conceptual bridge between decision-making models and stylistic innovation, presenting playstyle as something not merely reproduced or recognized, but actively \textit{generated}—by both humans and machines.

\section{Foundations of Applications and Future}

The final part of this blueprint shifts from theoretical foundations to practical implementation and long-term vision, focusing on the \textbf{applications of playstyle technologies} and their role in shaping future interactions between humans and intelligent systems.

We begin with the domain most inherently aligned with playstyle research: video games. This includes applications in game design, matchmaking, player profiling, community behavior analysis, and livestreaming culture. Across these contexts, playstyle analysis proves valuable not only for enhancing player experience but also for improving system-level balance and engagement. By translating subjective behavioral patterns into actionable design signals, playstyle becomes a powerful tool for both developers and players.

Yet the relevance of playstyle extends well beyond the boundaries of games. Wherever decision-making patterns, stylistic preferences, or interactive behaviors hold significance: education, robotics, creative industries, marketing, consumer analytics—the principles of playstyle modeling can be meaningfully applied. When treated as a structured expression of intention and preference, style becomes a medium for interpreting and influencing behavior across diverse domains.

In this light, playstyle is more than a diagnostic lens—it emerges as a design principle and communication interface. This section explores how playstyle-aware models can be embedded within industrial and technological pipelines to enable personalization, fairness, adaptability, and user-centric optimization. Moreover, as these systems are deployed and adopted, they offer feedback loops that refine our conceptual understanding of behavior and identity, deepening the integration of theory and practice.

Looking further ahead, we return to the philosophical foundations established earlier in this work. As artificial intelligence systems become more sophisticated—capable not only of solving tasks, but of exhibiting consistent and recognizable behavioral patterns—questions of identity, agency, and value inevitably resurface. What does it mean for a machine to have a style? Can playstyle function as a proxy for intention or even a rudimentary selfhood? In the context of Artificial General Intelligence (AGI), might playstyle offer a new language for describing digital individuality?

This final section thus reframes playstyle not only as a subject of academic and engineering interest, but as a conceptual lens for interpreting the evolution of intelligent systems. It bridges abstract constructs, real-world applications, and metaphysical inquiry—completing the arc of this blueprint and setting the stage for the deeper explorations in the chapters that follow. In particular, Chapter~5 begins by detailing the definitional and measurement foundations, providing the formal tools required to operationalize playstyle analysis across domains.

\newpage

\part{Essence of Playstyle}
\chapter{Aspects of Defining Playstyle}
\noindent\textbf{Key Question of the Chapter:} \\
\textit{How can playstyle be formally defined?} \\
\textbf{Brief Answer:} We first establish how to demonstrate the existence of playstyle differences, then formalize the concept of \textit{playstyle consistency} in mathematical terms. Building on this, we construct equations linking intention and behavior, thereby creating the mathematical language that underpins subsequent methods for measuring and expressing playstyle.

\bigskip

What exactly constitutes a playstyle, and how can it be defined in a way that is both conceptually meaningful and empirically measurable? This chapter develops a formal framework for understanding playstyle, from conceptual foundations to measurable structures, and establishes the theoretical distinction between playstyle and generative style.

Playstyle is often recognized intuitively: we can sense when two players approach the same game in fundamentally different ways. Yet this intuition masks a deeper complexity. Playstyle is not encoded in static rules, nor fully captured by surface behavior. Rather, it emerges as a relational construct—an expression of internal preferences, intentions, and strategies, made visible through interactions with an environment.

To study playstyle scientifically, we must first ask what it is that we seek to define and measure. Do we locate playstyle in the agent's internal beliefs and value structures? In the policy it follows? Or in the empirical distribution of its actions across observable states?

This chapter addresses these foundational questions through a structured progression. We begin by clarifying the ontological status of playstyle: under what conditions can we say that a style exists, and how can we distinguish one from another? We then introduce two complementary perspectives—\textit{process-based} and \textit{outcome-based}—as formal lenses for measuring stylistic divergence.

Building upon these perspectives, we identify the set of observable elements that serve as inputs to playstyle measurement. These include not only the agent's actions, but also states, rewards, preferences, and full behavioral trajectories. Recognizing and unifying these inputs allows us to define a general observable space for systematic comparison.

Finally, we construct a formal framework in which internal belief systems and preference structures generate decision policies, which in turn induce distributions over observable behavior. While this formulation is inspired by the Partially Observable Markov Decision Process (POMDP) and belief MDP framework \citepx{kaelbling1998planning, russell2020aima}, our treatment of belief deviates from traditional assumptions. Rather than directly specifying a fixed belief state space, we allow belief to be realized as a latent internal structure—constructed through agent-specific mechanisms such as an interpretation function $\mathcal{I}(o)$ from the given observation $o$—and leave the implementation of abstract preference functions through belief states to the agent.

We close the chapter by examining the boundary between playstyle—as a reflection of decision-making—and generative style—as a product of content creation. This distinction is critical for establishing the scope of playstyle analysis and for positioning our formalism within broader discussions of creativity and expression.

In sum, this chapter transitions our discussion from philosophical intuition to mathematical formulation. It provides the conceptual and technical foundation required for the systematic study of playstyle as a measurable construct.

\section{Existence of Playstyle}

As discussed in Chapter~3, playstyle can be understood as a form of decision-making style—an emergent pattern rooted in how agents make choices across interactive contexts. At its core, each decision reflects an underlying \textit{utility function}, which in turn is shaped by the agent’s \textit{beliefs} about the world and its \textit{preferences} over possible outcomes. From this perspective, a playstyle is instantiated by a particular \textbf{belief–preference system} that governs how an agent evaluates alternatives and commits to actions.

A common intuition is to bind a style directly to its \textbf{operator}: for example, assuming that one person corresponds to one style, or that one complete strategy represents one style. However, the subject of a style need not be fixed in this way. A single person can maintain multiple styles and freely switch between them; multiple individuals can share the same style; and styles can even be blended into hybrid forms. What is essential is that the operator is \textit{aware} of switching or merging styles, such that these transitions are intentional rather than accidental.

Because the belief–preference system is latent—hidden within the agent’s internal mechanisms—it cannot be directly observed unless the full decision-making process is transparent. In practice, this is rarely the case. We must therefore reason about playstyle primarily from \textbf{external evidence}: observable traces such as action choices, state transitions, and eventual outcomes. It is through these manifestations that we attempt to infer the structure and stability of an agent’s goals and decision logic.

This indirect relationship reveals a critical insight: \textbf{differences in playstyle are not necessarily equivalent to differences in policy}. Even if two agents possess divergent beliefs and preferences, their resulting policies may appear identical if the decision space is severely constrained. For instance, in high-risk or survival-critical scenarios within adventure games, agents with different intentions may all converge on the same “optimal” response due to environmental pressure. In such cases, the absence of observable difference does not imply the absence of internal variation—it merely reflects limited behavioral expressiveness.

Conversely, even when agents share the same beliefs and utility structures, differences in training dynamics, architecture, or exploration strategies may result in distinct policies. For example, in deep reinforcement learning, agents trained with the same reward function and algorithm can produce diverging behaviors due to random initialization, sampling variance, or stochasticity in policy updates \citepx{playstyle_distance}. Even when value functions (e.g., $Q$-functions) are held constant, the choice of deterministic versus stochastic policy realization can lead to stylistic divergence in execution.

These examples reveal a many-to-many mapping between belief systems, utility functions, and observable policies. As a result, playstyle cannot be defined solely through policy representation. Instead, we are compelled to adopt a relational and evidential stance: \textit{playstyle exists insofar as structured, consistent variation in decision behavior can be observed across agents under comparable conditions}.

Such an evidential perspective does not require complete observability or full causal inference of beliefs. It instead demands sufficient regularity to distinguish meaningful variation from implementation noise, randomness, or environmental bias. Accordingly, the challenge of defining playstyle is not simply about isolating behaviors, but about discerning whether those behaviors reflect intentional, belief-driven differences rather than external constraints or stochastic fluctuations.

In the next section, we operationalize this evidential approach by introducing two principal strategies for identifying stylistic differences—\textbf{process-based} and \textbf{outcome-based}—while noting that these are not the only possible perspectives. This sets the stage for a broader discussion of how playstyle can be recognized, measured, and verified in both practical and theoretical settings.

\section{Process vs. Outcome: Perspectives on Defining Playstyle}

Since playstyle represents a form of indirect style, our task is to observe and quantify its differences through indirect yet systematic means. Two fundamental perspectives emerge for relative measurement: \textbf{process-based differences}, which reflect how decisions are made \textit{within} the decision-making process (thus ``internal'' to the agent), and \textbf{outcome-based differences}, which reflect the external results \textit{produced by} such decisions as observed by others (thus ``external'').

These perspectives hinge on the types of data available. When we can observe observation--action pairs, we may directly assess decision tendencies—how an agent acts under specific contexts. When such internal traces are absent, or only aggregate consequences are reported, we must rely on an outcome-based view.

\subsection{Process Difference (Internal Difference)}

The process-based perspective investigates playstyle through differences in decision-making behavior: how agents map observations to actions. If two agents receive the same input but choose different actions, this indicates a divergence in their underlying policies, beliefs, or preferences.

Let \( \pi_1(a \mid o) \) and \( \pi_2(a \mid o) \) be policies over the observation space \( \mathcal{O} \) and action space \( \mathcal{A} \). Then:

\[
\exists o \in \mathcal{O},\ \exists a \in \mathcal{A},\ \pi_1(a \mid o) \neq \pi_2(a \mid o)
\quad \Rightarrow \quad \text{evidence of process-based playstyle difference}.
\]

This perspective underpins early work such as Playstyle Distance \citepx{playstyle_distance}, which discretizes state spaces to compare action distributions, and Playstyle Similarity \citepx{playstyle_similarity}, which incorporates multiscale representations and perceptual kernels to enhance sensitivity.

Such divergence may arise from differences in internal representations, reward generalization, exploration behavior, or inductive bias. For example, agents trained on the same objective but with different seeds, architectures, or sampling schedules may yield distinct process-level tendencies.

Conversely, the absence of any divergence:

\[
\forall o \in \mathcal{O},\ \forall a \in \mathcal{A},\ \pi_1(a \mid o) = \pi_2(a \mid o)
\quad \Rightarrow \quad \text{no detectable process-based difference}.
\]

suggests that stylistic differences---if any---are not reflected in the decision traces, though latent factors may still differ.

In complex environments, true decision tendencies may be distorted by partial observability, multi-agent interaction, or structural constraints. Consider:

\begin{itemize}
    \item Environmental randomness causes the same playstyle to produce different observed states.
    \item Two distinct playstyles are forced into similar distributions by dominant external dynamics.
\end{itemize}

\subsection{Outcome Difference (External Difference)}

The outcome-based perspective focuses on \emph{the effects} of decision-making, without assuming access to its internal logic. Rather than comparing actions given inputs, we compare the summarized results of policy--environment interaction.

Formally, for an environment \( e \), let the report function \( e(\pi) \) produce a summary outcome \( r \sim e(\pi) \in \mathbb{R}^k \), such as win rate, score, user rating, or metric vector. We say:

\[
\mathbb{E}_{r \sim e(\pi_1)}[r] \neq \mathbb{E}_{r \sim e(\pi_2)}[r]
\quad \Rightarrow \quad \text{evidence of outcome-based playstyle difference}.
\]

This is especially useful when internal observation--action traces are unavailable. For example:

\begin{enumerate}
    \item In games, if two agents with different strategies show different win rates or score distributions, this suggests outcome-level stylistic divergence \citepx{agent_relay}.
    \item In preference-based evaluation, if human users systematically prefer one agent’s behavior across tasks, this too reflects a meaningful difference.
\end{enumerate}

Importantly, outcome difference requires no stepwise alignment, only that the environment produces interpretable summaries.

Nonetheless, outcome analysis faces ambiguity:

\begin{itemize}
    \item Agents may differ internally but appear similar due to low environment sensitivity.
    \item Outcome noise or complexity may exaggerate superficial differences.
\end{itemize}

Moreover, even when outcome vectors match, human observers often identify stylistic nuances. For example, in competitive games or sports, players with similar performance ratings may still differ in strategic pacing, risk preference, or tactic selection. This phenomenon is particularly evident in professional esports and board games such as Go, where individual playstyle is widely recognized despite comparable Elo ratings.\footnote{\url{https://www.goratings.org}}

\subsection{The Importance of Combining Both Perspectives}

Each perspective captures a distinct dimension:

\begin{itemize}
    \item \textbf{Process-based difference} reflects how agents make decisions given comparable contexts.
    \item \textbf{Outcome-based difference} reflects the resulting external consequences of those decisions.
\end{itemize}

Crucially, these two perspectives are not logically equivalent, and neither implies the other:

\begin{itemize}
    \item \textbf{Process difference does not imply outcome difference}: Two agents may exhibit clear differences in how they respond to observations—i.e., their policies differ at the decision level—yet still achieve similar outcomes. This can occur when the environment masks stylistic variation due to strong convergence pressures, sparse reward signals, or outcome measures that are too coarse to reflect nuanced behavioral differences. Moreover, limited evaluation episodes may fail to capture the long-term effect of process-level divergence.

    \item \textbf{Outcome difference does not imply process difference}: Two agents may achieve different outcomes despite using the same observable policy. This can happen when outcome variation is caused by factors outside the agent’s control, such as differences in starting state distributions, partner behaviors in multi-agent settings, or stochastic transitions. In such cases, the divergence arises from environmental conditions rather than differences in the decision-making process itself.
\end{itemize}

These observations motivate a combined perspective: by examining both decision-level and outcome-level evidence, we gain a more complete understanding of how playstyle manifests and under what conditions differences become observable. In particular, process-based comparison helps trace the generative source of behavior, while outcome-based comparison grounds that behavior in its external consequences.

In practice, most real-world assessments fall between the two extremes. Partial trajectories, state–action summaries, and aggregated results often yield hybrid measurements that reflect both decision tendencies and environment interactions. Understanding which elements of a playstyle difference stem from internal factors—and which result from external context—is key to accurate interpretation.

Robust analysis of playstyle thus requires integrating both perspectives, evaluating what types of signals are available, and remaining aware of the limits of each lens in isolation.

While the process–outcome framework provides two rigorous and widely applicable lenses, it is not exhaustive. In principle, any reliable source of evidence can be used to distinguish playstyles. For example, if an agent can be directly queried about whether it has changed its style—and we are prepared to take its self-report at face value—this constitutes a form of identification. Such approaches relax the need for formal verification and instead rely on trust, context, or domain-specific conventions.

This highlights an important distinction between \textbf{identification capability} and \textbf{verification capability}: the former concerns whether a difference in style can be recognized or labeled at all (even informally), while the latter demands evidential grounding that is reproducible and resistant to deception or noise. In high-stakes or scientific settings, verification is essential; in exploratory, narrative, or creative contexts, identification may suffice.

Moreover, when agents can intentionally maintain, switch, or merge multiple playstyles, the boundary between distinct styles may itself be fluid. In such cases, process-based and outcome-based signals may intermittently overlap, diverge, or blend, depending on when and how the agent transitions between styles. Understanding and accommodating this fluidity is part of building a robust theory of playstyle.

In short, process and outcome differences offer strong foundations for measurement, but practical playstyle analysis often requires a broader toolkit—capable of incorporating hybrid signals, self-reports, and context-specific interpretations—while remaining explicit about the evidential standard being applied.

\section{Available Elements for Defining Playstyle}

To define and measure playstyle, we must begin by asking: \textit{What can we observe?} Since playstyle is not explicitly encoded in any parameter or rule, its analysis must rely on empirical evidence derived from agent--environment interaction. These signals—whether concrete or abstract, process- or outcome-oriented—constitute the foundation for all inference about style.

To unify these diverse sources under a single abstraction, we introduce the term \textbf{Observable Element}. An Observable Element refers to any information that can be accessed, recorded, or inferred during or after agent execution. This includes not only traditional components such as states and actions \citepx{rl_book}, but also rewards, preference signals, trajectories, behavior summaries, or terminal outcomes.

This abstraction helps avoid premature assumptions about what “counts” as observable. In practice, available signals vary by domain: some environments expose rich state variables but no reward; others log preferences or episode-level reports but hide low-level transitions. By adopting a flexible definition, we permit any accessible evidence to inform stylistic analysis.

We classify Observable Elements into two broad categories:

\subsection{External Observables}

These are directly accessible signals produced through agent--environment interaction. They form the basis of most empirical comparisons.

\begin{enumerate}
    \item \textbf{State (Observation)}: \\
    The environment's current condition at each timestep. Typically encoded as structured or high-dimensional vectors, these define the decision context under which the agent operates.

    \item \textbf{Action}: \\
    The agent’s decision in response to the current state. Action logs are essential for recovering or approximating policies and process-based differences.

    \item \textbf{Reward or Preference Signal}: \\
    Evaluative feedback from the environment or an external source. Rewards are often numeric and task-driven; preferences may be human-derived, as in inverse RL or RLHF \citepx{irl_theory, rlhf}.

    \item \textbf{Trajectory Fragments or Summaries}: \\
    Sequences or compressions of behavior across time, capturing temporal structure such as strategic planning or stylistic rhythm. These are useful when individual decisions are uninformative in isolation.

    \item \textbf{Terminal and Semantic Signals}: \\
    Episode-level outcomes or high-level indicators, such as win/loss flags, achievement markers, or annotated goals. These serve as interpretable anchors for outcome-based analysis.
\end{enumerate}

These elements form the primary input to empirical methods for playstyle assessment. Their usefulness depends on how well they reflect the agent’s meaningful behavioral structure and variation.

\subsection{Internal Factors}

While Observable Elements are externally accessible, \textbf{internal factors} refer to the latent variables and mechanisms that shape an agent’s behavior but are not directly visible. They influence which Observable Elements are generated but cannot be measured directly.

\begin{enumerate}
    \item \textbf{Memory and History Dependence}: \\
    Agents with memory (e.g., RNNs, LSTMs) may condition actions on long histories rather than immediate observations, enabling stylistic patterns that unfold over time \citepx{lstm, gru}.

    \item \textbf{Latent Objectives or Biases}: \\
    Agents may act according to implicit tendencies—such as risk aversion, novelty seeking, or safety prioritization—that deviate from reward maximization \citepx{iqn}. These give rise to behavioral variation even under identical external incentives.

    \item \textbf{Modeling Formulation}: \\
    Differences in algorithm design (e.g., stochastic vs. deterministic, model-based vs. model-free) constrain the types of policies and stylistic expressions an agent can exhibit.

    \item \textbf{Parametric Settings}: \\
    Training configurations such as learning rate, regularization, exploration noise, or loss weighting can cause systematic behavioral divergence across agents, even with the same task setup.
\end{enumerate}

Although internal factors are not observable per se, they shape the structure of playstyle and are crucial for explaining behavioral differences that are not evident in surface-level metrics alone. When internal factors are hypothesized but not directly observable, \textbf{verification capability} becomes critical to avoid misattribution of style differences to noise or external constraints.

\subsection{Summary}

Observable Elements define the empirical boundary of playstyle measurement. They range from low-level state--action pairs to high-level outcome summaries. Internal factors, while unobservable, play an essential generative role. Notably, a single agent may exhibit multiple playstyles by selectively activating different subsets or patterns of Observable Elements, enabling intentional switching or fusion of styles. A comprehensive understanding of playstyle must therefore combine what can be seen with informed hypotheses about what cannot. This unified framing prepares the ground for the formal model of playstyle introduced in the next section.

\section{Playstyle Formalization}

Building on the observable elements defined in the previous section, we present a formal framework rooted in belief-driven decision-making and preference-based evaluation to rigorously define playstyle. We begin by establishing the foundational assumption that enables this formulation: \textbf{Playstyle Consistency}.

\subsection{The Assumption of Playstyle Consistency}

Playstyle is fundamentally about preference: an agent chooses actions that lead to belief states it prefers, according to some internal value system. However, directly comparing future belief states $b'_1, b'_2$ may not yield a consistent ordering—especially under pairwise evaluation. The \textit{Condorcet paradox} \citepx{condorcet_paradox} demonstrates that even consistent local comparisons may fail to produce a total ordering globally.

To address this, we introduce the assumption of \textbf{Playstyle Consistency}, which anchors preference evaluation to the current belief state. Given a current belief $b$ and a set of possible actions $\mathcal{A}$, each action $a \in \mathcal{A}$ leads to an expected future belief state $b' = \tau(b, a)$, where $\tau$ denotes the agent's belief transition dynamics.

We define a preference function $\mathrm{PF}: \mathcal{B} \times \mathcal{B} \rightarrow \mathbb{R}$ such that:

\[
\mathrm{PF}(b', b) > \mathrm{PF}(b'', b) \quad \Longleftrightarrow \quad \text{$b'$ is preferred over $b''$, given current belief $b$}.
\]

Here, the function $\mathrm{PF}(b', b)$ measures the desirability of future belief $b'$ relative to current belief $b$. A higher value indicates stronger preference.
This anchors all comparisons to a shared origin $b$, avoiding cyclic inconsistencies across unanchored belief comparisons. Importantly, this makes the agent's internal loop consistent: decision-making becomes the process of selecting the action $a^*$ that leads to the most preferred future belief:

\[
a^* = \arg\max_{a \in \mathcal{A}} \mathrm{PF}(\tau(b, a), b).
\]

This assumption does not require a global or transitive ordering among all belief states, only that comparisons from the same anchor $b$ are well-defined. It also does not preclude an agent from switching or blending styles; as long as each decision step remains anchored to its current belief, the evaluation remains self-consistent.

\subsection{The Formal Tuple of Playstyle Definition}

We define a \textbf{Formal Playstyle System} via two coupled tuples: one for actual interaction with the environment (\textbf{Actual Interaction Form}), and one for internal decision-making (\textbf{Agent-Expected Form}).

\paragraph{Actual Interaction Form (Environment-facing):}
\[
\mathcal{M}_{\text{env}} = (S, A, T, \Omega, O)
\]
\begin{itemize}
    \item $S$: set of latent world states.
    \item $A$: set of actions available to the agent.
    \item $T(s' \mid s, a)$: environment transition probability.
    \item $\Omega$: set of observations received by the agent.
    \item $O(o \mid s')$: observation probability given state $s'$.
\end{itemize}

\paragraph{Agent-Expected Form (Internal decision-making):}
\[
\mathcal{M}_{\text{agent}} = (B, \tau, \mathrm{PF}, \mathcal{I})
\]
\begin{itemize}
    \item $B$: space of belief states, represented as probabilistic distributions over latent factors or internal configurations.
    \item $\tau(b' \mid b, a)$: belief transition function, modeling the agent’s subjective model of dynamics.
    \item $\mathrm{PF}(b', b)$: preference function comparing successor belief $b'$ to current belief $b$, outputting a scalar preference score.
    \item $\mathcal{I}(o)$: interpretation function mapping observations $o$ to internal belief $b = \mathcal{I}(o)$. This may vary by agent.
\end{itemize}

\subsection{Policy Definition and Dual Decision Loops}

Based on the above tuples, we define two decision loops that govern policy construction:

\paragraph{Agent-Expected Policy (Internal Loop):}
\[
\pi(b, o; \mathrm{PF}, \tau, \mathcal{I}) = \arg\max_{a \in A} \mathbb{E}_{b' \sim \tau(b'| b, a)} \left[ \mathrm{PF}(b', b) \right]
\]
This policy selects actions that are expected to lead to preferable belief states, according to the agent’s internal model and current interpretation.

\paragraph{Actual-Environment Policy (External Loop):}
\[
\pi(s; \mathrm{PF'}, \mathcal{I'}) = \arg\max_{a \in A} \mathbb{E}_{s' \sim T(s' \mid s, a)} \mathbb{E}_{o' \sim O(o' \mid s')} \left[ \mathrm{PF'}(\mathcal{I'}(o'), \mathcal{I'}(o)) \right]
\]
This policy captures the actual execution result, where outcomes are determined by the real environment dynamics. It also shows how PF can indirectly shape external behaviors via internal representations.

\subsection{Measurement and Expression of Playstyle}

Given the formalization of playstyle as a function of belief-based decision-making, we now define how it can be empirically measured and expressed based on \textbf{Observable Elements} $o \in \Omega$. These definitions serve as a foundation for all downstream comparisons and evaluations of style.

\paragraph{Playstyle Measurement}

We consider two types of measurement functions, both relying on a distance metric $D$ over observable elements:

\begin{itemize}
    \item \textbf{Absolute Measurement} \\
    $D : \Omega \rightarrow \mathbb{R}^d$ \\
    We say an observation $o$ is $\epsilon$-close to a target style $t$ if:
    \[
    D(o \| t) \leq \epsilon
    \]

    \item \textbf{Relative Measurement} \\
    $D : \Omega \times \Omega \rightarrow \mathbb{R}^d$ \\
    We say two observations $o, o' \in \Omega$ are $\epsilon$-close if:
    \[
    D(o \| o') \leq \epsilon
    \]
\end{itemize}

In both cases, the distance metric $D$ may be instantiated using various similarity or divergence measures depending on the structure of the observable space $\Omega$. These implementations will be discussed in Chapter~6.

\paragraph{Playstyle Expression}

Once measurement is defined, we can characterize playstyle expression in terms of observable divergence between policies.

\begin{itemize}
    \item \textbf{Expression via Absolute Measurement} \\
    Let $\pi$ be a policy and $t$ a target style. We say $\pi$ expresses style $t$ within tolerance $\epsilon$ if:
    \[
    \mathbb{E}_{o \sim \pi}[ D(o \| t) ] \leq \epsilon
    \]

    \item \textbf{Expression via Relative Measurement} \\
    Let $\pi$ and $\pi_t$ be two policies. We say $\pi$ expresses a playstyle similar to $\pi_t$ if:
    \[
    \mathbb{E}_{o \sim \pi,\ o' \sim \pi_t}[ D(o \| o') ] \leq \epsilon
    \]
\end{itemize}

These definitions support both fixed-target evaluations and pairwise comparisons between agents. Importantly, they allow us to treat playstyle as a \textit{measurable phenomenon} grounded in observed behavior, without requiring direct access to the agent’s internal parameters or beliefs.

\paragraph{Summary}

To conclude, playstyle measurement provides the necessary link between internal preferences and observable behavior. By introducing formal $\epsilon$-closeness over observables, we obtain a precise vocabulary for:

\begin{itemize}
    \item determining when a policy can be said to instantiate a particular style,
    \item comparing different agents in terms of stylistic similarity,
    \item and evaluating how changes in internal preference functions manifest in behavior.
\end{itemize}

These definitions also support cases where style identity is fluid, allowing comparison of partial or transitional trajectories. This framework sets the stage for Chapter~6, where we examine concrete implementations and applications of these measurement paradigms.

\subsection{Conclusion of the Formalization}

This formalism establishes a rigorous pipeline from beliefs to action policies, from preferences to behavior, and from internal models to observable outcomes. It makes explicit the assumptions needed for inference (via Playstyle Consistency), and lays the groundwork for measurement and comparison in later chapters.

In the next section, we turn to a key clarification: how playstyle differs from generative style, and why interaction-based decision-making necessitates a distinct treatment.

\section{Difference Between Generative Style and Playstyle}

In assessing style, a natural starting point is to consider existing evaluation techniques developed for generative models.  
However, a fundamental distinction separates \textbf{generative style evaluation} from the needs of \textbf{playstyle evaluation}.

\subsection{Fundamental Conceptual Differences}

In generative modeling, outputs—whether images, audio, or text—are directly produced by the model itself.  
In contrast, in playstyle settings, outputs (states) emerge from interactions between an agent's decisions and an external environment.

Critically, the \textbf{environment} acts as a \textbf{perfect generator}:  
states are not imperfect samples needing fidelity correction but are inherently valid by their nature.  
Thus, any evaluation framework based on detecting "fake" versus "real" outputs, such as in generative adversarial networks (GANs) \citepx{inception_score, fid, prd}, is fundamentally misaligned with playstyle analysis.

Furthermore, while generative models typically assume control over local output contexts (e.g., consecutive frames or sentences), playstyle trajectories are highly sensitive to environmental transitions.  
Small changes—such as switching rooms, encountering dynamic obstacles, or unexpected NPC behaviors—can cause abrupt shifts, invalidating assumptions of local smoothness often relied upon in video, audio, or text generation evaluation \citepx{flow_net, tcc}.

Thus, both the ontological nature of outputs and the structural dynamics of context differ significantly between the two domains.

\subsection{Evaluation Methods in Generative Models}

Generative style evaluations can be grouped into several main categories:

\subsubsection{Perceptual Quality Metrics}

\begin{itemize}
\item \textbf{Structural Similarity Index Measure (SSIM)} \citepx{ssim}: Quantifies perceived quality by comparing structural information between generated and reference images.
\item \textbf{Peak Signal-to-Noise Ratio (PSNR)} \citepx{image_process_book, psnr_vs_ssim}: Measures pixel-level reconstruction fidelity based on mean squared error.
\item \textbf{Learned Perceptual Image Patch Similarity (LPIPS)} \citepx{lpips}: Assesses perceptual similarity using deep network feature activations.
\item \textbf{Inception Score (IS)} \citepx{inception_score}: Evaluates the visual quality and diversity of generated images based on predicted class distributions from an Inception model.
\end{itemize}

These methods assume an authoritative reference or a trained classifier and focus on appearance-level evaluation.  
Since playstyle outputs are valid environmental responses and lack fixed references, these metrics are unsuitable.

\subsubsection{Distributional Divergence Metrics}

\begin{itemize}
\item \textbf{Fréchet Inception Distance (FID)} \citepx{fid}: Measures distributional similarity between real and generated samples in a feature space.
\item \textbf{Precision and Recall for Distributions (PRD)} \citepx{prd}: Separately assesses sample precision and coverage between two distributions.
\item \textbf{Likeness Score} \citepx{likeness_score}: Directly analyzes divergence between generated image sets without relying on pretrained models.
\end{itemize}

Although these metrics are effective for generative quality assessment, they do not capture the interactive, environment-dependent nature of playstyle.  
Notably, in prior work on Playstyle Distance \citepx{playstyle_distance}, adapting FID-style evaluations showed only random-level discrimination and incurred high computational costs.

\subsubsection{Temporal Consistency Metrics}

\begin{itemize}
\item \textbf{Temporal Cycle-Consistency (TCC)} \citepx{tcc}: Measures consistency across video frames by learning correspondences without supervision.
\item \textbf{Optical Flow Estimation (FlowNet)} \citepx{flow_net}: Provides pixel-wise motion estimation between frames; often used to compute warping-based consistency metrics.
\end{itemize}

While sequential coherence is desirable, playstyle trajectories are often disrupted by dynamic environmental transitions, limiting the reliability of temporal smoothness assumptions.

\subsubsection{Human Preference and Semantic Consistency Metrics}

\begin{itemize}
\item \textbf{CLIPScore} \citepx{clip_score}: Scores generated content by matching it to text prompts in a shared vision-language embedding space.
\item \textbf{ArtFID} \citepx{art_fid}: Evaluates the style quality of generated artworks relative to human-created references.
\item \textbf{ArtScore} \citepx{art_score}: Learns to predict the "artistic quality" of AI-generated images based on human annotations.
\end{itemize}

These metrics suggest promising directions for playstyle evaluation involving human preference modeling, although substantial adaptation is necessary to handle action–state trajectories.

\subsubsection{Generative Model Internal Metrics}

\begin{itemize}
\item \textbf{Evidence Lower Bound (ELBO)} \citepx{vae}: Estimates the quality of probabilistic reconstructions in variational autoencoders.
\item \textbf{Perceptual Path Length (PPL)} \citepx{style_gan}: Measures the smoothness of latent space interpolations in GANs.
\end{itemize}

As playstyle lacks stable latent representations and depends heavily on reactive interactions, these internal metrics offer limited applicability.

\subsubsection{NLP-Oriented Metrics}

\begin{itemize}
\item \textbf{Bilingual Evaluation Understudy (BLEU)} \citepx{bleu}: Assesses machine translation quality via n-gram overlap with reference texts.
\item \textbf{Perplexity} \citepx{language_book}: Measures the predictive confidence of a language model over a sequence.
\end{itemize}

Both BLEU and Perplexity presume static ground-truth references and do not address the interactive decision-making complexity intrinsic to playstyle.

\subsection{Why Direct Application Fails for Playstyle}

The key obstacles can be summarized as follows:

\begin{itemize}
\item \textbf{Absence of Fake Samples}:  
  Environment outputs are always valid; there is no notion of generation failure to detect.

\item \textbf{High Interactivity}:  
  State trajectories result from complex agent-environment feedback loops, not unilateral generation.

\item \textbf{Context Fragility}:  
  Environmental transitions frequently cause abrupt state changes, invalidating assumptions of local smoothness.

\item \textbf{Evaluation Target Shift}:  
  Playstyle evaluation must capture the \textit{decision-making process} and \textit{interaction patterns}, not merely output distributions.

\item \textbf{Feature Space Mismatch}:  
  In Playstyle Similarity experiments \citepx{playstyle_similarity}, cosine similarity metrics provided no meaningful advantage over Euclidean distances on the FID like approach, reflecting no structured angular semantic guarantees in playstyle spaces.
\end{itemize}

Thus, while generative evaluation frameworks offer useful conceptual inspirations, direct application risks mischaracterizing the nature of playstyle.

\subsection{Insights and Inspirations}

Despite these challenges, some insights remain valuable:

\begin{itemize}
\item \textbf{Preference Modeling}:  
  Preference-guided evaluations suggest methods for subjective, human-aligned playstyle assessment.

\item \textbf{Soft Distributional Comparisons}:  
  Rather than rigid matching, trajectory-level statistical modeling could inform style similarity estimation.

\item \textbf{Context-Aware Consistency}:  
  Temporal evaluation methods, if conditioned on environment-induced changes, could enhance nuanced playstyle modeling.
\end{itemize}

In Chapter~6, we adapt these inspirations into domain-specific metrics that operate over Observable Elements and account for both process- and outcome-based differences.

\chapter{Playstyle Measurement}
\noindent\textbf{Key Question of the Chapter:} \\
\textit{What methods are available for measuring playstyle?}

\noindent\textbf{Brief Answer:} 
Playstyle can be measured by quantifying structured patterns in decision-making, as expressed through \textbf{Observable Elements}, using divergence metrics over their distributions to identify genuine stylistic differences while filtering out noise or environmental artifacts.

\bigskip

Building upon the formalization of playstyle as belief-driven decision-making expressed through observable elements, we now turn to the central task of \textbf{quantifying these expressions}. Understanding and measuring playstyle is a foundational step toward analyzing agent individuality, supporting human-aligned AI design, and enabling the training of diverse and interpretable artificial agents.

As established in previous chapters, playstyle refers not merely to outcome-level differences (such as win rates), but to structured patterns in how agents make decisions—reflecting latent preferences, contextual beliefs, and interactive strategies. However, since these internal structures are unobservable, playstyle must be inferred from behavior—through what we have defined as \textbf{Observable Elements} $o \in \Omega$.

This inference task brings forth several key challenges:

\begin{itemize}
    \item \textbf{What elements can we observe and reliably measure?}
    \item \textbf{What constitutes a significant stylistic difference under these observations?}
    \item \textbf{How can we ensure that measurements reflect genuine agent-level variation rather than noise or environmental artifacts?}
\end{itemize}

To address these questions, we introduced in Chapter~5 a formal framework grounded in $\epsilon$-closeness over distributions of observable elements. This chapter now focuses on the \textbf{practical realizations} of this framework: How can we construct such measurements? What assumptions or structures must be imposed to enable comparison across agents?

We categorize the existing methods for playstyle measurement into four primary classes, each grounded in a different epistemic stance toward what can or should be measured:

\begin{enumerate}
    \item \textbf{Heuristic Rules and Feature-Based Approaches} \\
    Leverage human domain knowledge to define interpretable features and handcrafted criteria for distinguishing playstyles.

    \item \textbf{Data-Driven Learning Approaches} \\
    Use supervised, unsupervised, or contrastive learning to discover latent style representations from data, without manual feature design.

    \item \textbf{Policy and Action Distribution Approaches} \\
    Analyze differences in decision distributions directly—e.g., comparing $\pi(a|s)$ or entire trajectory distributions—to infer preferences from observed actions.

    \item \textbf{Discrete State-Based Approaches} \\
    Project continuous observations into symbolic states to enable alignment and comparison across agents, especially useful in high-dimensional or multi-agent settings.
\end{enumerate}

These approaches differ in their level of abstraction, reliance on prior knowledge, and sensitivity to environmental variation. Throughout this chapter, we will examine each category in turn, with reference to the formal principles of style measurement established earlier. In particular, we will evaluate how each method instantiates the measurement function $f(\pi) = P(o|\pi)$ and what forms of divergence metrics $D$ are used to define stylistic similarity or difference.

\section{Measurement Approaches Overview}

To operationalize the formal notion of playstyle as observable variation in policy-induced behavior, we classify existing measurement approaches into four major types. Each reflects a different assumption about what constitutes meaningful stylistic evidence—either in the form of absolute differences in behavior, or relative comparisons conditioned on context.

\begin{itemize}
    \item \textbf{Heuristic Rules and Feature-Based Measurements}
    \begin{itemize}
        \item \textit{Core Idea:} Use domain-specific knowledge and hand-engineered rules to define stylistic signatures, such as total combat engagement, average resource usage, or route choices.
        \item \textit{Measurement Type:} \textbf{Absolute measurement}—based on predefined thresholds over $D(o)$, where $o \in \Omega$ are observable elements.
        \item \textit{Formulation:} $\pi$ is said to be $\epsilon$-close to a reference style $t$ if $D(o \| t) \leq \epsilon$ for $o \sim \pi$.
        \item \textit{Advantages:} High interpretability and precision when targeting specific properties.
        \item \textit{Limitations:} Strong dependence on handcrafted engineering; limited transferability across domains.
    \end{itemize}

        \item \textbf{Data-Driven Learning Approaches (SL / UC / CL)}
    \begin{itemize}
        \item \textit{Core Idea:} Learn stylistic patterns from data using supervised classification (SL), unsupervised clustering (UC), or contrastive learning (CL), each depending on different forms of supervisory signals.
        \item \textit{Measurement Type:} 
        \begin{itemize}
            \item \textbf{Absolute}: SL directly predicts style labels $y$ via $g(o) = y$, where $o \in \mathcal{O}$ are observable elements.
            \item \textbf{Relative}: UC and CL infer latent distances or group structures among behavior samples without ground-truth categories.
        \end{itemize}
        \item \textit{Formulation:} 
        \begin{itemize}
            \item SL: Learn $g: \mathcal{O} \rightarrow \mathcal{Y}$ such that $g(o) \approx y$ minimizes cross-entropy over labeled data.
            \item CL: Learn embedding $f(o)$ to satisfy $d(f(o_1), f(o_2)) < \epsilon$ if $(o_1, o_2)$ is a similar pair, and $> \epsilon$ if dissimilar.
        \end{itemize}
        \item \textit{Advantages:} 
        \begin{itemize}
            \item SL: Clear interpretability when labels are well-defined; effective for known style taxonomies.
            \item CL: Strong ability to learn nuanced differences; flexible use of structured pairwise constraints.
        \end{itemize}
        \item \textit{Limitations:} 
        \begin{itemize}
            \item SL: Requires curated, consistent labels; hard to scale across domains with ambiguous categories.
            \item UC: Sensitive to the quality of feature representation; may produce semantically meaningless clusters if behavioral features do not encode style-relevant information.
            \item CL: Depends on quality of pairwise labels; not truly label-free—often requires similarity judgements.
        \end{itemize}
    \end{itemize}

    \item \textbf{Policy / Action Distribution-Based Measurements}
    \begin{itemize}
        \item \textit{Core Idea:} Compare policies $\pi_1(a|s)$ and $\pi_2(a|s)$ across shared states, or compare their induced observable distributions $f(\pi)$.
        \item \textit{Measurement Type:} \textbf{Relative measurement} over policy-induced action or trajectory distributions.
        \item \textit{Formulation:} $\pi_1$ and $\pi_2$ are $\epsilon$-close if $d(f(\pi_1), f(\pi_2)) \leq \epsilon$, where $f(\pi)$ maps to distribution over observable elements.
        \item \textit{Advantages:} Captures implicit decision-making tendencies without manual feature design.
        \item \textit{Limitations:} Requires access to full policies or dense behavior logs; aligning distributions across agents may be difficult.
    \end{itemize}

    \item \textbf{Discrete State Measurements}
    \begin{itemize}
        \item \textit{Core Idea:} Discretize continuous or high-dimensional observations into symbolic states $z \in \mathcal{Z}$, enabling alignment and conditional comparison of behavior across agents.
        \item \textit{Measurement Type:} \textbf{Relative measurement}, focusing on differences across aligned symbolic contexts.
        \item \textit{Formulation:} For trajectories $o$, $o'$, they are $\epsilon$-close if $D(o | o') \leq \epsilon$ under aligned $z$.
        \item \textit{Advantages:} Label-free, domain-agnostic, and sample-aligned; supports multi-agent comparison via shared anchors.
        \item \textit{Limitations:} Strongly dependent on quality of discretization; nontrivial symbolic abstraction design.
    \end{itemize}
\end{itemize}

\vspace{0.5em}

\begin{table}[ht]
\centering
\caption{Summary of Playstyle Measurement Approaches}
\begin{tabular}{|p{2.7cm}|p{2.7cm}|p{2.7cm}|p{2.7cm}|p{2.7cm}|}
\hline
\textbf{Measurement \newline Approach} & \textbf{Applicable \newline Scenarios} & \textbf{Advantages} & \textbf{Limitations} & \textbf{Primary \newline Information \newline Sources} \\
\hline
Heuristic Rules \& Features & Small-scale, \newline Expert-driven Settings & High \newline Interpretability, \newline High Precision & Manual Design Effort, \newline Poor \newline Generalization & Environment Info, \newline Handcrafted Features \\
\hline
Supervised/\newline Unsupervised/ \newline Contrastive Learning & Large-scale \newline Behavioral Data, \newline Rapid \newline Prototyping & Automation, \newline Scalability, \newline Flexible \newline Supervision & Label Quality (SL); \newline Feature-sensitive, Interpretability (UC); \newline Pair Selection (CL) & States, Actions, \newline Rewards or \newline Preferences \\
\hline
Policy/Action Distribution & Access to \newline Policy Models \newline or Detailed \newline Logs & Theoretically Sound, \newline Focuses on \newline Decisions & Requires \newline Policy Access, \newline Distribution Alignment Challenges & Action \newline Distributions, \newline Decision States \\
\hline
Discrete State & General-purpose, \newline Domain-Agnostic \newline Settings & Sample-aligned \newline Comparison, \newline No Label \newline Requirement & Sensitive to \newline Discretization \newline Quality & Symbolic States, \newline Conditional \newline Action  \newline Distributions \\
\hline
\end{tabular}
\end{table}

\vspace{0.5em}

In the following sections, we will examine each of these approaches in greater detail, presenting examples, algorithms, and design considerations for analyzing and comparing playstyles across various domains.

\section{Measuring Playstyle with Heuristic Rules and Features}

\subsection*{Core Idea}

Heuristic and feature-based methods rely on expert knowledge to handcraft measurable indicators that are assumed to reflect stylistic tendencies. These indicators—often called \textit{playstyle features}—are manually selected and evaluated using interpretable rules or thresholds.

\subsection*{Measurement Type}

This approach is generally an \textbf{absolute measurement}, as it interprets an agent's behavior in relation to predefined rules or thresholds, rather than through relative comparison with other agents.

\subsection*{Formalization}

Let $o \in \mathcal{O}$ denote an observable element, such as a trajectory fragment, action count, or performance summary. A heuristic playstyle metric can be formulated as:
\[
h(o) = v \in \mathbb{R}
\]
where $h$ is a handcrafted measurement function that maps observable elements to scalar values (e.g., kill rate, average step length). Multiple heuristic features may be composed as a vector $\mathbf{h}(o) \in \mathbb{R}^d$.

\subsection*{Advantages}

\begin{itemize}
    \item \textbf{High Interpretability}: Metrics have clear semantic meaning and are understandable by designers or analysts.
    \item \textbf{Precision for Specific Styles}: Useful for detecting known patterns or stylistic archetypes.
    \item \textbf{Minimal Data Requirements}: Does not rely on large datasets or training pipelines.
\end{itemize}

\subsection*{Limitations}

\begin{itemize}
    \item \textbf{Manual Design Effort}: Requires extensive domain expertise and iterative tuning.
    \item \textbf{Limited Expressiveness}: Often fails to capture emergent or subtle style nuances.
    \item \textbf{Poor Generalization}: Rules are highly domain-specific and hard to transfer across games or tasks.
\end{itemize}

\subsection*{Examples Across Domains}

Representative use cases of heuristic-based measurements include:

\begin{itemize}
    \item \textbf{Shooting Games:} Kill/death ratios, headshot frequency, reload timing.
    \item \textbf{Real-Time Strategy Games:} Resource spending efficiency, unit diversity, tech tree preference.
    \item \textbf{Competitive Sports:} Passing accuracy, movement heatmaps, possession time.
    \item \textbf{Social Media:} Post timing regularity, sentiment fluctuation, retweet diversity.
    \item \textbf{Visual Art \& Music:} Color vibrancy, brushstroke size, BPM variance, harmony density.
    \item \textbf{Symbolic Frameworks:} MBTI classifications \citepx{myers1998mbti}, personality  mappings~\citepx{player_personality}.
    \item \textbf{Physical Motion:} Gait cycle frequency, turn smoothness, collision rates.
\end{itemize}

\subsection*{Human-Likeness Constraints}

Heuristic rules also help enforce human-likeness in agent behavior. Common constraints include:
\begin{itemize}
    \item Penalizing non-human patterns like excessive twitching, spinning, or erratic camera shaking~\citepx{shaking_spinning_cost}.
    \item Enforcing biological limits: reaction latency, motion speed bounds, noisy actuation~\citepx{biological_constrains}.
    \item Integration into large-scale systems such as AlphaStar, which restricts APM and reaction time to human-like thresholds~\citepx{alpha_star}.
    \item Procedural frameworks such as CAPS (Conditioning for Action Policy Smoothness) to maintain smoothness and realism~\citepx{caps}.
\end{itemize}

\subsection*{Higher-Level Playstyle Concepts}

More abstract heuristic frameworks, such as player typologies or design archetypes, offer categorical representations of style:
\begin{itemize}
    \item \textbf{Bartle Taxonomy}~\citepx{bartle_taxonomy}: Categorizes players into Achievers, Explorers, Socializers, and Killers.
    \item \textbf{Playstyle Personas}: Designer-defined play personas that guide feature tuning or game content targeting \citepx{player_modeling}.
\end{itemize}

\subsection*{Discussion}

While heuristic rules are foundational in early style modeling and remain useful in interpretable applications, they face inherent limitations. Their dependency on expert assumptions constrains scalability, and their rigidity makes them ill-suited to capture style emergence in modern, high-dimensional decision environments. These challenges motivate the rise of automated, data-driven techniques in subsequent sections.

\section{Measuring Playstyle by Learning from Data}

Moving beyond handcrafted rules, data-driven approaches aim to \textbf{automatically extract latent playstyle representations} from behavioral observations. These methods learn from the distribution of \textbf{Observable Elements} $\mathcal{O}$ associated with an agent's interaction history, without assuming explicit access to the agent's preference function $\mathrm{PF}$. Depending on the degree of supervision and the learning signal structure, these methods are categorized into three major paradigms: supervised learning (SL), unsupervised clustering (UC), and contrastive learning (CL).

\subsection*{Supervised Learning (SL)}

Supervised learning relies on datasets annotated with predefined style labels, such as strategic archetypes, player roles, or personality categories. Models (e.g., classification networks) are trained to predict these style labels from behavioral input---often processed into state--action trajectories or domain-specific features~\citepx{supervised_style_classification}. This provides \textbf{absolute playstyle measurement}, with learned mappings from behavior to style classes.

However, this paradigm \textbf{inherits the inductive bias and limitation of the labeling schema}. Labels are often noisy, subjective, or domain-locked, and may fail to capture the full spectrum or granularity of playstyle variation. In addition, the assumption of categorical style classes contradicts the more fluid and continuous formulation of style proposed in Chapter~5.

\subsection*{Unsupervised Clustering (UC)}

Unsupervised clustering techniques attempt to discover latent groupings in behavioral data without requiring predefined labels. Studies have employed self-organizing maps~\citepx{player_modeling_self_organization} and clustering algorithms such as DBSCAN~\citepx{dbscan}, k-means, and hierarchical clustering to identify distinct behavioral modes across various games.

A critical prerequisite for meaningful clustering is the construction of a suitable \textbf{feature space}. Without an effective representation of behavior—whether hand-engineered or learned—the resulting clusters may reflect noise or irrelevant variance rather than genuine stylistic differences. Feature extraction may rely on statistical summaries, trajectory encodings, or embedding spaces learned via auxiliary objectives.

This approach is particularly useful in exploratory scenarios or when no labels are available. However, a key challenge lies in interpreting the semantic meaning of the resulting clusters, especially when the features are high-dimensional, noisy, or lack clear stylistic anchors.

\subsection*{Contrastive Learning (CL)}

Contrastive learning learns a \textbf{style-sensitive embedding space} by contrasting similar and dissimilar agent behaviors~\citepx{policy_similarity_metric}. Unlike SL and UC, which operate on class labels or global distance, CL focuses on \textbf{local relational structure}, optimizing for an embedding $z = \phi(o)$ such that:
\[
d(z^+, z^-) > d(z, z^+)
\]
for behavior pairs $(o, o^+)$ deemed stylistically similar and $(o, o^-)$ deemed different.

A key strength of CL lies in its \textbf{label-efficient learning}: it leverages weak signals such as win/loss outcomes~\citepx{rd, udm}, trajectory similarity~\citepx{chess_style}, or self-play pairing structure. In domains where clear labels are scarce but relative feedback is abundant, contrastive objectives can \textbf{preserve stylistic semantics} while accommodating noise and context variance.

Nonetheless, CL's effectiveness depends on the \textbf{quality of positive/negative sampling}. Poorly defined pairings may collapse the style space or introduce non-stylistic noise. Moreover, contrastive embeddings often lack interpretability unless coupled with downstream probing or visualization.

\subsection*{Discussion}

Data-driven learning enables scalable and flexible playstyle measurement---especially when traditional metrics or handcrafted rules fall short in high-dimensional or emergent environments. The three paradigms differ fundamentally in supervision type, semantic clarity, and inductive bias:

\begin{itemize}
  \item \textbf{SL} assumes strong priors from human-defined categories.
  \item \textbf{UC} relies entirely on the statistical structure of behavior.
  \item \textbf{CL} uses local pairwise constraints to learn generalizable embeddings.
\end{itemize}

From the perspective of Chapter~5, these approaches correspond to different ways of inferring or comparing distributions $f(\pi)$ over observable elements $o \in \mathcal{O}$, and ultimately reflect indirect approximations of latent preference structures $\mathrm{PF}$.

The remainder of this chapter will delve into more direct measurement frameworks, including policy action distributions and symbolic discrete-state analysis, which offer greater alignment with the formal framework of playstyle consistency and belief-based reasoning.

\section{Measuring Playstyle from Policy Action Distributions}

While previous approaches emphasized external observables—outcomes, trajectories, or handcrafted features—this section shifts focus to the agent’s \textbf{internal decision process}. Specifically, we analyze playstyle via the probabilistic structure of policies: how action probabilities vary across decision contexts.

By comparing action distributions induced by different policies over sampled states, this method captures stylistic differences rooted in choice tendencies rather than environment-specific observables. It aligns with the internal loop introduced in Chapter~5, reflecting how agents express preferences when situated in similar belief or observation states.

This perspective is especially useful when raw trajectories or outcomes do not sufficiently differentiate behavior, offering a more fine-grained and theoretically grounded way to study intention and policy diversity.

\subsection*{Distributional Metrics}

Distributional divergences quantify how two action distributions differ at the probabilistic level. Common choices include:

\begin{itemize}
    \item \textbf{Kullback-Leibler (KL) Divergence} measures the information loss incurred when one distribution is used to approximate another. Given two discrete distributions $P$ and $Q$, it is defined as $D_{KL}(P \parallel Q) = \sum_x P(x) \log \frac{P(x)}{Q(x)}$.

    \item \textbf{Jensen-Shannon (JS) Divergence} is a symmetrized and smoothed variant of KL divergence: $D_{JS}(P \parallel Q) = \frac{1}{2} D_{KL}(P \parallel M) + \frac{1}{2} D_{KL}(Q \parallel M)$ where $M = \frac{1}{2}(P + Q)$. It remains finite even when the supports of $P$ and $Q$ do not overlap.

    \item \textbf{Wasserstein Distance} (or Earth Mover’s Distance) measures the minimum cost of transforming one distribution into another, based on a ground distance between outcomes. It is widely used in reinforcement learning for measuring policy similarity, as demonstrated in works such as Measuring Policy Distance for Multi-Agent Reinforcement Learning~\citepx{mdps} and Primal Wasserstein Imitation Learning~\citepx{pwil}.
\end{itemize}

Among these, Wasserstein distance is often preferred in practice due to its robustness under distribution mismatch and smoother optimization properties.

\subsection*{Trajectory-Based Similarity}

Rather than comparing action distributions state-by-state, one may analyze entire trajectories—sequences of states and actions—as holistic representations of style.

For example, edit distance or Levenshtein distance can be applied to symbolic action sequences (e.g., opening moves in board games), while \textbf{Dynamic Time Warping} (DTW) is a well-established method for aligning temporally varying sequences. These methods are particularly useful when temporal alignment is non-trivial or when decision timing itself reflects stylistic preference \citepx{driver_dtw}.

\subsection*{Imitation Learning Constraints}

In imitation learning, policy similarity is often operationalized through prediction loss between an expert and a learner.

\begin{itemize}
    \item \textbf{Behavioral Cloning (BC)} minimizes supervised loss between agent predictions and expert demonstrations.
    \item \textbf{Generative Adversarial Imitation Learning (GAIL)}~\citepx{gail} extends this by training a discriminator to distinguish expert from agent trajectories; increased confusion implies greater policy similarity.
    \item \textbf{InfoGAIL}~\citepx{info_gail} introduces latent variables to uncover interpretable behavioral modes, bridging policy alignment and style representation learning.
\end{itemize}

Other imitation strategies such as DQfD or Ape-X DQfD~\citepx{dqfd, apex_dqfd}, while not explicitly measuring playstyle, apply similar constraints on agent behavior to preserve expert-like decisions. These will be discussed further in Chapter~9 on imitation learning.

\subsection*{Multi-Agent Policy Distance}

In multi-agent systems, evaluating policy diversity is essential for understanding cooperation, competition, and team-level strategy.

\begin{itemize}
    \item \textbf{Measuring Policy Distance for Multi-Agent RL}~\citepx{mdps} proposes integration-based metrics for quantifying inter-agent divergence across collaborative or adversarial settings.
    \item \textbf{System Neural Diversity (SND)}~\citepx{snd} introduces a suite of behavioral diversity metrics grounded in neural representation, enabling large-scale analysis of population-wide variation.
\end{itemize}

These approaches emphasize the importance of capturing fine-grained policy differences in environments with strategic interplay.

\subsection*{Discussion}

Action distribution-based measurements offer a precise lens into an agent’s decision-making structure. Unlike outcome-based or feature-driven methods, these approaches reflect \textit{what the agent intends to do} in a given context, even when outcomes may vary due to environment dynamics.

However, such precision comes with challenges: estimating policy distributions requires either access to internal models or extensive logging of state–action pairs. Moreover, aligning states across agents—especially in high-dimensional or unstructured environments—is nontrivial and may obscure true differences.

Despite these limitations, action-centric analysis offers deep insight into the expressive space of decision-making. It helps distinguish stylistic variance even when surface behavior appears similar, and plays a central role in the theoretical formulation of playstyle presented in Chapter~5. In the next section, we synthesize this action-based perspective with the symbolic alignment offered by discrete state modeling.

\section{Measuring Playstyle via Discrete States}
\label{ch:discrete_measurement}

While prior approaches have separately leveraged data-driven feature extraction and policy/action distribution modeling, both face inherent challenges when applied to high-dimensional, interaction-driven environments. In this section, we introduce a hybrid framework that combines the interpretability and alignment of symbolic features with the decision-centric expressiveness of policy comparison—by projecting agent behavior into a space of \textit{discrete symbolic states}.

This method begins from the observation that \textbf{policy-based playstyle measurement}, while theoretically grounded, suffers from two major limitations:

\begin{itemize}
    \item \textbf{Access to Action Distributions:} Accurate estimation of action distributions across all relevant states is often infeasible—particularly when modeling human players or black-box agents.
    \item \textbf{State Alignment Problem:} Even if two policies are available, there is no guarantee that their action distributions are defined over shared or meaningful states. States sampled from one agent's trajectory may lie outside the experiential support of another, resulting in unreliable or undefined comparisons.
\end{itemize}

To address these issues, we propose a structural shift: instead of operating over continuous or high-dimensional raw states, we discretize the observation space into a set of interpretable, symbolic states shared across agents. These \textbf{discrete states} act as anchors for conditional comparisons of decision patterns. By aligning trajectories in this symbolic space, we enable robust comparisons of how different agents behave under equivalent situational conditions—effectively integrating the strengths of data-driven abstraction and action-based evaluation.

\textbf{Philosophically}, this approach also resonates with our earlier formalization of playstyle as a form of belief-based preference over belief space transitions. Just as beliefs partition the internal expectation space, discrete states allow us to partition and interpret the external interaction space. From this perspective, a discrete state paired with its chosen action can be treated as an \emph{observable manifestation} of the agent’s underlying preference at that moment—providing a measurable bridge between internal decision dynamics and external behavior.

In the sections that follow, we:
\begin{itemize}
    \item Introduce methods for learning discrete state abstractions from trajectories.
    \item Define \textit{Playstyle Distance} as a policy-based comparison over shared discrete states.
    \item Extend to multiscale representations to capture perceptual granularity.
    \item Generalize beyond intersection-based comparisons to probabilistic and continuous forms.
    \item Conclude with a discussion on playstyle spectra as a foundation for diversity and population-level analysis.
\end{itemize}

\subsection{Discrete State Projection}

In high-dimensional observation spaces, such as those in modern video games, even imperceptible visual or spatial changes can result in substantially different raw observations.  
This sensitivity renders traditional tabular reinforcement learning inapplicable~\citepx{rl_book}, and makes direct comparison of states across agents infeasible, as the underlying state space lacks semantic alignment.  
In the context of playstyle measurement—where the goal is to evaluate and compare decisions made under similar conditions—this alignment issue becomes particularly critical.  
Without a shared and semantically meaningful representation, it is not straightforward to define effective intersections or establish reliable comparisons between different players' behaviors.

Early approaches in deep reinforcement learning attempted to approximate discrete states via hashing techniques.  
For example, static hash functions and learned embeddings from autoencoders have been applied in count-based exploration frameworks~\citepx{sim_hash, count_based_exploration}.  
However, while useful for exploration, these techniques offer no guarantees of semantic consistency or cross-agent comparability.

Seeking a more principled solution, we turned to \textit{Vector Quantized Variational AutoEncoders} (VQ-VAE)~\citepx{vq_vae}, which combine the k-nearest-neighbor idea from clustering with deep learning to generate discrete latent representations.  
VQ-VAE projects inputs into a learned codebook through nearest-neighbor quantization, and employs a straight-through gradient estimator to enable end-to-end training.  
Yet, standard VQ-VAE remains limited in two critical ways:  
(1) its large codebook often results in an excessively fine-grained symbolic space, where overlap across agents becomes sparse;  
(2) it lacks explicit constraints to ensure that the learned states encode gameplay-relevant decision features.

To address these limitations, we developed the \textit{Hierarchical State Discretization} (HSD) framework~\citepx{playstyle_distance}.  
HSD introduces two key innovations:
First, it organizes discrete codes into a compositional multiscale hierarchy, where each level captures different abstraction granularity and is combined via soft weighting coefficients $\alpha$.  
Second, it incorporates dual decoding objectives—reconstruction loss and policy prediction loss—that jointly steer the discrete representation toward two core properties:

\begin{itemize}
  \item \textbf{Capacity}: The reconstruction loss ensures that perceptual structure in the input is preserved, enabling the model to distinguish different environments or contexts even in compact spaces.
  \item \textbf{Popularity}: The policy loss enforces that gameplay-relevant decision patterns are retained, allowing different agents’ behaviors to be meaningfully compared under a common symbolic representation.
\end{itemize}

These two dimensions directly correspond to the fundamental requirements for playstyle measurement:  
Capacity supports the differentiation of behavioral diversity across contexts, while Popularity ensures that decision tendencies are consistently encoded for comparative analysis.

\begin{figure}[ht]
  \centering
  \includegraphics[width=\textwidth]{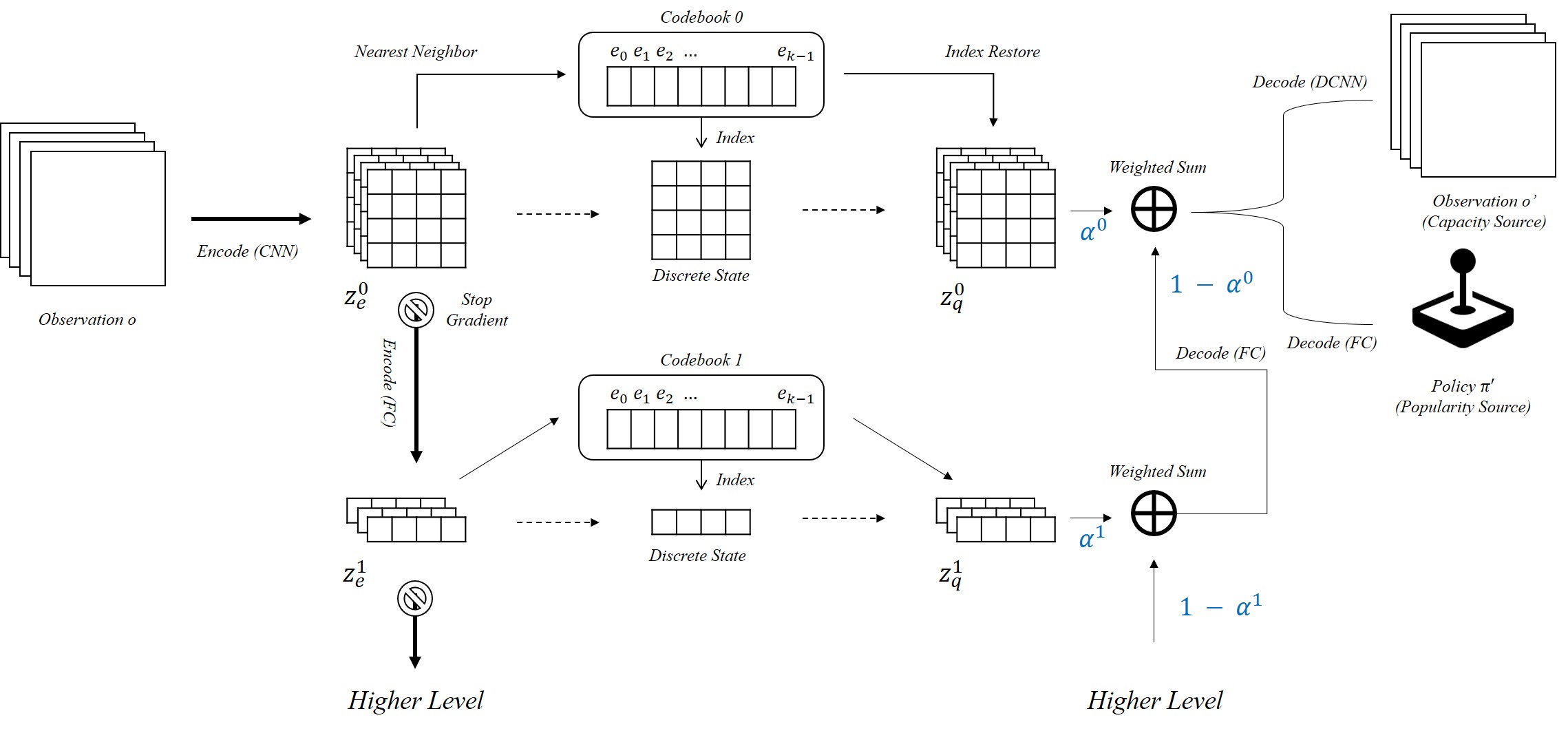}
  \caption[Illustration of the Hierarchical State Discretization (HSD) framework]{Illustration of the Hierarchical State Discretization (HSD) framework. Each hierarchy is quantized separately and composed through weighted gating ($\alpha$) to form a flexible, multiscale symbolic state. Policy supervision ensures gameplay-relevant semantics are retained.}
  \label{fig:hsd}
\end{figure}

\vspace{1em}

\subsubsection{Verification of Discrete Representation on CIFAR-10}

To empirically verify that the discrete representations produced by HSD remain useful and discriminative even under severe compression, we conducted supplementary experiments on the CIFAR-10 dataset~\citepx{cifar10}.  
We treat the image classification task as a proxy decision-making problem, where each image classification corresponds to a distinct decision context.

A modified Wide Residual Network (Wide ResNet)~\citepx{wide_resnet} served as our backbone architecture.  
The model used a depth of 16 and a widening factor of 4, followed by an additional CNN layer that reduced the feature dimension to 128. This layer was bounded within $[-1, 1]$ by a hyperbolic tangent activation.  
Both the observation decoder and the policy decoder were trained separately, following the standard VQ-VAE structure for image reconstruction and classification prediction, respectively.

\begin{table}[ht]
    \centering
    \caption[CIFAR-10 classification accuracy with HSD discrete representations]{Classification accuracy on CIFAR-10 using Wide ResNet and HSD discrete representations. The average accuracy and standard deviation are reported over three independent runs with different random seeds.}
    \label{Table:CIFAR-10-Accuracy}
    \begin{tabular}{|l|r|}
    \hline
    \quad & Test Accuracy \\
    \hline
    Human Benchmark & 93.91\% \\
    \hline
    Wide ResNet (Baseline) & 94.92\% $\pm$ 0.05\% \\
    \hline
    HSD ($2^{16}$ state space) & 91.03\% $\pm$ 0.30\% \\
    \hline
    \end{tabular}
\end{table}

As shown in Table~\ref{Table:CIFAR-10-Accuracy}, despite compressing observations into an ultra-small symbolic space of just $2^{16}$ states, HSD achieved a test accuracy exceeding 91\%, approaching the human benchmark of 93.91\%.  
This result confirms that the learned discrete states preserve decision-relevant features under compression—and do not merely reflect class identity.

\subsubsection{State Space Utilization Analysis}

We further examined the utilization of symbolic states across different HSD configurations.  
Table~\ref{Table:CIFAR-10-Used-State-Count} reports the number of unique discrete states used in the CIFAR-10 test set, where $B$ denotes the block size and $K$ the codebook size.

\begin{table}[ht]
    \centering
    \caption[Used discrete states on CIFAR-10]{Number of used discrete states under different HSD configurations.}
    \label{Table:CIFAR-10-Used-State-Count}
    \begin{tabular}{|r|r|r|r|}
    \hline
    \multicolumn{4}{|c|}{\bf State Space: $2^{16}$} \\
    \hline
    B2-K256 & B4-K16 & B8-K4 & B16-K2 \\
    \hline
    1863 & 1910 & 2228 & 2911 \\
    \hline
    \multicolumn{4}{|c|}{\bf State Space: $2^{20}$} \\
    \hline
    B4-K32 & B5-K16 & B10-K4 & B20-K2 \\
    \hline
    4011 & 4992 & 6160 & 6757 \\
    \hline
    \end{tabular}
\end{table}

Even within the limited $2^{16}$ state space, HSD was able to utilize thousands of distinct states to differentiate among the 10,000 CIFAR-10 test samples.  
This confirms that the symbolic abstraction is expressive enough to preserve intra-class variation and encode meaningful structure.  
Moreover, increasing block granularity ($B$) improves expressiveness even under a fixed symbolic space size.

\subsubsection{Sensitivity to Observation Perturbations}

We next evaluated the robustness of discrete representations to perceptual noise.  
By adding Gaussian noise of varying standard deviations to the input images, we measured how often the resulting discrete state remained unchanged.

\begin{table}[ht]
    \centering
    \caption[State stability under noise (CIFAR-10)]{Percentage of unchanged discrete states under Gaussian noise perturbation.}
    \label{Table:CIFAR-10-State-Stability}
    \begin{tabular}{|l|c|c|c|c|}
    \hline
    \quad & Std 1 & Std 2 & Std 4 & Std 8 \\
    \hline
    B2-K256 ($2^{16}$) & 89.19\% & 79.68\% & 63.94\% & 42.10\% \\
    B16-K2 ($2^{16}$) & 84.24\% & 71.15\% & 51.37\% & 27.96\% \\
    \hline
    \end{tabular}
\end{table}

As shown in Table~\ref{Table:CIFAR-10-State-Stability}, discrete state encodings remain stable under low noise levels and degrade predictably with larger perturbations—consistent with human perceptual thresholds.  
This property highlights the robustness of HSD to input variations while retaining sensitivity to meaningful differences.

\begin{figure}[ht]
    \centering
    \includegraphics[width=\linewidth]{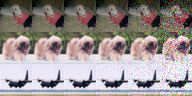}
    \caption[Examples of CIFAR-10 images with Gaussian noises]{Examples of CIFAR-10 images with increasing levels of Gaussian noise (Std = 0, 2, 4, 8, 16, 32).}
    \label{Figure:CIFAR-10-Noise}
\end{figure}

\subsubsection{Ablation Study on Policy Supervision}

Finally, we investigated the importance of policy supervision in learning useful discrete representations.  
We compared HSD encoders trained with and without a policy decoder, using the TORCS racing environment~\citepx{torcs}.  
The evaluation task involved recognizing five distinct driving styles under a speed dimension and five under an action noise dimension.

\begin{table}[h]
    \centering
    \caption[Ablation study on policy supervision]{Impact of removing the policy decoder in discrete representation learning (TORCS environment).}
    \label{Table:PolicyDecoderAblation}
    \begin{tabular}{|l|c|c|}
    \hline
    \quad & Speed Task Accuracy & Noise Task Accuracy \\
    \hline
    With Policy Decoder & 89.60\% (intersection: 187.53) & 72.67\% (intersection: 187.23) \\
    \hline
    Without Policy Decoder & 78.13\% (intersection: 1.40) & 40.93\% (intersection: 1.40) \\
    \hline
    \end{tabular}
\end{table}

Table~\ref{Table:PolicyDecoderAblation} clearly shows that excluding the policy decoder severely reduces both prediction accuracy and the number of stable intersection states.  
This ablation confirms that policy supervision is essential to ensure that discrete states retain decision-critical attributes—beyond merely reconstructing observations.

\subsection{Playstyle Distance}
\label{sec:playstyle_distance}

Building on the discrete state representations introduced in the previous section, we now define a concrete metric for quantifying differences in playstyle: the \textit{Playstyle Distance}.

The core philosophy is straightforward:  
Rather than comparing behavior across the entirety of the state space—which would be vulnerable to sparse coverage and semantically invalid mismatches—we restrict our attention to the \textbf{intersection} of discrete states that two agents or players have both encountered.  
Within these shared contexts, we can meaningfully compare decision tendencies by analyzing the respective action distributions.

This design is well-grounded in the epistemological premise of style:  
\textit{Playstyle manifests in how different agents choose among available options when facing similar situations.}  
Absent a shared context, any comparison of actions risks arbitrariness and fails to reflect genuine stylistic differences.

Thus, \textbf{Playstyle Distance} is defined as the divergence between action distributions conditioned on intersected discrete states.  
This formulation offers several key advantages:
\begin{itemize}
  \item It relies only on observable behavior (states and actions), without requiring access to internal policy parameters.  
  \item It accommodates partial trajectory overlap—agents need not visit identical state distributions.  
  \item It avoids misleading comparisons over unshared or out-of-distribution states.
\end{itemize}

In the following subsections, we formally define the metric, present experimental validations across diverse domains, and discuss methodological variants such as alternative divergence measures and weighting schemes.

\subsubsection{Formal Definition of Playstyle Distance}

Let $A$ and $B$ be two playing datasets, each consisting of observation-action pairs.  
Let $O_A$ and $O_B$ denote the observation sets in $A$ and $B$, respectively.  
Each observation $o$ is mapped to a discrete symbolic state $\overline{s}$ using a mapping function $\phi: O \to \overline{S}$.

We first define the \textbf{intersection set} of discrete states as:
\[
\overline{S}_\phi(A,B) = \phi(O_A) \cap \phi(O_B),
\]

For each $\overline{s} \in \overline{S}_\phi(A,B)$, let $\pi_A(\cdot|\overline{s})$ and $\pi_B(\cdot|\overline{s})$ denote the empirical action distributions from datasets $A$ and $B$, respectively.

The \textbf{local policy distance} at each shared state is defined as:
\[
\overline{d}_\phi(A,B|\overline{s}) = W_2\big(\pi_A(\cdot|\overline{s}), \pi_B(\cdot|\overline{s})\big),
\]
where $W_2$ denotes the 2-Wasserstein distance.

Two global aggregation schemes are proposed:

\paragraph{Uniform Averaging:}
\[
    d_\phi^{\text{uniform}}(A,B) = \frac{1}{|\overline{S}_\phi(A,B)|} \sum_{\overline{s} \in \overline{S}_\phi(A,B)} \overline{d}_\phi(A,B|\overline{s}).
\]

\paragraph{Expectation-Based Averaging:}
 \[
    d_\phi^{\text{expected}}(A,B) = \frac{1}{2}\left( d_\phi(A|B) + d_\phi(B|A) \right),
    \]
    where
    \[
    d_\phi(X|Y) = \mathbb{E}_{o \sim O_Y, \, \phi(o) \in \overline{S}_\phi(X,Y)}\left[ \overline{d}_\phi(X,Y|\phi(o)) \right].
\]

To increase robustness, define a filtered intersection set:
\[
\overline{S}'_\phi(A,B,t) = \left\{ \overline{s} \in \overline{S}_\phi(A,B) \ \middle|\ F_\phi(\overline{s},A) \geq t, \ F_\phi(\overline{s},B) \geq t \right\}.
\]

\subsubsection{Discussion on Consistency}

The Playstyle Distance framework satisfies two criteria:
\begin{itemize}
  \item \textbf{Generality}: Requires no task-specific labels or domain features; works across tasks and agent types.
  \item \textbf{Consistency}: If an oracle metric ranks similarity as $A \sim C < B \sim C$, then the Playstyle Distance preserves that order:
  \[
  d_\phi(A,C) < d_\phi(B,C)
  \]
\end{itemize}

\subsubsection{Evaluation on TORCS: Impact of Discrete Representations and State Space Sensitivity}

We begin by evaluating Playstyle Distance in the TORCS racing simulation environment, using datasets collected from rule-based AI agents configured along two manually controlled style dimensions \citepx{torcs, gym_torcs}:
\begin{itemize}
    \item \textbf{Target Driving Speed}: Five speed settings (60, 65, 70, 75, and 80 km/h).
    \item \textbf{Driving Noise Level}: Five levels of Gaussian noise applied to steering and acceleration actions, simulating different levels of motor control smoothness.
\end{itemize}

A total of 25 datasets were generated, each corresponding to a unique (speed, noise) configuration.  
Matching query datasets were collected for playstyle retrieval evaluation using the metric described in Section~6.5.2.

The discrete state encoder $\phi$ used in computing Playstyle Distance was trained on human gameplay rather than rule-based agents.  
This design choice ensures that the resulting symbolic representation captures realistic decision tendencies and generalizes across different agent sources.  
Training was conducted via the Hierarchical State Discretization (HSD) framework introduced earlier.

\paragraph{State Space and Threshold Selection.}

To evaluate the robustness of Playstyle Distance across different parameter settings, we varied both the discrete state space size and the intersection threshold $t$.  
Results are reported in Table~\ref{Table:TORCS-Style-Accuracy-Search}.

\begin{table}[ht]
    \centering
    \caption[TORCS HSD State Space Search]{Comparison of different discrete state space sizes and thresholds for evaluating playstyle accuracy in TORCS.}
    \label{Table:TORCS-Style-Accuracy-Search}
    \begin{tabular}{|l|c|c|c|}
    \hline
    5 Speed Styles & $2^{16}$ & $2^{20}$ & $2^{24}$ \\
    \hline
    $t=1$ & 89.33\% & 89.60\% & \textbf{91.47\%} \\
    $t=2$ & 89.73\% & \textbf{91.20\%} & 89.60\% \\
    $t=4$ & 90.13\% & \textbf{90.27\%} & 86.67\% \\
    \hline
    \end{tabular}
    \vspace{1mm}
    
    \begin{tabular}{|l|c|c|c|}
    \hline
    5 Noise Styles & $2^{16}$ & $2^{20}$ & $2^{24}$ \\
    \hline
    $t=1$ & \textbf{74.13\%} & 72.67\% & 73.60\% \\
    $t=2$ & 78.27\% & \textbf{81.33\%} & 78.67\% \\
    $t=4$ & 84.53\% & \textbf{85.07\%} & 82.40\% \\
    \hline
    \end{tabular}
\end{table}

These results suggest the following:

\begin{itemize}
    \item For 5-speed classification, the largest state space ($2^{24}$) yields the best accuracy at $t=1$ but becomes less reliable as $t$ increases, likely due to sparsity in high-dimensional representations.
    \item For 5-noise classification, $2^{20}$ consistently achieves high accuracy across thresholds, indicating a good balance between capacity and stability.
    \item Threshold $t=2$ provides a reliable compromise—filtering unstable states without overly shrinking the intersection set.
\end{itemize}

We therefore adopt $2^{20}$ as the default symbolic state space and $t=2$ as the threshold for all subsequent experiments.

\paragraph{Comparison of Discrete Representations.}

With the above configuration fixed, we compared HSD against several baseline representations:

\begin{itemize}
    \item \textbf{Pixel}:  
    Raw observations (four stacked game frames) used directly as symbolic states.
    \item \textbf{Low-Resolution Discretization (LRD)}:  
    Downsampled grayscale images ($8\times8$) with quantized pixel values (16 levels).
    \item \textbf{Continuous Metric (FID-like)}:  
    Playstyle similarity computed in a continuous latent space using an FID-style Wasserstein distance between feature means.
\end{itemize}

\begin{table}[ht]
    \centering
    \caption[TORCS Playstyle Distance Accuracy]{Playstyle accuracy on TORCS using different discrete representation methods.}
    \label{Table:TORCS-Style-Accuracy-Baseline}
    \begin{tabular}{|l|c|c|c|}
    \hline
    Representation & 25 Styles & 5 Speeds & 5 Noise Levels \\
    \hline
    Pixel & 4.00\% & 20.00\% & 20.00\% \\
    LRD & 4.00\% & 20.00\% & 20.00\% \\
    HSD (ours) & \textbf{69.73\%} & \textbf{91.20\%} & \textbf{81.33\%} \\
    Continuous (FID-like) & timeout & 27.60\% & 23.20\% \\
    \hline
    \end{tabular}
\end{table}

HSD significantly outperforms all baselines.  
Pixel and LRD fail due to the near absence of intersection states, while the continuous method is not only less discriminative but also computationally expensive.  
These results validate the importance of meaningful symbolic state alignment in playstyle measurement.

\subsubsection{Evaluation on RGSK: Playstyle Measurement with Human Players}

We further evaluate Playstyle Distance on human-generated data using the Realistic Car Physics Racing Game Starter Kit (RGSK), a Unity-based racing environment available from the Unity Asset Store~\citepx{unity_ml_agent}.  
In this setup, six human participants were instructed to maintain distinct playstyles across four manually defined dimensions.

Following the settings determined in the TORCS experiments, we use a discrete state space of $2^{20}$ and an intersection threshold $t=2$ throughout this evaluation.

The four evaluated style dimensions are:

\begin{itemize}
    \item \textbf{Nitro Usage (Nitro)}:  
    Preference and timing in activating the $N_{2}O$ boost system.
    \item \textbf{Surface Preference (Surface)}:  
    Tendency to drive primarily on paved roads versus grass surfaces.
    \item \textbf{Track Positioning (Position)}:  
    Consistency in maintaining an inner or outer trajectory along the racing line.
    \item \textbf{Corner Handling (Corner)}:  
    Decision strategy when turning — drifting versus braking.
\end{itemize}

Table~\ref{Table:RGSK-Trajectory-Style-Accuracy} shows that Playstyle Distance using HSD outperforms baseline representations by a wide margin in all style dimensions.

\begin{table}[h]
    \centering
    \caption[RGSK Playstyle Distance Accuracy]{Playstyle accuracy on RGSK across four human-defined style dimensions.}
    \label{Table:RGSK-Trajectory-Style-Accuracy}
    \begin{tabular}{|l|c|c|c|}
    \hline
    Style Dimension & Pixel & LRD & HSD (ours) \\
    \hline
    Nitro Usage & 16.67\% & 33.33\% & \textbf{93.11\%} \\
    Surface Preference & 16.67\% & 16.67\% & \textbf{99.83\%} \\
    Track Positioning & 16.67\% & 33.17\% & \textbf{99.67\%} \\
    Corner Handling & 16.67\% & 42.50\% & \textbf{99.56\%} \\
    \hline
    \end{tabular}
\end{table}

These results demonstrate the robustness and generality of the Playstyle Distance framework.  
Even without access to explicit style labels or feature engineering, the framework reliably differentiates nuanced behavior patterns in human gameplay.

\vspace{1em}

\subsubsection{Evaluation on Atari: Playstyle Analysis of Learning-Based Agents}

We further apply Playstyle Distance to seven Atari 2600 games from the Arcade Learning Environment~\citepx{atari}, selected to represent a range of gameplay dynamics: Asterix, Breakout, MsPacman, Pong, Qbert, Seaquest, and Space Invaders.

Agents were trained using four reinforcement learning algorithms — DQN~\citepx{dqn}, C51~\citepx{c51}, Rainbow~\citepx{rainbow}, and IQN~\citepx{iqn} — all implemented via the Dopamine framework~\citepx{dopamine}.  
For each algorithm, five models were trained from independent seeds, producing a total of 20 agents considered as distinct playstyles.

We use the same experimental setup as in prior evaluations, with $2^{20}$ discrete states and threshold $t=2$.

Table~\ref{Table:Atari-Model-Accuracy} presents the playstyle prediction accuracy across different discrete state representations.

\begin{table}[ht]
    \centering
    \caption[Atari Playstyle Distance Accuracy]{Playstyle accuracy on Atari 2600 games using different state representations.}
    \label{Table:Atari-Model-Accuracy}
    \begin{tabular}{|l|c|c|c|}
    \hline
    Game & Pixel & LRD & HSD (ours) \\
    \hline
    Asterix & 67.10\% & 92.20\% & \textbf{98.65\%} \\
    Breakout & 59.05\% & 59.85\% & \textbf{93.83\%} \\
    MsPacman & 70.95\% & \textbf{96.75\%} & 93.85\% \\
    Pong & 34.20\% & \textbf{94.15\%} & 91.55\% \\
    Qbert & 30.00\% & 83.70\% & \textbf{94.58\%} \\
    Seaquest & 14.95\% & 42.70\% & \textbf{93.87\%} \\
    Space Invaders & 30.35\% & 56.10\% & \textbf{79.72\%} \\
    \hline
    \end{tabular}
\end{table}

HSD consistently achieves the highest prediction accuracy across most games, showing strong generalization even in deterministic or low-randomness environments like Atari.  
While LRD slightly outperforms HSD in MsPacman and Pong, the performance gaps are small and context-dependent.

Together with the TORCS and RGSK results, these findings validate the broad applicability of Playstyle Distance and support its use as a general-purpose metric across domains, agents, and playstyle sources.

\subsubsection{Uniform or Expected Distance}

In defining Playstyle Distance, two aggregation strategies over shared discrete states are considered:

\begin{itemize}
    \item \textbf{Uniform Averaging}:  
    Each intersected state contributes equally to the overall distance.
    \item \textbf{Expected Averaging}:  
    Each state's contribution is weighted by its empirical observation frequency.
\end{itemize}

\paragraph{Empirical Comparison.}

We evaluate these two aggregation strategies on both the TORCS and RGSK datasets.  
Results are summarized in Tables~\ref{Table: Uniform vs Expected - TORCS} and~\ref{Table: Uniform vs Expected - RGSK}.

\begin{table}[ht]
    \centering
    \caption[TORCS Uniform vs Expected Distance]{Comparison of uniform vs expected distance on TORCS (playstyle prediction accuracy).}
    \label{Table: Uniform vs Expected - TORCS}
    \begin{tabular}{|l|c|c|}
    \hline
    Aggregation Strategy & 5 Speed Styles & 5 Noise Styles \\
    \hline
    Uniform ($t = 2$) & 76.00\% & 65.20\% \\
    Uniform ($t = 4$) & 83.47\% & 74.40\% \\
    Expected ($t = 2$) & \textbf{91.20\%} & \textbf{81.33\%} \\
    Expected ($t = 4$) & \textbf{90.27\%} & \textbf{85.07\%} \\
    \hline
    \end{tabular}
\end{table}

\begin{table}[ht]
    \centering
    \caption[RGSK Uniform vs Expected Distance]{Comparison of uniform vs expected distance on RGSK (playstyle prediction accuracy).}
    \label{Table: Uniform vs Expected - RGSK}
    \begin{tabular}{|l|c|c|c|c|}
    \hline
    Aggregation Strategy & Nitro & Surface & Position & Corner \\
    \hline
    Uniform ($t=2$) & 92.33\% & 99.28\% & 99.33\% & 99.50\% \\
    Uniform ($t=4$) & 86.94\% & 96.44\% & 95.56\% & 95.61\% \\
    Expected ($t=2$) & \textbf{93.11\%} & \textbf{99.83\%} & \textbf{99.67\%} & \textbf{99.56\%} \\
    Expected ($t=4$) & 88.78\% & 98.39\% & 98.33\% & 98.44\% \\
    \hline
    \end{tabular}
\end{table}

From these results, we observe the following:

\begin{itemize}
    \item In TORCS, expected averaging consistently outperforms uniform averaging across both speed and noise style dimensions.
    \item In RGSK, the difference is relatively minor, likely due to denser and more structured human demonstrations, but expected averaging still yields the best performance.
    \item Increasing the threshold $t$ mitigates instability from rare states, but expected averaging remains robust even under lower thresholds.
\end{itemize}

Accordingly, we adopt \textbf{expected averaging} as the default aggregation strategy for Playstyle Distance in all subsequent experiments.

\vspace{1em}

\subsubsection{Alternative Distribution Distance Metrics}

While Playstyle Distance primarily uses the 2-Wasserstein distance ($W_2$) to quantify divergences between action distributions, the framework is flexible and supports alternative metrics.

We evaluate the following:

\begin{itemize}
    \item \textbf{Wasserstein-1 distance ($W_1$)}:  
    Uses $L_1$ norm instead of $L_2$; often more robust in sparse settings.
    
    \item \textbf{Kullback–Leibler divergence ($D_{KL}$)}:  
    A common but asymmetric metric measuring information loss.

    \item \textbf{Average KL divergence ($M_{KL}$)}:
The average of $D_{KL}(P | Q)$ and $D_{KL}(Q | P)$, providing a symmetric but not bounded divergence.
While similar in form to Jensen-Shannon divergence, it is not equivalent.
\end{itemize}

The comparison results on RGSK are summarized in Table~\ref{Table: W1 vs W2 vs LK vs AvgKL - RGSK}.

\begin{table}[ht]
    \centering
    \caption{Comparison of distance metrics for action distributions in RGSK.}
    \label{Table: W1 vs W2 vs LK vs AvgKL - RGSK}
    \begin{tabular}{|l|c|c|c|c|}
    \hline
    Metric & Nitro & Surface & Position & Corner \\
    \hline
    $W_1$ & \textbf{95.28\%} & \textbf{100.00\%} & \textbf{99.83\%} & \textbf{99.83\%} \\
    $W_2$ & 93.11\% & 99.83\% & 99.67\% & 99.56\% \\
    $D_{KL}$ & 37.44\% & 74.28\% & 58.06\% & 56.33\% \\
    $M_{KL}$ (approx. JS) & 76.78\% & 96.06\% & 91.94\% & 87.33\% \\
    \hline
    \end{tabular}
\end{table}

Key observations:

\begin{itemize}
    \item Wasserstein distances ($W_1$ and $W_2$) vastly outperform KL-based metrics.
    \item $W_1$ achieves slightly better accuracy than $W_2$ in this setting, though the margin is small.
    \item Symmetrized KL ($M_{KL}$) improves considerably over $D_{KL}$, but still falls behind Wasserstein metrics.
\end{itemize}

Despite the slight edge of $W_1$ in RGSK, we adopt $W_2$ as the default for the following reasons:

\begin{itemize}
    \item It provides smoother gradients, which are useful for optimization-based applications.
    \item It incorporates covariance structure in continuous action spaces.
    \item It is widely used in generative modeling and behavioral distribution comparisons.
\end{itemize}

\paragraph{Summary.}

To summarize, Playstyle Distance is computed over intersected discrete states using an HSD-trained symbolic encoder.  
We measure local action divergence using the 2-Wasserstein distance and aggregate across states using expected averaging.  
This combination yields strong performance and broad applicability, forming the foundation for the multiscale and probabilistic extensions introduced in the following sections.

\subsection{Multiscale States}
\label{sec:multiscale_states}

The performance of playstyle measurement fundamentally depends on the quality and coverage of intersected states used to compare decision patterns. However, any single discrete state space often imposes a trade-off: coarser abstractions increase intersection coverage but reduce stylistic resolution, while finer discretizations offer more expressive power but suffer from sparse overlaps. To address this limitation, we introduce a multiscale formulation of \textit{Playstyle Distance}, which integrates multiple symbolic abstractions simultaneously.

\subsubsection{Multiscale Formulation}

We generalize the original mapping function $\phi$ to a set of discrete encoders $\Phi = \{\phi_1, \phi_2, ..., \phi_k\}$, where each $\phi$ projects observations into a different level of symbolic granularity (e.g., coarse, mid-level, fine). The multiscale expected Playstyle Distance is then defined as:

\begin{equation}
\label{equation:multiscale_playstyle_distance}
d_\Phi(A, B) = \frac{1}{2} \left[ \frac{1}{|\Phi|} \sum_{\phi \in \Phi} d_\phi(A | B) + \frac{1}{|\Phi|} \sum_{\phi \in \Phi} d_\phi(B | A) \right],
\end{equation}

where each $d_\phi(X | Y)$ is the expected action distribution distance under encoder $\phi$:

\begin{equation}
d_\phi(X | Y) = \mathbb{E}_{o \sim O_Y,\phi(o) \in \overline{S}_\phi(X,Y)}[\overline{d}_\phi(X,Y|\phi(o))].
\end{equation}

In our experiments, we adopt a three-level hierarchy:
\begin{enumerate}
    \item $\phi_1$: singleton mapping (i.e., all observations map to the same symbolic state)
    \item $\phi_2$: HSD encoder with a $2^{20}$-sized discrete space (intermediate resolution)
    \item $\phi_3$: base hierarchy of HSD (fine-grained encoding, denoted as $256^{\text{res}}$)
\end{enumerate}

This composite formulation eliminates the need for strict intersection thresholds and enables robust decision alignment even when symbolic representations vary across datasets or environments.

\subsubsection{Empirical Results and Benefits}

We empirically evaluate the multiscale version of \textit{Playstyle Distance} against single-scale baselines across three types of environments: TORCS (rule-based racing agents), RGSK (human driving behaviors), and Atari (learning-based agents). Table~\ref{Table:multiscale_state_space_efficacy} summarizes playstyle prediction accuracies using different discrete state configurations.

Compared to fixed-resolution settings (e.g., $2^{20}$ or base-level $256^\text{res}$), the multiscale formulation (denoted as ``mix'') consistently achieves competitive or superior results. Notably:

\begin{itemize}
  \item In environments like TORCS and Atari (e.g., Asterix, Breakout), the multiscale approach improves robustness, reducing the need to tune the threshold $t$ for stable measurement.
  \item In human-play data (RGSK), where behavior is more varied, the multiscale method remains on par with the best single-scale setting while simplifying hyperparameter selection.
\end{itemize}

These findings confirm that multiscale abstraction mitigates the resolution--alignment trade-off, providing a more reliable and general-purpose tool for cross-style comparison across diverse domains.

\begin{table}[ht]
\centering
\caption[Multiscale Playstyle Prediction Accuracy]{Playstyle prediction accuracy (\%) $\pm$ standard deviation across different discrete state mappings, thresholds $t$, and environments. ``mix'' denotes the multiscale configuration using $\{1, 2^{20}, 256^{\text{res}}\}$.}
\label{Table:multiscale_state_space_efficacy}
\rotatebox{90}{
\begin{tabular}{l|c|cc|cc|cc}
\toprule
\textbf{Environment} & \textbf{1} & \textbf{$2^{20}$ ($t=2$)} & ($t=1$) & \textbf{$256^{\text{res}}$ ($t=2$)} & ($t=1$) & \textbf{mix ($t=2$)} & ($t=1$) \\
\midrule
TORCS & 35.1 $\pm$ 9.1 & 73.3 $\pm$ 8.2 & 66.5 $\pm$ 7.9 & 4.3 $\pm$ 3.1 & 60.9 $\pm$ 9.4 & \textbf{77.3 $\pm$ 7.4} & \textbf{77.5 $\pm$ 7.9} \\
\midrule
RGSK & 81.0 $\pm$ 7.2 & 79.2 $\pm$ 7.9 & \textbf{93.7 $\pm$ 4.7} & 5.7 $\pm$ 2.5 & 25.6 $\pm$ 7.2 & 78.8 $\pm$ 7.5 & \textbf{93.5 $\pm$ 4.3} \\
\midrule
Asterix & 25.2 $\pm$ 9.0 & \textbf{99.9 $\pm$ 0.5} & \textbf{100.0 $\pm$ 0.0} & 49.6 $\pm$ 7.7 & 32.7 $\pm$ 8.0 & \textbf{100.0 $\pm$ 0.0} & \textbf{100.0 $\pm$ 0.0} \\
Breakout & 32.7 $\pm$ 9.2 & \textbf{99.4 $\pm$ 1.6} & \textbf{99.9 $\pm$ 0.6} & 65.9 $\pm$ 8.5 & 29.9 $\pm$ 9.4 & \textbf{99.8 $\pm$ 1.1} & \textbf{99.9 $\pm$ 0.2} \\
MsPacman & \textbf{100.0 $\pm$ 0.0} & \textbf{99.9 $\pm$ 0.5} & \textbf{100.0 $\pm$ 0.0} & 92.8 $\pm$ 4.0 & \textbf{100.0 $\pm$ 0.0} & \textbf{100.0 $\pm$ 0.0} & \textbf{100.0 $\pm$ 0.0} \\
Pong & 49.9 $\pm$ 9.7 & 92.1 $\pm$ 2.7 & 92.3 $\pm$ 2.6 & 50.7 $\pm$ 9.5 & 52.2 $\pm$ 9.9 & \textbf{93.1 $\pm$ 3.2} & 92.4 $\pm$ 2.6 \\
Qbert & \textbf{99.9 $\pm$ 0.5} & \textbf{100.0 $\pm$ 0.0} & \textbf{100.0 $\pm$ 0.0} & 90.1 $\pm$ 5.3 & 91.6 $\pm$ 4.6 & \textbf{99.9 $\pm$ 0.5} & \textbf{100.0 $\pm$ 0.0} \\
Seaquest & 82.0 $\pm$ 7.6 & \textbf{99.7 $\pm$ 1.2} & \textbf{99.9 $\pm$ 0.6} & 17.1 $\pm$ 5.2 & 16.7 $\pm$ 4.9 & \textbf{99.9 $\pm$ 0.3} & \textbf{99.9 $\pm$ 0.2} \\
Space Invaders & 73.1 $\pm$ 8.5 & 98.7 $\pm$ 2.3 & \textbf{99.7 $\pm$ 1.2} & 50.4 $\pm$ 5.7 & 49.6 $\pm$ 8.4 & \textbf{99.9 $\pm$ 0.5} & \textbf{99.9 $\pm$ 0.6} \\
\bottomrule
\end{tabular}
}
\end{table}

\clearpage

\subsection{Perception of Similarity}
\label{subsec:perception_of_similarity}

While distance-based playstyle metrics offer a principled foundation for quantifying behavioral differences, they suffer from an important limitation: distance measures primarily encode dissimilarity, not the degree of similarity. Although distance is a common measure for determining similarity, a larger distance value conveys primarily that two entities are different, without giving much insight into the degree of their similarity. For example, given a point in 2D space, the candidate points with the same distance to the given point form a circle. As the distance increases, the size of this candidate circle also increases, and the similarity information is diluted as illustrated in Figure~\ref{figure:2d_circle_example}. This phenomenon has been observed in human decision-making as the \textit{Magnitude Effect}, suggesting diminished sensitivity to larger numbers \citepx{magnitude_effect}. This aligns with the \textit{Weber–Fechner Law} in psychophysics \citepx{weber_fechner_law}, which models the relationship between stimulus intensity and perceived change as logarithmic. In other words, as the absolute difference increases, perceptual sensitivity to that difference declines—a pattern mirrored in human similarity judgments.

\begin{figure}[ht]
    \centering
    \includegraphics[width=0.5\textwidth]{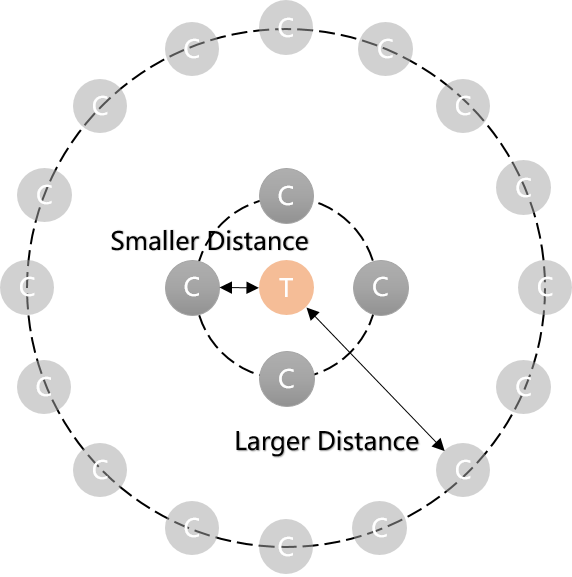}
    \caption[Degree of Similarity]{Degree of Similarity: This demonstrates how multiple candidate points C can share identical distance values from a target point T, emphasizing that as distance increases, the degree of similarity information diminishes.}
    \label{figure:2d_circle_example}
\end{figure}

\subsubsection{Perceptual Similarity Function}

We propose a probability-based model for similarity. In this model, greater similarity (i.e., smaller distance) corresponds to a probability closer to 100\%, while lesser similarity (larger distance) approaches 0\%. This proposed probability function aligns with the logarithmic human perceptual sensitivity to differences. Specifically, we use the exponential kernel to describe the probability of similarity, with the mapping function given by $P(d) = \frac{1}{e^d}$, where $d$ is the distance value from the policy distance function $D(\pi_X,\pi_Y)$ with 2-Wasserstein distance. This perceptual relation is the only relation under our assumptions from human cognition and probability. The original paper by \citetx{playstyle_similarity} provides a proof using differential equations in the Appendix.

This exponential transformation can also be found in the radial basis function \citepx{rbf} and Bhattacharyya coefficient \citepx{bhattacharyya}. The Bhattacharyya coefficient $BC(P,Q)$ measures the similarity between two probability distributions $P$ and $Q$, and it is related to the overlapping region between these two distributions. It is defined as $BC(P,Q) = \int_\mathcal{X} \sqrt{P(x)Q(x)}dx $. The Bhattacharyya distance, derived from the coefficient, is $ D_{B}(P,Q) = -ln(BC(P,Q)) $, and the inversion is $BC(P,Q) = exp(-D_{B}(P,Q))$.

Thus, we define a new playstyle measure $PS_\Phi^{\cap}(M_A,M_B)$ with probability of similarity in Equation~\ref{equation:perceptual_similarity}:
\begin{equation}
\label{equation:perceptual_similarity}
\begin{aligned}
& PS_\Phi^{\cap}(M_A,M_B) = \frac{\sum _ {s \in \bigcup_{\phi \in \Phi}\phi (M_A) \cap \phi(M_B)} P(D_\Phi ^M(\pi_{M_A}(s),\pi_{M_B}(s)))}{|\bigcup_{\phi \in \Phi}\phi (M_A) \cap \phi(M_B)|}, \\ & \textrm{where } D_{\Phi}^M(\pi_X,\pi_Y) = \frac{D(\pi_X,\pi_Y)}{\overline{D}_{\Phi}^{M}}
\end{aligned}
\end{equation}

The measure has been simplified by adopting a uniform average distance instead of an expected value. This not only streamlines calculations but also underscores the significance of encoder granularity. In particular, an intricate encoder with a vast state space may be accorded greater weight, especially if the intricate encoder reveals more intersection states. To match our probabilistic framework, we rescale the distances with a constant, $\overline{D}_{\Phi}^{M}$, ensuring the expected distance converges to 1. The constant $\overline{D}_{\Phi}^{M}$ can be calculated by averaging all observed distance on each discrete state in comparisons. Collectively, our revamped measure provides a probabilistic lens to interpret similarity, firmly rooted in cognitive theory and tailored for human comprehension. In practical terms, it transforms a raw distance metric into a probabilistic score interpretable as a likelihood of behavioral similarity.

\subsubsection{The Meaning of Different Distance Metrics}

Recalling our earlier discussion in Chapter~3 on \textit{style evolution} and the resistance to changing styles, the Wasserstein distance can be interpreted as the "effort" required to transition between different playstyle action distributions.  
If we imagine starting from a neutral or uniform distribution, the cost to reach a target style might be smaller than starting from an existing extreme style and moving to its opposite.  
From a state-conditioned perspective, this is like having to first remove an existing bias (subtracting one) and then add the opposite bias (adding one)—a process that incurs more cost than directly adding from a neutral state.  
This makes Wasserstein distance a natural measure for quantifying the resistance or friction in changing playstyles, whether in human execution or policy adaptation.

The Bhattacharyya distance, in contrast, is not about this "effort".  
Instead, it gauges the likelihood that two playstyles will result in the same action, due to its relation to the overlapping regions between two distributions.  
Thus, while \textit{Perceptual Similarity} is built on the idea of the effort needed to change playstyles, the \textit{BC Similarity} (or its variant \textit{BD Similarity}) is built on the frequency of identical actions.  
A player might care more about the effort involved in changing styles, while an observer may focus more on the actions they witness.

Formally, the Bhattacharyya distance is defined through the Bhattacharyya coefficient $BC$ with value range $[0,1]$, and the corresponding distance $D_{B} = -\ln(BC)$.  
For discrete probability distributions:  
\[
BC(P,Q) = \sum_{x \in \mathcal{X}} \sqrt{P(x)Q(x)}.
\]  
For continuous probability distributions, such as the action space in racing games like TORCS, it involves integration over probability density functions:  
\[
BC(P,Q) = \int_{x \in \mathcal{X}} \sqrt{p(x)q(x)}.
\]  
We adopt the multivariate normal formulation of $D_B$ \citep{bhattacharyya}, where $p_i = \mathcal{N}(\mu_i,\Sigma_i)$:
\begin{equation}
\begin{aligned}
    & D_{B}(p_1,p_2) = \frac{1}{8}(\mu_1-\mu_2)^T\Sigma^{-1}(\mu_1-\mu_2)
    +\frac{1}{2}\ln\left(\frac{\det \Sigma}{\sqrt{\det \Sigma_1 \det \Sigma_2}}\right), \\
    & \text{where} \quad \Sigma = \frac{\Sigma_1+\Sigma_2}{2}.
\end{aligned}
\end{equation}
We clip $D_B$ to a maximum of 10 to prevent extremely large values from affecting average scaling ($\frac{1}{e^{10}} \approx 0\%$).  
A small $\epsilon = 10^{-8}$ is added when computing determinants to handle singular matrices.

\subsubsection{Empirical Results}

To elucidate the benefits of this modification, we examine the relationship between accuracy and dataset size of the sampled observation-action pairs. These pairs are evaluated under a single discrete state space $\{2^{20}\}$, without employing a sample count threshold, to provide a clear assessment of the transformation from distance to similarity. Further comparisons with different discrete state spaces can be found in the Appendix of our original paper \citepx{playstyle_similarity}.

We evaluate several measures in this comparison:
\begin{itemize}
\item \textit{Playstyle Distance}: $-d_\Phi$
\item \textit{Playstyle Intersection Similarity}: $PS_\Phi^{\cap}$
\item \textit{Playstyle Inter BD Similarity}: $PS_\Phi^{\cap BD}$, a variant of $PS_\Phi^{\cap}$ that employs the Bhattacharyya distance in place of the 2-Wasserstein distance
\item \textit{Playstyle Inter BC Similarity}: $PS_\Phi^{\cap BC}$, the Bhattacharyya coefficient version, which omits the scaling coefficient before the perceptual kernel $\frac{1}{e^d}$
\item \textit{Random}: A uniform random baseline that is a common result from supervised learning or contrastive learning if there is no style label or group (like self and others) information in the training data.
\end{itemize}

Results presented in Figure~\ref{figure:perceptual_similarity} suggest that probabilistic similarity can be a good alternative to distance-based similarity, offering improved explainability in terms of measure values. Among the methods evaluated, the 2-Wasserstein distance with a perceptual kernel and the Bhattacharyya coefficient emerge as superior candidates. The intention behind using probabilistic similarity is that it provides a consistent measure of similarity across different games (via likelihood). For distance similarity, understanding the property's and distribution of distance is essential to interpret the measure value. Besides the explainability of similarity values, the transformed similarity value can be incorporated with the Jaccard index as described latter. The evidence shows that results with probabilistic similarity are not worse than distance similarity and are slightly better on TORCS, which includes slightly different playstyles. The upcoming experiments also support the idea of probabilistic similarity under slightly changed playstyles, such as rule-based TORCS agents and Atari game agents trained with the same algorithm.

\begin{figure}
	\centering
	\includegraphics[width=\textwidth]{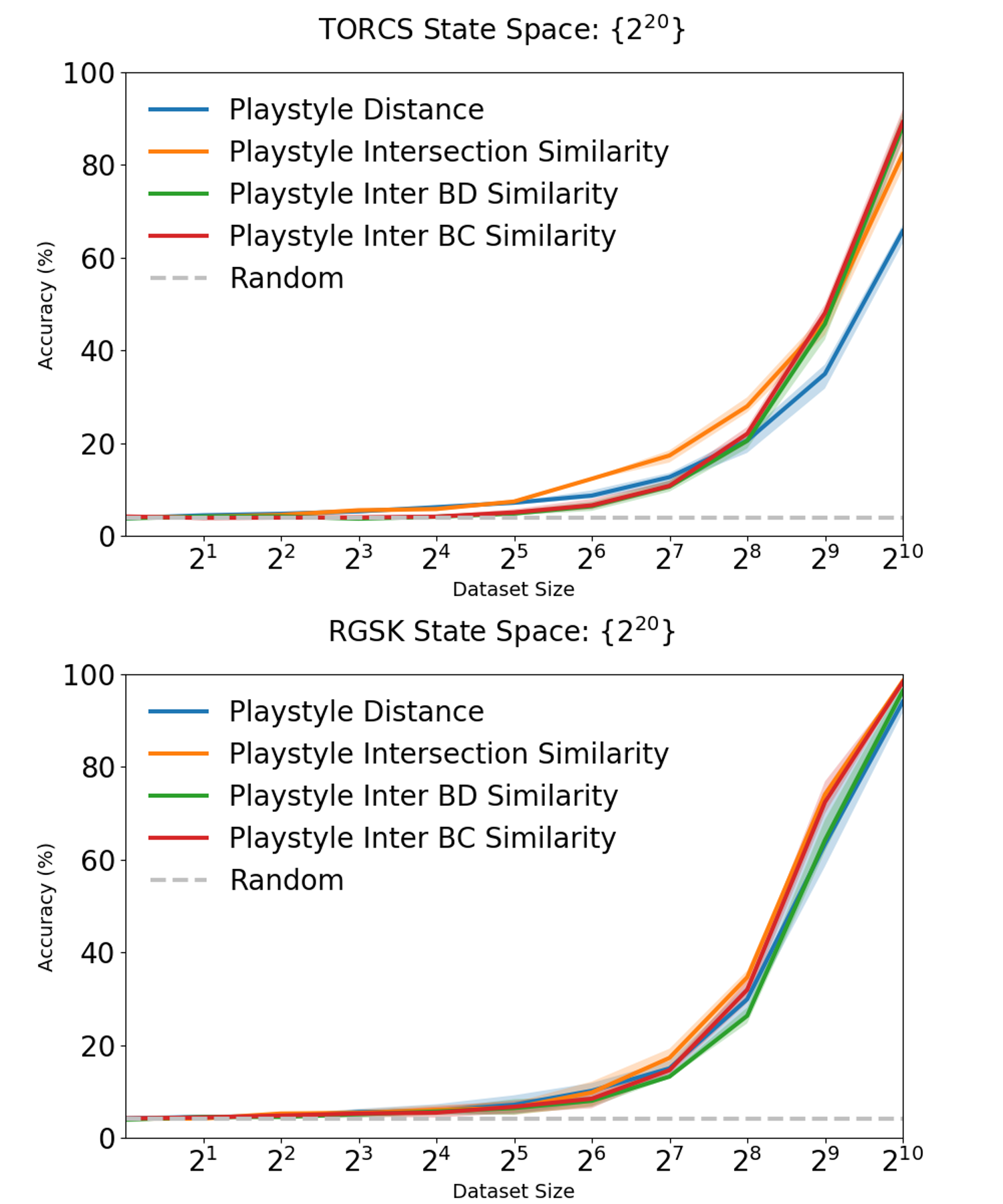}
	\caption[Experiment on Probabilistic vs. Distance Approaches]{Comparison of Efficacy: Probabilistic vs. Distance Approaches. The plot illustrates the relationship between accuracy (Y-axis) and size of the sampled observation-action pairs (X-axis). The shaded area indicates the range between min and max accuracy among three encoder models.}
    \label{figure:perceptual_similarity}
\end{figure}

\clearpage

\subsection{Beyond Intersection} \label{subsec:beyond_intersection}

Before introducing our final playstyle similarity metric, it is essential to revisit a key structural element of \textit{Playstyle Distance}: the reliance on intersected symbolic states. This intersection determines the contexts under which action distributions can be meaningfully compared. However, when the number of intersected states is small, the resulting measurements can become unstable or uninformative. A small intersection may stem from two fundamentally different causes: it may reflect genuinely divergent playstyles that explore different parts of the state space, or it may simply be the result of stochastic factors such as environment randomness, task variability, or other agents’ behavior.

To address this ambiguity, it is valuable to consider not only the absolute count of intersected states, but also their relative proportion with respect to the union of observed states. A natural candidate for capturing this proportion is the \textit{Jaccard index} \citepx{jaccard_index}, also known as Intersection over Union (IoU), widely used in information retrieval and set similarity contexts.

In the context of playstyle analysis, the Jaccard index can serve as a lightweight proxy for similarity—especially in deterministic or low-entropy environments where symbolic states provide clear playstyle cues. However, this approach has limitations. In highly stochastic domains, or in tasks where state coverage is near-universal (e.g., in games with simple dynamics or single-state settings like K-armed bandits~\citepx{rl_book}), the Jaccard index becomes insensitive to stylistic nuance.

Nevertheless, our observations suggest that in structured environments with ample state granularity and controlled randomness, the Jaccard index can still be a practical and interpretable tool for style measurement. We formalize the multiscale variant of Jaccard similarity as:

\begin{equation} \label{equation:jaccard_index} J_\Phi(M_A,M_B)=\frac{|\bigcup_{\phi \in \Phi}\phi (M_A) \cap \phi(M_B)|}{|\bigcup_{\phi \in \Phi}\phi (M_A) \cup \phi(M_B)|} \end{equation}

This formulation allows Jaccard-based similarity to be evaluated over symbolic abstractions at multiple scales, increasing robustness in partially aligned or sparse data regimes.

\clearpage

\subsection{Playstyle Similarity} 
\label{subsec:playstyle_similarity}

Throughout our investigation, we have introduced and evaluated several discrete playstyle similarity metrics. Synthesizing these insights, we propose a unified measure termed \textit{Playstyle Similarity}, denoted $PS_\Phi^{\cup}(M_A, M_B)$. This metric combines perceptual similarity over intersected states with an intersection-over-union weighting, yielding:

\begin{equation}
\begin{aligned}
& PS_\Phi^\cup(M_A,M_B) = J_\Phi(M_A,M_B) \times PS_\Phi^{\cap}(M_A,M_B) \\
& = \frac{\sum _ {s \in \bigcup_{\phi \in \Phi}\phi (M_A) \cap \phi(M_B)} P(D_\Phi ^M(\pi_{M_A}(s),\pi_{M_B}(s)))}{|\bigcup_{\phi \in \Phi}\phi (M_A) \cup \phi(M_B)|}
\end{aligned}
\end{equation}

Unlike a pure Jaccard index, which gives each intersected state equal weight, this formulation incorporates a similarity-aware weighting—assigning values between 0 and 1 based on the degree of action distribution similarity. This not only enhances interpretability but also provides more fine-grained resolution in distinguishing playstyles.

Additionally, the formulation naturally handles cases where states are not shared. These are treated as maximally dissimilar (i.e., contributing 0 to the numerator), aligning well with intuitive interpretations of playstyle divergence. This integration of perceptual similarity and state overlap is illustrated in Figure~\ref{figure:playstyle_similarity_example}.

\begin{figure}[ht] 
\centering 
\includegraphics[width=0.8\textwidth]{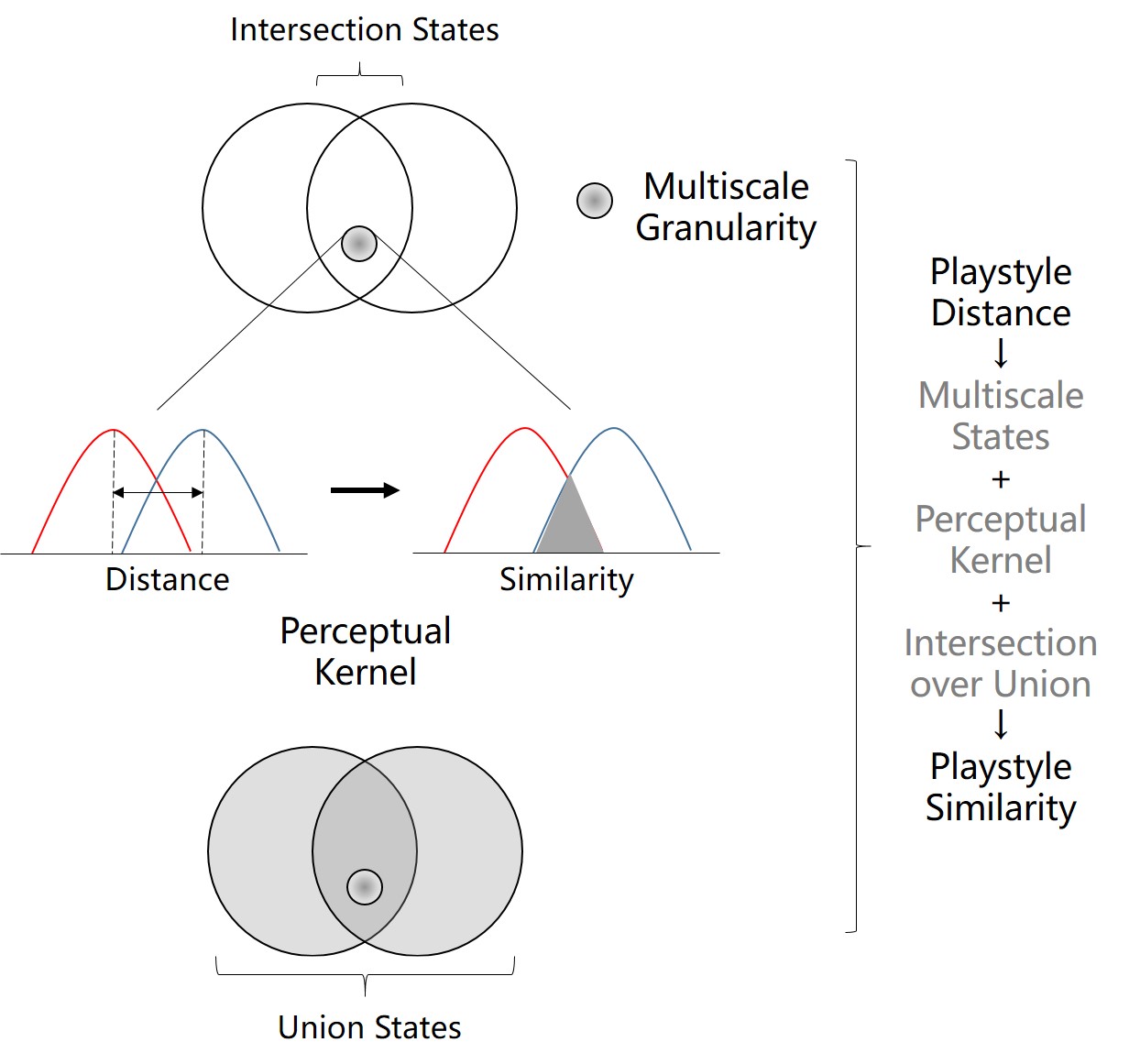} 
\caption[From Playstyle Distance to Playstyle Similarity]{Transforming Playstyle Distance into Playstyle Similarity. The pipeline begins by projecting observations into multiscale discrete states. Each intersected state’s action distribution distance is converted via a perceptual kernel, and the result is aggregated using a Jaccard-weighted formulation over all visited states.} 
\label{figure:playstyle_similarity_example} 
\end{figure}

\subsubsection{Empirical Results}

To evaluate the proposed metric, we conduct full-dataset experiments across three environments: TORCS, RGSK, and a unified Atari platform.

For Atari, we pool gameplay traces from 20 DRL agents across 7 games, treating each agent-game pair as a distinct playstyle (totaling 140 styles). Instead of aligning action semantics across games, we pad all action vectors to the maximal Atari action space size (18). Each game contributes two discrete encoders, plus one shared cross-game encoder, resulting in 15 total state encoders used for multiscale evaluation.

The compared methods include: 
\begin{itemize}
\item \textit{Playstyle Distance}: $-d_\Phi$
\item \textit{Playstyle Intersection Similarity}: $PS_\Phi^{\cap}$
\item \textit{Playstyle Inter BC Similarity}: $PS_\Phi^{\cap BC}$
\item \textit{Playstyle Jaccard Index}: $J_\Phi$
\item \textit{Playstyle Similarity}: $PS_\Phi^{\cup}$
\item \textit{Playstyle BC Similarity}: $PS_\Phi^{\cup BC}$, the union version of \textit{Playstyle Inter BC Similarity}
\item \textit{Random}: A uniform random baseline.
\end{itemize}

As shown in Figure~\ref{figure:full_data_eval}, \textit{Playstyle Similarity} consistently achieves the highest accuracy across environments. Jaccard index, though simple, still performs competitively—especially in low-randomness settings like Atari. Most notably, even with only 512 observation-action pairs, our metric surpasses 90\% accuracy, indicating the potential for real-time playstyle detection before an episode concludes.

\begin{figure}[ht] 
\centering 
\includegraphics[width=0.75\textwidth]{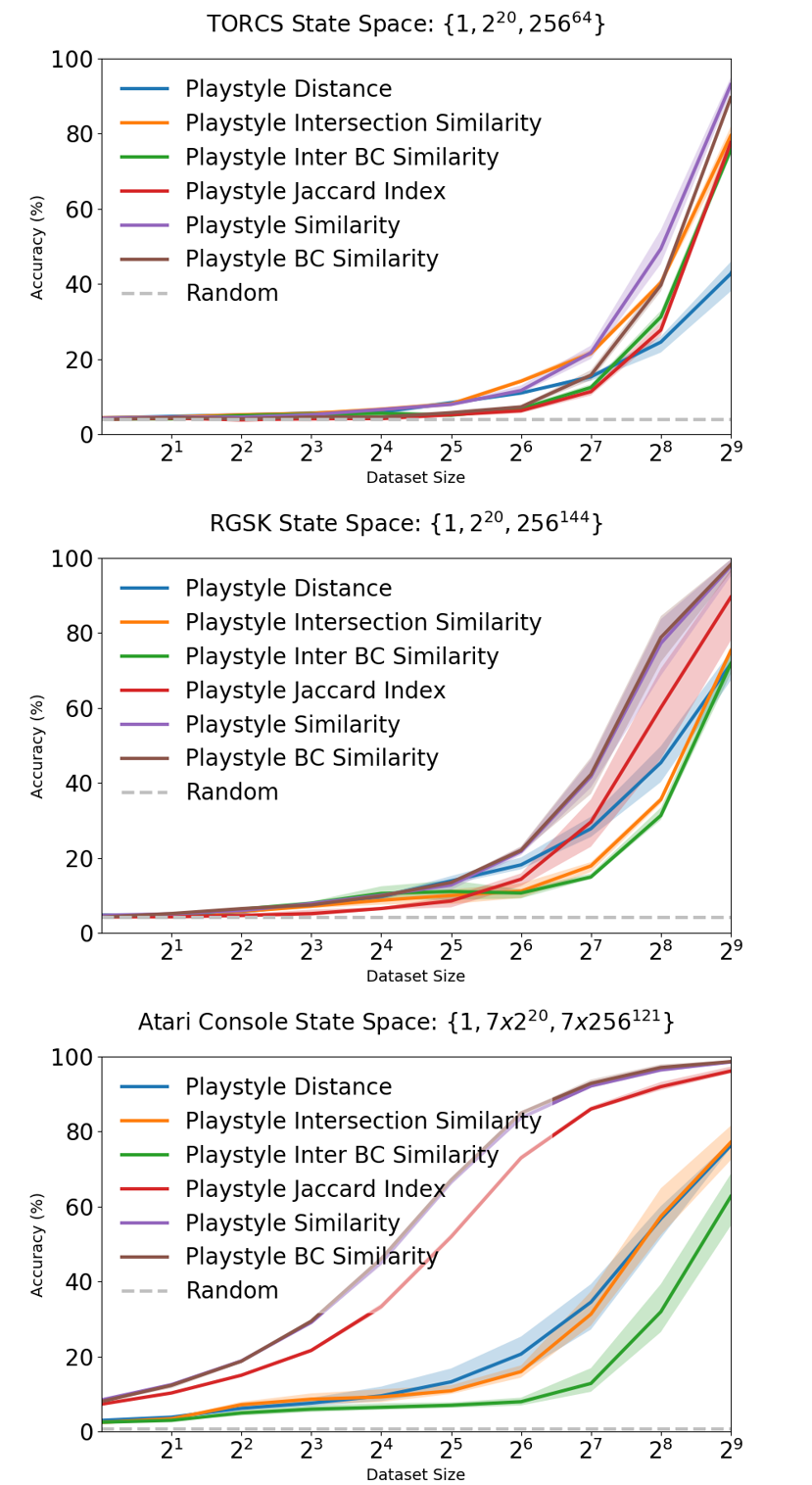}
\caption[Playstyle Measure Evaluation]{Evaluation of different playstyle metrics in TORCS, RGSK, and Atari. Accuracy is computed via retrieval of matching styles under different sample sizes. Shaded areas show variation across encoder models.} 
\label{figure:full_data_eval} 
\end{figure}

\subsection{Continuous Playstyle Spectrum} 
\label{subsec:continuous_playstyle}

While traditional playstyle analysis often focuses on discrete distinctions between players or agents, many real-world scenarios exhibit a continuous spectrum of behavioral variation. In this section, we evaluate whether playstyle similarity measures can reliably reflect subtle, progressive differences in behavior. Specifically, we examine whether the similarity values are consistent with expected gradations in playstyle and whether these values decrease or increase smoothly along known behavioral dimensions.

We conduct this evaluation in the TORCS environment, which provides two controllable axes of playstyle variation: 
\begin{itemize} 
	\item \textbf{Target Speed} (60, 65, 70, 75, 80 km/h) 
	\item \textbf{Action Noise Level} (N0 to N4) 
\end{itemize} 
These parameters define a $5\times5$ grid of agent configurations, forming a natural continuous spectrum of driving styles.

\subsubsection{Experimental Design}

We consider two types of reference cases: 
\begin{itemize} 
	\item \textbf{Corner Case}: where the target agent is at the top-left of the behavior spectrum (Speed60N0). We expect similarity to decrease as speed or noise increases. 
	\item \textbf{Center Case}: where the reference agent (Speed70N2) is surrounded by both similar and dissimilar variants. We test whether similarity is symmetric and decreases away from the center. 
\end{itemize}

Each measure is evaluated over 100 rounds of random subsampling, where each sampled dataset contains 512 observation-action pairs. For each reference agent, we compute the similarity to the other 24 variants and evaluate whether the similarity values respect the expected ordering — i.e., whether the closest variants yield the highest similarity, and distant variants the lowest. A similarity sequence is considered consistent if it follows a strict monotonic trend along one axis (e.g., increasing speed).

\subsubsection{Visual Illustration: Corner Case}

Figure~\ref{figure:continuous_playstyle_similarity_heatmap} provides a heatmap visualization of similarity scores from the Playstyle Similarity (mixed state space) measure, using Speed60N0 as the query point. The smooth gradient from top-left to bottom-right demonstrates that similarity degrades coherently with both increasing speed and noise. This confirms the measure’s ability to detect fine-grained behavioral divergence.

\begin{figure}[ht] 
\centering 
\includegraphics[width=0.7\textwidth]{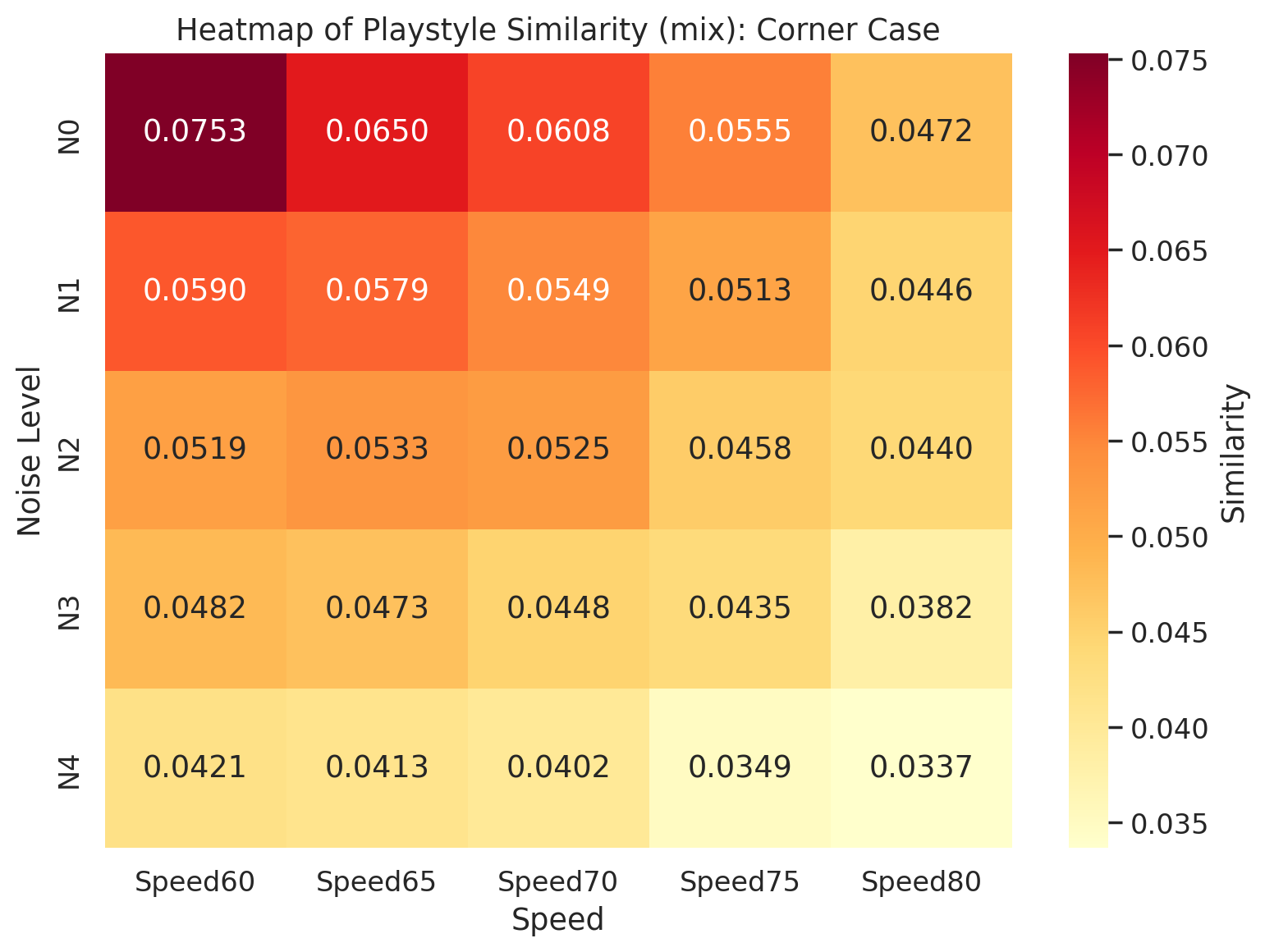} \caption[Heatmap of Playstyle Similarity]{Heatmap of Playstyle Similarity (mix) using Speed60N0 as the reference. Rows indicate noise levels (N0 to N4), and columns represent target speeds (60 to 80 km/h). Darker colors denote higher similarity.} \label{figure:continuous_playstyle_similarity_heatmap} 
\end{figure}

\subsubsection{Summary of Consistency Results}

Table~\ref{table:continuous_playstyle_summary} summarizes the number of consistent trend sequences observed under each metric, for both the Corner Case and Center Case. The proposed \textit{Playstyle Similarity (mix)} measure achieves the highest consistency across both settings, confirming its robustness and smooth response over continuous playstyle variations.

\begin{table}[ht] 
\centering 
\caption{Consistency count of two continuous playstyle spectrum cases.} 
\label{table:continuous_playstyle_summary} 
\begin{tabular}{l|ll} 
\toprule \quad & Corner Case & Center Case \\ 
\midrule 
Playstyle Distance ($2^{20}$) & 3 & 2\\
Playstyle Distance (mixed) & 4 & 2\\ 
Playstyle Intersection Similarity (mixed) & 8 & 2\\ 
Playstyle Jaccard Index (mixed) & 5 & 1\\ 
\textbf{Playstyle Similarity (mixed)} & \textbf{9} & \textbf{3}\\
\bottomrule 
\end{tabular} 
\end{table}

\subsubsection{Discussion}

The results highlight two key insights: 
\begin{enumerate} 
	\item Measures based solely on raw distances (e.g., $W_2$) or state co-occurrence (e.g., Jaccard index) can be unstable or insensitive when facing gradual behavioral transitions. 
	\item Probabilistic similarity, especially when integrated with multiscale state discretization, provides both stability and perceptual fidelity across subtle style changes. 
\end{enumerate}

These findings reinforce the necessity of perceptual transformation and state hierarchy in designing general-purpose playstyle metrics. More detailed consistency matrices and center case visualizations can be found in our original Appendix \citepx{playstyle_similarity}.

\clearpage

\section{Discussion} 

Having established a set of discrete, perceptually grounded playstyle similarity measures and evaluated them across multiple environments and metrics, we now turn to a broader discussion on their positioning relative to other unsupervised alternatives, and their applicability to complex or high-uncertainty scenarios. This section consolidates prior experiments, compares to latent-feature-based approaches, and highlights important game-dependent distinctions.

\subsection{Comparison with Latent-Based Unsupervised Similarity Measures}

While the proposed playstyle similarity measures are designed with explicit behavioral alignment and epistemological principles in mind, one may wonder how they compare to more generic similarity metrics that are popular in unsupervised learning contexts. Notably, latent feature similarity has been widely used in generative modeling (e.g., GANs) and stylometric analyses, relying on embeddings extracted from observations to define style. In this subsection, we evaluate whether such approaches can serve as effective baselines for playstyle analysis.

\subsubsection*{Latent Feature-Based Measures}

We investigate two well-known latent similarity metrics: \begin{itemize} 
	\item \textbf{Euclidean Distance (L2)}: Measures the $\ell_2$ distance between mean latent vectors computed from the observation set. 
	\item \textbf{Cosine Similarity}: Measures the angular similarity between the same mean latent vectors, commonly used in natural language processing and behavior modeling \citepx{chess_style, cosine_similarity_example}. 
\end{itemize}

In our experiments, latent features are extracted from the continuous bottleneck (pre-VQ) layer of the HSD encoder used in the discrete methods. Each agent’s dataset is summarized by a single 500-dimensional average feature vector. No action distribution is used, and no explicit alignment of state contexts is enforced.

\subsubsection*{Results and Insights}

Figure~\ref{figure:unsupervised_comparison1} shows the prediction accuracy of playstyle identification using latent metrics versus discrete playstyle methods in TORCS and RGSK. As seen in the TORCS results (Figure~\ref{figure:unsupervised_comparison1}), latent similarity performs near random. This is expected—playstyle variation in TORCS primarily affects hidden behavioral traits such as noise tolerance or target speed, which have limited expression in raw observation frames.

In contrast, the RGSK results (Figure~\ref{figure:unsupervised_comparison1}) show that latent-based metrics perform moderately well, sometimes outperforming even Playstyle Similarity at low sample sizes. This difference arises because visual traits in RGSK—like nitro usage or off-track driving—are visibly encoded in the frame (e.g., color of exhaust flames), allowing visual models to capture stylistic variance.

\begin{figure}[ht] 
\centering 
\includegraphics[width=\textwidth]{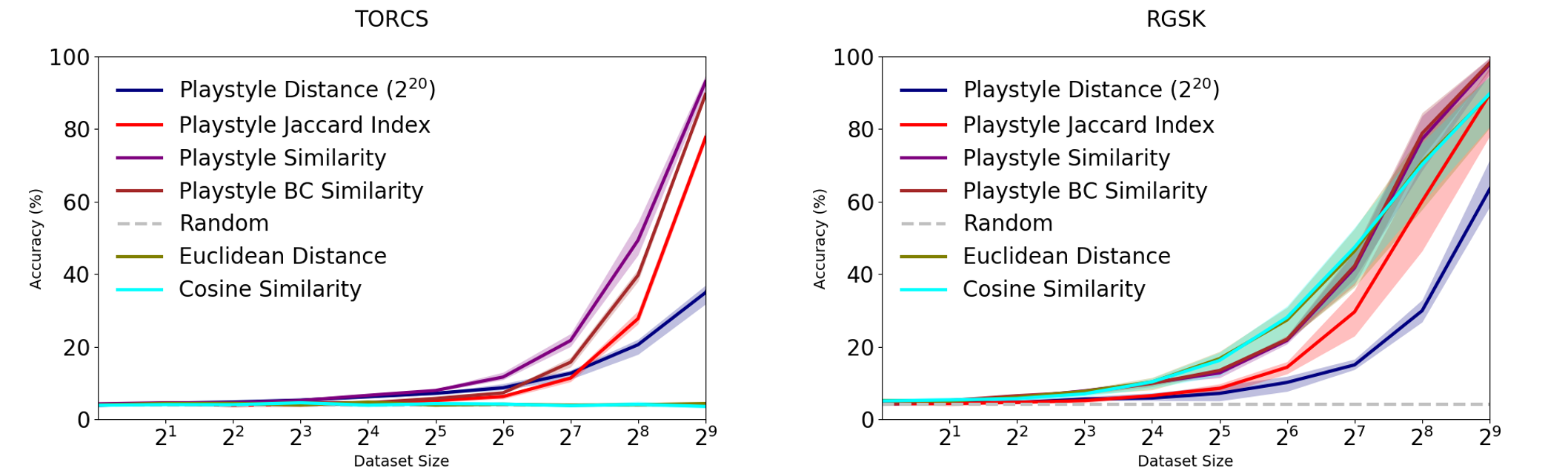}
\caption[Unsupervised Similarity Comparison (TORCS and RGSK)]{Accuracy comparison of potential unsupervised similarity measures on TORCS and RGSK. Discrete playstyle metrics outperform latent-based ones in behavior-dominant tasks (TORCS), while latent similarity performs better when visual cues are strong (RGSK).} \label{figure:unsupervised_comparison1} 
\end{figure}

To evaluate generality, we also compare these metrics in the Atari benchmark (Figure~\ref{figure:unsupervised_comparison2}). Here, latent similarity shows inconsistent performance across games. In contrast, Playstyle Similarity and its BC variant consistently yield higher accuracy beyond 256 samples, even without any explicit playstyle labels. This confirms that discrete action-based comparisons—especially when filtered through multiscale state abstractions—offer greater robustness and interpretability across diverse environments.

These results suggest that while latent feature similarity may suffice in tasks where visual traits dominate or where sample sizes are large, it lacks the behavioral grounding, semantic resolution, and cross-domain consistency needed for general playstyle analysis. Playstyle Similarity, in contrast, is more principled, interpretable, and empirically robust.

\begin{figure}[ht] 
\centering 
\includegraphics[width=\textwidth]{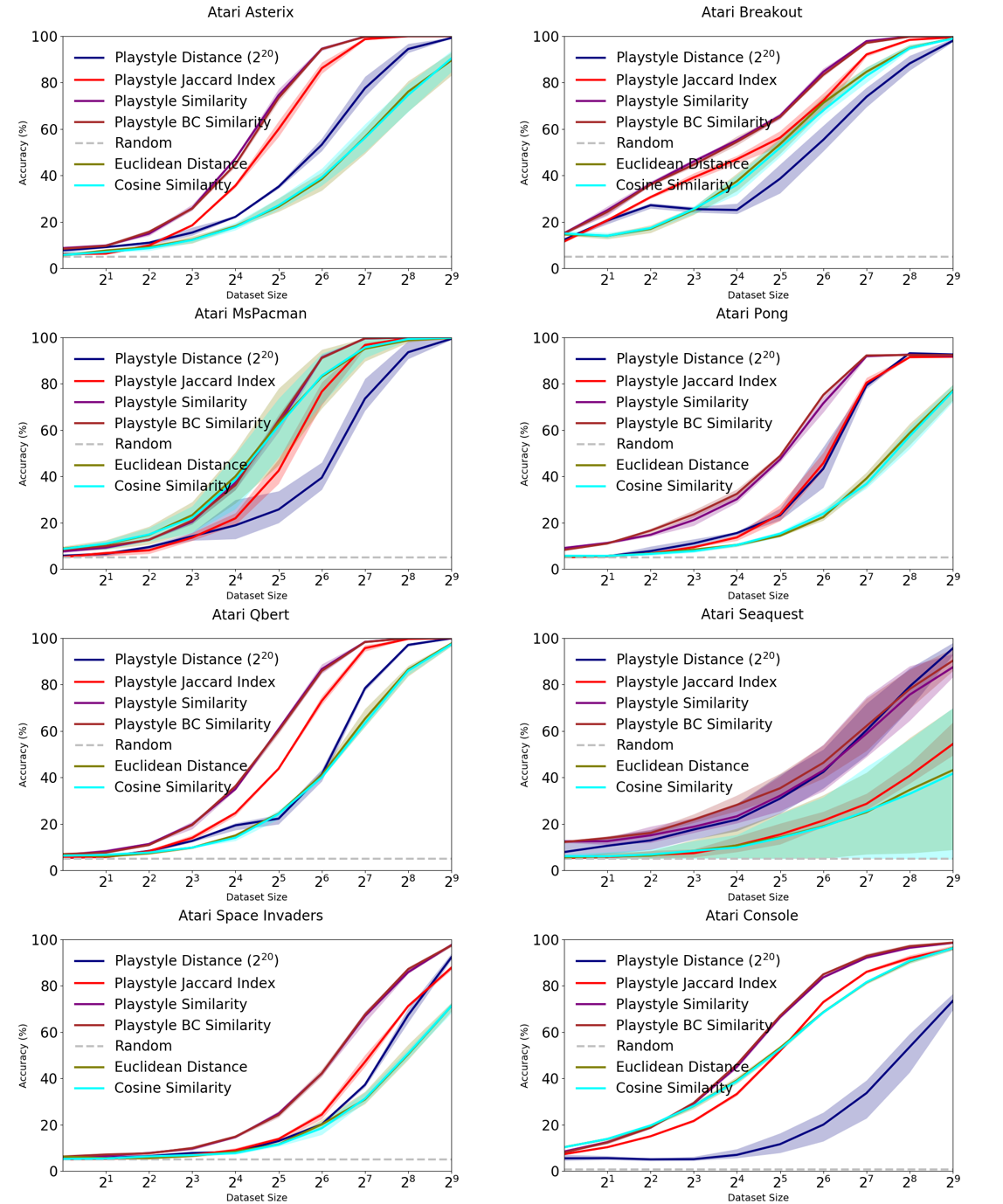}
\caption[Unsupervised Similarity Comparison (Atari Games)]{Accuracy comparison of potential unsupervised similarity measures on Atari games.} \label{figure:unsupervised_comparison2} 
\end{figure}

\clearpage

\subsection{Game Platform Overview}
\label{subsec:game_overview}

To contextualize our evaluation and demonstrate the generality of the proposed playstyle measures, we provide a unified overview of the five distinct game platforms used throughout this dissertation. These include both single-player and multi-agent settings, varying in action space complexity, state representation, agent type, and style diversity.

\begin{figure}[ht]
    \centering
    \includegraphics[width=0.95\textwidth]{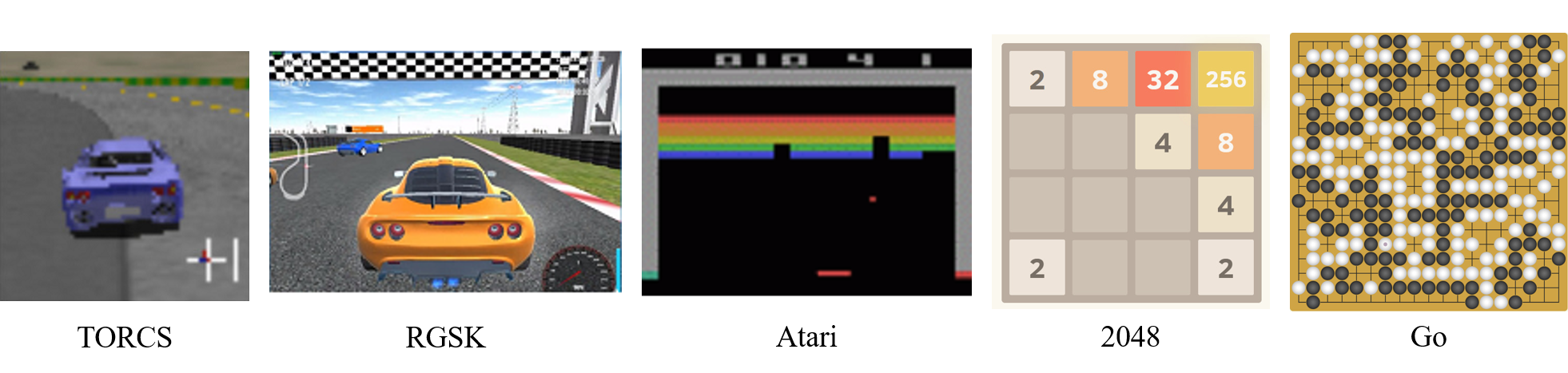}
    \caption[Playstyle Experiment Game Screens]{Game environments used in this study. From left to right: TORCS, RGSK, Atari, 2048, and Go. These platforms span from real-time simulations to turn-based strategic settings, offering varied challenges for playstyle analysis.}
    \label{fig:game_screens}
\end{figure}

As illustrated in Figure~\ref{fig:game_screens}, the games differ significantly in their interaction dynamics and visual feedback. Table~\ref{table:game_details} summarizes key properties relevant to playstyle measurement:

\begin{itemize}
    \item \textbf{TORCS}: A rule-based autonomous racing environment with continuous control and low observation variability. Playstyles differ primarily in target speed and noise control, which are not visually obvious, making latent-based similarity methods ineffective.
    
    \item \textbf{RGSK}: A Unity-based human racing simulator where player behavior directly influences visible effects like drift, nitro usage, or track adherence. Visual features align closely with playstyle dimensions, which is why latent features perform competitively here.
    
    \item \textbf{Atari}: A diverse set of DRL-trained agents operating in low-resolution environments. The wide action and state spaces offer both visual and behavioral diversity, allowing thorough testing of measure generality.
    
    \item \textbf{2048}: A single-agent stochastic puzzle game where randomness dominates short trajectories. Although the observation is fully discrete, the style signal is weak per episode, posing a strong test of sampling robustness.
    
    \item \textbf{Go}: A highly strategic two-player game with immense state space complexity and partial observability of intention. Human players exhibit consistent preferences in opening and midgame styles, but these are difficult to identify without strong representation models.
\end{itemize}

\begin{table}[ht]
\centering
\caption{Playstyle game details.}
\label{table:game_details} 
\begin{tabular}{ |l|l|l|l|l| } 
\hline
Game Platform & Agent Type & Style Count & Observation Size & Action Space\\
\hline
TORCS & Rule-based Agent & 25 & [4, 3, 64, 64] & Continuous 2D \\
RGSK & Human Player & 24 & [4, 3, 72, 128] & Discrete 27 \\
Atari & DRL Agent & 20 & [4, 1, 84, 84] & Discrete 4–18 \\
2048 & RL Agent & 10 & [4, 4] & Discrete 4 \\
Go & Human Player & 200 & [18, 19, 19] & Discrete 362 \\
\hline
\end{tabular}
\end{table}

Together, these environments represent a comprehensive testbed for evaluating playstyle metrics under diverse conditions—ranging from low-dimensional, deterministic control tasks to high-dimensional, stochastic and strategic decision-making.

\clearpage

\subsection{2048: A High-Randomness Puzzle Game} \label{section:game2048}

To evaluate the robustness of discrete playstyle measures in highly stochastic environments, we conduct experiments using the puzzle game \textbf{2048}. Unlike real-time or turn-based games with stable dynamics, 2048 introduces randomness at every step—each player move triggers the insertion of a new tile in a random position with a random value. This makes it nearly impossible to generate identical trajectories, even with deterministic agents.

In our experiment, we trained a reinforcement learning (RL) agent for 10 million episodes, saving the model every 1 million episodes to create 10 different "players." For each of these 10 models, we collected 1,000 episodes, using the first 500 as candidate datasets and the remaining 500 for query datasets, forming 5,000 query instances in total. This setup simulates a practical scenario: identifying the source model from a small, noisy sample—only a few moves long—without relying on visual or handcrafted features.

We adopted the raw $4 \times 4$ game board (without compression or visual encoding) as the discrete state representation. Since the game is inherently discrete and deterministic on player inputs, using raw boards as symbolic anchors is both natural and effective. The small state space also ensures sufficient overlap between episodes despite randomness.

\begin{table}[ht] 
\centering 
\caption[2048 Playstyle Accuracy]{Accuracy of model identification in 2048 using different playstyle measures.} 
\label{table:2048} 
\begin{tabular}{l|c} 
\toprule 
Measure & Accuracy \\ 
\midrule Playstyle Distance & \textbf{98.90\%} \\ 
Playstyle Intersection Similarity & \textbf{98.52\%} \\ 
Playstyle Inter BC Similarity & \textbf{98.90\%} \\ 
Playstyle Jaccard Index & 49.22\% \\ 
Playstyle Similarity & 71.26\% \\ 
Playstyle BC Similarity & 71.26\% \\ 
\bottomrule \end{tabular} 
\end{table}

\textbf{Discussion.}
The results in Table~\ref{table:2048} demonstrate that even under severe randomness, discrete measures based on action distributions—especially those comparing distributions on intersected symbolic states—can accurately identify agent-specific behavior. However, the inclusion of the \textit{Jaccard Index} significantly lowers performance due to sparse and unstable overlaps in such settings. This supports a key insight: \textbf{when observation diversity is high and playstyle-specific features are subtle}, measures that rely directly on overlapping symbolic states may falter unless additional structural constraints or normalization techniques are applied.

\clearpage

\subsection{Go: Measuring Playstyles in a Complex Multi-Agent Setting} 
\label{section:go_experiment}

To further evaluate the generality of our framework in highly complex and interactive settings, we consider the game of \textbf{Go}, a two-player board game renowned for its vast state space and strategic depth \citepx{board_game_complexity}. Unlike single-agent environments, Go introduces opponent behavior as an additional, often unpredictable factor that significantly influences state visitation and action patterns, making it an ideal testbed for evaluating the robustness of playstyle measurement.

\subsubsection{Experimental Setup}

We designed a playstyle identification task using human gameplay data. The encoder used in this study is a variant of the Hierarchical State Discretization (HSD) model \citepx{playstyle_distance}, trained not with reconstruction but using win prediction objectives akin to AlphaZero-style training \citepx{alpha_zero}. The training dataset consists of 45,000 games from 9-Dan level players on the Fox Go platform \citepx{foxgo}, provided by the MiniZero team \citepx{minizero}. Importantly, these games are labeled only with moves and outcomes—no explicit playstyle labels were used.

The evaluation dataset contains gameplay from 200 human players (1-Dan to 9-Dan), with 100 games each for query and candidate sets. The discrete encoder supports three levels of state granularity: ${4^8, 16^8, 256^{361}}$, and we also evaluate their multiscale combination ("mix").

Two evaluation protocols were used:
\begin{enumerate}
	\item Opening phase (first 10 moves): where stylistic differences in opening choices are expected to dominate.
	\item Full games: covering mid- and late-game decisions, potentially influenced by broader strategies or positional judgments.
\end{enumerate}

\subsubsection{Results and Interpretation}

\begin{table}
\centering
\caption[Accuracy of Go 200 player identification.]{Accuracy of Go 200 player identification with M games as the query set and also M games as the candidate set. The full comparison with different numbers of query and candidate sets can be found in the original paper \citepx{playstyle_similarity}. The discrete encoder is trained from a variant of HSD~\citep{playstyle_distance}, and the available state spaces in this encoder are $\{4^8, 16^8, 256^{361}\}$. The measures with "mix" notation imply using all three available state spaces simultaneously with our multiscale modification.}
\label{table:go_mxm}
\begin{tabular}{lllllll}
\toprule
\textbf{Only First 10 Moves} & M=5 & M=10 & M=25 & M=50 & M=75 & M=100\\
\midrule
Playstyle Distance ($4^8$) & 11.5\% & 33.5\% & 71.0\% & 87.0\% & 93.0\% & \textbf{97.0\%} \\
Playstyle Distance ($16^8$) & 10.0\% & 41.5\% & 74.0\% & 85.0\% & 92.5\% & 95.5\% \\
Playstyle Distance ($256^{361}$) & 8.0\% & 38.0\% & 75.0\% & 87.0\% & 92.0\% & 94.5\% \\
Playstyle Distance (mix)  & 14.0\% & \textbf{46.5\%} & \textbf{75.5\%} & 87.5\% & 94.5\% & 96.5\% \\
Playstyle Inter. Similarity (mix) & 15.0\% & 42.0\% & 68.5\% & 84.0\% & 94.5\% & 94.5\% \\
Playstyle Inter BC Similarity (mix) & 14.0\% & 42.5\% & 72.0\% & \textbf{89.0\%} & \textbf{96.5\%} & 95.5\% \\
Playstyle Jaccard Index (mix) & 9.5\% & 21.5\% & 49.0\% & 78.0\% & 88.0\% & 95.0\% \\
Playstyle Similarity (mix) & 21.5\% & 40.0\% & 65.0\% & 87.5\% & 94.0\% & \textbf{97.0\%}  \\
Playstyle BC Similarity (mix) & \textbf{24.0\%} & 45.5\% & 70.5\% & 88.0\% & \textbf{96.5\%} & \textbf{97.0\%}  \\
\bottomrule
\toprule
\textbf{Full Game Moves} & M=5 & M=10 & M=25 & M=50 & M=75 & M=100\\
\midrule
Playstyle Distance ($4^8$) & 5.5\% & 11.5\% & 20.0\% & 31.0\% & 39.5\% & 50.0\% \\
Playstyle Distance ($16^8$) & 10.0\% & 44.5\% & 74.0\% & 86.0\% & 93.5\% & 96.5\% \\
Playstyle Distance ($256^{361}$) & 8.0\% & 39.5\% & \textbf{75.5\%} & 88.5\% & 93.5\% & 96.5\% \\
Playstyle Distance (mix) & 17.0\% & 33.0\% & 56.5\% & 76.0\% & 83.0\% & 90.0\% \\
Playstyle Inter. Similarity (mix) & \textbf{19.0\%} & 38.0\% & 66.0\% & 86.0\% & 95.0\% & 95.0\% \\
Playstyle Inter BC Similarity (mix) & 16.5\% & 37.5\% & 66.0\% & 90.0\% & 96.0\% & 97.0\% \\
Playstyle Jaccard Index (mix) & 5.0\% & 8.5\% & 22.5\% & 51.0\% & 66.0\% & 80.5\% \\
Playstyle Similarity (mix) & 15.5\% & 29.5\% & 56.5\% & 81.5\% & 90.5\% & 94.0\% \\
Playstyle BC Similarity (mix) & 22.0\% & \textbf{45.5\%} & 73.0\% & \textbf{92.0\%} & \textbf{97.5\%} & \textbf{97.5\%} \\
\midrule
Behavior Stylometry* & \textbf{27.5\%} & \textbf{46.0\%} & 67.5\% & 84.0\% & 90.5\% & 90.5\% \\ Behavior Stylometry Improved* & \textbf{46.5\%} & \textbf{70.0\%} & \textbf{89.5\%} & \textbf{95.0\%} & \textbf{98.0\%} & \textbf{98.5\%} \\
\bottomrule
\end{tabular}
\end{table}

Table~\ref{table:go_mxm} summarizes the identification accuracy under varying query sizes. Several key observations emerge:

\begin{itemize} 
	\item In the opening-only setting, even small state spaces ($4^8$) suffice to achieve up to \textbf{97.0\%} accuracy with 100-game queries. This validates the intuition that opening styles are highly discriminative and can be effectively captured by compact symbolic representations. 
	\item In the full-game setting, however, the performance of smaller state spaces drops significantly. This highlights the importance of finer representations (e.g., $256^{361}$) or multiscale strategies in capturing nuanced long-term behavior patterns. 
	\item Measures incorporating the \textbf{Bhattacharyya coefficient} (BC) consistently outperform others when the query size is large, especially under full-game settings. This may stem from BC’s sensitivity to overlapping high-probability regions in the action distributions, making it well-suited for modeling consistent behavioral tendencies. 
	\item \textbf{Playstyle Similarity} and \textbf{BC Similarity} using multiscale state spaces achieve strong performance in both settings, offering a good balance of precision and generality.
	\item For comparison, we also report results using the \textit{Behavior Stylometry} approach from \citepx{chess_style}, extended and improved for Go by a separate thesis project \citepx{thesis_of_Chen_Chun_Jung}. While their best model reaches slightly higher top-end accuracy, it requires specialized clustering and contrastive supervision pipelines, whereas our method remains fully unsupervised and domain-agnostic. 
\end{itemize}

\textbf{Conclusion.}
These results demonstrate that discrete playstyle measures can scale to multi-agent games like Go, where behavior is shaped both by internal style and external opponents. When provided with sufficient data, these methods can reliably identify individual players, even in the absence of predefined style categories. Moreover, multiscale symbolic representations and overlap-sensitive metrics like BC enable accurate measurement across both early-game and full-game settings.

\section{Summary and Transition}

In this chapter, we provided a comprehensive investigation into the problem of \textbf{measuring playstyle}, one of the most foundational components of playstyle modeling. We examined four families of methods—heuristic rule-based, data-driven, policy-distribution-based, and discrete-state-based approaches—and proposed a unified framework that not only categorizes these methods but also extends them with theoretically grounded and empirically validated improvements.

Our proposed framework centers around the idea of \textbf{discrete symbolic state representations}, particularly through the Hierarchical State Discretization (HSD) method, which enables semantically meaningful comparisons of decision-making behavior across agents and environments. From this foundation, we introduced the concept of \textbf{Playstyle Distance} and its perceptually motivated transformation into \textbf{Playstyle Similarity}, incorporating both local decision divergence and global state space overlap. This yields a similarity measure that is: 
\begin{itemize} 
	\item \textbf{Theoretically interpretable}: rooted in perceptual and cognitive principles; 
	\item \textbf{Empirically robust}: shown to outperform baselines across five distinct game platforms (TORCS, RGSK, Atari, 2048, and Go); 
	\item \textbf{General and unsupervised}: requiring no style labels and operating across different observation types and action formats; 
	\item \textbf{Scalable}: capable of handling hundreds of unique agents, large observation spaces, and long horizons. 
\end{itemize}

\subsection{Connecting Back to Belief and Preference}

This chapter also marks a return to the philosophical perspective introduced in earlier sections. Our notion of \textit{belief}, as the agent’s internal representation of the world, is now made concrete through symbolic discretization: each discrete state encodes a coarse-grained belief abstraction about the current context. When paired with an observed or estimated \textit{action}, this allows us to recover an agent’s \textit{preference}—expressed not as isolated behavior, but as conditional responses given a belief. In this sense, playstyle becomes the emergent trace of belief–preference alignment.

Viewed through this lens, each measurement family reveals its assumptions:
\begin{itemize}
    \item \textbf{Heuristic rules} encode the designer’s belief and preference explicitly, baking them into the measurement tool itself.
    \item \textbf{Data-driven methods} infer patterns from experience, with the dataset and model architecture implicitly shaping the agent’s assumed belief space.
    \item \textbf{Action-distribution methods} focus on capturing preferences through policies, but may lack alignment in the belief representation over which those policies act.
    \item \textbf{Discrete-state methods}, by contrast, define shared symbolic anchors that stabilize both belief and preference comparisons, making them particularly suited for cross-agent evaluation.
\end{itemize}

This perspective reinforces a central thesis of our work: because playstyle is an indirect and emergent construct, understanding \textit{the intention behind a measurement method}—its assumptions about belief, preference, and representation—is critical. Measurement is not merely technical; it is interpretive.

\subsection{From Measurement to Diversity and Balance}

While measurement forms the bedrock of any style-based analysis, it also naturally raises broader questions about the implications of style:
\textit{How diverse are the styles across a population?}
\textit{Can we quantify imbalance, dominance, or gaps between them?}
\textit{How do style differences relate to fairness, challenge, or innovation?}

These questions shift the analytical lens from individual behavior to population-level phenomena—specifically, the emergent structure of \textbf{diversity} and \textbf{balance}. In the next chapter, we turn our attention to these broader structural concerns. We will formalize what it means for playstyles to be diverse, how to measure the richness or fragmentation of style distributions, and how to evaluate whether a system or environment supports equitable and engaging strategic variety.

In doing so, we build upon the foundations laid here—especially \textit{Playstyle Distance} and \textit{Playstyle Similarity}—to construct tools for \textbf{style-aware system design}, spanning matchmaking, meta-balancing, content generation, and beyond.

\chapter{Rationality in Multiple Playstyles}
\noindent\textbf{Key Question of the Chapter:} \\
\textit{What makes a playstyle meaningful or worth preserving?}

\noindent\textbf{Brief Answer:} 
A playstyle is worth preserving when it holds \textbf{popularity}—supported by widely recognized conceptual beliefs, such as conservative reasoning under uncertainty or rational decisions based on objective outcomes. These foundations naturally connect to \textbf{diversity} and \textbf{balance}, giving the style enduring value.

\bigskip

\section{Belief and Intrinsic Motivation}
\label{sec:belief_intrinsic_motivation}

Now that we have explored how playstyle can be measured, a critical question emerges: \textit{what makes a playstyle rationally valid}? No matter how diverse or distinct a playstyle may appear, it is unlikely to be acknowledged as a legitimate style unless it demonstrates some form of effectiveness. This leads us to examine the rational justification of a style—not just its difference (\textbf{Capacity}), but its recognition and acceptance (\textbf{Popularity}) \citepx{brunswik1956perception}.

Earlier measurement techniques aimed to delineate the boundaries of Capacity—what constitutes similar versus different playstyles. In this chapter, our attention shifts to the second axis: Popularity. Why does a playstyle deserve to exist and persist? The answer lies in \textit{recognition}—by other agents, by the environment, or by broader social and strategic systems. But what drives this recognition? What compels us to accept new styles into our strategic repertoire or cultural value system?

At its root, such recognition stems from \textbf{belief}—especially in rationality. From early education, many of us are taught to act rationally and to trust scientific reasoning. But this belief does not emerge in a vacuum. It is shaped by repeated patterns of feedback: rewards and punishments, observed utility, and confirmation in real-world outcomes. Over time, rationality becomes a culturally embedded belief system \citepx{magnitude_effect, gigerenzer2007gut}.

This belief system is deeply intertwined with another: the awareness of \textbf{finiteness}. Time, energy, and resources are limited. Under such constraints, rationality naturally leads us to value diversity—not as an aesthetic preference, but as a strategic necessity.

Consider biological diversity. A species with little genetic variation may be wiped out by a single disease. The Gros Michel banana, once globally popular, was nearly eradicated by Panama disease due to its genetic uniformity \citepx{stover1962fusarial, ploetz2005panama}. Diversity, in this case, is rational insurance against unknown threats.

A similar logic applies in games. If one strategy dominates, rational players will converge upon it, leading to stagnant and predictable gameplay. Strategic diversity—especially when counter-relationships exist—is essential to preserving meaningful decision-making. When the strategy space collapses into a single dominant point, exploration ceases \citepx{von1947theory}.

To restore balance and sustain diversity, the system must introduce new styles—not just stronger ones, but ones that counter existing dominant strategies without becoming dominant themselves. This induces counterplay, which revives the need for thoughtful decision-making. Rational players then have a reason to deviate, and their belief in their own insight can be tested.

From this perspective, the pursuit of diversity and balance is not just functional—it is rational. And at the heart of this rationality lies \textbf{intrinsic motivation}: an internal drive to validate, challenge, or expand one's belief system. This notion aligns with concepts of curiosity and exploration in psychology and neuroscience \citepx{loewenstein1994psychology, berlyne1966curiosity}.

Curiosity, though not itself a foundational belief, emerges as a behavior driven by uncertainty reduction and epistemic validation. Psychology distinguishes between \textit{perceptual curiosity} (triggered by novel stimuli) and \textit{epistemic curiosity} (seeking knowledge). These two forms mirror two motivations behind playstyle exploration: discovering new styles and validating existing models.

Within playstyle contexts, curiosity drives players to explore new strategies even when existing ones suffice. Boredom, often arising from over-validation, becomes a signal for diversive exploration. This too reflects intrinsic motivation—a rational desire to expand the boundaries of one’s strategic world.

These philosophical motivations also clarify the intent behind various playstyle measurement methods introduced in the previous chapter. Heuristic rule-based methods encode designer beliefs and preferences directly into the metric. Data-driven methods reflect beliefs inferred from empirical distributions and the assumptions embedded in the learning algorithms. Action-distribution-based measures express preference models directly from policies, though sometimes at the cost of precise belief alignment. Finally, discrete-state-based comparisons grounded in symbolic abstraction (e.g., HSD) offer the clearest route for representing both belief (via state context) and preference (via action selection), thus providing a more faithful approximation of rational style.

In sum, rationality, finiteness, and curiosity form a coherent motivational basis for playstyle diversity. A playstyle becomes meaningful not merely by existing, but by being believed in—and by being actively pursued through the rational testing of internal models. This recognition forms the foundation of Popularity. And our measurement of Capacity, when paired with this lens of recognition and intent, provides a full picture of style: not just \textit{what it is}, but \textit{why it matters}.

\section{Trade-offs: Exploration vs. Exploitation}
\label{sec:exploration_vs_exploitation}

In the previous section, we examined how intrinsic motivation and curiosity drive the emergence of new playstyles and promote diversity. These phenomena naturally connect to one of the most fundamental trade-offs in both psychology and artificial intelligence: the balance between \textit{exploration} and \textit{exploitation}.

While exploitation refers to selecting the best-known option according to current beliefs, exploration entails deliberately trying something unfamiliar---potentially suboptimal in the short term---in the hope of uncovering better long-term outcomes. At first glance, such behavior may appear irrational: why sacrifice a sure reward for uncertainty? The answer lies in a critical underlying assumption---a belief often left unstated, but essential for the legitimacy of exploration itself.

\textbf{To explore is to admit: "I might be wrong."} Exploration presupposes the belief that one’s current policy, behavior, or understanding is incomplete, and that better alternatives may exist beyond current knowledge. In other words, it requires an epistemic humility---the recognition that the global optimum likely has not yet been found.

This belief---that there exists something unknown and better---is foundational. Without it, the rational basis for exploration collapses. An agent that is fully confident in its current strategy has no reason to deviate. Thus, the very act of exploring is a declaration of uncertainty, a decision to act not purely for utility, but to challenge and revise one’s belief.

This trade-off is classically illustrated in the study of the multi-armed bandit problem. Exploitation maximizes known returns, while exploration sacrifices immediate reward to reduce long-term regret. Algorithms such as UCB optimize this trade-off by ensuring that regret grows only logarithmically over time \citepx{auer2002finite}. Other methods, such as $\varepsilon$-greedy or Thompson Sampling, incorporate randomness or uncertainty modeling to guide exploration.

In the context of playstyle, each strategy or behavioral pattern can be seen as an arm in the bandit. Exploring a new style implies the belief that it might perform better under some conditions---even if current evidence is lacking. This belief is not trivial; it opens the door to diversity, creativity, and transformation.

Recent advances in reinforcement learning make this principle explicit. Curiosity-driven exploration, count-based novelty bonuses, and intrinsic reward mechanisms \citepx{pathak2017curiosity, bellemare2016unifying, count_based_exploration} are all formalizations of the belief that knowledge itself is valuable. These techniques reward agents for reducing uncertainty or for entering new regions of the state space---not for winning, but for learning.

Yet, exploration without meaningful belief can also fail. If an agent endlessly revisits previously failed strategies or confuses randomness with novelty, it produces activity without progress \citepx{gdi, agent57}. Genuine exploration requires not only action, but direction---a belief that novelty can lead to betterment.

This insight reframes diversity: not merely as variance, but as \textit{directed deviation} grounded in epistemic intent. In this sense, a playstyle becomes meaningful not just when it performs well, but when it reflects an agent’s belief in the possibility of improvement.

In sum, the legitimacy of exploration---and the diversity it produces---rests on a subtle but powerful belief: that what we know is not all there is, and what we do now is not the best we can do. This belief is the true engine behind strategic evolution.

\section{Measuring Playstyle Diversity}

In the previous section, we discussed the rationale for promoting diversity and the benefits it brings to decision-making systems. We now turn to the practical question: \textit{How can diversity be concretely measured and evaluated?} In reinforcement learning and multi-agent environments, what tools are available to quantify diversity in meaningful ways?

\subsection{Classical Approaches and the Role of Exploration}

In deep reinforcement learning (DRL), one of the most fundamental measures of behavioral diversity is \textit{policy entropy}—the degree of randomness in the agent’s action selection. For instance, Proximal Policy Optimization (PPO) often includes entropy regularization to maintain sufficient stochasticity, thus encouraging exploration \citepx{ppo}.

However, merely increasing randomness does not guarantee meaningful diversity. Naïve randomness may introduce behavioral variation, but it often fails to improve policy quality and can significantly reduce learning efficiency. As a result, more principled exploration methods have emerged in recent years:

\begin{itemize}
  \item \textbf{NoisyNet} and \textbf{Bootstrap DQN} replace traditional $\varepsilon$-greedy strategies with structured noise or ensemble-based sampling to facilitate deep and persistent exploration \citepx{noisy_net, bootstrap_dqn}.
  \item \textbf{Count-based exploration} rewards agents for visiting novel states, based on estimated state visitation frequencies \citepx{count_based_exploration, bellemare2016unifying}.
  \item \textbf{Curiosity-based methods}, such as Random Network Distillation (RND) and Never Give Up (NGU), encourage agents to seek states with high prediction error, promoting novelty \citepx{rnd, ngu}.
  \item \textbf{Information-theoretic approaches} like Information-Directed Sampling (IDS) use Bayesian uncertainty estimates to trade off reward and information gain \citepx{ids_exploration}.
\end{itemize}

\subsection{Entropy Maximization and Policy Diversity}

A second family of exploration strategies focuses directly on entropy maximization. Algorithms such as Soft Actor-Critic (SAC) and other energy-based models aim to identify regions of policy space where action distributions are maximally uncertain \citepx{soft_q, sac, extreme_q}.

Interestingly, both paradigms—uncertainty-based exploration and entropy-based optimization—tend to converge in practice. Unfamiliar or novel states naturally induce more stochastic behavior, yielding high-entropy policies that simultaneously support both diversity and learning progress.

\subsection{Diversity as a Performance Driver}

Recent large-scale experiments—such as those conducted under the Generalized Data Distribution Iteration (GDI) framework—have empirically validated the importance of policy diversity \citepx{gdi}. Agents that succeed in maintaining both high performance and high behavioral variety consistently outperform those that overfit to narrow strategies.

Traditional state-counting or t-SNE visualizations may work in low-dimensional tasks but fail in complex environments. Visualizations can offer qualitative insight, but they are not sufficient for robust, quantitative diversity analysis.

To address this, we introduce a simple and intuitive method that directly quantifies diversity at the trajectory level: \textbf{Playstyle Similarity}.

\subsection{Diverse Trajectory Count via Playstyle Similarity}

Playstyle Similarity was introduced in Chapter~\ref{ch:discrete_measurement} as a probabilistic metric to evaluate the stylistic resemblance between trajectories. Here, we apply it as a core building block to measure behavioral diversity in a concrete, empirical setting. By defining a similarity threshold on trajectory-level comparisons, we can determine when a new behavior differs meaningfully from past behaviors.

\begin{algorithm}
\caption{Measuring Trajectory Diversity}
\label{algorithm:trajectory_diversity}
\begin{algorithmic}[1]
\Require Policy $\pi$, Environment $\mathcal{E}$, Similarity measure $M$
\Require Similarity threshold $t$, Number of trajectories $N$
\State Initialize $S$ (trajectory storage), $d \leftarrow 0$ (diversity counter)
\For{$i = 1$ to $N$}
  \State Generate a trajectory $\tau_i \sim \pi,\mathcal{E}$
  \State $is\_diverse \leftarrow \textbf{true}$
  \For{each $\tau_j$ in $S$}
    \If{$M(\tau_i,\tau_j) \geq t$}
      \State $is\_diverse \leftarrow \textbf{false}$
      \State \textbf{break}
    \EndIf
  \EndFor
  \If{$is\_diverse$} $d \leftarrow d + 1$ \EndIf
  \State Store $\tau_i$ in $S$
\EndFor
\Ensure Return $d$ (diverse count), $N$ (total)
\end{algorithmic}
\end{algorithm}

\begin{table}[ht]
\centering
\caption[Atari Diverse Trajectory Count]{Average diverse trajectory count out of 25 for each DRL algorithm (across 7 games, $t = 0.2$).}
\label{table:drl_policy_diversity_main}
\begin{tabular}{llllllll}
\toprule
Algorithm & Asterix & Breakout & MsPacman & Pong & Qbert & Seaquest & SpaceInvaders \\
\midrule
DQN & 6.00 & 6.00 & 5.33 & 4.00 & 6.00 & 11.00 & 25.00 \\
C51 & 6.00 & 7.00 & 7.00 & 6.00 & 7.00 & 21.00 & 25.00 \\
Rainbow & 8.67 & 5.33 & 8.00 & 5.00 & 5.00 & 24.00 & 25.00 \\
IQN & \textbf{25.00} & \textbf{14.00} & \textbf{10.00} & \textbf{9.00} & \textbf{10.00} & \textbf{25.00} & 25.00 \\
\bottomrule
\end{tabular}
\end{table}

This result also supports the broader thesis of this chapter: that diversity and strength are not mutually exclusive. In fact, greater diversity can lead to greater robustness and higher performance.

It is important to note that this method is grounded in \textit{process-based difference}—the variation in how agents behave—even when outcomes such as score remain similar.  
In contrast, \textit{outcome-based difference} focuses on the relative performance of different styles, which directly connects to the question of balance: ensuring that no style gains a disproportionate advantage that undermines the viability of others.  
This shift from process to outcome forms the basis of the next section.

\section{Understanding Balance in Strategy Space}

Having established a foundation for understanding diversity—and its ties to intrinsic motivation, curiosity, and epistemic exploration—we now turn to a complementary question: \textit{what makes the coexistence of playstyles viable, sustainable, and strategically meaningful?} This brings us to the critical notion of \textbf{balance} in the strategy space.

In both game design and multi-agent learning, balance is a central yet often misunderstood concept. It is frequently simplified to the idea of numerical fairness—for instance, equal win rates or mirrored performance metrics. However, this interpretation misses the deeper structural and cognitive roles balance plays in interactive systems.

True balance goes beyond score symmetry. It shapes the competitive dynamics among styles, influences player motivation, and determines whether a system encourages sustained strategic diversity or converges toward uniformity. A balanced system invites exploration by making alternative styles viable; it also fosters fairness by preventing domination; and it enriches gameplay by preserving meaningful counterplay.

Most importantly, balance provides the environmental feedback that validates or challenges a playstyle’s \textit{reason to persist}. In this sense, balance is not only a property of a system, but also a reflection of belief structures—about what works, what should work, and what deserves to be explored. In the next subsections, we examine how balance manifests along different dimensions and how it can be formally understood through strength, counter-relationships, and structural dynamics.

\subsection{Aspects of Balance}

To unpack the concept of balance in strategic environments, we identify four core aspects that collectively characterize playstyle balance:
\begin{enumerate}
  \item \textbf{Fairness}: No strategy should dominate others across all matchups.
  \item \textbf{Challenge}: Players should encounter appropriate levels of difficulty that sustain engagement.
  \item \textbf{Intransitivity}: Counter-relationships should exist to prevent static hierarchies.
  \item \textbf{Matchmaking}: Systems must support viable pairings and team compositions at the population level.
\end{enumerate}

These dimensions reflect different levels of balance—ranging from micro-level fairness in decisions to macro-level dynamics of population play. Interviews with professional game designers reinforce the importance of this multifaceted view \citepx{balance_from_renowned_authors}.

\textbf{Importantly, the overarching goal of balance is to preserve and expand the value of playstyle diversity.} In the competitive space of strategies, balance ensures that \textit{more distinct playstyles are not only possible, but also viable}. This aligns directly with the broader framework introduced in earlier chapters: expanding the \textbf{Capacity} (number of distinct styles) while ensuring that such styles retain sufficient \textbf{Popularity} (likelihood of being chosen or succeeding). A perfectly balanced system is one where diverse strategies coexist with competitive relevance.

\subsubsection{Fairness: Avoiding Overpowered Strategies}

The most immediate notion of balance is fairness—the idea that all strategic options should be comparably viable. When one strategy consistently outperforms all others, the decision space collapses, undermining exploration and discouraging stylistic expression.

In symmetric systems, achieving fairness often implies the presence of intransitive relationships: if no strategy can dominate all others, then some form of circular counterplay must exist. However, intransitivity is not sufficient on its own—there may still exist one strategy that performs well against most others and only loses narrowly to its counters.

Traditional rating systems such as Elo \citepx{elo}, TrueSkill \citepx{true_skill}, and MMR \citepx{mmr} aim to estimate player strength based on match outcomes, providing a basis for fairness evaluation. These models work well in symmetric, outcome-focused contexts but often struggle in multi-agent settings with asymmetric interactions or complex counterplay.

\subsubsection{Challenge: Sustaining Engagement via Dynamic Tension}

Fairness alone does not guarantee engagement. A balanced system should also maintain a sense of challenge that keeps players motivated. This aligns with the psychological notion of \textit{flow} \citepx{mental_flow}, where the best experiences occur when skill and difficulty are in harmony.

This balance is also captured by the \textit{Yerkes–Dodson law} \citepx{arousal_and_performance}, which suggests an optimal challenge zone exists beyond which performance drops. Dynamic difficulty adjustment (DDA) \citepx{dda} is one approach used by designers to maintain this tension. From a playstyle perspective, balance ensures that players can experiment with new strategies without being punished disproportionately.

\subsubsection{Intransitivity: Preserving Meta-Dynamic Depth}

Intransitivity—cyclical relationships like rock-paper-scissors—ensures that strategic dominance is always contextual. This not only sustains the evolution of game metas but also allows niche strategies to thrive under the right conditions.

Frameworks such as mElo \citepx{m_elo}, ND rating \citepx{nd_ability}, and neural counter-category models \citepx{game_balance_analysis} aim to capture these intransitive dynamics. By identifying which strategies beat which others—not just how strong they are—these models allow a deeper understanding of balance stability.

\subsubsection{Matchmaking: Population-Level Balance}

Balance is also shaped by how strategies are paired during play and training. In multi-agent systems, matchmaking influences both learning and evaluation. Self-play, PBT \citepx{pbt}, and PSRO \citepx{psro} are key techniques where opponent selection directly affects performance diversity and balance emergence.

Effective matchmaking encourages not just fairness, but also a dynamic and evolving distribution of viable strategies across the population.

\subsubsection{Conclusion: Balance as a Multi-Faceted Construct}

In summary, balance is not merely about equal win rates—it is about sustaining an ecosystem where \textit{many different playstyles are meaningfully viable}. It safeguards diversity by allowing varied strategies to persist, adapt, and compete on relatively equal footing. This expands the expressive space of strategy, supports creativity, and maintains long-term engagement across competitive systems. In the following sections, we transition from conceptual analysis to practical implementation, presenting neural rating models and structural indicators to measure and preserve balance in complex decision environments.

\subsection{Competitive Strength Measurement}

Why do we need to measure strength at all?

In a system where multiple playstyles interact and compete, understanding \textit{who is stronger} is fundamentally a question of \textbf{preference}---but one mediated by outcomes. Unlike subjective preference derived from internal belief or individual value systems, here the preference is \textit{revealed through interaction results}. That is, the environment (or its designer) implicitly expresses a preference for certain strategies over others by determining their success or failure in competitive play.

Therefore, strength measurement can be seen as an operationalized form of preference modeling:
\begin{itemize}
  \item The \textbf{preference function} is induced by win/loss outcomes.
  \item The \textbf{agent's value} is defined relationally---through its interactions with other agents.
  \item The \textbf{source of preference} is external---encoded by the environment or designer’s reward structure.
\end{itemize}

With this framing, the act of estimating competitive strength is not just a technical task---it is a philosophical stance about how we define ``good'' strategies. This makes strength measurement a critical prerequisite for any meaningful discussion of playstyle balance.

\subsubsection*{Rating Systems and the Bradley--Terry Model}

Many strength estimation methods have been developed to formalize these ideas. Classical rating systems---such as Elo, Glicko, Whole-History Rating (WHR), TrueSkill, and Matchmaking Rating (MMR)---offer scalar skill estimates that improve upon raw win rates by contextualizing outcomes based on opponent strength \citepx{elo, glicko, whr, true_skill, mmr, elo_mmr}.

At the core of many such systems lies the Bradley--Terry model \citepx{bradley_terry}, which models the probability that strategy $i$ defeats strategy $j$ as:
\begin{equation}
P(i > j) = \frac{\gamma_i}{\gamma_i + \gamma_j},
\end{equation}
where $\gamma_i$ and $\gamma_j$ are non-negative real-valued strength parameters.

To improve numerical stability and interpretability, this model is often rewritten in exponential form, setting $\gamma_i = e^{\lambda_i}$:
\begin{equation}
P(i > j) = \frac{e^{\lambda_i}}{e^{\lambda_i} + e^{\lambda_j}}.
\end{equation}

In fact, one commonly used instantiation of this principle---with an alternate base and parameterization---is the expected win probability in Elo rating systems:
\begin{equation}
E_A = \frac{1}{1 + 10^{(R_B - R_A)/400}}.
\end{equation}
This can be derived from the Bradley--Terry model via a change of base and rescaling of $\lambda_i$, offering a practical and interpretable implementation of pairwise win probability.

\subsubsection*{Elo Updates and Online Adaptation}

Elo further introduces an incremental update rule that allows models to adapt to changing data in an online manner. A player’s rating is adjusted after each match based on the prediction error:
\begin{equation}
R'_A = R_A + K (S_A - E_A),
\end{equation}
where $S_A$ is the actual result (1 for win, 0 for loss), $E_A$ is the expected win probability from above, and $K$ is a tunable learning rate.

This update rule provides a practical mechanism for applying the Bradley--Terry framework in real-world systems. It supports immediate adjustments from new results, which is crucial in dynamic environments such as online games, evolving agent systems, or active training pipelines.

\textbf{In summary}, strength measurement grounded in models like Bradley--Terry and Elo provides a way to embed environmental preferences into computational form. While powerful, these scalar models are limited when facing intransitive or context-sensitive strategic landscapes, a topic we turn to next.

\subsection{Counter Relationships}

Not all victories are created equal. A strategy that consistently defeats one class of opponents may still perform poorly against others—despite having a similar overall win rate. This phenomenon is known as a \textbf{counter relationship}, and it reflects a key form of strategic intransitivity: strategy A beats B, B beats C, but C beats A.

Such counter dynamics are widespread in competitive games, including \textit{Rock-Paper-Scissors}, trading card games, MOBAs, and role-based team shooters. They are essential for maintaining a healthy strategic ecosystem—ensuring that no single strategy remains unconditionally dominant, and encouraging players to adapt, read opponents, and diversify their repertoire.

However, counter relationships pose a major challenge for classical scalar strength models like Elo, which are fundamentally transitive. These models assume that if $A$ has a higher rating than $B$, then $A$ should reliably defeat $B$. But this assumption breaks down when outcomes depend heavily on the interaction between specific strategies, rather than aggregate performance.

Consider Figure~\ref{fig:counter_relationships}. Even if Player A has a higher Elo rating than Player B, if A plays Strategy X and B plays Strategy Y—and Y happens to counter X—then B may consistently win. This violates the Elo assumption that higher-rated players win more often, highlighting the inadequacy of scalar models in intransitive settings.

\begin{figure}[t]
    \centering
    \includegraphics[width=0.7\textwidth]{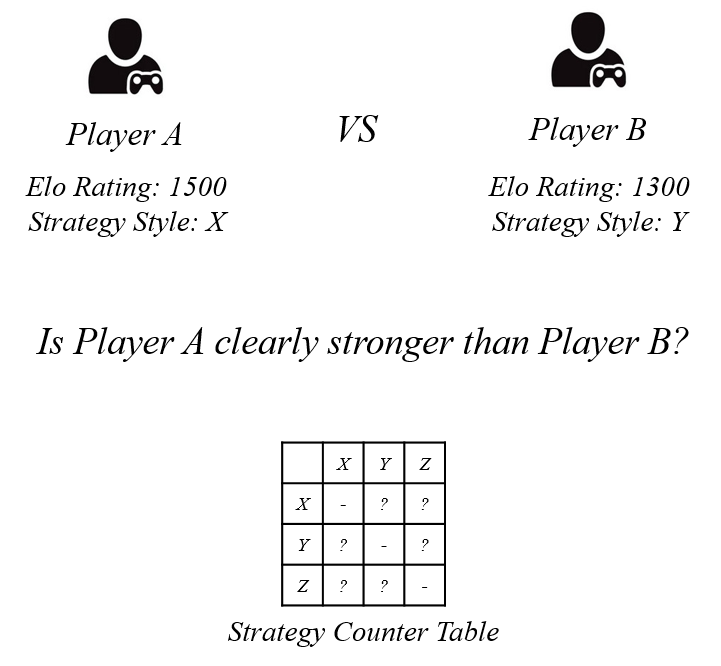}
    \caption[An illustration of counter relationships in games.]{An illustration of counter relationships in games. Player A may have a higher Elo rating than Player B, yet due to the interaction between Strategies X and Y, B consistently wins.}
    \label{fig:counter_relationships}
\end{figure}

To model this kind of context-dependent strength, we need to move beyond global scalar ratings. What is needed are models that explicitly encode \textit{against-whom} a strategy is effective—capturing not just “how strong” a strategy is overall, but “under what conditions” or “against which opponents” it succeeds or fails.

In the following section, we introduce two such extensions to the Bradley--Terry framework: the \textbf{Neural Rating Table (NRT)} and the \textbf{Neural Counter Table (NCT)}. These models offer fine-grained representations of strategy interaction and enable precise balance analysis in complex and intransitive game environments.

\section{Measuring Balance in Complex Strategy Spaces}

As discussed in the previous section, scalar rating systems such as Elo or Bradley--Terry offer a practical foundation for tracking competitive strength. However, they fail to capture the full complexity of real-world strategy spaces—especially when playstyles interact in non-transitive, asymmetric, or context-sensitive ways.

In this section, we move beyond strength estimation to confront the deeper problem: \textbf{how can balance itself be measured?} Our goal is not merely to assign scores to individual strategies, but to understand how the strategic ecosystem behaves as a whole. Which playstyles are viable? Which ones dominate or are dominated? Are counter-relationships structurally supported?

To answer these questions, we introduce a set of methods designed to measure balance in complex strategy spaces. These include:
\begin{itemize}
  \item The \textbf{Neural Rating Table (NRT)}, a neural network model that estimates the joint strength of multi-agent team compositions, capturing nonlinear synergy and interaction effects across roles and components.
  \item The \textbf{Neural Counter Table (NCT)}, which clusters strategies into latent counter categories and learns contextual win probabilities between these groups to reveal emergent counter structures.
  \item Balance-specific indicators such as \textbf{Top-D Diversity} and \textbf{Top-B Balance}, designed to quantify ecosystem viability and counter-structural richness.
  \item An online learning framework that supports adaptive, real-time balance modeling in dynamic multi-agent systems.
\end{itemize}

Together, these tools provide a data-driven foundation for evaluating and maintaining balance in strategically rich environments, from competitive games to policy optimization in complex AI systems.

\subsection{Neural Rating Table and Neural Counter Table}

While traditional scalar rating systems such as Elo offer a compact and interpretable means of estimating competitive strength, they inherently assume transitivity. As discussed previously, this assumption breaks down in the presence of counter relationships and cyclical dominance—conditions that are widespread in real-world games.

To meaningfully measure balance in such complex strategic environments, we require models that go beyond scalar strength and directly encode how strategies interact, combine, and counter one another. In this section, we present two neural extensions to classical rating theory: the \textbf{Neural Rating Table (NRT)} and the \textbf{Neural Counter Table (NCT)}, proposed by \citetx{game_balance_analysis}.

The NRT learns to estimate the strength of entire compositions, capturing synergy and non-linear interactions among components. The NCT complements this by discovering discrete counter categories and learning residual win relationships between them—allowing us to recover intransitive dynamics and structural counterplay from gameplay data. Together, these models provide a foundation for scalable, differentiable, and interpretable balance estimation.

\subsubsection{Modeling Complex Compositions with NRT}
The \textbf{Neural Rating Table (NRT)} extends the Bradley--Terry model to estimate non-linear combinational strength in team-based environments. While the original Bradley--Terry model is designed for two-player zero-sum games, NRT generalizes the idea by treating an entire composition (e.g., a team of heroes or cards) as a single entity and approximates its strength using a neural network.

To predict the win probability between two team compositions $c_i$ and $c_j$, NRT follows the Bradley--Terry formulation:
\begin{equation}
P(i > j) = \frac{e^{\lambda_i}}{e^{\lambda_i} + e^{\lambda_j}},
\end{equation}
where $\lambda_i = \log R_\theta(c_i)$, and $R_\theta$ is a neural encoder that maps composition $c_i$ to a scalar strength rating.

We implement $R_\theta$ using a Siamese neural network architecture with shared weights \citepx{siamese_network}, as shown in Figure~\ref{figure:neural_rating_table}. The network processes two competing compositions in parallel and predicts their relative strength values. The input features can range from simple binary encodings to richer representations capturing internal synergy.

\begin{figure}[t]
    \centering
    \includegraphics[width=0.4\textwidth]{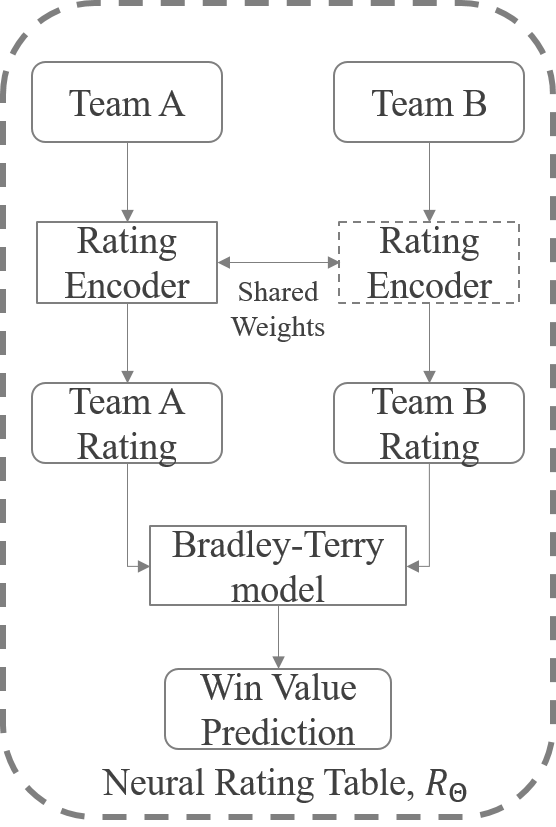}
    \caption[Neural Rating Table Architecture]{Neural Rating Table (NRT) predicts win probabilities between two teams using shared-weight encoders and the Bradley--Terry model, with exponential activation applied in the rating encoder for reparameterization.}
    \label{figure:neural_rating_table}
\end{figure}

To train the network, we minimize the difference between the predicted and actual match outcomes using a loss function $D$, typically mean squared error (MSE):
\begin{equation}
L_{R_\theta} = \mathbb{E}\left[D\left(W, \frac{R_\theta(c_i)}{R_\theta(c_i) + R_\theta(c_j)}\right)\right].
\end{equation}

We initially explored binary cross-entropy as a probabilistic loss, but found it led to unstable convergence and overly large rating magnitudes, similar to hinge loss behavior. MSE provided greater stability and smoother optimization.

This integration allows NRT to estimate the strength of any composition using learned representations, eliminating the need to store or compute a full $N \times N$ win-rate matrix. Ratings scale with the number of compositions as $\mathcal{O}(N)$, making NRT an efficient and scalable solution for modeling combinatorial strength. It also captures the interaction effects and synergy between individual elements within a composition, enabling generalization to unseen combinations.

We now turn to the complementary model that captures intransitive interactions: the \textbf{Neural Counter Table (NCT)}.

\subsubsection{Learning Discrete Counter Relationships with NCT}

The Neural Rating Table (NRT) enables the estimation of strength values for all $N$ possible team compositions with a compact space complexity of $\mathcal{O}(N)$. However, this efficiency comes at a trade-off: the inferred transitive strength relations may be less precise than those obtained via exhaustive pairwise comparisons, which, though more accurate, require $\mathcal{O}(N^2)$ space. While such comparisons can in principle be modeled using neural architectures, the quadratic growth in space makes them difficult to validate or interpret—particularly in large-scale settings.

This limitation becomes even more pronounced when accounting for intransitive relationships or cyclic dominance patterns, which are frequently observed in competitive games. Attempting to capture these relationships explicitly through a full $N \times N$ counter table introduces significant memory and cognitive overhead. Although theoretically informative, such a table would be impractical for both algorithmic learning and human understanding.

To address this, we propose a more tractable alternative: a reduced $M \times M$ counter matrix that approximates the $N \times N$ counter relationships via composition clustering. Here, $M$ denotes a small number of discrete categories—starting from three, which suffices to capture basic cyclic structures in symmetric games. The number of categories can be tuned to balance expressivity with interpretability, making it easier to analyze global balance patterns without sacrificing too much predictive fidelity.

This design naturally reflects how human players mentally structure their understanding of strategic counter relationships—as a finite set of intuitive categories (e.g., rock beats scissors, scissors beat paper, etc.).

For the task of learning discrete categories, we adopt Vector Quantization (VQ) \citepx{vq_vae}, as it enables discrete latent representation learning in an end-to-end neural setting. In contrast to traditional reconstruction tasks in VQ-VAE, our objective is to classify compositions into categories that encode counter-relationship structure via residual win prediction.

Given an observation $o$ and its latent feature $z_e$ from an encoder, we assign it to a codebook entry $e_k$ by nearest neighbor:
\begin{equation}
    q(\overline{s}=k|o)=
    \begin{cases}
    1 & \quad \text{for } k_{i} = \underset{j}{\argmin} ||z_{e}^{i}(o)-e_{j}||^2,\\
    0 & \quad \text{otherwise.}
    \end{cases}
\end{equation}

Given this mapping, we define our counter residual as:
\begin{equation}
\label{eq:counter_table}
W_{res}(c_{m_i}, c_{m_j} | R_{\theta}) = W_m - \frac{R_{\theta}(c_{m_i})}{R_{\theta}(c_{m_i}) + R_{\theta}(c_{m_j})}
\end{equation}

We use a Siamese encoder $Ce_{\theta}$ and a residual decoder $Cd_{\theta}$ to learn this correction over the Bradley--Terry baseline:
\begin{equation}
C_{\theta}(c_{m_i}, c_{m_j}) = Cd_{\theta}(Ce_{\theta}(c_{m_i}), Ce_{\theta}(c_{m_j})).
\end{equation}

The total predicted win value thus becomes:
\begin{equation}
\label{eq:nct_win}
W_{\theta}(c_{m_i}, c_{m_j}) = \frac{R_{\theta}(c_{m_i})}{R_{\theta}(c_{m_i}) + R_{\theta}(c_{m_j})} + C_{\theta}(c_{m_i}, c_{m_j}).
\end{equation}

To train this architecture, we define three loss terms:
\begin{itemize}
  \item $L_{res}$: MSE between predicted and observed residual win values
  \item $L_{vq}$: vector quantization loss between $z_e$ and its quantized version $z_q$
  \item $L_{mean}$: a proposed \textbf{VQ Mean Loss}, which pulls $z_e$ toward the mean codebook embedding
\end{itemize}

The motivation for \textbf{VQ Mean Loss} stems from a common problem in VQ training—low codebook utilization. Even with diverse training data, we found that many embedding vectors $e_k$ in the codebook remained unused. This reduces the expressiveness and coverage of the $M \times M$ counter table. Existing solutions such as stochastic VQ variants \citepx{vq_vae, improved_vqgan, orthogonal_codebook} and techniques like Gumbel softmax sampling or KL-regularized VQ \citepx{reg_vq} aim to address this, but they often introduce additional training complexity. Our proposed VQ Mean Loss offers a lightweight and implementation-friendly alternative:
\begin{equation}
L_{mean} = \mathbb{E}\left[\frac{D(z_e(c_{m_i}), \bar{e}) + D(z_e(c_{m_j}), \bar{e})}{2}\right], \quad \bar{e} = \frac{1}{M}\sum_{k=1}^{M} e_k.
\end{equation}
This term draws latent vectors toward the center of the codebook, increasing the chance of vector assignments being spread more evenly.

\begin{figure}
\begin{center}
\includegraphics[width=\linewidth]{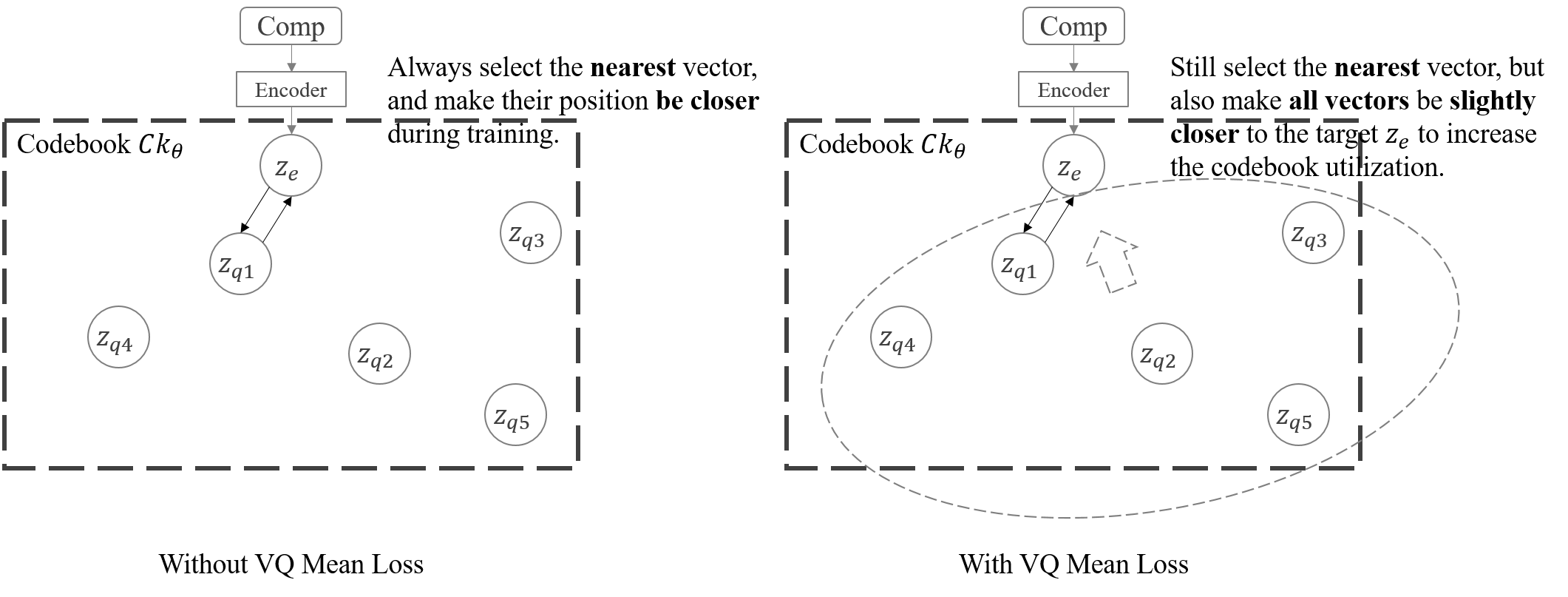}
\end{center}
\caption[VQ Mean Loss Effect]{
Effect of VQ Mean Loss on codebook utilization. Left: Without VQ Mean Loss, the encoder output $\mathbf{z}_e$ aligns with only the closest codeword. Right: With VQ Mean Loss, $\mathbf{z}_e$ is pulled toward the mean of multiple nearby codewords, improving codebook usage and representational flexibility.
}
\label{fig:embedding_problem}
\end{figure}

The gradient flows for the components involved in learning counter relationships are modulated by two hyperparameters, $\beta_N$ and $\beta_M$. The coefficient $\beta_N$ governs the VQ-VAE’s alignment between continuous encoder outputs $z_e$ and their nearest quantized codewords $z_q$, enforcing proximity in latent space. In parallel, $\beta_M$ activates the mean regularization term $L_{mean}$ to guide codebook structuring. The resulting gradients are computed as follows:
\begin{equation}
\begin{aligned}
\nabla Ce_{\theta} &= \frac{\partial L_{res}}{\partial z_q} \cdot \frac{\partial z_e}{\partial Ce_{\theta}} + \beta_N \cdot \frac{\partial L_{vq}}{\partial Ce_{\theta}}, \\
\nabla Ck_{\theta} &= \frac{\partial L_{vq}}{\partial Ck_{\theta}} + \beta_M \cdot \frac{\partial L_{mean}}{\partial Ck_{\theta}}, \\
\nabla Cd_{\theta} &= \frac{\partial L_{res}}{\partial Cd_{\theta}}.
\end{aligned}
\end{equation}

This architecture allows us to reduce the space complexity from $\mathcal{O}(N^2)$ to $\mathcal{O}(N + M^2)$, enabling a more scalable yet interpretable model of counter relationships.

\begin{figure}
\centering
\includegraphics[width=0.6\textwidth]{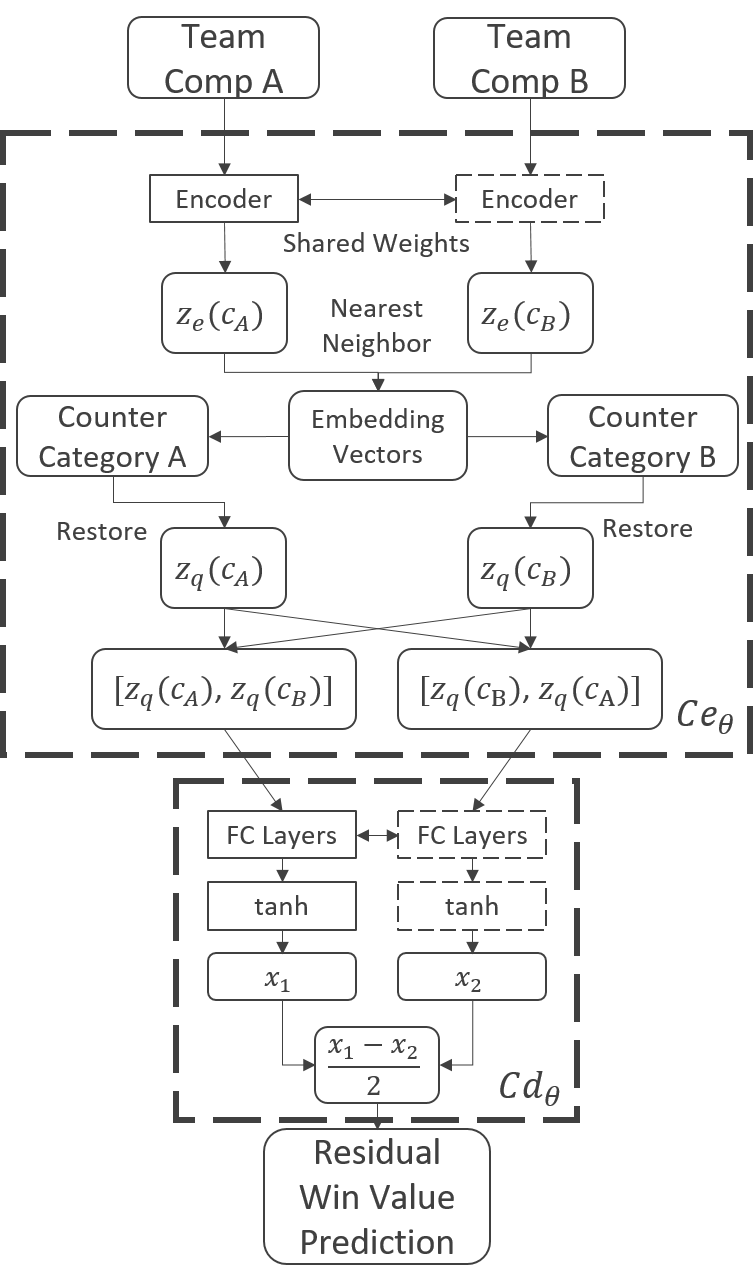}
\caption[Neural Counter Table Architecture]{
Architecture of the Neural Counter Table $C_\theta$. Given two team compositions (Comp A and Comp B), shared-weight encoders produce continuous latent representations $z_e(c_A)$ and $z_e(c_B)$. These are quantized via nearest-neighbor search to obtain discrete embedding vectors $z_q(c_A)$ and $z_q(c_B)$, which are categorized into respective counter classes. The decoded vectors—paired in both orders—are fed into fully connected layers with $\tanh$ activations to produce intermediate scores $x_1$ and $x_2$. The residual win value prediction $W_{res}$ is computed as the average of their differences, providing a symmetric estimate of the counter effect between the two compositions.
}
\label{Figure:neural_counter_table}
\end{figure}

\subsubsection{Learning Pipeline}

The overall methodology for constructing the Neural Rating Table ($R_\theta$) and Neural Counter Table ($C_\theta$) is illustrated in Figure~\ref{Figure:nrt_nct_pipeline}. This framework operates on a dataset comprising match outcomes, where each data point contains two competing team compositions and the corresponding win/loss result. The way compositions are represented is flexible, allowing for anything from simple binary indicators to richer, high-dimensional feature encodings—depending on the design choices of the game system.

A key aspect of this pipeline is that the construction of $C_\theta$ depends on the prior learning of $R_\theta$. This sequential structure reflects the conceptual separation between general team strength estimation and the modeling of residual counter-effects. By first learning composition ratings independently, the framework ensures that $C_\theta$ captures non-transitive and context-specific interactions beyond what $R_\theta$ can represent.

\begin{figure}
\centering
\includegraphics[width=0.8\textwidth]{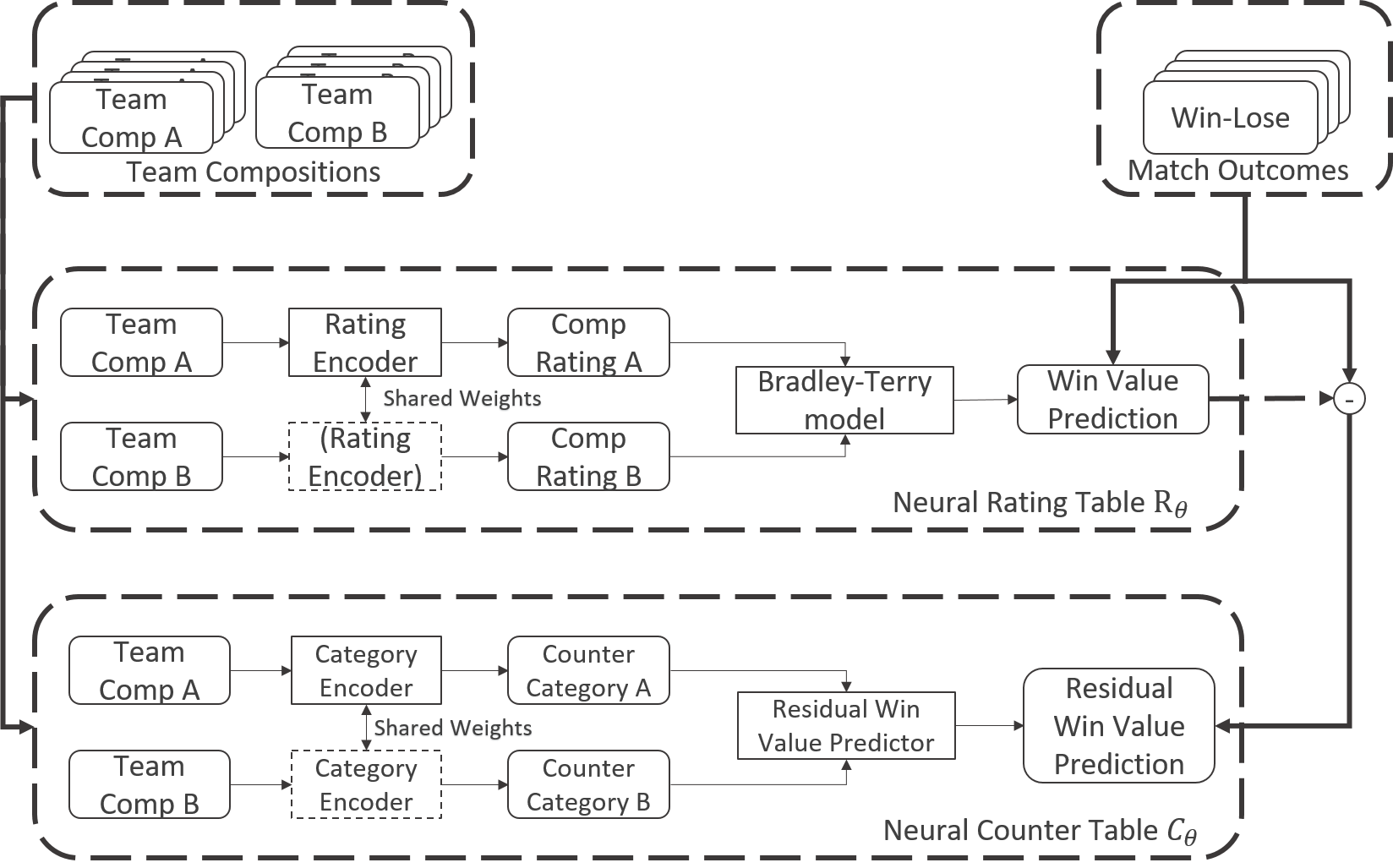}
\caption[NRT and NCT Pipeline]{
Illustration of the learning pipeline for the Neural Rating Table ($R_\theta$) and Neural Counter Table ($C_\theta$). Given match data consisting of team compositions (Comp A and Comp B) and corresponding win/loss outcomes, the pipeline proceeds in two stages. First, compositions are encoded via shared rating encoders to produce composition ratings, which are then passed into a Bradley–Terry model to generate initial win value predictions—constituting $R_\theta$. In the second stage, the same compositions are encoded into discrete counter categories via a shared category encoder. These category pairs are then used to compute residual win values, allowing $C_\theta$ to capture counter relationships beyond transitive ratings and account for cyclic dominance effects.
}
\label{Figure:nrt_nct_pipeline}
\end{figure}

In essence, NRT captures the transitive base strength of compositions, while NCT learns the non-transitive residual deviations—serving as a complementary enhancement over the baseline.

\subsubsection{Evaluation Games and Datasets}

To evaluate the accuracy and generalizability of our rating and counter modeling methods, we conduct experiments across both synthetic and real-world PvP game datasets. These datasets allow us to examine performance under controlled structural properties (e.g., transitivity and intransitivity) and to test scalability and prediction quality in practical competitive settings.

We categorize the test environments as follows:

\begin{itemize}
    \item \textbf{Synthetic games}: including a score-based composition game, the canonical Rock-Paper-Scissors (RPS) scenario, and a hybrid variant embedding RPS rules into a compositional framework.
    \item \textbf{Real-world games}: Age of Empires II (AoE2), Hearthstone, Brawl Stars, and League of Legends (LoL), chosen to reflect a variety of composition scales, strategic depth, and intransitivity levels.
\end{itemize}

\paragraph{Simple Combination Game} This synthetic game features 20 elements, each assigned a numeric score from 1 to 20. A composition consists of three distinct elements, with its strength defined as the sum of its members' scores. Given two compositions $c_1$ and $c_2$ with scores $s_1$ and $s_2$, the win probability is defined as $P(c_1 > c_2) = \frac{s_1^2}{s_1^2 + s_2^2}$. This introduces smooth score-based differences without inherent cyclic structures. The full set contains $C^{20}_3 = 1140$ unique compositions, with 100,000 matches uniformly sampled.

\paragraph{Rock-Paper-Scissors (RPS)} The canonical RPS game includes three strategies with cyclic dominance: Rock beats Scissors, Scissors beats Paper, and Paper beats Rock. Win values are set as 0 (loss), 0.5 (tie), and 1 (win). We sample 100,000 matchups uniformly. This serves as a minimal intransitive benchmark.

\paragraph{Advanced Combination Game} This hybrid setting embeds RPS structure into the combination game above. A composition's base score is still the sum of its elements, but an additional +60 bonus is awarded if it wins the RPS rule determined by $T = s_c \mod 3$, where 0, 1, and 2 correspond to Rock, Paper, and Scissors respectively. The composition space remains $C^{20}_3 = 1140$, with 100,000 matches.

\paragraph{Age of Empires II (AoE2)} AoE2 is a popular real-time strategy game with 45 playable civilizations. We use data from the public aoestats.io dataset (January 2024), which includes 1,261,288 1v1 matches from random map mode. Each composition corresponds to one selected civilization.

\paragraph{Hearthstone} Hearthstone is a collectible card game where compositions correspond to deck archetypes. Using data from HSReplay (January 2024), we extract 91 standardized decks at the Gold ranking level, resulting in 10,154,929 matches.

\paragraph{Brawl Stars} In this MOBA-style game, each team selects three brawlers. The full composition space is large ($C^{64}_3 \times 43 \times 6 \approx 10^7$), factoring in characters, maps, and modes. From the "Brawl Stars Logs \& Metadata 2023" dataset, we sample 179,995 matches, covering 94,235 unique team compositions.

\paragraph{League of Legends (LoL)} LoL features 5v5 team compositions drawn from a pool of 136 champions, yielding over 350 million possible combinations. We use 182,527 solo queue matches from a Kaggle dataset, observing 348,498 unique compositions, indicating high sparsity and diversity.

Together, these datasets span controlled and large-scale scenarios, enabling rigorous testing of strength relation prediction, counter modeling, and balance quantification in both theoretical and practical settings.

\subsubsection{Strength Relation Prediction Accuracy}

To assess whether our models accurately capture strategic dominance, we frame strength prediction as a classification task: given two compositions $c_1$ and $c_2$, does the model correctly predict whether $c_1$ is weaker, equal, or stronger than $c_2$?

Labels are derived from average observed win values. For instance, if Win($c_1$, $c_2$) > 0.501, then $c_1$ is labeled stronger; if < 0.499, then weaker; otherwise, equal. Predicted labels are computed similarly using the model's estimated win values. Accuracy is the fraction of correctly classified comparisons.

We compare five baseline methods:
\begin{itemize}
    \item \textbf{WinValue}: direct win value prediction from composition features.
    \item \textbf{PairWin}: lookup-based pairwise table (ideal but non-generalizable).
    \item \textbf{BT}: linear Bradley-Terry model over component ratings.
    \item \textbf{NRT}: non-linear neural generalization of BT.
    \item \textbf{NCT}: NRT plus counter correction using the learned $M \times M$ counter table.
\end{itemize}

Table~\ref{table:precision_test} summarizes the accuracy of our strength relation prediction models across diverse game environments. The results show that our proposed method, \textbf{NCT} with $M=81$, consistently achieves accuracy on par with \textbf{PairWin}, the oracle-like pairwise baseline, across all evaluated games.

In games with complex and high-dimensional composition spaces, such as \textit{Brawl Stars} and \textit{League of Legends}, scalar or linear models (e.g., WinValue, BT) fall short. The use of neural approximation in \textbf{NRT} significantly improves accuracy, and the additional modeling of counter relationships in \textbf{NCT} further enhances performance. This confirms the necessity of non-linear and intransitive modeling in large-scale competitive environments.

Moreover, in domains where explicit counter relationships are known or expected—such as \textit{Rock-Paper-Scissors}, the \textit{Advanced Combination Game}, and \textit{Age of Empires II}—\textbf{NCT} successfully bridges the large accuracy gap observed between \textbf{PairWin} and \textbf{NRT}, validating its ability to recover underlying intransitive structures. In contrast, direct win value predictors like \textbf{WinValue} often underperform due to their inability to model structural dependencies.

Table~\ref{table:table_size_test} further demonstrates that increasing the counter table size $M$ leads to steady improvements in prediction accuracy. This indicates that richer counter category granularity enables more precise modeling of strategic relationships, while still maintaining scalability.

We also observe the generalization behavior of the models by comparing training and testing accuracy. In most games, the difference remains small, suggesting that learned strength relations are robust and not overfit. However, in \textit{League of Legends}, a notable performance gap suggests overfitting due to the extreme sparsity of the composition space and the added complexity introduced by the ban/pick phase. For such cases, we recommend expanding the training dataset to improve generalizability.

Despite these challenges, the rating and counter models still yield valuable insights from observed data and can guide early-stage game balance analysis even before full generalization is achieved.

\begin{table*}
\centering
\caption[NRT and NCT Accuracy]{Prediction accuracies (\%) on training (top) and testing (bottom) datasets across various games. The results highlight the effectiveness of the proposed \textbf{NCT} method (with $M=81$) in modeling diverse competitive environments.}
\label{table:precision_test}
\begin{tabular}{llllll}
\toprule
\quad & \textbf{WinValue} & \textbf{PairWin} & \textbf{BT} & \textbf{NRT} & \textbf{NCT} M=81 \\
\midrule
Simple Combination & 64.5 & \textbf{71.2} & 63.9 & 64.8 & 66.4\\
\midrule
Rock-Paper-Scissors & 51.3 & \textbf{100} & 51.3 & 51.3 & \textbf{100}\\
\midrule
Advanced Combination & 57.7 & \textbf{83.5} & 56.6 & 57.9 & 79.4\\
\midrule
Age of Empires II & 68.7 & \textbf{97.3} & 68.7 & 68.7 & \textbf{97.7}\\
\midrule
Hearthstone & 81.1 & \textbf{97.8} & 81.4 & 83.4 & \textbf{97.4}\\
\midrule
Brawl Stars & 90.2 & 94.3 & 53.2 & 95.9 & \textbf{97.2}\\
\midrule
League of Legends & 79.6 & 78.9 & 54.0 & 88.2 & \textbf{90.9}\\
\bottomrule
\toprule
Simple Combination & 64.7 & 61.8 & \textbf{65.5} & 64.9 & 63.9\\
\midrule
Rock-Paper-Scissors & 51.1 & \textbf{100} & 51.1 & 51.1 & \textbf{100}\\
\midrule
Advanced Combination & 56.5 & 79.1 & 57.5 & 56.5 & \textbf{79.7}\\
\midrule
Age of Empires II & 64.5 & \textbf{75.7} & 64.5 & 64.5 & \textbf{75.4}\\
\midrule
Hearthstone & 80.9 & \textbf{95.4} & 81.1 & 81.2 & 94.8\\
\midrule
Brawl Stars & 79.7 & 82.4 & 53.0 & 82.8 & \textbf{83.4}\\
\midrule
League of Legends & 51.1 & 50.9 & \textbf{53.6} & 51.1 & 51.0\\
\bottomrule
\end{tabular}
\end{table*}

\begin{table*}
\centering
\caption[NCT Table Size $M$ Search]{Training (top) and testing (bottom) accuracies (\%) across various games using Neural Counter Tables with different category sizes $M$. Larger $M$ values generally lead to higher prediction accuracy, demonstrating improved modeling of counter relationships. The performance gap between \textbf{PairWin} and \textbf{NRT} further indicates the presence of intransitive interactions that cannot be captured by scalar ratings alone.}
\label{table:table_size_test}
\begin{tabular}{lll|llll}
\toprule
\quad & \textbf{PairWin} & \textbf{NRT} & \textbf{NCT} M=3 & M=9 & M=27 & M=81 \\
\midrule
Simple Combination & \textbf{71.2} & 64.8 & 64.8 & 65.2 & 65.8 & 66.4\\
\midrule
Rock-Paper-Scissors & \textbf{100} & 51.3 & \textbf{100} & \textbf{100} & \textbf{100} & \textbf{100}\\
\midrule
Advanced Combination & \textbf{83.5} & 57.9 & 57.9 & 79.4 & 79.8 & 79.4\\
\midrule
Age of Empires II & \textbf{97.3} & 68.7 & 68.7 & 73.1 & 83.8 & \textbf{97.7}\\
\midrule
Hearthstone & \textbf{97.8} & 83.4 & 81.3 & 85.4 & 91.7 & \textbf{97.4}\\
\midrule
Brawl Stars & 94.3 & 95.9 & 96.3 & \textbf{97.3} & \textbf{97.2} & \textbf{97.2}\\
\midrule
League of Legends & 78.9 & 88.2 & 89.5 & 91.6 & \textbf{92.6} & 90.9\\
\bottomrule
\toprule
Simple Combination & 61.8 & \textbf{64.9} & \textbf{64.9} & 64.4 & 64.2 & 63.9\\
\midrule
Rock-Paper-Scissors & \textbf{100} & 51.1 & \textbf{100} & \textbf{100} & \textbf{100} & \textbf{100}\\
\midrule
Advanced Combination & 79.1 & 56.5 & 56.5 & \textbf{79.7} & 80.1 & \textbf{79.7}\\
\midrule
Age of Empires II & \textbf{75.7} & 64.5 & 64.5 & 67.7 & 72.5 & \textbf{75.4}\\
\midrule
Hearthstone & \textbf{95.4} & 81.2 & 81.3 & 85.2 & 91.3 & 94.8\\
\midrule
Brawl Stars & 82.4 & 82.8 & 82.9 & 83.3 & 83.3 & \textbf{83.4}\\		
\midrule
League of Legends & 50.9 & 51.1 & 51.1 & 50.9 & 51.0 & 51.0\\
\bottomrule
\end{tabular}
\end{table*}

\subsubsection{Counter Table Utilization}

The effectiveness of the Neural Counter Table (NCT) relies not only on the accuracy of residual prediction but also on the utilization of its discrete embedding space. When the number of categories $M$ is too large relative to the diversity of latent features, a common issue is \textit{codebook underutilization}, where only a small subset of the available quantized vectors are actually used during training. This degrades both model capacity and interpretability, as unused entries do not contribute to residual modeling.

To address this issue, we introduced an auxiliary loss—\textbf{VQ Mean Loss}—which encourages latent embeddings to stay close to the mean of the codebook vectors. This regularization acts as a soft attraction force that pulls embeddings toward underutilized regions of the codebook, effectively promoting broader category usage without disrupting the nearest-neighbor quantization behavior.

We evaluate this technique using the AoE2 dataset with $M=27$ categories, a setting that exhibits moderate intransitivity and benefits from a medium-sized counter table. Table~\ref{table:beta_comparison1} compares different values of the VQ commitment weight $\beta_N$ under a fixed VQ Mean Loss weight $\beta_M=0.25$, and Table~\ref{table:beta_comparison2} further explores variations in $\beta_M$ while keeping $\beta_N=0.01$.

Our results show that the default setting of $\beta_N = 0.25$ from standard VQ-VAE literature leads to severe underutilization (e.g., only 1 or 2 codes used). Lowering $\beta_N$ and activating the VQ Mean Loss with $\beta_M=0.25$ achieves both higher prediction accuracy and nearly full codebook utilization, validating the importance of this modification.

\begin{table}[tb]
\centering
\caption[VQ $\beta_N$ Search]{
Training accuracy and the number of utilized categories ($M$) in Age of Empires II ($M=27$) under different $\beta_N$ values, with $\beta_M$ fixed at 0.25. The commonly used setting in VQ-VAE, $\beta_N=0.25$, is included for comparison. Utilized $M$ refers to the number of active categories assigned to any composition in the dataset after training.
}
\label{table:beta_comparison1}
\begin{tabular}{lll}
\toprule
\textbf{Setting} & \textbf{Accuracy (\%)} & \textbf{Utilized M} \\
\midrule
\pmb{$\beta_{N}=0.01$} & \textbf{83.8} & \textbf{26.0}\\
$\beta_{N}=0.125$ & 71.4 & 5.4\\
$\beta_{N}=0.25$ & 68.7 & 1.0\\
\bottomrule
\end{tabular}
\end{table}

\vspace{1em}

\begin{table}[tb]
\centering
\caption[VQ Mean $\beta_M$ Search]{
Training accuracy and the number of utilized categories ($M$) in Age of Empires II ($M=27$) under varying $\beta_M$ values, with $\beta_N$ fixed at 0.01. The results show that both excessively small and large $\beta_M$ values lead to suboptimal codebook utilization, while $\beta_M=0.25$ achieves the best overall performance.
}
\label{table:beta_comparison2}
\begin{tabular}{lll}
\toprule
\textbf{Setting} & \textbf{Accuracy (\%)} & \textbf{Utilized M} \\
\midrule
$\beta_{M}=0$ & 76.6 & 13.8\\
$\beta_{M}=0.125$ & 83.6 & 26.0\\
\pmb{$\beta_{M}=0.25$} & \textbf{83.8} & \textbf{26.0}\\
$\beta_{M}=0.5$ & 83.4 & 25.6\\
$\beta_{M}=1.0$ & 81.0 & 21.6\\
\bottomrule
\end{tabular}
\end{table}

\subsubsection{Comparison to Tabular Baselines}

While our proposed NRT and NCT models offer generalization and interpretability, it is instructive to compare them against simpler alternatives that use tabular representations rather than neural approximators.

We implement tabular variants of four common rating systems: \begin{itemize} 
	\item \textbf{WinValue N$\rightarrow$T}: direct averaging of observed win values. 
	\item \textbf{PairWin N$\rightarrow$T}: full composition-by-composition win matrix. 
	\item \textbf{Elo N$\rightarrow$T}: standard scalar Elo rating with online updates. 
	\item \textbf{mElo2}: a two-dimensional extension of Elo capturing weak counter structures. 
\end{itemize}

As shown in Table~\ref{table:tabular_precision_test}, tabular methods excel at memorizing training data (sometimes nearly perfect), but their generalization performance varies. For example, PairWin achieves near-100\% training accuracy in most settings but fails on unseen test matchups due to its lack of function approximation. On the other hand, Elo and mElo2 generalize moderately but underperform in games with strong intransitivity, where scalar or low-dimensional ratings are insufficient.

In contrast, NCT achieves a favorable tradeoff between accuracy and generalizability, especially in complex or counter-heavy games like Hearthstone, AoE2, and Brawl Stars. While it does not outperform PairWin on memorization, it significantly outperforms all tabular methods on held-out test sets.

These results underscore the value of structured neural approximations: they combine the interpretability and update efficiency of rating models with the flexibility of learning from high-dimensional and intransitive environments.

\begin{table}
\centering
\caption[NRT and NCT: Comparison to Tabular Baselines]{
Prediction accuracies (\%) across multiple games under different modeling paradigms. For methods that support both neural and tabular implementations (WinValue, PairWin, Elo), arrows ($\rightarrow$) denote the corresponding performance when transitioning from neural training to tabular inference. The results reveal that while some methods suffer severe degradation in tabular form, the proposed \textbf{NCT} maintains consistently strong accuracy, even without tabular deployment.
}
\label{table:tabular_precision_test}
\rotatebox{90}{
\begin{tabular}{llllll}
\toprule
\quad & \textbf{WinValue N$\rightarrow$T} & \textbf{PairWin N$\rightarrow$T} & \textbf{Elo N$\rightarrow$T} & \textbf{mElo2} & \textbf{NCT} M=81\\
\midrule
Simple Combination & 64.5 $\rightarrow$ 64.5 & 71.2 $\rightarrow$ \textbf{99.9} & 64.8 $\rightarrow$ 64.3 & 56.6 & 66.4\\
\midrule
Rock-Paper-Scissors & 51.3 $\rightarrow$ 51.3 & \textbf{100} $\rightarrow$ \textbf{100} & 51.3 $\rightarrow$ 73.3 & \textbf{100} & \textbf{100}\\
\midrule
Advanced Combination & 57.7 $\rightarrow$ 57.7 & 83.5 $\rightarrow$ \textbf{99.9} & 57.9 $\rightarrow$ 57.1 & 52.7 & 79.4\\
\midrule
Age of Empires II & 68.7 $\rightarrow$ 68.7 & 97.3 $\rightarrow$ \textbf{100} & 68.7 $\rightarrow$ 53.1 & 51.2 & 97.7\\
\midrule
Hearthstone & 81.1 $\rightarrow$ 81.1 & \textbf{97.8} $\rightarrow$ \textbf{98.0} & 83.4 $\rightarrow$ 74.8 & 61.4 & 97.4\\
\midrule
Brawl Stars & 90.2 $\rightarrow$ 95.8 & 94.3 $\rightarrow$ \textbf{99.7} & 95.9 $\rightarrow$ 98.0 & 97.5 & 97.2\\
\midrule
League of Legends & 79.6 $\rightarrow$ \textbf{99.9} & 78.9 $\rightarrow$ \textbf{100} & 88.2 $\rightarrow$ \textbf{100} & 100 & 90.9\\
\bottomrule
\toprule
Simple Combination & \textbf{64.7} $\rightarrow$ \textbf{64.7} & 61.8 $\rightarrow$ 6.5 & \textbf{64.9} $\rightarrow$ \textbf{64.4} & 57.2 & 63.9\\
\midrule
Rock-Paper-Scissors & 51.1 $\rightarrow$ 55.8 & \textbf{100} $\rightarrow$ \textbf{100} & 51.1 $\rightarrow$ 73.6 & \textbf{100} & \textbf{100}\\
\midrule
Advanced Combination & 56.5 $\rightarrow$ 56.7 & 79.1 $\rightarrow$ 8.2 & 56.5 $\rightarrow$ 56.1 & 51.3 & \textbf{79.7}\\
\midrule
Age of Empires II & 64.5 $\rightarrow$ 64.0 & \textbf{75.7} $\rightarrow$ \textbf{75.4} & 64.5 $\rightarrow$ 52.4 & 51.0 & \textbf{75.4}\\
\midrule
Hearthstone & 80.9 $\rightarrow$ 81.5 & \textbf{95.4} $\rightarrow$ \textbf{95.0} & 81.2 $\rightarrow$ 74.9 & 61.1 & 94.8\\
\midrule
Brawl Stars & 79.7 $\rightarrow$ 69.8 & 82.4 $\rightarrow$ 69.3 & 82.8 $\rightarrow$ 77.8 & \textbf{83.1} & \textbf{83.4}\\
\midrule
League of Legends & 51.1 $\rightarrow$ 0.1 & 50.9 $\rightarrow$ 0 & 51.1 $\rightarrow$ 6.3 & 50.2 & 51.0\\
\bottomrule
\end{tabular}
}
\end{table}

\subsection{Top-D Diversity and Top-B Balance}

With these new tools (NRT and NCT), we are now equipped to formally define and measure balance beyond basic win rate statistics. We introduce two new metrics: \textbf{Top-D Diversity}, which measures the number of competitive strategies close in strength to the strongest one, and \textbf{Top-B Balance}, which identifies the number of non-dominated strategies considering counter relationships. Both metrics are designed to be theoretically grounded yet computationally efficient.

We first formalize the notion of dominance using win value estimators:

\begin{definition}
\label{measure_def1}
Let $\text{Win}: c_1, c_2 \to w$ denote a function estimating the probability that composition $c_1$ wins over $c_2$, where $w \in [0, 1]$.
\end{definition}

\begin{proposition}
\label{measure_prop1}
We say composition $c_1$ dominates composition $c_2$ over all opponents $c$ if $\text{Win}(c_1, c) > \text{Win}(c_2, c)$ for every $c$ in the composition space.
\end{proposition}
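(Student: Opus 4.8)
The statement is definitional rather than assertional: it introduces the binary relation of \emph{dominance} induced by the win-probability function $\text{Win}$, and the ``if'' should be read as ``if and only if.'' Consequently there is no hypothesis--conclusion implication to discharge in the usual sense; what genuinely deserves verification is that this relation is well-behaved enough to support the later constructions of Top-D Diversity and, above all, Top-B Balance, where one reasons about \emph{non-dominated} strategies. The plan is therefore to confirm that dominance is a \textbf{strict partial order} on the composition space, since that is precisely the property guaranteeing that ``non-dominated'' (i.e.\ maximal) elements are meaningfully defined and that the later counting metrics are unambiguous.

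First I would establish \emph{irreflexivity}: no composition dominates itself, because dominance over all opponents $c$ would in particular require $\text{Win}(c_1, c_1) > \text{Win}(c_1, c_1)$ at $c = c_1$, which is impossible for the strict order $>$ on $\mathbb{R}$. Next I would prove \emph{transitivity}: if $c_1$ dominates $c_2$ and $c_2$ dominates $c_3$, then for every opponent $c$ we have $\text{Win}(c_1,c) > \text{Win}(c_2,c)$ and $\text{Win}(c_2,c) > \text{Win}(c_3,c)$, so $\text{Win}(c_1,c) > \text{Win}(c_3,c)$ by transitivity of $>$ on $\mathbb{R}$, giving dominance of $c_1$ over $c_3$. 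Antisymmetry is then immediate from irreflexivity together with transitivity, so the relation is a strict partial order, and the set of maximal (non-dominated) compositions used in Top-B Balance is well-defined.

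The main obstacle I anticipate is not the order-theoretic algebra, which is one line each, but the treatment of \emph{self-matches and the zero-sum symmetry constraint}. Because the quantifier ranges over \emph{every} $c$, dominance of $c_1$ over $c_2$ forces both $\text{Win}(c_1, c_2) > \text{Win}(c_2, c_2)$ and $\text{Win}(c_1, c_1) > \text{Win}(c_2, c_1)$. For these to be consistent with the normalization $\text{Win}(c_i, c_j) + \text{Win}(c_j, c_i) = 1$ inherited from the Bradley--Terry formulation and the natural tie convention $\text{Win}(c,c) = 1/2$, I would verify that the constraints reduce to the intended reading $\text{Win}(c_1,c_2) > 1/2$ rather than silently collapsing the relation or rendering it vacuous. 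I would make these conventions explicit up front, so that dominance is simultaneously non-vacuous, compatible with the symmetry of the win function, and aligned with the intransitive counter-structures the NRT/NCT models are designed to recover.
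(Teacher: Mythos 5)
Your reading is correct: the paper treats Proposition~\ref{measure_prop1} purely as a definition of the dominance relation and supplies no proof, exactly as you argue, so there is no hypothesis--conclusion gap to discharge. Your supplementary verification (irreflexivity and transitivity making dominance a strict partial order, and the reduction of the self-match constraints to $\text{Win}(c_1,c_2) > 1/2$ under the Bradley--Terry symmetry $\text{Win}(c_i,c_j) + \text{Win}(c_j,c_i) = 1$ with $\text{Win}(c,c) = 1/2$) goes beyond what the paper states but is sound and consistent with how the paper later uses non-dominated compositions in Lemma~\ref{top_b_lemma} and Proposition~\ref{top_b_prop3}.
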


When Proposition~\ref{measure_prop1} holds, $c_2$ is strictly inferior to $c_1$: across all matchups, $c_1$ performs better. This definition offers a principled foundation for balance assessment. However, checking such relations exhaustively has a cubic complexity of $\mathcal{O}(N^3)$ for $N$ compositions. To address this computational bottleneck, we propose tractable approximations enabled by our learned rating and counter tables, which are introduced in the following subsections.

\subsubsection{Top-D Diversity}

While Proposition~\ref{measure_prop1} identifies a single composition that strictly dominates others, a well-balanced game often supports a broader set of viable strategies. \textbf{Top-D Diversity} quantifies this by counting the number of compositions that are nearly as strong as the top-rated one. This reflects the design intuition that a good balance should encourage variation, not convergence to a single dominant meta choice.

Let $c_{top}$ denote the composition with the highest estimated rating under the NRT model:
\begin{equation}
c_{top} = \arg\max_{c \in \mathcal{C}} R_\theta(c).
\end{equation}

To determine which compositions are acceptably close to $c_{top}$, we ground our metric on three belief-based assumptions:

\begin{assumption}
\label{top_d_assum1}
\textbf{(Belief 1: Fairness Tolerance)} Players are tolerant of small win-rate differences up to a fixed threshold $G \in [0, 1]$, interpreting them as arising from skill variance or randomness. Compositions within this gap are still perceived as competitively viable.
\end{assumption}

\begin{assumption}
\label{measure_assum1}
\textbf{(Belief 2: Rating Validity)} The scalar rating $R_\theta(c)$ approximates the win probability via the Bradley–Terry formulation:
\[
\text{Win}(c_1, c_2) = \frac{R_\theta(c_1)}{R_\theta(c_1) + R_\theta(c_2)},
\]
and is considered a meaningful proxy for composition strength.
\end{assumption}

\begin{assumption}
\label{top_d_assum2}
\textbf{(Belief 3: Local Dominance Generalization)} If a composition $c$ is not clearly dominated by $c_{top}$, then it is unlikely to be dominated by other compositions.
\end{assumption}

Under these beliefs, we derive the following:

\begin{lemma}
\label{top_d_lemma}
A composition $c$ is considered top-tier viable if:
\[
\frac{R_\theta(c)}{R_\theta(c) + R_\theta(c_{top})} + G \geq 0.5.
\]
\end{lemma}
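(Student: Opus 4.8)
The plan is to derive the viability criterion directly from the three stated beliefs, treating Lemma~\ref{top_d_lemma} as essentially a restatement of what it means for a composition $c$ to remain competitive against the top-rated composition $c_{top}$ under a fairness tolerance $G$. First I would fix the intended reading: the lemma claims that $c$ is top-tier viable precisely when its win probability against $c_{top}$, once boosted by the tolerance margin $G$, reaches at least the break-even value $0.5$. So the entire content is to connect the phrase ``not disadvantaged by more than the tolerance $G$'' to the displayed inequality.

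The key steps, in order, are as follows. By Assumption~\ref{measure_assum1} (Belief~2), the probability that $c$ defeats $c_{top}$ is given by the Bradley--Terry expression
\[
\mathrm{Win}(c, c_{top}) = \frac{R_\theta(c)}{R_\theta(c) + R_\theta(c_{top})}.
\]
Since $c_{top}$ maximizes $R_\theta$, we have $R_\theta(c) \leq R_\theta(c_{top})$, so this win probability lies in $(0, 0.5]$; the composition $c$ is at worst a slight underdog against $c_{top}$. By Assumption~\ref{top_d_assum1} (Belief~1), players regard a deficit as inconsequential so long as the shortfall from the fair value $0.5$ does not exceed $G$, i.e.\ so long as $\mathrm{Win}(c, c_{top}) + G \geq 0.5$. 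Substituting the Bradley--Terry form yields exactly
\[
\frac{R_\theta(c)}{R_\theta(c) + R_\theta(c_{top})} + G \geq 0.5,
\]
which is the stated criterion. The final step invokes Assumption~\ref{top_d_assum2} (Belief~3): because $c$ is not meaningfully dominated by the single strongest composition $c_{top}$, Belief~3 lets us extrapolate that $c$ is not dominated by any composition, so passing the test against $c_{top}$ alone certifies genuine top-tier viability rather than mere survival against one opponent.

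I expect the main obstacle to be conceptual rather than computational: the lemma is not a deep algebraic fact but a definitional bridge, and the delicate part is justifying that it suffices to compare only against $c_{top}$. That justification rests entirely on Belief~3, which is a modeling assumption about local dominance generalizing to global dominance, and the honest thing to do is to state clearly that the lemma's force comes from this assumption rather than from a proof of global non-domination. I would therefore emphasize that the $\mathcal{O}(N^3)$ exhaustive dominance check of Proposition~\ref{measure_prop1} is being replaced by a single $\mathcal{O}(N)$ comparison against $c_{top}$, and that this reduction is exactly the payoff of Belief~3; the remaining manipulation is the one-line substitution of the Bradley--Terry formula, which is routine. A minor care point is the boundary behavior when $G$ is large enough that the inequality becomes vacuous (all compositions qualify) or when ties occur, but these are edge cases that the tolerance interpretation handles gracefully.
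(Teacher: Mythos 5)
Your proposal is correct and follows essentially the same route as the paper's proof: identify the displayed expression as the Bradley--Terry win probability against $c_{top}$ via Assumption~\ref{measure_assum1}, interpret the margin $G$ through the fairness tolerance of Assumption~\ref{top_d_assum1}, and invoke Assumption~\ref{top_d_assum2} to lift local fairness against $c_{top}$ to global viability. Your added remarks on the $\mathcal{O}(N)$ reduction and the definitional character of the lemma are accurate glosses on the same argument rather than a different approach.
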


\begin{proof}
By Assumption~\ref{measure_assum1}, this expression estimates the win probability of $c$ against $c_{top}$. If this value plus gap $G$ reaches 0.5, then by Assumption~\ref{top_d_assum1} the composition is still seen as fair. Assumption~\ref{top_d_assum2} extends this local fairness to imply global viability.
\end{proof}

The \textbf{Top-D Diversity} measure is thus defined as the number of compositions satisfying Lemma~\ref{top_d_lemma}. This metric can be computed efficiently in $\mathcal{O}(N)$ time:

\begin{algorithm}[tb]
   \caption{Compute Top-D Diversity Measure}
   \label{alg:top_d_diversity_measure}
\begin{algorithmic}
   \State \textbf{Input:} neural rating table \( R_{\theta} \), top-rated comp \( c_{top} \), acceptable win gap \( G \)
   \State \textbf{Output:} Top-D Diversity Measure \( D \)
   \State Initialize counter \( D \gets 0 \)
   \For{each comp \( c \in \mathcal{C} \)}
   \If{\( \frac{R_\theta(c)}{R_\theta(c) + R_\theta(c_{top})} + G \geq 0.5 \)}
   \State \( D \gets D + 1 \)
   \EndIf
   \EndFor
\end{algorithmic}
\end{algorithm}

\paragraph{Interpretation.}
Top-D Diversity reflects how many strategies are perceived as viable relative to the strongest one. A small $D$ (e.g., $D = 1$) indicates a dominant meta with few competitive alternatives. A large $D$ suggests a richer strategic landscape, more variety in gameplay, and broader viability. Designers can tune $G$ (e.g., $0.02$ to $0.05$) to reflect stricter or looser notions of perceived balance.

Unlike entropy-based pick-rate diversity metrics, which depend on population distribution, Top-D Diversity offers a semantically clear threshold: compositions either qualify or not. More importantly, by leveraging scalar ratings from NRT, it avoids pairwise comparisons and scales linearly, making it suitable for large composition spaces.

We now extend this formulation to capture structural balance in the presence of counter relationships via the \textbf{Top-B Balance} measure.

\subsubsection{Top-B Balance}

While Top-D Diversity identifies compositions that are nearly as strong as the best one, it does not account for contextual counter relationships. \textbf{Top-B Balance} extends the analysis by identifying compositions that are \textit{not dominated by any other}, accounting for contextual matchups through the counter table \( C_\theta \). This reflects a more nuanced view of balance in games with intransitive or cyclical dynamics.

We begin by making a key modeling assumption:

\begin{assumption}
\label{top_b_assum1}
The contextual win probability combining NRT and NCT,
\begin{equation}
\text{Win}_\theta(c_1, c_2) = \frac{R_\theta(c_1)}{R_\theta(c_1) + R_\theta(c_2)} + C_\theta(c_1, c_2),
\end{equation}
offers a trustworthy approximation of true composition advantage.
\end{assumption}

\paragraph{Dominance under counter-aware models.}

We define a composition \( c_1 \) to dominate \( c_2 \) if:
\begin{equation}
\forall c \in \mathcal{C},\quad \text{Win}_\theta(c_1, c) > \text{Win}_\theta(c_2, c).
\end{equation}

This strict definition is computationally expensive to verify over all \( N \) compositions, yielding time complexity \( \mathcal{O}(N^3) \). To make the Top-B measure tractable, we rely on categorical approximations using the discrete counter categories learned in \( C_\theta \).

\paragraph{Category-based dominance shortcuts.}

Let \( \text{cat}(c) \in \{1, \dots, M\} \) be the counter category of composition \( c \). We make two key propositions for category-aware analysis:

\begin{proposition}
\label{top_b_prop2}
(Within-category dominance)

If \( \text{cat}(c_1) = \text{cat}(c_2) \) and \( R_\theta(c_1) > R_\theta(c_2) \), then \( c_1 \) dominates \( c_2 \).
\end{proposition}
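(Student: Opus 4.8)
The plan is to reduce the all-opponents domination condition to a single monotonicity argument by exploiting the fact that the counter residual $C_\theta$ depends on its arguments \emph{only through their discrete counter categories}. Since $C_\theta(c_1,c_2) = Cd_\theta(Ce_\theta(c_1), Ce_\theta(c_2))$ is computed by first quantizing each composition's latent vector to its codebook entry $e_{\text{cat}(\cdot)}$ and then decoding the resulting discrete pair, the value of $C_\theta$ is a function of the category labels alone. I would first isolate this as the key structural lemma: there exists a map $\tilde{C}:\{1,\dots,M\}^2 \to \mathbb{R}$ with $C_\theta(c,c') = \tilde{C}(\text{cat}(c),\text{cat}(c'))$. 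This follows directly from the VQ architecture of Figure~\ref{Figure:neural_counter_table}, where the residual decoder $Cd_\theta$ sees only the quantized embeddings.

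Granting the hypothesis $\text{cat}(c_1) = \text{cat}(c_2)$, the structural lemma yields, for \emph{every} opponent $c \in \mathcal{C}$, the identity $C_\theta(c_1, c) = \tilde{C}(\text{cat}(c_1), \text{cat}(c)) = \tilde{C}(\text{cat}(c_2), \text{cat}(c)) = C_\theta(c_2, c)$, so the counter contribution cancels when we subtract the two contextual win probabilities. The difference $\text{Win}_\theta(c_1, c) - \text{Win}_\theta(c_2, c)$ therefore collapses to the purely transitive Bradley--Terry part,
\[
\frac{R_\theta(c_1)}{R_\theta(c_1) + R_\theta(c)} - \frac{R_\theta(c_2)}{R_\theta(c_2) + R_\theta(c)}.
\]

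The final step is to show this quantity is strictly positive. Because $R_\theta$ is produced by an exponential activation, it is strictly positive; fixing $R_\theta(c) > 0$ and considering $f(x) = x / (x + R_\theta(c))$, a one-line derivative computation gives $f'(x) = R_\theta(c)/(x + R_\theta(c))^2 > 0$, so $f$ is strictly increasing on $(0,\infty)$. The hypothesis $R_\theta(c_1) > R_\theta(c_2)$ then forces $f(R_\theta(c_1)) > f(R_\theta(c_2))$, giving $\text{Win}_\theta(c_1, c) > \text{Win}_\theta(c_2, c)$ for all $c$ — including the boundary cases $c = c_1$ and $c = c_2$, which are covered automatically by monotonicity — and hence $c_1$ dominates $c_2$ by the definition preceding this proposition.

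The genuine substance of the argument, and its main potential obstacle, lies in justifying the structural lemma rather than the monotonicity step. One must be certain that quantization is applied \emph{before} the residual decoder and that $Cd_\theta$ receives no side information about the raw composition beyond its codebook vector; were the architecture to feed any un-quantized feature into $Cd_\theta$, then $C_\theta(c_1,c)$ and $C_\theta(c_2,c)$ could differ even within a shared category and the cancellation would fail. I would therefore make this category-factorization property explicit as a standing assumption on $C_\theta$ (consistent with Assumption~\ref{top_b_assum1} and the discrete design intent of the $M \times M$ counter table), after which the proposition follows immediately.
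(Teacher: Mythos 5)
Your proof is correct and takes essentially the approach the paper intends: Proposition~\ref{top_b_prop2} is stated there without an explicit proof, its implicit justification being exactly your two steps—that $C_\theta$ factors through the discrete counter categories (since, per Figure~\ref{Figure:neural_counter_table}, the residual decoder $Cd_\theta$ receives only the quantized codebook vectors $z_q$, so the residual cancels when $\text{cat}(c_1)=\text{cat}(c_2)$), after which strict monotonicity of $x \mapsto x/(x+R_\theta(c))$ with $R_\theta > 0$ from the exponential reparameterization gives $\text{Win}_\theta(c_1,c) > \text{Win}_\theta(c_2,c)$ for every $c$, including $c=c_1,c_2$. Your explicit flagging of the category-factorization property as the load-bearing assumption—and that it would fail if any un-quantized feature reached $Cd_\theta$—is a faithful and rigorous rendering of the paper's design intent.
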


\begin{proposition}
\label{top_b_prop3}
(Transitive dominance via category linkage)

Let \( c_1 \), \( c_2 \), and \( c_3 \) be three compositions. If:
\begin{enumerate}
    \item \( c_1 \) dominates \( c_2 \) based on contextual win;
    \item \( \text{cat}(c_2) = \text{cat}(c_3) \);
    \item \( R_\theta(c_2) > R_\theta(c_3) \);
\end{enumerate}
then \( c_1 \) is inferred to dominate \( c_3 \).
\end{proposition}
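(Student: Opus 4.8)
The plan is to recognize that this proposition is, at heart, a transitivity statement about the contextual dominance relation, closed under a single invocation of the within-category dominance result from Proposition~\ref{top_b_prop2}. First I would make the dominance relation's transitivity explicit. By definition, ``$c_a$ dominates $c_b$'' means $\text{Win}_\theta(c_a, c) > \text{Win}_\theta(c_b, c)$ holds for \emph{every} $c \in \mathcal{C}$. Since strict inequality on $\mathbb{R}$ is transitive, if $c_a$ dominates $c_b$ and $c_b$ dominates $c_c$, then for each fixed $c$ we may chain $\text{Win}_\theta(c_a, c) > \text{Win}_\theta(c_b, c) > \text{Win}_\theta(c_c, c)$, and because this holds for all $c$ simultaneously, $c_a$ dominates $c_c$. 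This gives a clean, context-independent lemma that the dominance relation induced by $\text{Win}_\theta$ is transitive.

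Next I would apply Proposition~\ref{top_b_prop2} to the pair $(c_2, c_3)$. Hypotheses 2 and 3 supply exactly its two premises, namely $\text{cat}(c_2) = \text{cat}(c_3)$ and $R_\theta(c_2) > R_\theta(c_3)$, so its conclusion yields that $c_2$ dominates $c_3$, i.e. $\text{Win}_\theta(c_2, c) > \text{Win}_\theta(c_3, c)$ for all $c \in \mathcal{C}$. No separate argument about the counter term $C_\theta$ is needed at this stage, since that cancellation is already internalized in Proposition~\ref{top_b_prop2}, which I am entitled to assume.

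The final step is then the chaining itself: combining hypothesis 1 ($c_1$ dominates $c_2$) with the freshly derived fact that $c_2$ dominates $c_3$, the transitivity established in the first step immediately gives $\text{Win}_\theta(c_1, c) > \text{Win}_\theta(c_3, c)$ for every $c$, hence $c_1$ dominates $c_3$, as claimed.

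Because the whole argument reduces to composing an assumed proposition with the transitivity of $>$, there is no deep technical obstacle. The one point genuinely requiring care is semantic rather than computational: I must verify that the word ``dominates'' in hypothesis 1 refers to precisely the same universally-quantified criterion over all of $\mathcal{C}$ that appears in the definition and in Proposition~\ref{top_b_prop2}, so that the pointwise chaining is legitimate. This is the step I expect to be the main (and only) subtlety. If instead the surrounding framework intends dominance to be certified \emph{approximately}, e.g. only up to a fairness gap or via the category shortcuts rather than the literal all-$c$ condition, then the three-way chain could accumulate slack and the conclusion would need a quantitative margin bound rather than exact transitivity; I would flag this explicitly and carry out the proof under the exact, non-approximate reading to keep the inference valid.
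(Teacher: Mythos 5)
Your proof is correct and matches the paper's intended justification: the paper states this proposition without a separate derivation precisely because it follows, as you argue, from applying Proposition~\ref{top_b_prop2} to the pair $(c_2, c_3)$ and then chaining $\text{Win}_\theta(c_1,c) > \text{Win}_\theta(c_2,c) > \text{Win}_\theta(c_3,c)$ pointwise over all opponents $c \in \mathcal{C}$. The subtlety you flag resolves in favor of the exact reading—the paper defines dominance by the literal universally quantified criterion $\forall c \in \mathcal{C},\ \text{Win}_\theta(c_1, c) > \text{Win}_\theta(c_2, c)$ (the category shortcuts are sufficient conditions for it, not a relaxation of it), so no quantitative margin argument is needed.
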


These two propositions form the basis for efficiently identifying the set of \textit{non-dominated} compositions.

\begin{lemma}
\label{top_b_lemma}
Among the top-rated compositions in each of the \( M \) counter categories, the non-dominated ones under contextual win values form the \textbf{Top-B Balance} set.
\end{lemma}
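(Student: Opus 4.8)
The plan is to prove the asserted set equality in two reduction stages. Write $\mathrm{ND}$ for the set of compositions that are not dominated by \emph{any} composition in $\mathcal{C}$ (this is the \textbf{Top-B Balance} set by definition), and let $L$ denote the set containing the top-rated composition of each of the $M$ counter categories. Before the reductions I would record the one structural fact that drives everything: the dominance relation $c_1 \succ c_2 \iff \forall c,\ \mathrm{Win}_\theta(c_1,c) > \mathrm{Win}_\theta(c_2,c)$ is \emph{transitive}, since pointwise strict inequality between the win-profiles $c \mapsto \mathrm{Win}_\theta(c_1,c)$ and $c \mapsto \mathrm{Win}_\theta(c_2,c)$ is preserved under chaining. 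This transitivity, together with the two already-established category-level propositions, is all the machinery required.

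Stage one establishes $\mathrm{ND} \subseteq L$: every non-dominated composition is the leader of its own category. I would argue by contraposition. If $c$ is not the top-rated composition of $\mathrm{cat}(c)$, then there exists $c'$ with $\mathrm{cat}(c') = \mathrm{cat}(c)$ and $R_\theta(c') > R_\theta(c)$, so Proposition~\ref{top_b_prop2} gives $c' \succ c$ and hence $c \notin \mathrm{ND}$. Consequently any candidate for the Top-B set must lie in $L$, which already collapses the search from $N$ compositions to at most $M$.

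Stage two is the substantive step: for a leader $c \in L$, I would show that $c$ is non-dominated \emph{over all of} $\mathcal{C}$ if and only if it is non-dominated \emph{within} $L$. The forward implication is immediate because $L \subseteq \mathcal{C}$. The converse is the part I expect to be the main obstacle, since I must manufacture a \emph{leader} dominator out of an arbitrary one. Suppose $c$ is dominated by some $c_3 \in \mathcal{C}$, and let $c_2$ be the top-rated composition of $\mathrm{cat}(c_3)$. If $c_2 = c_3$ we are done; otherwise $R_\theta(c_2) > R_\theta(c_3)$ with $\mathrm{cat}(c_2) = \mathrm{cat}(c_3)$, so Proposition~\ref{top_b_prop2} yields $c_2 \succ c_3$, and transitivity of $\succ$ gives $c_2 \succ c$ with $c_2 \in L$ — contradicting leader-restricted non-dominance. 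The delicate point is directional: the propositions naturally propagate dominance downward through the \emph{dominated side's} category (the pruning role played by Proposition~\ref{top_b_prop3}), whereas this converse must instead \emph{upgrade the dominator}, which is precisely why Proposition~\ref{top_b_prop2} combined with transitivity, rather than Proposition~\ref{top_b_prop3} alone, is the load-bearing ingredient.

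Combining the two stages yields $\mathrm{ND} = \{ c \in L : c \text{ is not dominated by any } c' \in L \}$, i.e.\ exactly the non-dominated members among the $M$ category leaders, which is the claimed characterization and reduces computation to dominance checks restricted to $L$. I would close by flagging the only genuine edge case — ties in $R_\theta$ inside a category, where Proposition~\ref{top_b_prop2} (needing strict inequality) does not fire — handled by the convention that equally top-rated same-category compositions are mutually non-dominated and all retained as representatives, leaving the argument otherwise intact.
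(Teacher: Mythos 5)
Your proof is correct and follows essentially the route the paper intends: Proposition~\ref{top_b_prop2} confines the non-dominated candidates to the category leaders, and dominance checks among leaders then suffice—exactly the structure realized in Algorithm~\ref{alg:top_b_balance_measure}. The paper leaves the lemma's justification implicit (stating only that Propositions~\ref{top_b_prop2} and~\ref{top_b_prop3} form its basis), and your one refinement—observing that the dominator-upgrade step in the converse direction requires Proposition~\ref{top_b_prop2} combined with transitivity of pointwise dominance, since Proposition~\ref{top_b_prop3} as stated only propagates dominance downward through the dominated side's category—is a correct and worthwhile sharpening, as is your explicit convention for ties in $R_\theta$ within a category.
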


We now define the Top-B Balance measure as the cardinality of this set, and describe a practical algorithm to compute it.

\begin{algorithm}[tb]
   \caption{Compute Top-B Balance Measure}
   \label{alg:top_b_balance_measure}
\begin{algorithmic}
   \State \textbf{Input:} rating table \( R_\theta \), counter table \( C_\theta \), set of all comps \( \mathcal{C} \)
   \State \textbf{Output:} Top-B Balance Measure \( B \)
   \State Group all comps by counter category \( \text{cat}(c) \)
   \State For each category, select the top-rated comp
   \State Initialize count \( B \gets 0 \)
   \For{each top comp \( c \)}
       \State Assume \( c \) is not dominated
       \For{each other top comp \( c' \ne c \)}
           \State Assume \( c' \) dominates \( c \)
           \For{each top comp \( c'' \)}
               \If{\( \text{Win}_\theta(c', c'') \leq \text{Win}_\theta(c, c'') \)}
                   \State Mark \( c' \) does not dominate \( c \); break
               \EndIf
           \EndFor
           \If{\( c' \) dominates \( c \)} \State Mark \( c \) as dominated \EndIf
       \EndFor
       \If{\( c \) is not dominated} \State \( B \gets B + 1 \) \EndIf
   \EndFor
\end{algorithmic}
\end{algorithm}

\paragraph{Interpretation.}
A high Top-B Balance value indicates that multiple composition archetypes can survive as optimal representatives of their counter category, offering diverse, resilient strategies. Unlike Top-D, which only measures proximity to the best, Top-B ensures contextual robustness and is especially important in games with strong counterplay.

We next turn to concrete examples of applying these metrics.

\subsubsection{Case Study on Age of Empires II}
\label{sec:aoe2_case_study}

To demonstrate the practical application of our proposed balance measures, we conduct an in-depth case study on \textit{Age of Empires II} (AoE2), a real-time strategy game featuring 45 unique civilizations. Using the aoestats.io dataset (January 2024), we analyze 1,261,288 1v1 matches in random map mode.

\paragraph{Top-D and Top-B Evaluation.}
We first compute Top-D Diversity by varying the acceptable win gap threshold $G$. As shown in Table~\ref{aoe2_top_d}, the number of competitively viable civilizations increases with $G$, from 2 at $G=0.01$ to 44 at $G=0.08$, suggesting a diverse yet tiered strategic landscape.

\begin{table}
\centering
\caption[Top-D Diversity and Top-B Balance in Age of Empires II]{Top-D Diversity and Top-B Balance in Age of Empires II under varying conditions.}
\subtable[Top-D Diversity with different thresholds $G$]{
\centering
\label{aoe2_top_d}
\begin{tabular}{ll}
\toprule
\textbf{Win Gap Threshold} $G$ & \textbf{Top-D Diversity} $D$ \\
\midrule
0.01 & 2\\
0.02 & 9\\
0.04 & 25\\
0.08 & 44\\
\bottomrule
\end{tabular}
}
\hfill
\subtable[Top-B Balance under different counter table sizes $M$]{
\centering
\label{aoe2_top_b}
\begin{tabular}{llll}
\toprule
$M$ (Table Size) & Utilized $M$ & Top-B Balance $B$ & Accuracy (\%) \\
\midrule
3 & 1 & 1 & 69.6\\
9 & 9 & 8 & 77.0\\
27 & 26 & 24 & 86.0\\
81 & 45 & 45 & 98.9\\
\bottomrule
\end{tabular}
}
\end{table}

For Top-B Balance, Table~\ref{aoe2_top_b} shows that increasing the counter table resolution $M$ leads to higher predictive accuracy and a more nuanced balance profile. With $M=81$, all 45 civilizations are recovered as non-dominated, reflecting well-distributed strength across strategic archetypes.

\paragraph{Visualizing Rating and Counter Structures.}

Figure~\ref{fig:age_of_empires_ii_rating} shows the scalar strength estimates of each civilization from the NRT. This gives an overview of which civilizations are consistently strong regardless of opponent.

\begin{figure}[ht]
\centering
\includegraphics[width=\textwidth]{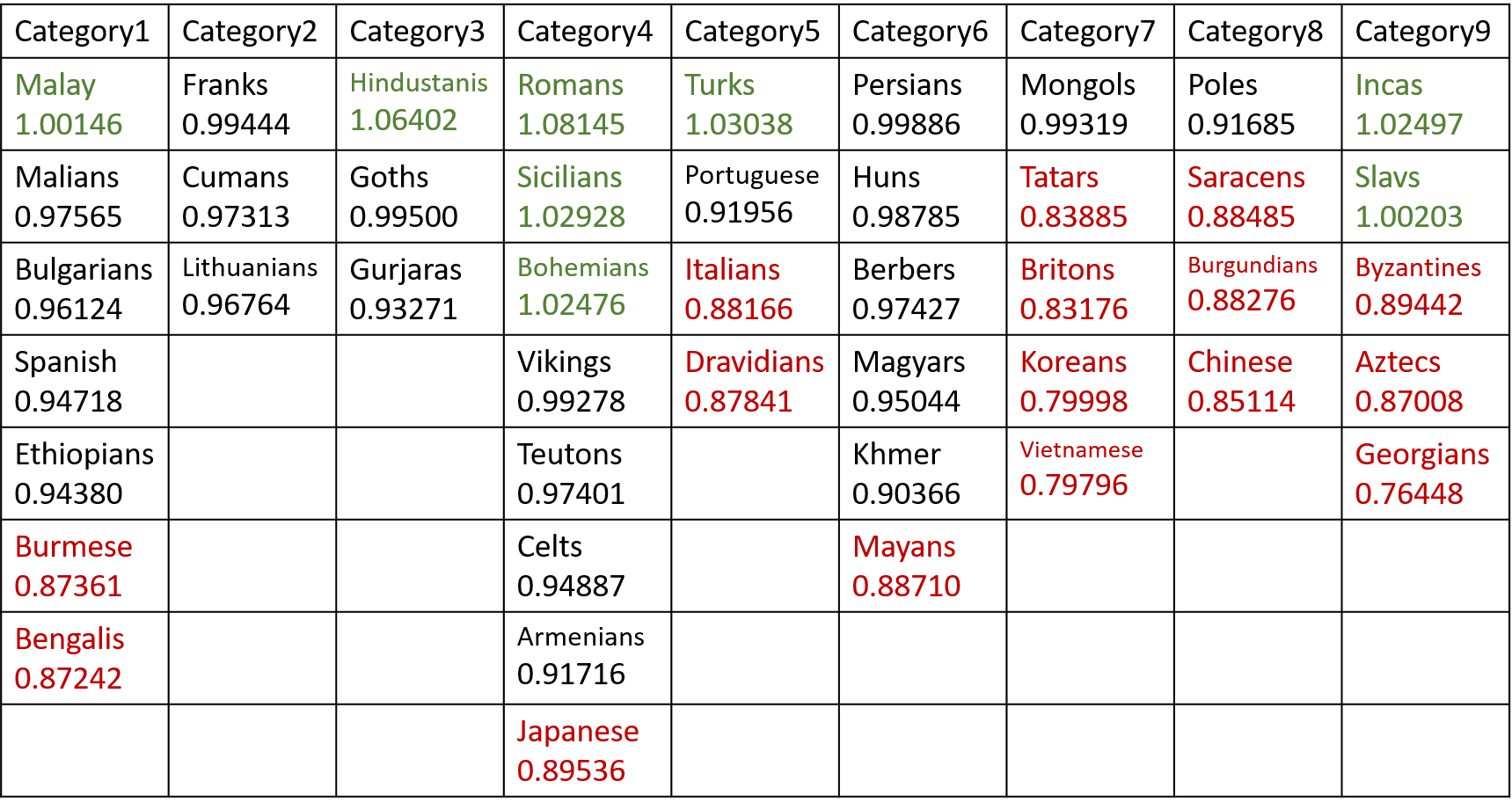}
\caption[Age of Empires II Rating Table]{Scalar rating (NRT) for 45 civilizations in Age of Empires II.}
\label{fig:age_of_empires_ii_rating}
\end{figure}

Complementing this, Figure~\ref{fig:age_of_empires_ii_counter} presents the learned $9 \times 9$ counter table, which clusters civilizations into strategic categories and models their pairwise residual advantages.

\begin{figure}
\centering
\includegraphics[width=0.95\textwidth]{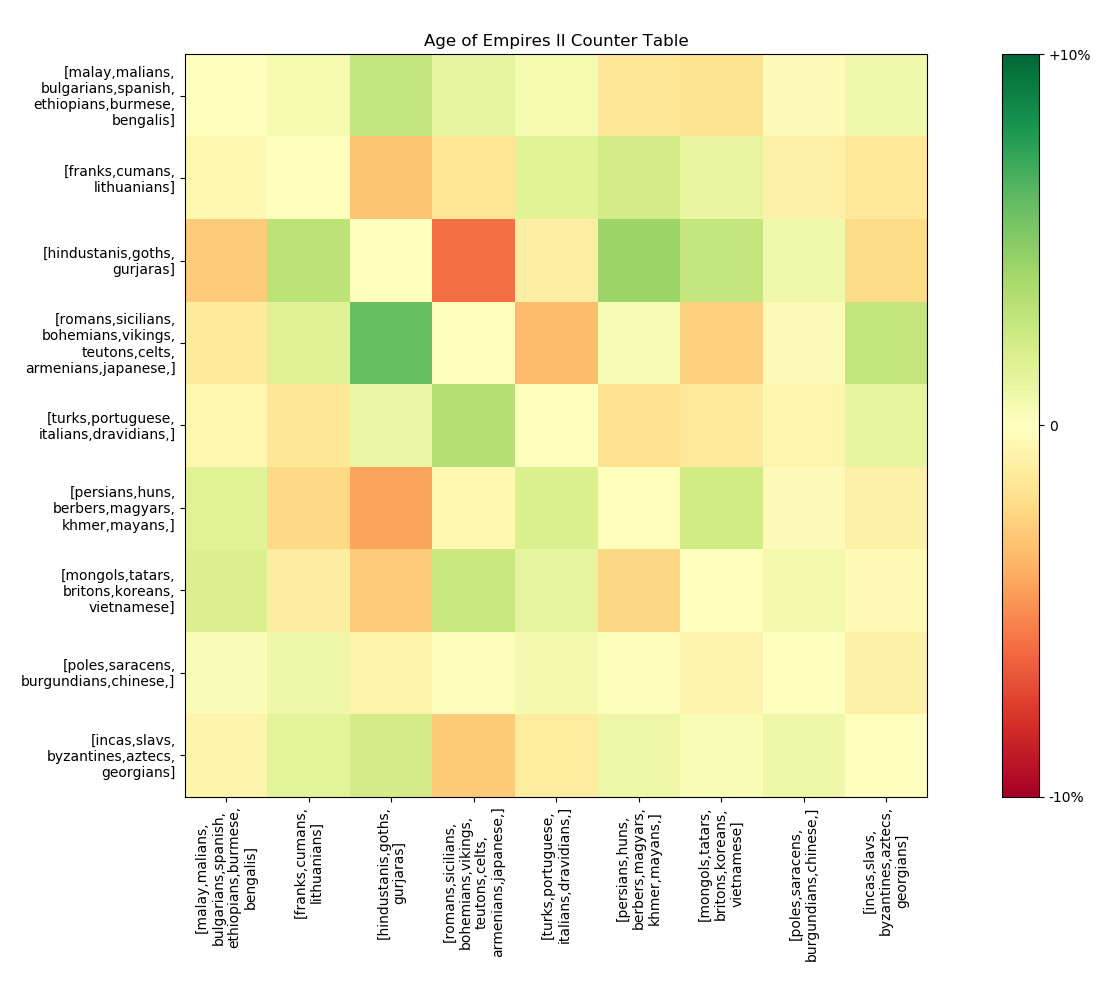}
\caption[Age of Empires II Counter Table]{Learned $9\times9$ counter table for Age of Empires II. Each cell estimates the counter residual between categories.}
\label{fig:age_of_empires_ii_counter}
\end{figure}

\paragraph{Strategic Archetypes and Counter Patterns.}
The nine learned categories capture key gameplay dimensions in AoE2:

\begin{enumerate}
    \item \textbf{Technological and Age Advancement Group}: Civilizations that gain early strategic momentum through accelerated research or age progression. Examples include the Malay (faster aging), Malians (accelerated university techs), and Bulgarians (free militia-line upgrades and reduced Blacksmith/Siege Workshop costs).

    \item \textbf{Heavy Cavalry Group}: Civilizations centered around powerful cavalry units for open-field dominance and mobility, such as the Franks and Lithuanians.

    \item \textbf{Anti-Cavalry and Anti-Archer Group}: Civilizations that excel at countering cavalry and ranged threats through specialized units, such as the Goths (cheap infantry swarm) and Indians (camel-based counters).

    \item \textbf{Infantry Dominance Group}: Civilizations that rely heavily on infantry as their primary military force, often excelling in frontline durability and cost-effective aggression.

    \item \textbf{Gunpowder-Intensive Group}: Civilizations that emphasize gunpowder units for late-game power spikes and high-damage ranged warfare.

    \item \textbf{Cavalry and Horse Archer Group}: Civilizations combining mobility and ranged flexibility, leveraging both cavalry and mounted archers for dynamic raiding and hit-and-run tactics.

    \item \textbf{Archery Group}: Civilizations that prioritize strong archer-line units, excelling in ranged control, positioning, and map pressure.

    \item \textbf{Economic Powerhouses}: Civilizations whose core strength lies in superior economic scaling, enabling stronger military support through sustained resource advantages.

    \item \textbf{Trash and Counter Unit Group}: Civilizations that focus on cost-efficient 'trash units' (e.g., skirmishers, spearmen) and specialized counter tools to maintain battlefield presence and counter specific threats with minimal gold investment.
\end{enumerate}

From the counter table, we observe nuanced dynamics:
\begin{itemize}
    \item The \textbf{Heavy Cavalry} group is vulnerable to \textbf{Anti-Cavalry} civilizations and \textbf{Infantry}-based counters, but outmaneuvers \textbf{Gunpowder} or \textbf{Archery} factions.
    \item \textbf{Goths}, often considered weak in direct matchups, shine against archery-based comps.
    \item  \textbf{Infantry} factions typically struggle against gunpowder but outperform trash-based groups.
    \item  \textbf{Economically oriented} civilizations show fewer extreme counters, reflecting flexible and resilient playstyles.
\end{itemize}

\paragraph{Implications for Balance Analysis.}
This case demonstrates how scalar ratings and counter tables provide complementary insights into game balance, each offering actionable guidance for designers:

\begin{itemize}
    \item \textbf{Top-D Diversity} quantifies how many compositions are perceived as nearly as strong as the best one. In \textit{Age of Empires II}, Top-D values (Table~\ref{aoe2_top_d}) suggest that a large number of civilizations are competitively viable—especially when using a tolerant win gap (e.g., $G=0.08$ yields $D=44$ out of 45). This indicates good diversity and supports the game's replayability across different playstyles.
    
    \item \textbf{Top-B Balance}, by contrast, identifies how many compositions are not dominated after accounting for learned counter relationships. At $M=9$, only 8 of the 9 categories contribute a non-dominated civilization (Table~\ref{aoe2_top_b}). According to our manual interpretation, the dominated category is an economic-focused one, with the \textbf{Poles} being the top yet still dominated comp. This points to a possible need to enhance economic bonuses for that group.

    \item When $M$ is increased to 27, \textbf{Aztecs} and \textbf{Chinese} are also found dominated, suggesting the counter dynamics within their strategic cluster are not favorable, possibly due to shifts in meta or outdated faction strengths.

    \item At the extreme setting $M=81$, every civilization becomes non-dominated. Each one belongs to its own category and wins in at least some specialized matchups. The training accuracy reaches 98.9\%, validating that the game exhibits full counter-based balance in expert-level play. This justifies the game's longstanding strategic depth—but also implies that such granularity might be overwhelming for casual players.
\end{itemize}

Together, these two metrics enable game developers to distinguish between two types of balance flaws: (1) lack of variety in player choices (low Top-D), and (2) entire groups of compositions being obsolete due to systemic domination (low Top-B). Designers can then choose targeted actions—e.g., adjust global strength scaling for Top-D improvements, or restructure category dynamics for Top-B restoration. Traditional balance tuning, such as weakening strong factions or buffing weak ones based on win rates, may miss these subtler structural issues.

This analysis also empowers players to understand that even popular or “meta” civilizations like \textbf{Romans} may have clear counters, reinforcing the importance of strategic scouting and matchup planning.

\subsubsection{Case Study on Hearthstone}
\label{sec:hearthstone_case_study}

To evaluate our proposed metrics in a deck-based game environment, we conduct a case study on \textit{Hearthstone}. Using HSReplay data (January 2024), we filter out decks with fewer than 100 games, resulting in 58 standardized decks for analysis.

\paragraph{Rating and Dominance.}
The NRT model identifies \textbf{Treant Druid} as the strongest deck (rating: 1.48915), significantly outperforming the others (see Figure~\ref{fig:hearthstone_rating}). As shown in Table~\ref{hearthstone_top_d}, the Top-D Diversity is very low across all thresholds, indicating that most decks are perceived as clearly inferior. Even with a 4\% tolerance ($G = 0.04$), only 3 decks fall within the acceptable win gap, signaling poor diversity.

\begin{figure}[ht]
\centering
\includegraphics[width=\textwidth]{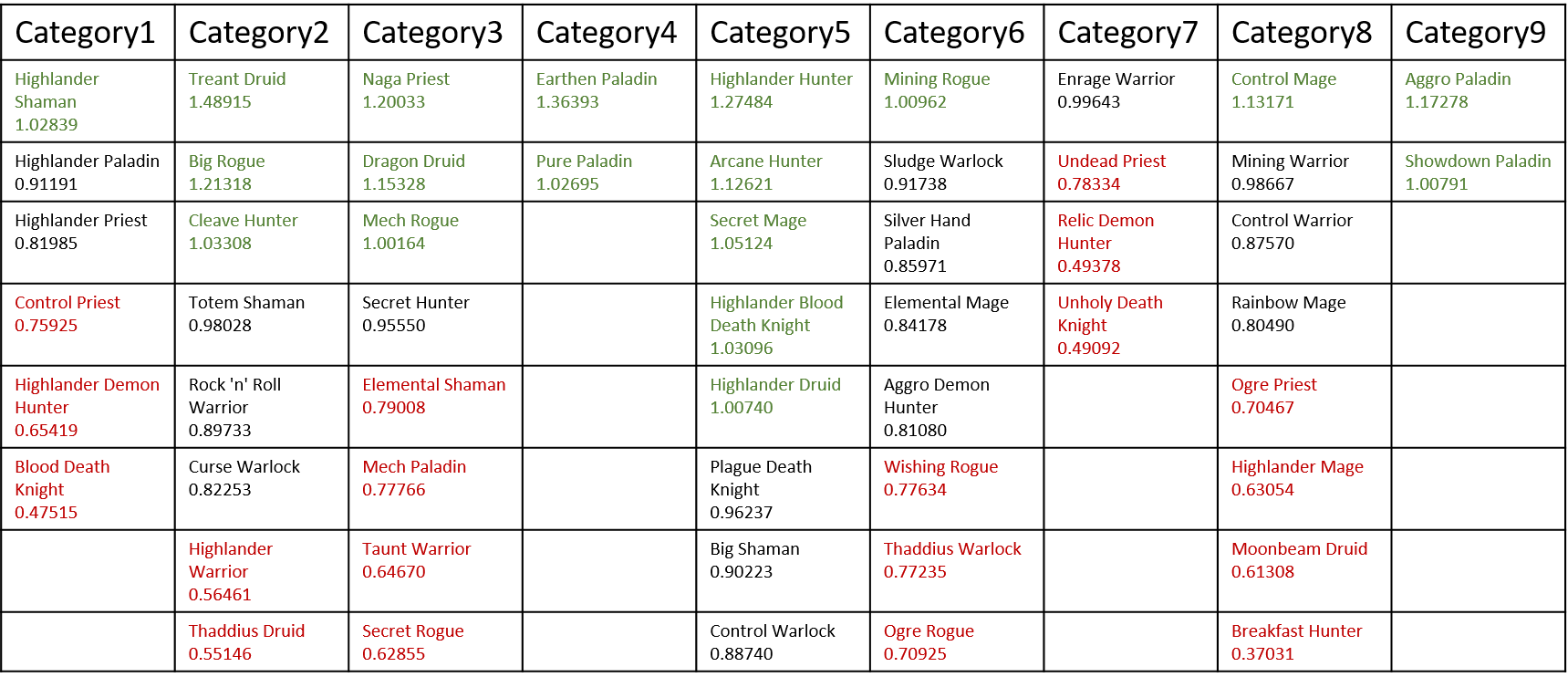}
\caption[Hearthstone Rating Table]{Rating table for Hearthstone decks, showcasing the relative strength of top archetypes.}
\label{fig:hearthstone_rating}
\end{figure}

\begin{table}
\centering
\caption[Top-D Diversity and Top-B Balance in Hearthstone]{Top-D Diversity and Top-B Balance in Hearthstone under varying settings.}
\subtable[Top-D Diversity with different $G$ values]{
\centering
\label{hearthstone_top_d}
\begin{tabular}{ll}
\toprule
\textbf{Threshold $G$} & \textbf{Top-D Diversity $D$} \\
\midrule
0.01 & 1 \\
0.02 & 1 \\
0.04 & 3 \\
0.08 & 9 \\
\bottomrule
\end{tabular}
}
\hfill
\subtable[Top-B Balance with varying $M$ values]{
\centering
\label{hearthstone_top_b}
\begin{tabular}{llll}
\toprule
$M$ (Table Size) & Utilized $M$ & Top-B Balance $B$ & Accuracy (\%) \\
\midrule
3  & 3  & 1  & 81.3 \\
9  & 9  & 9  & 87.0 \\
27 & 25 & 23 & 90.6 \\
81 & 54 & 53 & 97.2 \\
\bottomrule
\end{tabular}
}
\end{table}

\paragraph{Counter Relationships.}
Despite its dominant rating, \textbf{Treant Druid} is countered by \textbf{Aggro Paladin} (rating: 1.17278), as revealed in the learned $9 \times 9$ counter table (Figure~\ref{fig:hearthstone_counter}). This illustrates that counterplay remains viable even when one archetype dominates in scalar rating. Under $M=9$, the Top-B Balance reaches 9, showing that each category contributes a viable top deck. When $M=27$ or $81$, the difference between Used $M$ and Top-B Balance remains small, confirming a healthy counter structure.

\begin{figure}[ht]
\centering
\includegraphics[width=\textwidth]{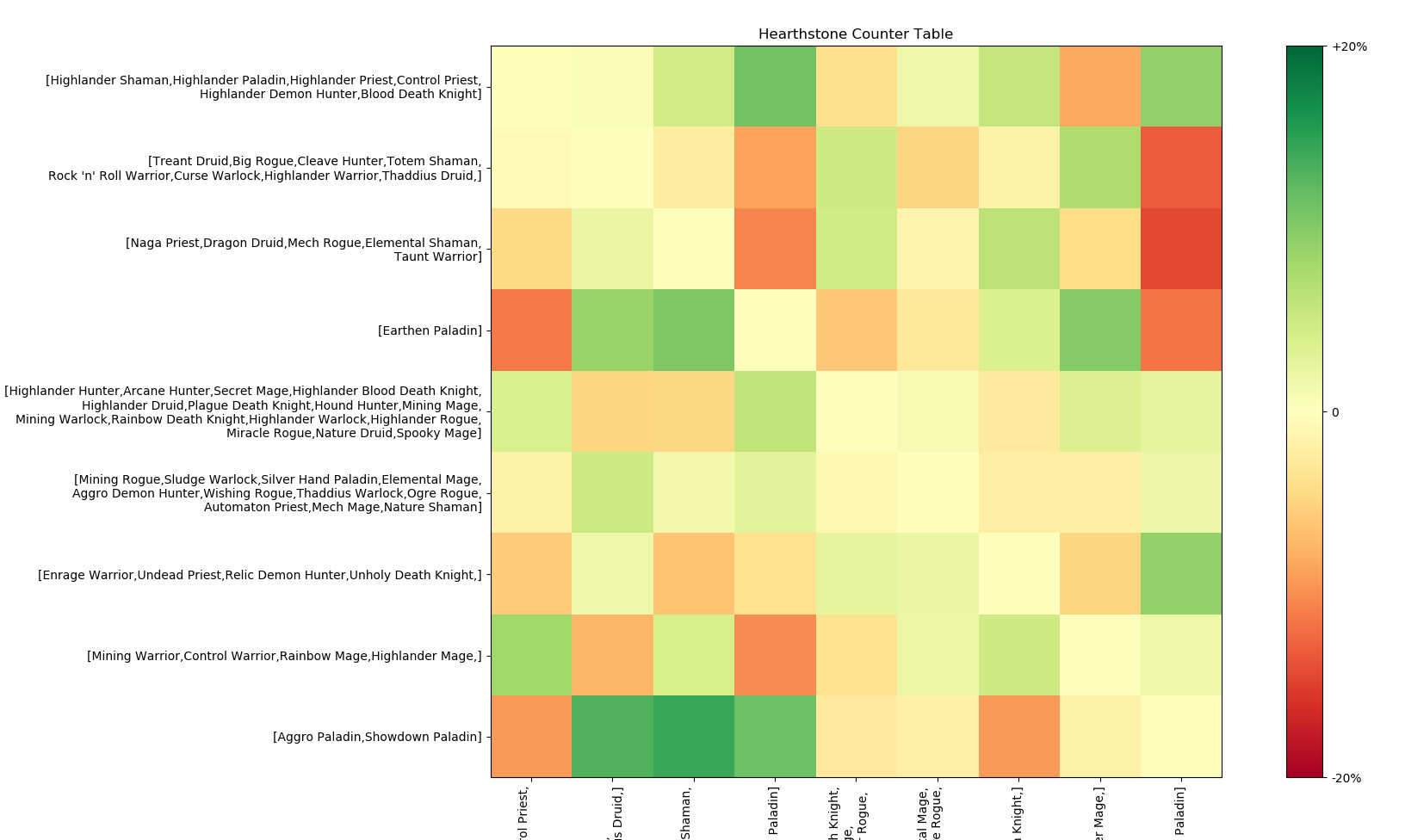}
\caption[Hearthstone Counter Table]{The $9 \times 9$ counter table for Hearthstone, visualizing inter-category counter relationships. Green indicates advantage, red indicates disadvantage.}
\label{fig:hearthstone_counter}
\end{figure}

\paragraph{Category Interpretation.}
The nine learned categories reflect major strategic styles in the current meta:

\begin{enumerate}
    \item \textbf{High-Value Control Archetypes (Category 1)}: Decks that rely on individually powerful cards and value-oriented board clears to achieve long-term advantage. These often include singleton configurations (e.g., Highlander decks) with cards like Reno Jackson. Examples include Highlander Shaman, Paladin, Demon Hunter, Control Priest, and Blood Death Knight.

    \item \textbf{Tempo-Dependent Archetypes (Category 2)}: Decks that rely on efficient mana curve utilization to dominate the board when curving out properly. Treant Druid and Big Rogue exemplify this style, requiring proactive board development and subsequent buffs.

    \item \textbf{Combo-Reliant Archetypes with Limited Draw (Category 3)}: Decks built around key card synergies, but lacking highly efficient draw engines. Examples include Naga Priest (requiring alternating Naga and spells) and Dragon Druid (relying on dragon-in-hand synergies).

    \item \textbf{Midrange Archetypes with Strong Minions (Category 4)}: Decks that maintain board presence through high-stat minions and effective trades. Earthen Paladin is representative, leveraging resilient minions and area-of-effect spells to counter wide boards (e.g., Category 7).

    \item \textbf{Midrange Value Archetypes (Category 5)}: Decks that exert pressure through superior card quality and tempo balance, winning by out-valuing rather than overwhelming.

    \item \textbf{Flexible Midrange Archetypes (Category 6)}: A heterogeneous group of decks that blend multiple midrange strategies. These tend to have more balanced matchups and weaker counter relationships.

    \item \textbf{Aggro and Snowball Archetypes (Category 7)}: Decks that aim to establish early board control and snowball into lethal. These excel against decks lacking early board clears or tempo resets.

    \item \textbf{Value-Control Decks with Combo Finishers (Category 8)}: Decks that sustain long-term control while retaining the potential for a late-game one-turn-kill (OTK). Examples include Control Warrior (via armor stacking burst) and Rainbow Mage (through spell-based OTKs).

    \item \textbf{Aggressive Swarm Paladin Archetypes (Category 9)}: Decks that focus on early board flooding with multiple low-cost threats, aiming to end games quickly before the opponent can stabilize.
\end{enumerate}

Control decks (Category 1) lose to OTK finishers (Category 8) but handle board-centric decks (Category 4) well. Aggro (Category 9) preys on slow or curve-dependent decks (Categories 2–3), while midrange decks oscillate between durability and speed depending on matchup dynamics.

\paragraph{Design Implications.}
This case study illustrates a nuanced view of balance in Hearthstone:
\begin{itemize}
    \item From the Top-D Diversity values (Table~\ref{hearthstone_top_d}), even with a relatively tolerant win gap ($G=0.04$), only 3 decks are considered competitively viable. The dominant deck, \textbf{Treant Druid}, achieves a strength rating of 1.48915 (Figure~\ref{fig:hearthstone_rating}), significantly above others—indicating a lack of diversity and a likely convergence toward a single meta-defining choice.
    \item Top-B Balance at $M=3$ (Table~\ref{hearthstone_top_b}) confirms this imbalance, with \textbf{Treant Druid} dominating all others. However, when the counter table is extended to $M=9$, balance is surprisingly restored. Notably, \textbf{Aggro Paladin}, despite a lower overall strength (1.17278), serves as an effective counter to \textbf{Treant Druid}, and all nine top-category decks become non-dominated. This shows how inter-category counterplay can preserve strategic richness even when scalar dominance exists.
    \item At higher counter resolutions ($M=27$, $M=81$), the game remains stable: the number of non-dominated strategies is close to the number of utilized categories, suggesting well-formed and resilient counter structures.
\end{itemize}

Overall, these results suggest that targeted adjustments to \textbf{Treant Druid}—rather than global rebalancing—could improve perceived diversity without undermining the integrity of the existing counter ecosystem. This highlights the utility of Top-D and Top-B as complementary tools: one flags excessive dominance; the other confirms whether sufficient strategic alternatives still exist.

\subsection{Discussion on Different Types of Balance Measures}

When evaluating game balance, it is essential to consider not only how fairly different strategies perform but also how balance affects the richness and sustainability of gameplay. Our proposed measures—Top-D Diversity and Top-B Balance—are designed to address both concerns: the breadth of viable strategic options and the structural complexity of counterplay.

\vspace{0.5em}
\noindent\textbf{Beyond Win Rates.}
The most common and intuitive measure of balance in player-versus-player (PvP) games is the win rate. Equalizing win rates across strategies—typically aiming for 50\%—is a standard objective in game tuning \citepx{balance_from_renowned_authors}. In practice, this involves buffing underperforming strategies and nerfing dominant ones, as reflected in player communities.

However, enforcing uniform win rates can conflict with another key design goal: preserving the richness of gameplay through stylistic diversity. Players often value the opportunity to express individuality through different playstyles \citepx{intrinsic_motivation}, even if not all strategies are equally optimal. Moreover, strategies close in strength may appear balanced to players due to randomness, skill variance, or personal belief in underdog potential. Our Top-D Diversity measure captures this by quantifying the number of compositions within an acceptable win-rate margin from the best strategy—acknowledging perceived balance rooted in uncertainty and tolerance, not just statistical parity.

\vspace{0.5em}
\noindent\textbf{Limits and Costs of Game-Theoretic Approaches.}  
Beyond simple win rate analysis, game-theoretic approaches offer more principled ways to assess balance. One popular class of methods focuses on identifying \textit{Nash equilibrium} strategies and then maximizing entropy over the equilibrium distribution to encourage unpredictability \citepx{nash_entropy_balancing}. This line of thinking typically recommends buffing rarely chosen strategies and nerfing dominant ones, in an effort to flatten the strategic landscape.

However, this method risks excessive homogenization. If all strategies are tuned to have equal success rates under rational play, they may become indistinguishable in performance, eliminating meaningful differences and reducing strategic depth. In practice, this may lead to what players perceive as “false diversity”: many options that feel the same in outcome, even if thematically different.

To counter this, \citetx{nash_entropy_balancing} also introduces regularization terms to preserve diversity within entropy maximization frameworks—conceptually similar to the \textit{tolerance margin} $G$ we employ in Top-D Diversity.

In addition to entropy-based objectives, other game-theoretic methods like \textbf{$\alpha$-Rank} \citepx{omidshafiei2019alpha} offer a different approach to evaluating strategic ecosystems. Instead of analyzing equilibrium directly, $\alpha$-Rank simulates long-run evolutionary dynamics among strategies and tracks the stationary distribution of strategy frequencies. This provides a principled view of which strategies persist in a population over time and can serve as an implicit balance indicator.

However, $\alpha$-Rank and entropy-based methods share a fundamental requirement: achieve \textit{Nash equilibrium}. While this is tractable in abstract games with few strategies, it becomes infeasible in modern games with vast strategy spaces—such as those involving team compositions, card decks, or custom loadouts.

Even in academic AI research, solving games via full payoff matrix analysis or deep equilibrium learning comes at significant computational cost. For example, large-scale Nash equilibrium computation has been achieved in imperfect-information games like heads-up no-limit Texas Hold’em \citepx{cfr_plus} and Stratego \citepx{deep_nash}, but these require extensive sampling, memory, and tuning over weeks or months of training—often impractical for iterative game development pipelines that demand agility and fast feedback.

In contrast, our Top-D and Top-B measures offer lightweight alternatives:  
\begin{itemize}
    \item \textbf{Top-D} quantifies strategic viability under uncertainty and tolerance, with no need for opponent modeling.  
    \item \textbf{Top-B} maps the structural size of counterplay, highlighting how deep or shallow a game’s balance dynamics are.  
\end{itemize}

Both operate efficiently, scale well with large composition spaces, and make no assumptions about rational equilibrium behavior—making them especially suited to real-world design cycles where tractability, interpretability, and fast iteration are crucial.

\vspace{0.5em}
\noindent\textbf{The Value of Counter Structure.}
Top-B Balance offers a complementary perspective. Instead of averaging over outcomes, it focuses on the topology of domination: how many distinct strategies are non-dominated across the full counter table. This measure aligns with a richer notion of balance: one that favors not just fairness, but strategic depth and reactive decision-making. Increasing the size and complexity of counter relationships challenges players to explore and adapt, enhancing long-term engagement.

In essence, our measures allow designers to promote balance without sacrificing diversity. Top-D offers a scalar approximation of viability range, and Top-B reveals the network of meaningful interactions.

\vspace{0.5em}
\noindent\textbf{Illustrative Examples.}
Consider a simple game like our synthetic combination game. The theoretically optimal composition is (18,19,20). Strategies like (17,19,20) or (13,17,19), although slightly weaker, still exhibit win rates of 49.1\% and 42.5\%, respectively, against the best. Due to sampling variability, players may still view them as viable. Top-D Diversity formally captures this kind of perception-driven diversity, which entropy- or win-rate-based methods might overlook.

\begin{figure}
\begin{center}
\includegraphics[width=\linewidth]{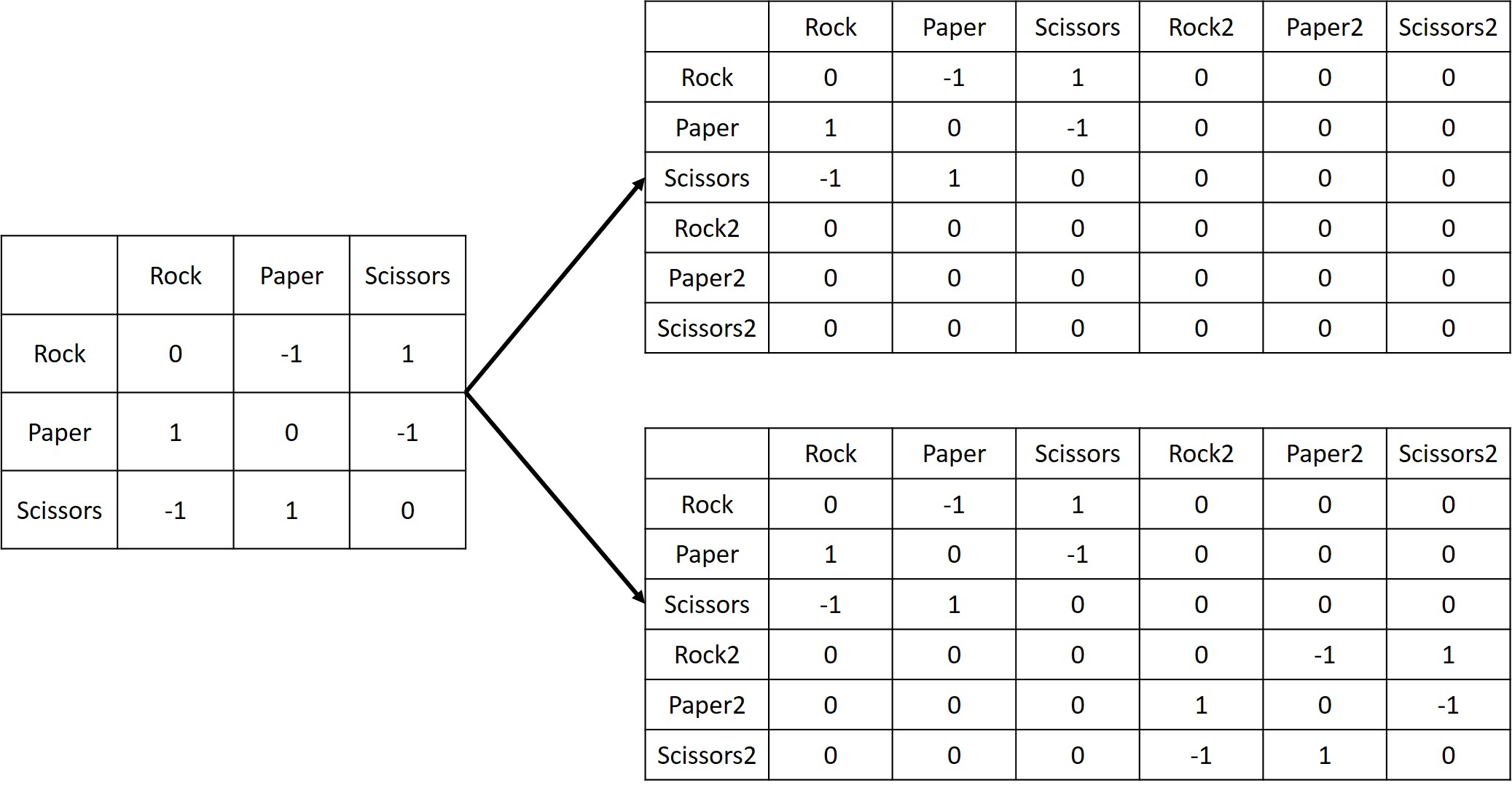}
\end{center}
\caption[Double Rock-Paper-Scissors]{An example of extending the classical Rock-Paper-Scissors to more complex cases.}
\label{fig:double_rps}
\end{figure}

Now consider extended variants of Rock-Paper-Scissors (Figure~\ref{fig:double_rps}). All variants share the same Nash equilibrium strategy and 50\% average-case win rate. However, their underlying complexity differs: the lower variant requires a $6 \times 6$ counter table to describe its structure, while the upper only needs $4 \times 4$. Top-B Balance distinguishes between these cases, quantifying the depth of counter structure invisible to other measures.

\vspace{0.5em}
\noindent\textbf{Design Guidance.}
Balance is just one axis of game design \citepx{art_of_game_design}, and it often coexists with other goals such as thematic consistency, accessibility, and player expressiveness. Our proposed metrics do not replace existing tools—they enrich them. By revealing hidden asymmetries (Top-D) and measuring counter-system complexity (Top-B), these tools empower designers to balance not only outcomes, but also experiences.

Ultimately, we believe balance is not about erasing difference—it is about ensuring that difference remains meaningful. In that sense, both Top-D and Top-B reflect a broader design philosophy: sustaining fairness while celebrating diversity.

\subsection{Online Learning of Counter Categories and Ratings}

Most of the previous discussion assumes access to a complete static dataset of match results. However, in many real-world scenarios such as live games, online ladder systems, or continual training environments for AI agents, composition strengths and counter relationships must be learned and updated incrementally. Previous neural models like NRT and NCT do not naturally support such online updates.

To address this limitation, we introduce an online learning algorithm named \textbf{Elo Residual Counter Category} (Elo-RCC) \citepx{elo_rcc}, which combines the interpretability and simplicity of Elo with the expressiveness of counter categories. The core idea is to maintain player-specific category probabilities and update them using an expectation-maximization (EM) framework \citepx{em_algorithm}.

\subsubsection{Elo-RCC Algorithm}

In the original NCT framework, a pretrained NRT model is required to stabilize strength prediction before computing residuals. This constraint hinders online use where model stability cannot be guaranteed from the start. Elo-RCC circumvents this by directly integrating online residual learning and category refinement into the Elo update process.

Each player maintains:
\begin{itemize}
  \item a scalar rating $R_i$,
  \item a categorical distribution $\mathcal{C}_i \in \Delta^M$ over $M$ counter categories,
  \item a vector $\mathbf{E}_i \in \mathbb{R}^M$ of expected residuals against each category.
\end{itemize}
Here, $\Delta^M$ denotes the $M$-dimensional probability simplex---i.e., the set of all $M$-dimensional vectors with non-negative entries that sum to one.

The algorithm operates through the following four steps for each match:
\begin{enumerate}
  \item \textbf{Elo Rating Update}: Adjust ratings $R_i$ and $R_j$ using the standard Elo formula.
  \item \textbf{Counter Table Update}: Sample counter categories from $\mathcal{C}_i$ and $\mathcal{C}_j$, then update $M \times M$ counter table $\mathbf{T}$ using residuals.
  \item \textbf{Expected Residual Update}: Refine each player's $\mathbf{E}$ using observed residuals.
  \item \textbf{Category Refinement}: Update category distribution $\mathcal{C}_i$ based on distance between $\mathbf{T}$ and $\mathbf{E}_i$.
\end{enumerate}

This procedure effectively implements a soft expectation-maximization (EM) process for counter clustering in real time.

\begin{figure}[t]
    \centering
    \includegraphics[width=0.4\textwidth]{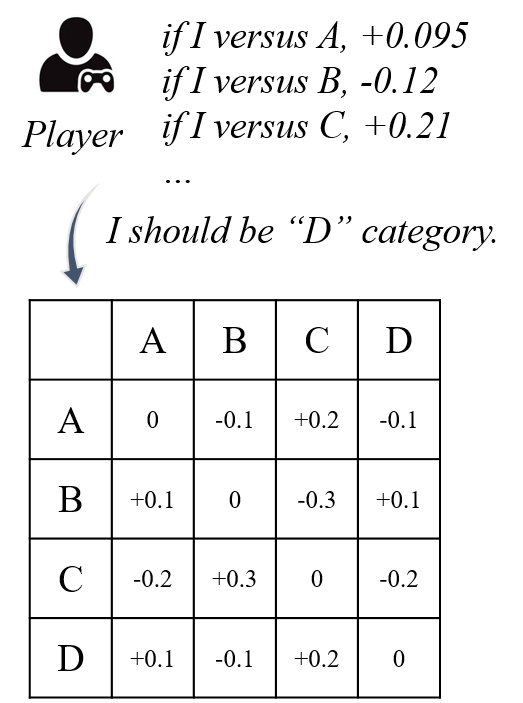}
    \caption[Concept of Elo-RCC]{Illustration of the EM algorithm. Residual win values for all categories are learned iteratively, enabling the best-fitting category for each individual to be identified and refined as the classification label.}
    \label{figure:idea}
\end{figure}

\begin{algorithm}
\caption{Online Update Algorithm for Elo Residual Counter Category}
\label{alg:EloRCC}
\begin{algorithmic}[1]
\Require Initial ratings $\mathcal{R}$, category distributions $\mathcal{C}$, counter table $\mathbf{T}$, expected residual table $\mathbf{E}$, learning rates $\eta_R$, $\eta_T$, $\eta_C$, number of categories $M$.
\Ensure Updated ratings $\mathcal{R}$ and category distributions $\mathcal{C}$.

\For{each match $(i, j)$ with result $O_i$ (1 for win, 0 for loss, 0.5 for tie)}
    \State $P_i \gets \frac{1}{1 + 10^{(R_j - R_i)/400}}$
    \State $R_i \gets R_i + \eta_R (O_i - P_i)$
    \State $R_j \gets R_j + \eta_R ((1 - O_i) - (1 - P_i))$

    \State Sample $c_i \sim \mathcal{C}_i$, $c_j \sim \mathcal{C}_j$
    \State $W_{res} \gets O_i - P_i$
    \State $\mathbf{T}[c_i, c_j] \gets \mathbf{T}[c_i, c_j] + \eta_T (W_{res} - \mathbf{T}[c_i, c_j])$
    \State $\mathbf{T}[c_j, c_i] \gets -\mathbf{T}[c_i, c_j]$

    \State $\mathbf{E}_i[c_j] \gets \mathbf{E}_i[c_j] + \eta_T (W_{res} - \mathbf{E}_i[c_j])$
    \State $\mathbf{E}_j[c_i] \gets \mathbf{E}_j[c_i] + \eta_T (-W_{res} - \mathbf{E}_j[c_i])$

    \State $D_i[c] \gets \sum_{c'} |\mathbf{T}[c, c'] - \mathbf{E}_i[c']|$
    \State $D_j[c] \gets \sum_{c'} |\mathbf{T}[c, c'] - \mathbf{E}_j[c']|$
    \State $c_i^* \gets \arg\min_c D_i[c]$, $c_j^* \gets \arg\min_c D_j[c]$
    \State Update $\mathcal{C}_i$ and $\mathcal{C}_j$ using soft one-hot refinement.
\EndFor
\end{algorithmic}
\end{algorithm}

\subsubsection{Empirical Results}

We evaluate the performance of Elo-RCC in classifying pairwise strength relations like previous sections—specifically, determining whether one composition is stronger, weaker, or indistinguishable from another—based on predicted win values. 

To assess performance under varying granularity, we evaluate Elo-RCC with counter table sizes of $M = 3, 9, 27$, and $81$. These match the configurations used in the NCT baseline, allowing direct comparisons. The following methods are included:

\begin{itemize}
    \item \textbf{NRT}: Neural Rating Table, which models composition strength using neural networks under the Bradley–Terry framework. It captures synergies but not intransitivity.
    
    \item \textbf{NCT}: Neural Counter Table, which extends NRT with an $M \times M$ counter table to model contextual intransitive effects.
    
    \item \textbf{Elo}: The standard Elo rating system with $K = 16$, modeling scalar strength via rating differences.
    
    \item \textbf{mElo2}: A multi-dimensional extension of Elo \citepx{m_elo}, representing ratings as vectors to accommodate intransitive structures.
    
    \item \textbf{Elo (K=0.1)}: A fine-grained variant of Elo, tuned with a smaller update constant $K=0.1$ for slow convergence in finely balanced games.
    
    \item \textbf{Elo-RCC}: Our proposed method combining Elo-style updates with online counter category refinement. The hyperparameters are set to $\eta_R = 0.1$, $\eta_T = 0.00025$, and $\eta_C = 0.01$.
\end{itemize}

Table~\ref{table:precision_test1} summarizes the strength relation classification accuracy on four benchmark games for $M = 81$. As expected, NCT yields the best performance due to its full neural capacity. However, Elo-RCC achieves comparable accuracy in games without deep compositional synergy—such as Rock-Paper-Scissors and Age of Empires II—despite its much simpler and fully online structure. It also consistently outperforms all other non-neural baselines across tasks.

To further examine the effect of discretization size, Table~\ref{table:precision_test2} shows results for $M = 3, 9, 27$. Notably, Elo-RCC slightly outperforms NCT at $M = 3$ and $M = 9$, where the limited category count makes neural quantization less reliable. Elo-RCC directly regresses the residuals in tabular form, enabling it to maintain accuracy under coarse discretizations. For $M = 27$, both models converge to similar performance, confirming Elo-RCC’s robustness even at finer granularity.

Together, these results support the conclusion that Elo-RCC offers a practical trade-off: while it may not always match the offline accuracy of deep neural models like NCT in complex synergy-rich environments (e.g., Hearthstone), it provides strong performance with the added advantages of low computational cost and real-time update capability. This makes it particularly well-suited for use in live game systems, agent training loops, and dynamic balance tracking.

\begin{table*}
\centering
\caption[Elo-RCC Accuracy]{Accuracies (\%) for different rating systems in training (top) and testing (bottom) sets using $M=81$. Results are averaged over 5-fold cross-validation, with $\pm$ one standard deviation. $\dagger$ Results for NRT, NCT, and Elo are reproduced from \citet{game_balance_analysis}. Elo-RCC achieves comparable performance to NCT in games without complex synergy, such as Rock-Paper-Scissors and Age of Empires II, while outperforming other online update methods.}
\label{table:precision_test1}
\rotatebox{90}{
\begin{tabular}{lll|llll}
\toprule
\quad & \textbf{NRT$^\dagger$} & \textbf{NCT$^\dagger$} M=81& \textbf{Elo$^\dagger$} & \textbf{mElo2$^\dagger$} & \textbf{Elo} K=0.1 & \textbf{Elo-RCC} M=81 \\
\midrule
Rock-Paper-Scissors & 51.3 $\pm$ 10.0 & \textbf{100} $\pm$ 0 & 73.3 $\pm$ 10.1 & \textbf{100} $\pm$ 0 & 60.0 $\pm$ 9.9 & \textbf{100} $\pm$ 0\\
\midrule
Advanced Combination & 57.9 $\pm$ 0 & \textbf{79.4} $\pm$  0.8 & 57.1 $\pm$ 0.2 & 52.7 $\pm$ 2.8 & 57.7 $\pm$ 0.1 & 68.0 $\pm$ 0.3\\
\midrule
Age of Empires II & 68.7 $\pm$ 0.8 & \textbf{97.7} $\pm$ 1.0 & 53.1 $\pm$ 4.1 & 51.2 $\pm$ 3.7 & 68.5 $\pm$ 1.0 & \textbf{94.3} $\pm$ 1.4\\
\midrule
Hearthstone & 83.4 $\pm$ 4.6 & \textbf{97.4} $\pm$ 0.3 & 74.8 $\pm$ 2.6 & 61.4 $\pm$ 5.6 & 81.6 $\pm$ 0.5 & \textbf{95.9} $\pm$ 0.9\\
\bottomrule
\toprule
Rock-Paper-Scissors & 51.1 $\pm$ 9.9 & \textbf{100} $\pm$  0 & 73.6 $\pm$ 9.9 & \textbf{100} $\pm$ 0 & 60.2 $\pm$ 10.2 & \textbf{100} $\pm$ 0\\
\midrule
Advanced Combination & 56.5 $\pm$ 0.3 & \textbf{79.7} $\pm$ 0.8 & 56.1 $\pm$ 0.2 & 51.3 $\pm$ 0.6 & 56.4 $\pm$ 0.3 & 65.3 $\pm$ 0.6\\
\midrule
Age of Empires II & 64.5 $\pm$ 1.3 & \textbf{75.4} $\pm$ 0.9 & 52.4 $\pm$ 4.9 & 51.0 $\pm$ 3.1 & 64.7 $\pm$ 1.7& \textbf{75.0} $\pm$ 1.2\\
\midrule
Hearthstone & 81.2 $\pm$ 0.5 & \textbf{94.8} $\pm$ 0.5 & 74.9 $\pm$ 2.6 & 61.1 $\pm$ 5.8 & 81.3 $\pm$ 0.7 & \textbf{94.4} $\pm$ 0.8\\
\bottomrule
\end{tabular}
}
\end{table*}

\begin{table*}[!htb]
\centering
\caption[Elo-RCC Table Size Comparison]{Effect of Category Granularity: Accuracies (\%) for smaller counter category sizes ($M=3, 9, 27$) in training (top) and testing (bottom) sets. Results are averaged over 5-fold cross-validation, with $\pm$ one standard deviation. $\dagger$ Results for NCT are reproduced from \citet{game_balance_analysis}. Elo-RCC demonstrates slightly higher accuracy than NCT for $M=3$ and $M=9$, showcasing its robustness in smaller discretization scenarios.}
\label{table:precision_test2}
\rotatebox{90}{
\begin{tabular}{l|ll|ll|ll}
\toprule
\quad & M=3 \textbf{NCT$^\dagger$} & \textbf{Elo-RCC} & M=9 \textbf{NCT$^\dagger$} & \textbf{Elo-RCC} & M=27 \textbf{NCT$^\dagger$} & \textbf{Elo-RCC} \\
\midrule
Rock-Paper-Scissors & \textbf{100} $\pm$ 0 & \textbf{100} $\pm$ 0 & \textbf{100} $\pm$ 0 & \textbf{100} $\pm$ 0 & \textbf{100} $\pm$ 0 & \textbf{100} $\pm$ 0\\
\midrule
Advanced Combination & \textbf{57.9} $\pm$ 0.0 & \textbf{58.1} $\pm$ 0.8 & \textbf{79.4} $\pm$ 0.6 & 57.7 $\pm$ 0.1 & \textbf{79.8} $\pm$ 0.2 & 57.7 $\pm$ 0.1\\
\midrule
Age of Empires II & \textbf{68.7} $\pm$ 0.8 & \textbf{71.0} $\pm$ 0.5 & \textbf{73.1} $\pm$ 3.9 & \textbf{74.6} $\pm$ 1.2 & \textbf{83.8} $\pm$ 1.4 & \textbf{83.6} $\pm$ 1.4\\
\midrule
Hearthstone & \textbf{81.3} $\pm$ 0.2 & \textbf{81.6} $\pm$ 1.2 & \textbf{85.4} $\pm$ 1.2 & \textbf{85.9} $\pm$ 0.8 & \textbf{91.7} $\pm$ 0.6 & \textbf{91.2} $\pm$ 1.1\\
\bottomrule
\toprule
Rock-Paper-Scissors & \textbf{100} $\pm$ 0 & \textbf{100} $\pm$ 0 & \textbf{100} $\pm$ 0 & \textbf{100} $\pm$ 0 & \textbf{100} $\pm$ 0 & \textbf{100} $\pm$ 0\\
\midrule
Advanced Combination & \textbf{56.5} $\pm$ 0.3 & \textbf{56.4} $\pm$ 0.3 & \textbf{79.7} $\pm$ 0.4 & 56.4 $\pm$ 0.3 & \textbf{80.1} $\pm$ 0.4 & 56.4 $\pm$ 0.3\\
\midrule
Age of Empires II & \textbf{64.5} $\pm$ 1.3 & \textbf{66.5} $\pm$ 2.1 & \textbf{67.7} $\pm$ 2.5 & \textbf{68.7} $\pm$ 1.8 & \textbf{72.5} $\pm$ 1.1 & \textbf{71.3} $\pm$ 0.5\\
\midrule
Hearthstone & \textbf{81.3} $\pm$ 0.5 & \textbf{81.5} $\pm$ 1.5 & \textbf{85.2} $\pm$ 0.9 & \textbf{85.7} $\pm$ 1.3 & \textbf{91.3} $\pm$ 0.8 & \textbf{90.9} $\pm$ 1.1\\
\bottomrule
\end{tabular}
}
\end{table*}

\section{Beyond Balance: Applications, Diversity, and Design Implications}

A core question throughout this chapter has been: \textit{What makes a playstyle meaningful or worth preserving?} Our proposed measures—Top-D Diversity and Top-B Balance—suggest a practical answer: a strategy deserves recognition when it is either sufficiently strong or sufficiently resilient against contextual counters. In competitive ecosystems, diversity is not just about difference—it is about \textbf{defensible difference}. A playstyle gains significance when it survives not by being identical to others, but by offering unique, viable paths to success.

\paragraph{From Balance to Playstyle Diversity.}
Traditional game balance is often viewed through the lens of fairness or equalized outcomes. However, for strategy games and AI agents alike, balance should also support stylistic diversity: the coexistence of multiple viable approaches that differ in philosophy, risk profile, or play tempo. The Top-D and Top-B metrics enable this broader view. Rather than asking only whether win rates are fair, we can now ask: \textit{How many different ways to win exist, and how robust are they against contextual threats?}

Top-D Diversity captures perceived viability within a tolerance margin—it respects players' acceptance of small win-rate gaps due to randomness or preference. Top-B Balance reveals whether the strategic landscape contains a web of non-dominated alternatives, reflecting counterplay and structural richness. Together, they offer a dual lens on strategic diversity: breadth and depth.

\paragraph{Recap: A General Framework for Competitive Analysis.}
The proposed rating and counter table framework offers a scalable, interpretable, and principled method for analyzing strategic ecosystems. NRT estimates scalar strength via neural networks. NCT extends this by learning intransitive counter relationships, enabling deeper modeling of matchups. Elo-RCC further enables online updates, making this approach suitable for live systems and continual training.

These models support the computation of our new balance measures—Top-D and Top-B—which move beyond basic statistics and directly address structural and perceptual diversity.

\paragraph{Broader Applications.}
Though developed for team-based PvP games, the methodology generalizes to other competitive and comparative settings with asymmetric outcomes and intransitive preferences:

\begin{itemize}
  \item \textbf{Sports and tournaments}: Modeling matchups and rivalry cycles between teams or players.
  \item \textbf{Recommendation and ranking}: Inferring relative preferences (e.g., for images, movies, products) when human judgment exhibits non-transitive patterns \citepx{beauty_score}.
  \item \textbf{Social choice and voting systems}: Analyzing cyclic preferences in elections or group decisions.
  \item \textbf{LLM evaluation and chatbot tournaments}: Where some models outperform others on specific tasks but not uniformly \citepx{chatbot_arena, liu2025reevaluating}.
  \item \textbf{Multi-agent learning and AI safety}: Identifying failure modes, exploitable subpopulations, or complementary strategies in competitive environments \citepx{alpha_star, ai_safety}.
\end{itemize}

In each case, the underlying problem is the same: how to evaluate entities not in isolation, but within a dynamic field of interaction. Counter-aware modeling provides a foundation for understanding not only who is strong, but who beats whom—and why.

\paragraph{Design Implications: Supporting Richer Ecosystems.}
Game designers often struggle to preserve strategic diversity while maintaining fairness. Standard tuning practices—buffing weak strategies or nerfing strong ones—risk homogenizing outcomes. Our approach suggests an alternative: evaluate diversity through tolerable difference and counter resilience.

For example, in our Age of Empires II case study, the civilization \textbf{Poles} was found dominated in its category not because it was universally weak, but because its top competitors dominated it across matchups. A naive buff might overshoot the goal; by contrast, counter-aware analysis allows precise interventions to restore viability without distorting the broader meta.

Similarly, Hearthstone's \textbf{Treant Druid}, though numerically dominant, still had contextual counters. Recognizing this counterplay structure allows designers to maintain diversity while addressing fairness concerns, striking a balance between challenge and expression.

\paragraph{Cautions and Limitations.}
Despite its utility, our framework comes with limitations:

\begin{itemize}
  \item It assumes a symmetric, two-team zero-sum formulation, which may not generalize directly to asymmetric or cooperative settings.
  \item All learned models—including NRT, NCT, and Elo-RCC—are susceptible to sampling bias, data scarcity, and overfitting.
  \item Overreliance on quantification can lead to premature optimization: fixing local imbalances while missing deeper emergent dynamics.
\end{itemize}

Moreover, such models can be misused—e.g., reinforcing meta dominance, removing underused strategies rather than supporting them, or exploiting player data for unintended purposes. We advocate their use not as instruments of convergence, but as tools for preserving and enriching diversity.

\paragraph{Final Reflection.}
Balance is not about flattening gameplay. It is about preserving meaning across difference. A rich strategic ecosystem should challenge players to adapt, not merely to optimize. Our metrics aim to support that vision—not just by detecting imbalance, but by identifying where meaningful difference persists and deserves to be celebrated.

\subsection*{Summary and Outlook}

This chapter began by addressing a core motivation behind preserving playstyle diversity: the recognition that players' preferences and beliefs often extend beyond strictly optimal solutions. Even when some strategies dominate statistically, others may persist due to uncertainty, perceived fairness, or personal inclination. Rather than dismiss such playstyles as suboptimal noise, we argued they reflect meaningful \textbf{preferences}—subjective but defensible grounds for diverse strategic expression.

To formalize this intuition, we developed a series of tools for quantifying both diversity and balance. Our two proposed measures—\textbf{Top-D Diversity} and \textbf{Top-B Balance}—move beyond conventional win-rate analysis. Top-D quantifies how many strategies are \textit{perceived as viable}, tolerating small win-rate gaps under belief-based assumptions. Top-B, in turn, identifies the number of \textit{non-dominated} strategies within a structured counterplay system, highlighting the depth of intransitive dynamics.

These measures are supported by a progression of modeling tools: the \textbf{Neural Rating Table (NRT)} for estimating scalar strength with synergy, the \textbf{Neural Counter Table (NCT)} for contextual intransitive structure, and \textbf{Elo-RCC} for efficient online estimation. Together, they offer a computationally tractable yet expressive framework for assessing strategic ecosystems.

Throughout our case studies, we showed that balance is not about achieving uniformity, but about sustaining \textit{meaningful difference}—a space in which multiple strategies can coexist, not just statistically, but structurally and perceptually. This defensible diversity is what makes a playstyle not just viable, but worth preserving.

In the next part, we shift our focus from the definition and measurement of playstyle to its \textbf{expression}: how playstyles emerge through decision-making, how they evolve in learning systems, and how they manifest as behavior. Chapter~8 lays the foundation for this exploration by examining decision-making models, goal structures, and the mechanisms by which preferences shape actions in both artificial agents and human players.

\newpage

\part{Expression of Playstyle}
\chapter{Decision-Making Foundations}
\noindent\textbf{Key Question of the Chapter:} \\
\textit{How are decision-making problems handled in artificial intelligence?}

\noindent\textbf{Brief Answer:} 
AI as a field was fundamentally created to address decision-making problems. By reviewing how AI has evolved, we can see how different approaches have been developed to formalize goals, represent environments, and produce effective actions. Among these, reinforcement learning stands out for its ability to improve through interaction and adapt to changing conditions—making it worth examining in detail through its major milestones.

\bigskip

In the previous chapters, we established a foundation for understanding playstyle—its definition, how it can be measured, and its essential role in sustaining diversity and balance within strategic systems. We now turn from the question of \textbf{what playstyle is} to that of \textbf{how playstyle emerges through behavior}.

This chapter marks the beginning of Part III, which explores the computational processes underlying playstyle. At the heart of this inquiry lies the concept of \textbf{decision-making}—the ability of an agent to choose actions in response to its goals, observations, and environment. Whether selecting a move in a game, responding in conversation, or taking a physical action, every expression of behavior reflects an implicit decision process.

To better understand how such processes shape behavior, we first introduce key perspectives on how decision-making is typically approached in artificial intelligence. Rather than aiming for formal axiomatization, our goal is to sketch the main ideas: how goals and feedback define decision problems, and how various learning paradigms—especially \textbf{deep learning and reinforcement learning}—equip agents to navigate them.

Finally, we trace a set of influential developments in deep reinforcement learning. These milestones not only advanced performance benchmarks, but also broadened the expressive space of agent behaviors—laying the practical groundwork for the emergence and analysis of playstyles.

\section{Goals in Decision-Making}

Now that we have discussed the definition and measurement of playstyle, the next question is how such styles may be exhibited, guided, or learned within AI systems. Humans express playstyle not only through their direct behavior but also through the creation and use of tools—procedures, interfaces, or policies—that augment their capabilities and stabilize the outcomes of their decisions. The ability to externalize and operationalize intentionality is one hallmark of intelligence. In this sense, artificial intelligence is the engineering of such tools: systems that can carry out decision-making on our behalf.

At the core of decision-making lies the specification of a goal—an articulation of what the agent should pursue. In computational terms, this is often expressed through a \textbf{utility function}: a construct that encodes preference, value, or desirability over possible outcomes. Modern AI systems typically operationalize this concept using \textbf{scalar objectives}: scores, losses, or rewards that guide learning and behavior.

This scalar framework enables practical optimization. In supervised learning, it allows gradient-based minimization of error. In reinforcement learning (RL), it enables agents to learn policies that maximize expected cumulative reward through environment interaction \citepx{russell2020aima}. This perspective is often referred to as the \textbf{reward hypothesis}—the belief that any goal can be cast as the maximization of scalar reward over time \citepx{sutton2004rewardhypothesis, silver2021reward}. Many landmark AI systems, from DQN to AlphaGo and Agent57, are built on this foundation.

However, scalar reward is not a neutral or universally sufficient representation of goals. It reflects only a particular instantiation of utility—one shaped by the environment designer’s assumptions. As discussed in Chapter 7, what is rewarded reflects what is valued, and in systems with multiple agents or playstyles, rewards implicitly encode a \textbf{preference structure}. Moreover, such preferences are often intransitive: a strategy that defeats one opponent may fail against another, leading to cyclical dynamics that scalar ratings alone cannot fully capture.

This raises a deeper point: \textbf{before an agent can optimize a goal, that goal must be defined}. And defining a goal requires specifying whose perspective counts, what trade-offs are acceptable, and how outcomes are to be evaluated. In other words, goals are grounded in beliefs. We thus propose a conceptual hierarchy:

\[ \textbf{Belief} \longrightarrow \textbf{Goal} \longrightarrow \textbf{Reward} \]

In practice, this hierarchy is often collapsed. Benchmark tasks provide ready-made reward functions, implicitly assuming shared understanding. But in open-ended or human-centered tasks, goals are rarely unitary. They may be multi-objective, context-dependent, or symbolically defined. Attempts to compress them into a single scalar face challenges of aggregation: how to weigh competing factors, interpret trade-offs, or retain semantic richness.

This problem is well-known in multi-objective learning, where Pareto optimization, constrained policies, or lexicographic preference have been proposed. Yet these methods often assume gradient access and convexity. Non-differentiable or symbolic goals remain harder to handle, even with evolutionary strategies.

Even more subtly, shaped rewards—intended to improve learning efficiency—may alter behavior in unintended ways. Although potential-based shaping preserves optimality in theory \citepx{ng1999policy}, real-world learners are rarely optimal. Behavior may diverge despite nominally encoding the same goal, especially under exploration or approximation.

These issues show that scalar rewards are tools, not truths. They are computational encodings of designer beliefs. And when multiple agents coexist, these beliefs may diverge. One agent’s notion of "helpfulness" or "efficiency" may differ from another’s. Thus, playstyle does not emerge purely from optimizing a fixed reward, but from how the goal itself is framed, interpreted, and prioritized.

Recent research has explored richer formulations, such as reward machines \citepx{icarte2022reward}, cognitive models \citepx{molinaro2023goal}, or compositional goal programs \citepx{davidson2025goals}. These approaches aim to better reflect human-like goal specification but introduce complexity and new challenges in interpretability, learning, and alignment.
More recently, Vision-Language Models (VLMs) have emerged as a new paradigm for implicit goal specification. Rather than relying on handcrafted scalar rewards, VLMs can transform natural language descriptions or exemplar images into flexible evaluative signals—serving as learned reward functions for reinforcement learning tasks. These models have been used as zero-shot reward evaluators \citepx{rocamonde2023vision}, as general-purpose feedback mechanisms in open-ended agent training \citepx{chan2023visionlanguage}, and as full pipelines for vision-language feedback-guided learning \citepx{wang2024rl}. This trend suggests a shift toward more human-aligned and semantically rich forms of goal definition, while raising new concerns about grounding, bias, and the stability of behavior shaped by implicit, pretrained preferences.

In this chapter, we adopt the scalar reward framework as a working basis—acknowledging both its practicality and its limitations. Our goal is not to discard it, but to contextualize it. Every scalar reward encodes a belief; every optimization reflects a preference. Understanding this is critical for interpreting not just \textit{what} an agent does, but \textit{why} it behaves that way—and ultimately, what makes one playstyle distinct from another.

\section{Solution Types to Decision Problem}

Before discussing how deep reinforcement learning (DRL) rose to prominence as the dominant paradigm for learning decision-making agents, we must first examine the landscape of decision problems themselves. After all, the playstyle of an agent is shaped not only by its goals or reward function, but also by the very structure of the environment and the type of decision task it faces. Understanding this landscape thus provides crucial context for later discussions of style expression and evaluation.

\subsection{A Functional Taxonomy of Decision Problem Types}

One of the most practical taxonomies for decision-making—particularly in probabilistic or uncertain environments—is the \textbf{four-scenario framework} described in \citetx{zhou2015survey}'s survey on contextual multi-armed bandits, originally adapted from CMU’s graduate AI course material. It classifies decision problems along two axes:

\begin{enumerate}
  \item \textbf{Do actions affect future states?} (static vs. dynamic)
  \item \textbf{Is the environment model known?} (known vs. unknown dynamics)
\end{enumerate}

This yields the four canonical types of decision problems, as illustrated in Figure~\ref{fig:decision-types}:

\begin{itemize}
  \item \textbf{Decision Theory}: Static setting with known outcomes, often solved analytically using expected utility.
  \item \textbf{Multi-Armed Bandits (MAB)}: Static setting with unknown outcome distributions, emphasizing exploration vs. exploitation.
  \item \textbf{Markov Decision Process (MDP)}: Dynamic setting with known transition dynamics and rewards.
  \item \textbf{Reinforcement Learning (RL)}: Dynamic setting with unknown dynamics, requiring trial-and-error interaction to learn both the model and optimal behavior.
\end{itemize}

\begin{figure}[ht]
  \centering
  \includegraphics[width=0.9\linewidth]{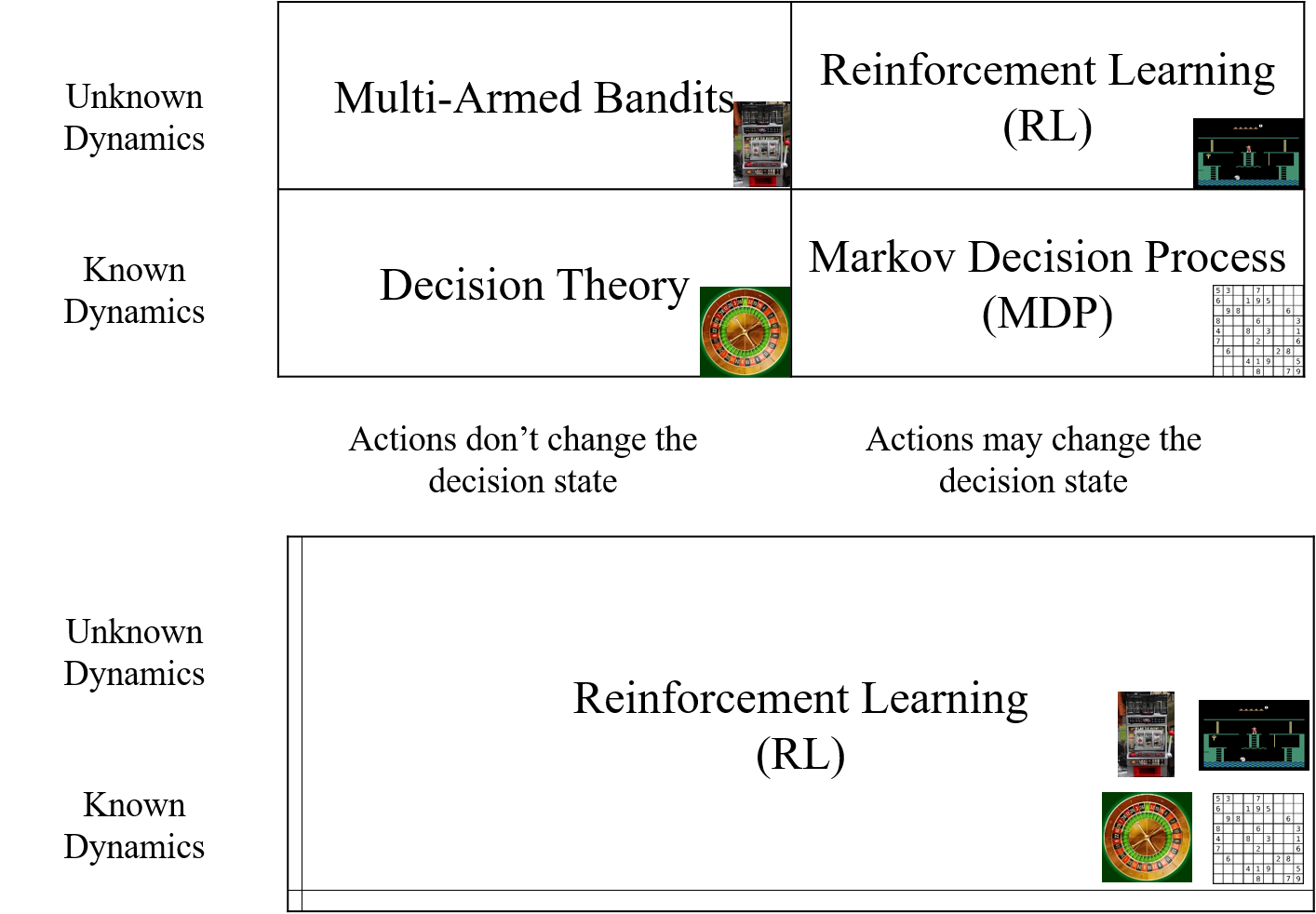}
  \caption[Four Canonical Types of Decision Problems]{A taxonomy of four canonical decision problem types, defined by whether actions affect future states (static vs. dynamic) and whether environment dynamics are known (known vs. unknown). Reinforcement Learning, shown in the upper-right quadrant, represents the most general case—dynamic environments with unknown dynamics—and thus subsumes the other settings. Illustrated with game analogies: slot machines, roulette, Atari games, and Sudoku. Adapted from \citetx{zhou2015survey}.}
  \label{fig:decision-types}
\end{figure}

\textbf{This generality also suggests an important theoretical point:} If we temporarily disregard concerns of training cost or sample efficiency, then RL can in principle subsume nearly all common types of decision problems. Its formulation—learning through interaction, guided only by sparse feedback—makes it an attractive modeling framework not only for performance learning, but also for capturing behavioral differences. In other words, RL is not just a practical tool, but a conceptually expressive one: \textbf{a framework capable of supporting the emergence of style through experience}.

From a playstyle perspective, this taxonomy reflects not only methodological distinctions but also differences in how much \textbf{space for stylistic variation} each setting allows. In bandits, style may be expressed through preferences in risk or novelty. In MDPs and RL, playstyle is richer—it can emerge through differences in temporal strategy, adaptive learning, or risk modeling. Moreover, unknown environments offer more interpretative freedom, giving agents broader space for expressing learning styles, inductive biases, or exploration heuristics.

\subsection{Environmental Complexity as a Driver of Playstyle Expression in RL}

The complexity of an environment fundamentally shapes not only the difficulty of decision-making but also the space in which playstyle can emerge. As noted in \citetx{russell2020aima}, environments can be categorized along several structural dimensions (Figure~\ref{fig:environment-types}). Each dimension introduces distinct types of uncertainty, interactivity, or temporal constraints that impact how agents must behave. When these dimensions are combined, they create rich, high-dimensional spaces where style is no longer incidental—it becomes a key differentiator of behavior.

\begin{figure}[ht]
  \centering
  \includegraphics[width=0.7\linewidth]{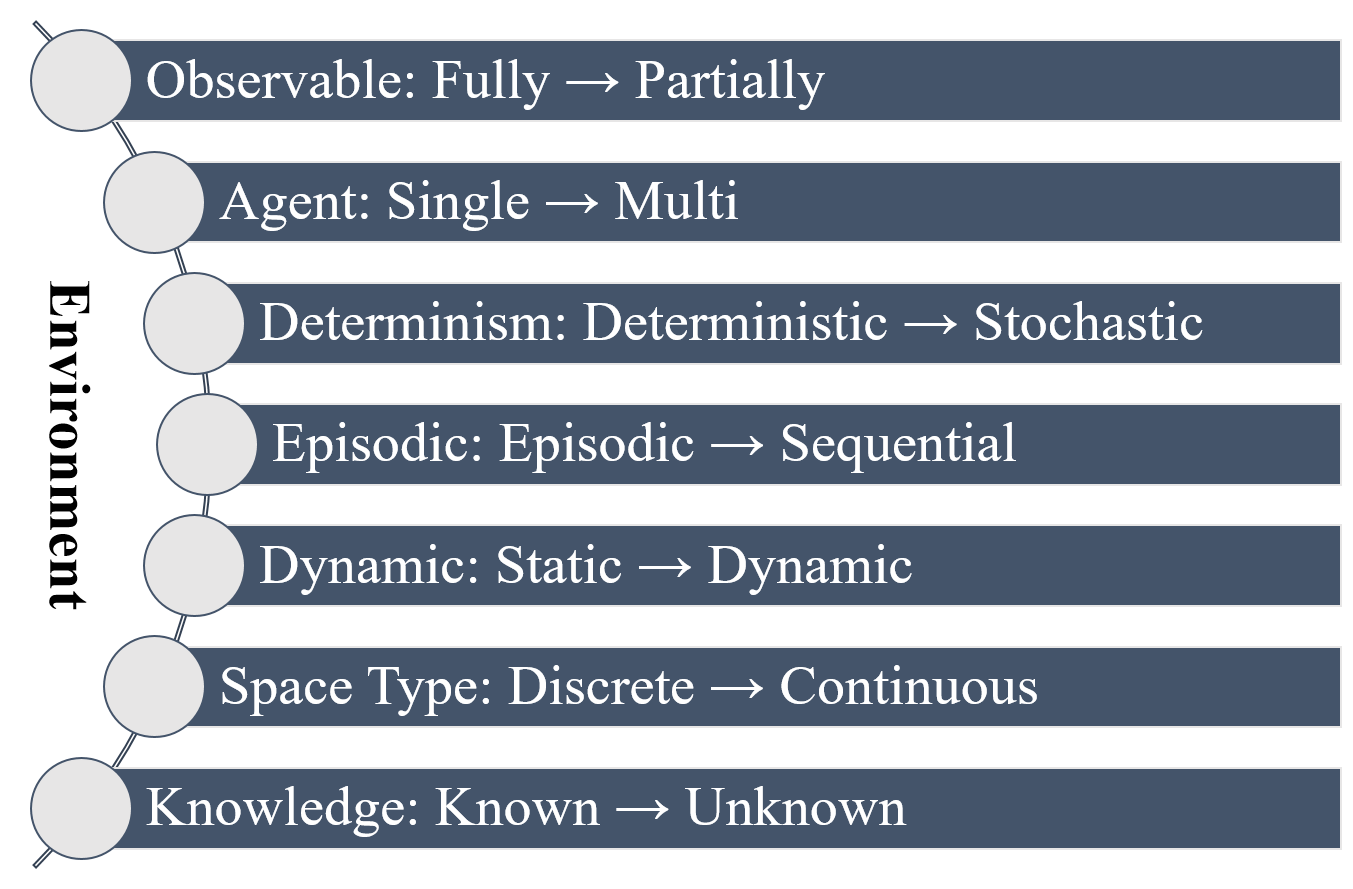}
  \caption[Seven Environmental Dimensions]{Seven environmental dimensions influencing decision complexity and playstyle diversity, adapted from \citetx{russell2020aima}.}
  \label{fig:environment-types}
\end{figure}

\begin{itemize}
  \item \textbf{Observability}: Full vs. partial; the latter requires memory or belief modeling.
  \item \textbf{Number of Agents}: Single-agent vs. multi-agent; multi-agent systems introduce strategy coupling, social modeling, or opponent adaptation.
  \item \textbf{Determinism}: Deterministic vs. stochastic; affects predictability and risk handling.
  \item \textbf{Temporal Structure}: Episodic vs. sequential; sequential tasks demand planning and long-term consistency.
  \item \textbf{Dynamics}: Static vs. dynamic; dynamic environments evolve independently, requiring responsiveness.
  \item \textbf{Space Type}: Discrete vs. continuous; influences precision, control, and representation.
  \item \textbf{Model Knowledge}: Known vs. unknown; known models permit planning, unknown ones necessitate strong intuition.
\end{itemize}

Each axis contributes to the expressiveness of playstyle in different ways:

\begin{itemize}
  \item Partial observability tests how agents model uncertainty and hidden state.
  \item Multi-agent interactions elicit social or strategic playstyles—e.g., deception, cooperation, anticipation.
  \item Stochasticity demands robustness, inducing stylistic variance in handling noise.
  \item Sequentiality reveals planning horizon, patience, or decision consistency.
  \item Dynamic environments highlight responsiveness and adaptivity.
  \item Continuity reflects preferences for smoothness or precision in control.
  \item Known environments support long-term planning according to expectations in the agent’s internal cognition.
\end{itemize}

Crucially, many of these environmental settings are not amenable to analytic solutions or planning with full knowledge. They require agents to act under uncertainty, learn from experience, and adapt to unforeseen changes—challenges that align naturally with the capabilities of Reinforcement Learning (RL). RL provides a general framework for learning from interaction, and thus becomes essential for building agents that must operate in rich, diverse environments.

In essence, the more complex and uncertain the environment, the more degrees of freedom the agent must coordinate, and the more opportunity there is for stylistic differentiation. In highly structured or deterministic settings, optimal behavior may converge across agents. But in environments that demand exploration, anticipation, adaptation, or communication, playstyle becomes an expressive necessity rather than a side effect. RL, as a framework capable of handling such complexity, provides a natural substrate for style to emerge, evolve, and be measured.

\subsection{From Representation to the Era of Experience}

Although reinforcement learning is praised for its generality, its modern success is deeply tied to the representational power of deep learning. In high-dimensional or partially observable domains, agents must not only act but \textit{perceive}, \textit{abstract}, and \textit{evaluate}. Deep learning enables this joint optimization of perception, value estimation, and policy.

Rather than relying on handcrafted rules or symbolic state definitions, deep RL agents build their internal structure through experience. This allows both performance and stylistic distinctiveness to co-emerge.

\citetx{silver2025experience}’s recent essay, \textit{Welcome to the Era of Experience}, highlights this shift from passive, supervised learning to active, experience-driven intelligence. In such a paradigm, style becomes central—it reflects not just output differences, but underlying modes of interpretation, preference, and self-guided exploration.

\textbf{In sum}, the reason DRL is so relevant to playstyle is not merely its scalability or learning efficiency. It is because DRL creates space for behavioral individuality to emerge—through architecture, through experience, and through the intrinsic complexity of the environment itself. It is not just a means of optimizing behavior, but a window into the decision-making signatures of intelligent agents.

\section{Deep Learning and Reinforcement Learning}

Deep learning and reinforcement learning form the two computational pillars of modern decision-making AI \citepx{russell2020aima}. Reinforcement learning (RL) offers a framework for learning through interaction, where agents optimize behavior based on delayed and often sparse feedback signals. Deep learning (DL), in turn, provides the representational power needed to process raw, high-dimensional inputs and to generalize across diverse states and situations.

The convergence of these two paradigms has transformed AI systems from rule-driven automata into adaptive agents capable of discovering policies, strategies, and stylistic behaviors through experience. Critically, it is this combination—DL's perception and abstraction, together with RL's trial-and-error adaptation—that enables not only effective action, but also the emergence of diverse playstyles.

In this section, we trace the evolution of DL and RL, examine their convergence into deep reinforcement learning (DRL), and highlight how this synthesis empowers the development of general-purpose agents with rich behavioral expressiveness (Figure~\ref{fig:ml-progress-map}).

\begin{figure}[ht]
\centering
\includegraphics[width=0.95\linewidth]{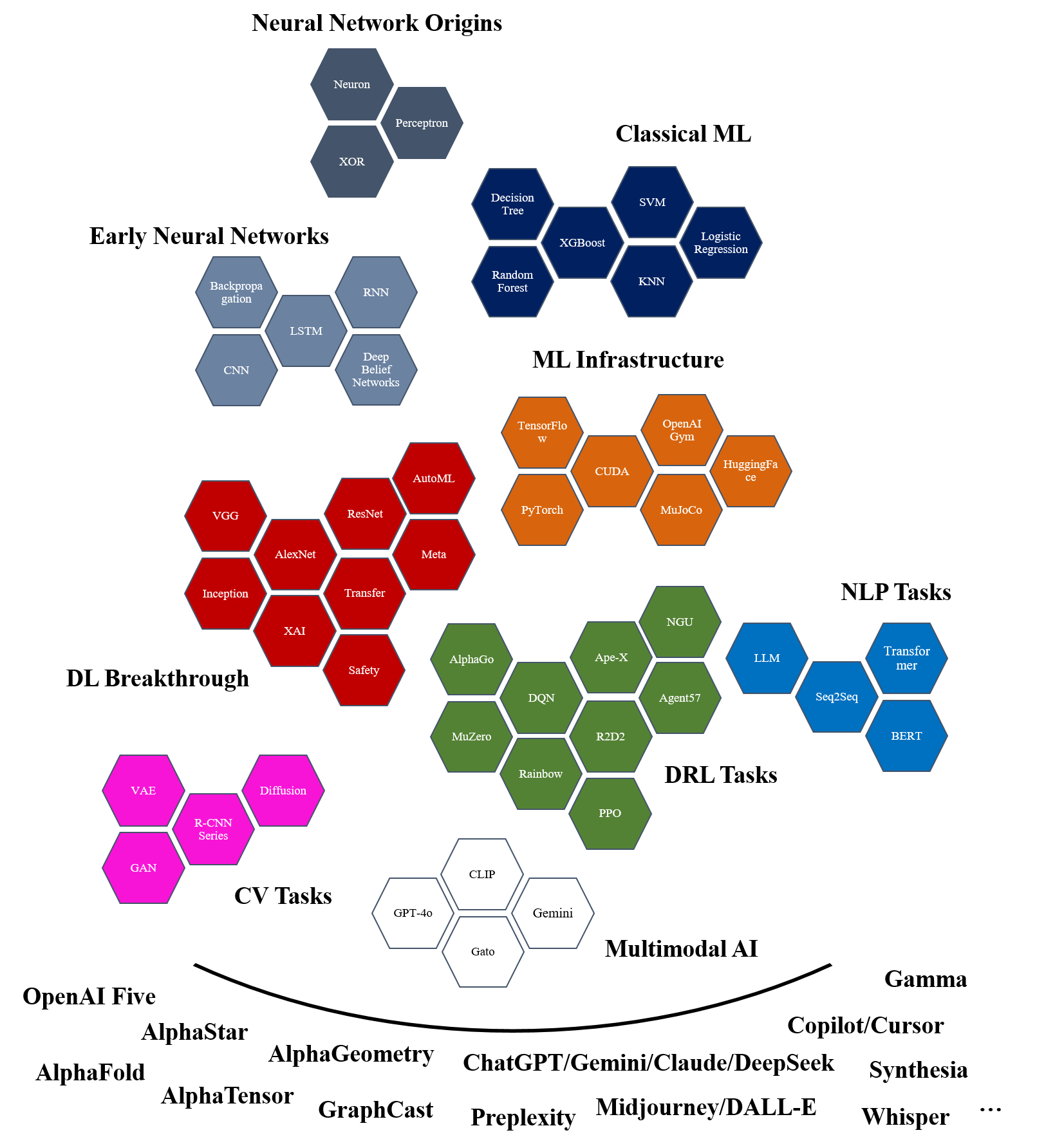}
\caption[Major Milestones in Deep Learning and RL]{A visual summary of major developments in machine learning, deep learning, and reinforcement learning. Each cluster highlights a key lineage of ideas—from early neural models and symbolic ML, to domain-specific innovations in CV, NLP, and DRL, and ultimately toward general-purpose and multimodal models.}
\label{fig:ml-progress-map}
\end{figure}

\clearpage

\subsection{Early Development, the Soviet Perspective, and Symbolic Integration}

The conceptual roots of neural computation can be traced to the \textbf{McCulloch--Pitts neuron} in 1943, which modeled logical functions using simplified binary units \citepx{mcculloch1943logical}. This inspired the \textbf{Perceptron}, introduced by Rosenblatt in 1958, which implemented a learning algorithm capable of adjusting weights to perform linear classification \citepx{rosenblatt1958perceptron}. The Perceptron was widely celebrated as a foundational step toward machine intelligence.

However, enthusiasm waned after Minsky and Papert's 1969 book \textit{Perceptrons} \citepx{minsky1969perceptrons}, which showed that single-layer Perceptrons could not solve linearly inseparable problems such as XOR. Without effective training algorithms for multi-layer networks, neural network research lost momentum. This marked the onset of the first “AI winter,” during which funding and interest in connectionist approaches sharply declined.

Historical perspectives outside the Western AI narrative further illuminate this period. For instance, in the Soviet Union, researchers doubted whether machines could ever truly “think” in the human sense, and reframed AI as a discipline of control rather than cognition \citepx{kirtchik2023soviet}. This emphasis on situational management and learning automata—trial‑and‑error adaptation under feedback and uncertainty— foreshadowed aspects of modern reinforcement learning, even though the formal MDP‑centric RL framework emerged later in the Western canon.

In the absence of scalable neural models, researchers turned to more tractable and interpretable alternatives. \textbf{Symbolic AI} focused on hand-coded rules and logic inference, while \textbf{statistical learning} began to flourish through approaches such as:
\begin{enumerate}
    \item \textbf{Decision trees} (e.g., ID3, C4.5) \citepx{quinlan1986induction, quinlan1993c4.5};
    \item \textbf{Support vector machines} (SVMs) \citepx{cortes1995support};
    \item \textbf{Logistic regression} and \textbf{naive Bayes} \citepx{bishop2006pattern, hastie2009elements};
    \item \textbf{Random forests} and \textbf{gradient boosting machines} \citepx{breiman2001random, friedman2001greedy}.
\end{enumerate}
These techniques defined the \textbf{"classical machine learning"} paradigm and remain strong baselines today.

Decision trees in particular stood out for their interpretability and symbolic structure—qualities that align with belief-grounded goals discussed earlier. Unlike black-box models, decision trees reveal the conditions under which specific outcomes are chosen, making them natural candidates for structured playstyle modeling, especially when decisions involve branching, conditional logic, or interpretable trade-offs.

In recent years, hybrid approaches have emerged that seek to combine the expressive power of neural networks with the interpretability of trees. One notable early work is \textbf{Deep Neural Decision Forests}, which integrates convolutional feature extractors with differentiable decision trees, allowing end-to-end training via backpropagation \citepx{kontschieder2015deep}. While later developments in deep learning emphasized scale and generalization, such tree-based hybrids illustrate an important design direction—where structured symbolic reasoning and learned perception are combined to model both \textit{how} agents act and \textit{why} they act that way.

This direction connects naturally to the broader vision of \textbf{neuro-symbolic AI} \citepx{sheth2023neurosymbolic}, which integrates sub-symbolic neural learning with explicit symbolic reasoning to improve interpretability, generalization, and alignment with human-understandable concepts. In the context of this dissertation, our \textit{discrete-state playstyle measurement and counter-category learning} methods can be seen as neuro-symbolic in spirit: state discretization provides a symbolic layer for representing and comparing playstyles, while neural networks (or alternative rule-based learners) can be used to learn these abstractions from high-dimensional data. Notably, the symbolic layer can also be constructed with non-neural methods, and counter-category probabilities can be derived without relying solely on deep models—highlighting that effective symbolic reasoning need not be tied to any single learning paradigm. This mirrors recent advances in discrete, compositional, and symbolic representation learning through attractor dynamics \citepx{nam2023discrete}, where discrete abstractions serve as robust anchors for complex decision-making processes.

\subsection{Neural Network Resurgence and Deep Learning Explosion}

The resurgence of neural networks in the 2010s was not driven solely by algorithmic innovation. Rather, it was the convergence of \textbf{conceptual}, \textbf{computational}, and \textbf{ecosystem-level} breakthroughs that catalyzed what is now widely known as the \textbf{deep learning revolution} \citepx{lecun2015deep}.

A key theoretical advance had occurred in the 1980s with the popularization of \textbf{backpropagation} as a general-purpose training algorithm for multi-layer networks \citepx{rumelhart1986learning}. This enabled the training of \textbf{deep feedforward neural networks}, but for decades, progress remained bottlenecked by limited compute and a lack of large, labeled datasets.

The turning point came in 2012 with the success of \textbf{AlexNet}, which won the ImageNet Large Scale Visual Recognition Challenge by a substantial margin \citepx{krizhevsky2012imagenet}. AlexNet demonstrated that \textbf{convolutional neural networks (CNNs)}, when trained on large-scale data and accelerated via \textbf{GPUs} (using \textbf{CUDA} \citepx{cuda2008}), could decisively outperform classical computer vision approaches.

This marked a paradigm shift. Neural networks transitioned from theoretical curiosities to practical engines of performance. The emergence of open-source libraries such as \textbf{Theano} \citepx{bergstra2010theano}, \textbf{TensorFlow} \citepx{abadi2016tensorflow}, and later \textbf{PyTorch} \citepx{paszke2019pytorch} democratized deep learning, enabling researchers and practitioners around the world to experiment, scale, and iterate rapidly.

A rapid wave of architectural innovation followed:
\begin{enumerate}
\item \textbf{VGG} \citepx{simonyan2015vgg} and \textbf{GoogLeNet} \citepx{inception_model} deepened networks, improving hierarchical feature extraction.
\item \textbf{ResNet} introduced identity-based skip connections, allowing for the stable training of networks with hundreds of layers \citepx{he2016deep}.
\item \textbf{Batch normalization} and \textbf{dropout} improved convergence and generalization \citepx{ioffe2015batch, srivastava2014dropout}.
\end{enumerate}

These innovations formed the \textbf{computational and representational backbone} of modern AI systems. As compute budgets grew and benchmark competitions proliferated, deep learning expanded into nearly every application domain: from image classification and speech recognition to natural language processing \citepx{devlin2019bert, radford2019gpt2}, bioinformatics, and robotics.

This era also gave rise to the paradigm of \textbf{end-to-end learning}—training a single model to map raw inputs directly to task outputs without manual feature engineering \citepx{lecun2015deep}. Such models developed their own internal representations, tailored to the data and task structure, often uncovering non-obvious regularities.

This leads naturally to the next central idea: \textbf{manifold-based representation learning}. As models began to organize their internal activations into structured, high-dimensional spaces, the question shifted from “how do we train a deep network?” to “what kinds of internal structure emerge, and how do they shape behavior?”

Today’s AI ecosystem continues to benefit from the infrastructure laid during this period. Cloud-based GPUs and TPUs, pretrained foundation models, and shared platforms such as \textbf{HuggingFace} \citepx{wolf2020transformers}, \textbf{OpenAI Gym} \citepx{brockman2016openai}, and \textbf{Model Zoo} have transformed deep learning from a niche methodology into a foundational engine of modern AI research and deployment.

\subsection{Representation Learning and the Manifold Hypothesis}

While deep learning’s success is often attributed to scale and data availability, a more fundamental reason lies in its unparalleled capacity for \textbf{representation learning}—the automatic discovery of task-relevant abstractions from raw inputs. This ability is particularly vital in decision-making systems, where agents must transform complex perceptual streams (pixels, audio, tokens) into actionable beliefs, intentions, or decisions.

A central theoretical foundation for this capacity is the \textbf{manifold hypothesis}. It posits that although natural data resides in high-dimensional input spaces, its meaningful variation lies along low-dimensional, structured manifolds embedded within those spaces \citepx{fefferman2016testing, bengio2013representation}. For example, a human face image may exist in a 150,000-dimensional pixel space, but the underlying variation—such as pose, lighting, or identity—is governed by a far smaller set of latent factors.

Deep neural networks, especially those with hierarchical architectures, can be understood as learning a sequence of nonlinear transformations that \textbf{"unfold"} these curved manifolds. That is, they straighten the geometry of data in latent space, enabling simpler decision boundaries and more linearly separable representations. Each layer of a convolutional network corresponds to progressively higher-level abstractions—edges, parts, objects—aligning with this geometric interpretation.

\textbf{Empirical support} for this view is often drawn from visualization tools such as t-SNE \citepx{t_sne}, which reduce hidden-layer activations into two or three dimensions. These projections frequently reveal semantically meaningful clusters and trajectories, suggesting that deep models indeed structure internal representations according to task-relevant manifolds.

\subsubsection{Intuitive Analogy}

Consider a crumpled sheet of paper with digits written on it. In its raw form, the digits are distorted and hard to distinguish. Deep learning acts like a process of uncrumpling the paper—removing irrelevant deformation and revealing the underlying structure. Once flattened, the digits become easy to classify with a simple linear separator. Representation learning performs this uncrumpling in high-dimensional latent space.

\subsubsection{Relationship to Universal Approximation}

This intuition complements the \textbf{Universal Approximation Theorem} \citepx{hornik1989multilayer}, which guarantees that sufficiently large networks can approximate any continuous function. However, this theorem provides no insight into \textit{how} such functions are learned, nor whether they can generalize.

The manifold hypothesis addresses this gap by proposing that real-world data has constrained structure—enabling generalization and efficient learning. In other words:
\begin{enumerate}
\item The Universal Approximation Theorem tells us neural networks \textit{can} express arbitrary functions.
\item The Manifold Hypothesis explains \textit{why} they succeed in doing so on real-world data with tractable architectures and finite samples.
\end{enumerate}

\subsubsection{Implications for Playstyle and Decision-Making}

For intelligent agents—especially those operating in reinforcement learning settings—\textbf{the ability to learn structured internal representations is foundational}. Policies must reason not just over raw observations, but over task-relevant manifolds that encode affordances, strategies, opponent intentions, or temporal dependencies.

Without these representations, even a well-specified reward function or powerful learning algorithm may fail to produce coherent or expressive behavior. Conversely, rich representations enable style to emerge naturally: different inductive biases, learning experiences, or task framings lead agents to organize and traverse latent space in distinct ways.

Thus, understanding and leveraging representation learning is not merely a tool for improving performance—it is the key to unlocking diverse, robust, and stylistically differentiated decision-making.

\subsection{The Rise of DRL: AlphaGo and Beyond}

The integration of deep learning with reinforcement learning marked a decisive leap in the ability of AI systems to learn from interaction. This convergence gave rise to \textbf{deep reinforcement learning (DRL)}—a framework where agents optimize sequential decision-making under uncertainty while building internal representations directly from high-dimensional inputs.

The breakthrough came in 2015 with the introduction of the \textbf{Deep Q-Network (DQN)} \citepx{dqn}, which achieved human-level performance on a suite of Atari 2600 games. By combining convolutional neural networks with Q-learning, experience replay, and target networks, DQN showed that deep models could perceive, evaluate, and act solely from raw pixels and sparse game rewards. This demonstrated, for the first time, that it was possible to train agents that reason over extended time horizons without access to privileged state representations.

This foundational result sparked a wave of increasingly sophisticated DRL systems. The most transformative milestone was \textbf{AlphaGo} \citepx{alpha_go}, which in 2016 defeated a world champion Go player—a long-standing challenge in AI. AlphaGo’s architecture synthesized several core components: deep policy and value networks, Monte Carlo tree search, supervised learning from expert human games, and reinforcement learning via self-play.

Its successors, \textbf{AlphaGo Zero} and \textbf{AlphaZero}, dispensed with human data entirely, showing that tabula rasa learning—guided only by reward and search—was sufficient to discover superhuman strategies in Go, chess, and shogi \citepx{alpha_go_zero, alpha_zero}. These results not only advanced performance, but also highlighted the potential of DRL as a tool for strategy emergence and behavioral diversity.

Meanwhile, a series of general-purpose DRL architectures extended this success into broader domains:
\begin{itemize}
\item \textbf{IMPALA} \citepx{impala} scaled training via distributed actor-learners.
\item \textbf{Ape-X} and \textbf{R2D2} \citepx{ape_x, r2d2} addressed replay prioritization and sample efficiency in large-scale settings.
\item \textbf{MuZero} \citepx{schrittwieser2020muzero} learned its own latent world model, combining model-based planning with model-free learning.
\item \textbf{Agent57} \citepx{agent57} used a meta-controller to dynamically select exploration strategies, becoming the first agent to outperform humans across all 57 Atari benchmarks.
\end{itemize}

These systems pushed DRL into environments with increasing complexity—featuring long temporal horizons, delayed and sparse rewards, partial observability, and multi-agent interactions. Crucially, it is in these rich settings that \textbf{playstyle becomes most visible}. Agents must decide how to explore, how long to plan, how to manage uncertainty, and how to adapt over time. Even under the same reward structure, such choices yield qualitatively different trajectories and behavioral signatures.

In this light, DRL is not merely an optimization engine—it is a \textbf{generative substrate for style}. It enables agent individuality to emerge from differences in architecture, exploration, experience, or value framing. As later chapters will show, these stylistic divergences are not incidental—they are central to how we understand, evaluate, and design intelligent systems.

DRL, therefore, is more than a toolkit for control. It offers a window into the diversity of minds that arise when intelligence is shaped by interaction, guided by reward, and constrained by environment.

\subsection{From CV/NLP to Generative and Multimodal Models}

While deep reinforcement learning forged a new paradigm for \textbf{learning from interaction}, the domains of \textbf{computer vision (CV)} and \textbf{natural language processing (NLP)} were simultaneously transformed by the ability to \textbf{learn from data at scale}. These fields not only developed the dominant models for perception and understanding but also pioneered the architectural principles and objective functions that would later converge with DRL—laying the groundwork for multimodal, general-purpose agents.

\subsubsection{Computer Vision: Hierarchical Abstraction and Spatial Induction}

In computer vision, convolutional neural networks (CNNs) such as AlexNet, VGG, Inception, and ResNet enabled machines to process visual input by mimicking the hierarchical abstraction of the human visual system \citepx{krizhevsky2012imagenet, simonyan2015vgg, szegedy2015going, he2016deep}. These architectures extracted spatial features at increasing levels of abstraction—from edges and textures to objects and semantic categories.

Beyond classification, CV models became essential components in a range of perception and reasoning tasks:
\begin{enumerate}
\item \textbf{Object detection and segmentation}: Models like R-CNN, Faster R-CNN, and Mask R-CNN supported end-to-end detection and spatial reasoning \citepx{girshick2014rich, ren2015faster, he2017mask}.
\item \textbf{Scene understanding and depth estimation}: Monocular depth estimation enabled 3D reasoning from 2D images \citepx{godard2017unsupervised}.
\item \textbf{Visual grounding and captioning}: Sequence-based models translated image content into structured language \citepx{vinyals2015show, johnson2017inferring}.
\item \textbf{Style transfer and image generation}: Neural methods transformed visual content across aesthetic domains \citepx{gatys2016image}.
\end{enumerate}

These models not only recognize but interpret—serving as perceptual front-ends for agents that must interact with the world in a spatially grounded and stylistically expressive manner.

\subsubsection{Natural Language Processing: From Sequence Modeling to Language Understanding}

NLP has undergone a comparable transformation, transitioning from symbolic pipelines to fully neural architectures. The shift was catalyzed by the development of \textbf{sequence-to-sequence models}, and later, the \textbf{Transformer architecture} \citepx{vaswani2017attention}, which introduced self-attention and positional encoding mechanisms.

Pretrained language models such as \textbf{BERT} \citepx{devlin2019bert} and the \textbf{GPT} series \citepx{radford2019language,gpt3} demonstrated that unsupervised language modeling can yield general-purpose representations suitable for a wide array of downstream tasks. These models introduced powerful notions of \textbf{contextualized meaning} and \textbf{emergent reasoning}—concepts deeply relevant for modeling decision-making styles, where choices are embedded in linguistic, social, or narrative contexts.

\subsubsection{Generative Modeling: From Recognition to Creation}

The advent of generative models extended deep learning from analysis to synthesis. Starting with \textbf{Variational Autoencoders (VAEs)} and \textbf{Generative Adversarial Networks (GANs)} \citepx{gan}, and later evolving into \textbf{diffusion models} such as \textbf{Stable Diffusion} and \textbf{DALL·E}, these techniques enabled machines to create high-quality images, text, and hybrid content from scratch.

In this light, playstyle is no longer confined to \textit{reactive behavior} but encompasses \textit{creative generation}—whether in trajectories, strategies, or symbolic representations. Generative modeling shifts the focus from performance optimization to stylistic expression.

\subsubsection{Toward Multimodal Foundation Models}

As these modalities matured, their boundaries began to dissolve. Models like \textbf{CLIP} \citepx{clip} aligned vision and language into shared embedding spaces. Systems such as \textbf{Gato} \citepx{reed2022generalist} and \textbf{GPT-4} explored generalist architectures capable of processing images, text, actions, and audio through a unified interface.

These foundation models do not merely merge perception and action—they redefine the agent itself as a multimodal entity. This convergence is central to the future of playstyle modeling, enabling agents to exhibit stylistic coherence across perception, language, and decision-making—whether in games, dialogues, or real-world environments.

\subsection{The Public Face of AI}

While the technical milestones of deep learning and reinforcement learning have largely unfolded in research labs and benchmark competitions, the most transformative shift in recent years has taken place in the \textbf{public sphere}. For the first time in history, \textbf{non-expert users} engage with advanced AI systems not merely as passive recipients of backend services, but as \textbf{active co-creators, decision-makers, and evaluators}.

\subsubsection{Generative AI as Interface}

Large language models such as \textbf{ChatGPT}, \textbf{Claude}, and \textbf{Gemini}, alongside image generation systems like \textbf{DALL·E}, \textbf{Midjourney}, and \textbf{Stable Diffusion}, have fundamentally redefined how AI is perceived and used. These models are not evaluated solely on accuracy or task completion, but on their capacity for \textbf{engagement, creativity, and interactivity}—in short, for \textit{style}.

They represent the \textbf{first mainstream AI agents where “style” is the product}. Users engage with them through iterative prompting, refinement, and preference-based selection. The feedback loop—whether via explicit signals like thumbs-up or implicit ones like re-prompts and retention—drives models toward stylistic alignment at scale. These interactions convert preference into performance, turning generative AI into a \textbf{style-adaptive interface}.

\subsubsection{Everyday AI Applications}

Beyond creative tools, AI systems have permeated routine workflows: \textbf{Whisper} for speech recognition, \textbf{Copilot} and \textbf{Cursor} for code generation, \textbf{Runway} for video editing, and \textbf{Synthesia} for avatar-based media creation. Even when powered by the same underlying models, these applications exhibit \textbf{individualized response patterns} and allow users to customize outputs by tone, format, pacing, or aesthetic.

Unlike benchmark agents such as AlphaZero, which are judged by win rate or sample efficiency, these public-facing systems are assessed by \textbf{diversity of expression}, \textbf{appropriateness under ambiguity}, and \textbf{subjective value}. As such, they exemplify a shift from optimal policy learning to \textbf{style-aware response modeling}—highlighting the increasing centrality of playstyle as an evaluation criterion.

\subsubsection{A Cultural Reframing}

This democratization of AI use reframes the role of artificial intelligence: not just as optimization machinery, but as a \textbf{cultural participant}. Where the narrative once celebrated \textit{superhuman performance}, it now turns toward \textit{human-compatible expression}. In this new landscape, AI must not only act effectively but also express meaningfully—setting the stage for the central questions of the next chapter.

\subsection{What Comes Next? Superhuman Performance or Stylistic Diversity?}

The trajectory of AI thus far—spanning deep learning, reinforcement learning, and generative modeling—has delivered \textbf{unprecedented capabilities}. From mastering complex games to composing music and writing essays, AI systems now routinely outperform human experts in structured domains.

But this success prompts a deeper question: \textbf{What should we optimize for next?}

If the last decade was defined by performance—by benchmarks, win rates, and score maximization—then the next may be defined by \textbf{style and diversity}. As agents move beyond closed, rule-bound settings and enter open-ended, human-centered environments, the objective is no longer to find “the best move,” but to support \textbf{a spectrum of valid, meaningful, or expressive behaviors}.

In this paradigm:
\begin{enumerate}
\item The same environment may support multiple viable styles, not just one optimal policy.
\item Evaluation must incorporate subjective factors—preference, fairness, interpretation—alongside traditional metrics.
\item Agent modeling must account for ambiguity in goals and beliefs, not just uncertainty in actions or outcomes.
\end{enumerate}

This shift reframes the very notion of intelligence—from optimizing expected return to \textbf{navigating diverse futures shaped by values, perspectives, and intent}. It brings to the forefront the need to study not only what agents can do, but \textit{how} and \textit{why} they choose to do it.

And this leads us to the core concern of this dissertation: \textbf{Playstyle}. Not merely a behavioral artifact, but a conceptual lens for exploring intelligence through the lens of decision variation, expressive freedom, and intentional design.

\section{Review on Deep Reinforcement Learning Milestones}
\label{sec:drl_milestones}

From solving Atari games to defeating world champions in Go, deep reinforcement learning (DRL) has emerged as one of the most visible and transformative branches of modern AI. Yet its significance extends well beyond benchmark scores or leaderboard rankings. In the context of this dissertation, DRL is especially noteworthy for its capacity to support the emergence of \textbf{playstyle diversity}, \textbf{adaptive decision strategies}, and \textbf{agent individuality}.

This section revisits key milestones in the development of DRL—from its formal foundation in Markov decision processes (MDPs) to state-of-the-art systems capable of generalization and self-play. Rather than presenting these methods solely in terms of performance, we highlight how each milestone contributed new mechanisms for shaping agent behavior—through exploration, memory, architecture, or optimization dynamics.

In doing so, we reinterpret the growth of DRL not merely as a rise in capability, but as a \textbf{growth in behavioral diversity}. This perspective becomes especially vivid when analyzing Atari benchmarks, where stylistic and strategic variation—rather than just aggregate scores—reveal the expressive power of different approaches.

\subsection{Formal Foundations: Markov Decision Processes and Bellman Equations}

The formal framework underlying most reinforcement learning algorithms is the Markov Decision Process (MDP), defined as a tuple \citepx{rl_book}:
\[
\mathcal{M} = (\mathcal{S}, \mathcal{A}, P, \mathcal{R}, \gamma),
\]
where:
\begin{itemize}
    \item $\mathcal{S}$ is the state space,
    \item $\mathcal{A}$ is the action space,
    \item $P(s'|s, a)$ is the transition function (i.e., the probability of moving to state $s'$ from $s$ by taking action $a$),
    \item $\mathcal{R}(s, a)$ is the reward function,
    \item $\gamma \in [0, 1]$ is the discount factor.
\end{itemize}

The agent's goal is to learn a policy $\pi(a|s)$ that maximizes the expected cumulative discounted reward:
\[
\mathbb{E}\left[ \sum_{t=0}^\infty \gamma^t r_t \right].
\]

The optimal value function $V^*(s)$, which gives the maximum expected return from state $s$, satisfies the \textbf{Bellman Optimality Equation}:
\[
V^*(s) = \max_a \left[ \mathcal{R}(s, a) + \gamma \sum_{s'} P(s'|s, a) V^*(s') \right].
\]

Similarly, the action-value function $Q^*(s, a)$ satisfies:
\[
Q^*(s, a) = \mathcal{R}(s, a) + \gamma \sum_{s'} P(s'|s, a) \max_{a'} Q^*(s', a').
\]

These value functions—$V$ and $Q$—can be interpreted as \textbf{utility functions} or \textbf{evaluation functions} that assign desirability to states or state-action pairs. In model-free reinforcement learning—where $P$ and $\mathcal{R}$ are unknown—these functions are estimated from interaction data using Monte Carlo methods, Temporal Difference (TD) learning, or deep function approximators.

Crucially, this formalism allows for highly flexible agent behavior. Depending on how the reward is structured and how uncertainty is resolved, different agents may solve the same task in stylistically different ways. Thus, the MDP not only encodes the objective of learning, but also bounds the \textit{space of expressible playstyles}.

\subsection{The Atari Benchmark and the Rise of DQN}

The Arcade Learning Environment (ALE) and its Atari 2600 games have played a foundational role in the development and evaluation of deep reinforcement learning (DRL). This benchmark suite provides a diverse set of discrete, high-dimensional, partially observable tasks, spanning from reactive control to long-horizon planning, within a unified pixel-based interface.

The breakthrough came with \textbf{Deep Q-Networks (DQN)} \citepx{atari_net, dqn}, which for the first time demonstrated that an agent could learn to play multiple Atari games directly from pixels using only reward feedback. The combination of convolutional networks, experience replay, and Q-learning introduced a scalable learning pipeline and sparked a wave of DRL research.

\vspace{1em}
\paragraph{Normalized Performance Metrics.}
To compare across games with different score scales and dynamics, several normalized metrics were proposed:

\begin{itemize}
    \item \textbf{Human Normalized Score (HNS)} compares agent performance to that of a human given two hours of gameplay:
    \[
    \text{HNS} = \frac{\text{agent} - \text{random}}{\text{human} - \text{random}} \times 100\%
    \]

    \item \textbf{Human World Record Normalized Score (HWRNS)} benchmarks agents against the best-known human scores:
    \[
    \text{HWRNS} = \frac{\text{agent} - \text{random}}{\text{world record} - \text{random}} \times 100\%
    \]

    \item \textbf{SABER (Standardized Atari BEchmark for RL)} \citepx{toromanoff2019deep} addresses score inflation and outlier sensitivity by capping values and excluding noisy games:
    \[
    \text{SABER} = \max\left( \min\left( \text{HWRNS}, 200\% \right), 0\% \right)
    \]
\end{itemize}

\vspace{1em}
\paragraph{Progress and Pitfalls.}
Since DQN, numerous architectural and algorithmic innovations have emerged: from recurrent memory-based agents (e.g., DRQN), to distributed training (e.g., Gorila, IMPALA), prioritized replay, distributional Q-learning (e.g., C51, QR-DQN, IQN), exploration-driven methods (e.g., NGU, Go-Explore), and powerful hybrid agents like \textbf{Agent57} \citepx{agent57}. More recently, \textbf{MuZero} \citepx{schrittwieser2020muzero} demonstrated AlphaZero-like learning without access to ground-truth dynamics during the internal planning.

However, these algorithms’ reported performance varies widely depending on which metric is used:

\begin{figure}[ht]
  \centering
  \includegraphics[width=0.85\linewidth]{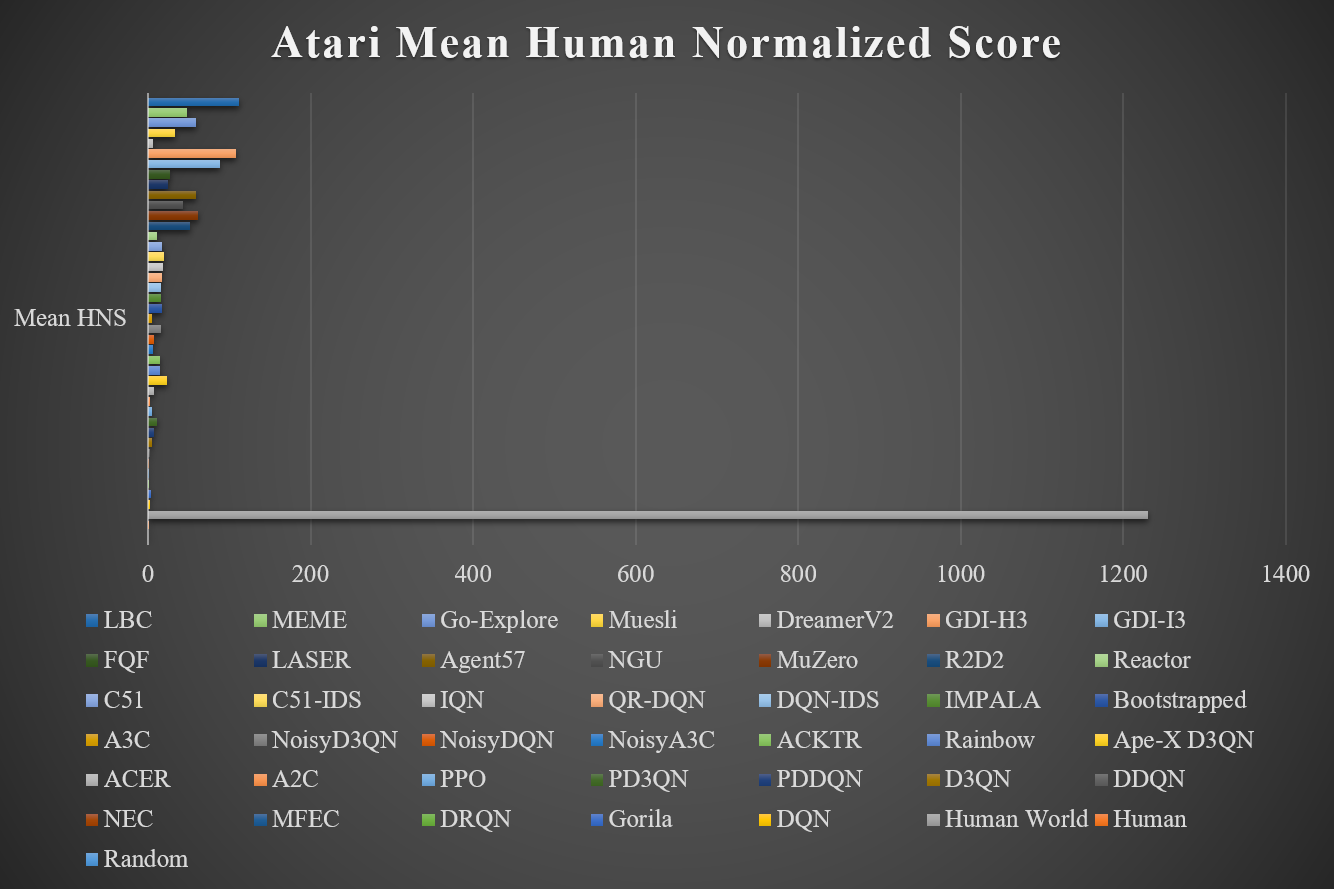}
  \caption[Atari Mean HNS]{Mean Human Normalized Scores (HNS) show some agents, like \textbf{LBC}, achieving >10000\% average performance, far exceeding the 2-hour human baseline but still has a big performance gap to human world records.}
  \label{fig:hns-mean}
\end{figure}

\begin{figure}[ht]
  \centering
  \includegraphics[width=0.85\linewidth]{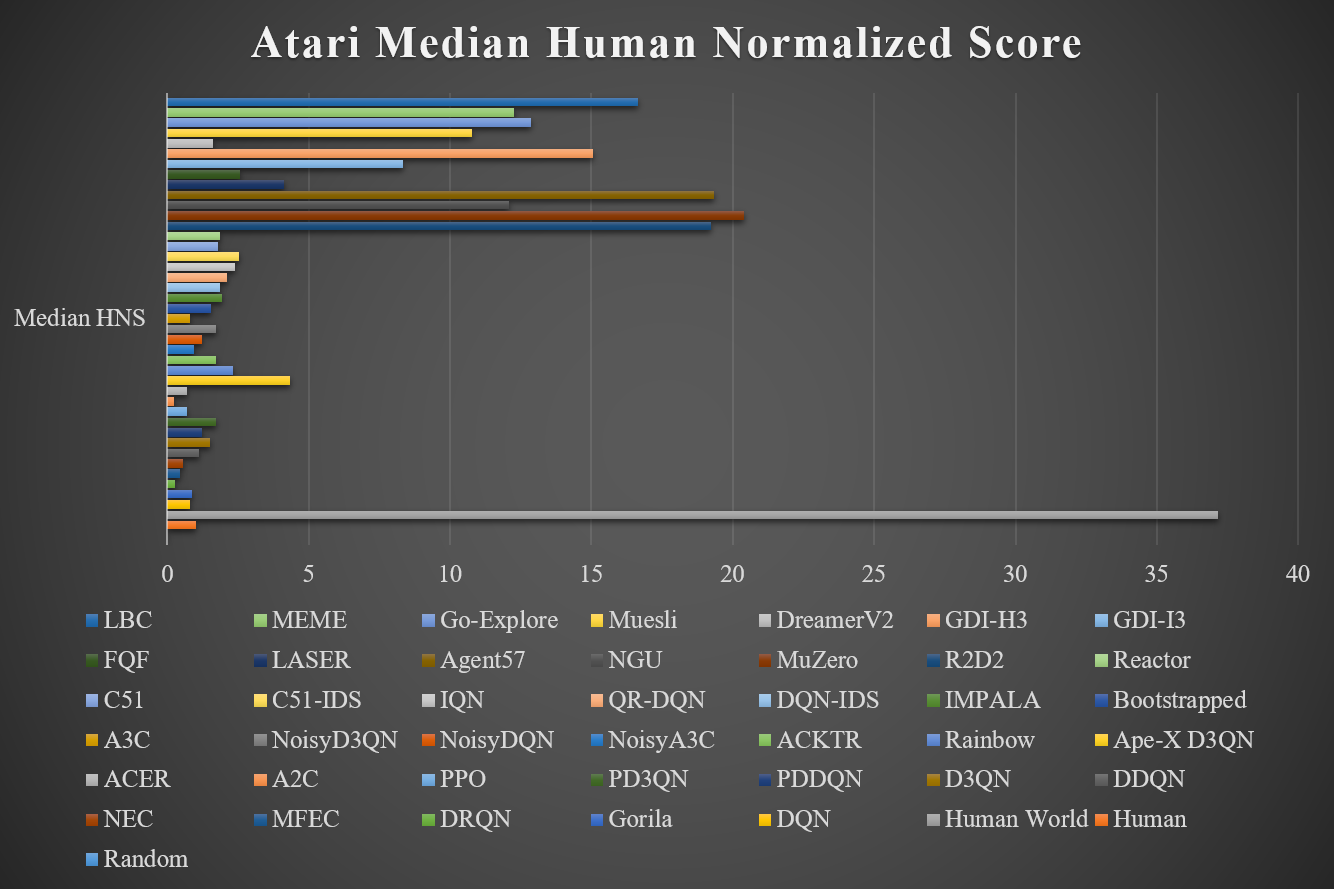}
  \caption[Atari Median HNS]{Median HNS provides a more robust view by downweighting score inflation from easy games. Agent57 shows strong performance here.}
  \label{fig:hns-median}
\end{figure}

\begin{figure}[ht]
  \centering
  \includegraphics[width=0.85\linewidth]{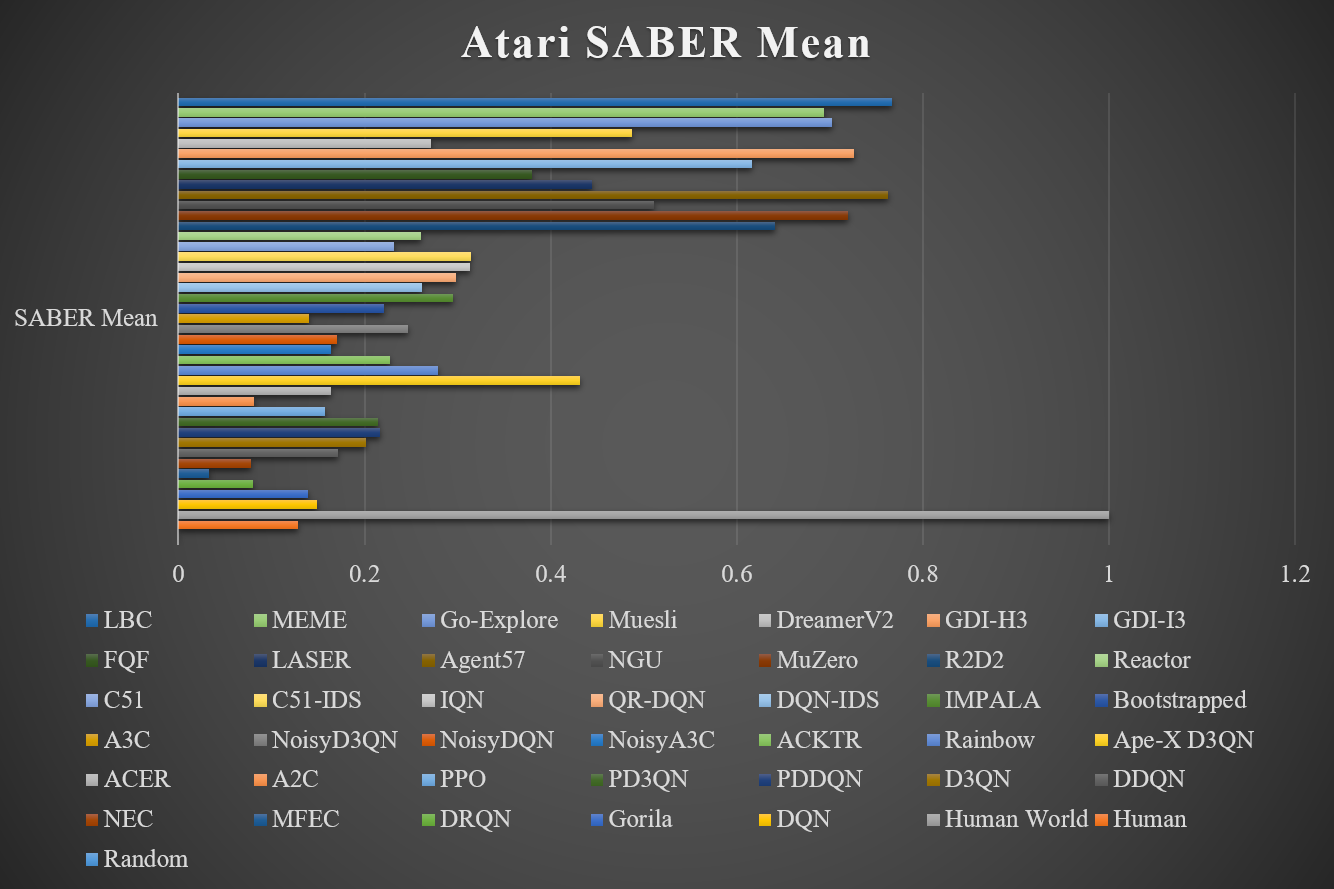}
  \caption[Atari SABER Mean]{SABER Mean scores cap extreme outliers. \textbf{LBC} (2023) achieves 76.67\%, with \textbf{Agent57} close behind at 76.26\%.}
  \label{fig:saber-mean}
\end{figure}

\begin{figure}[ht]
  \centering
  \includegraphics[width=0.85\linewidth]{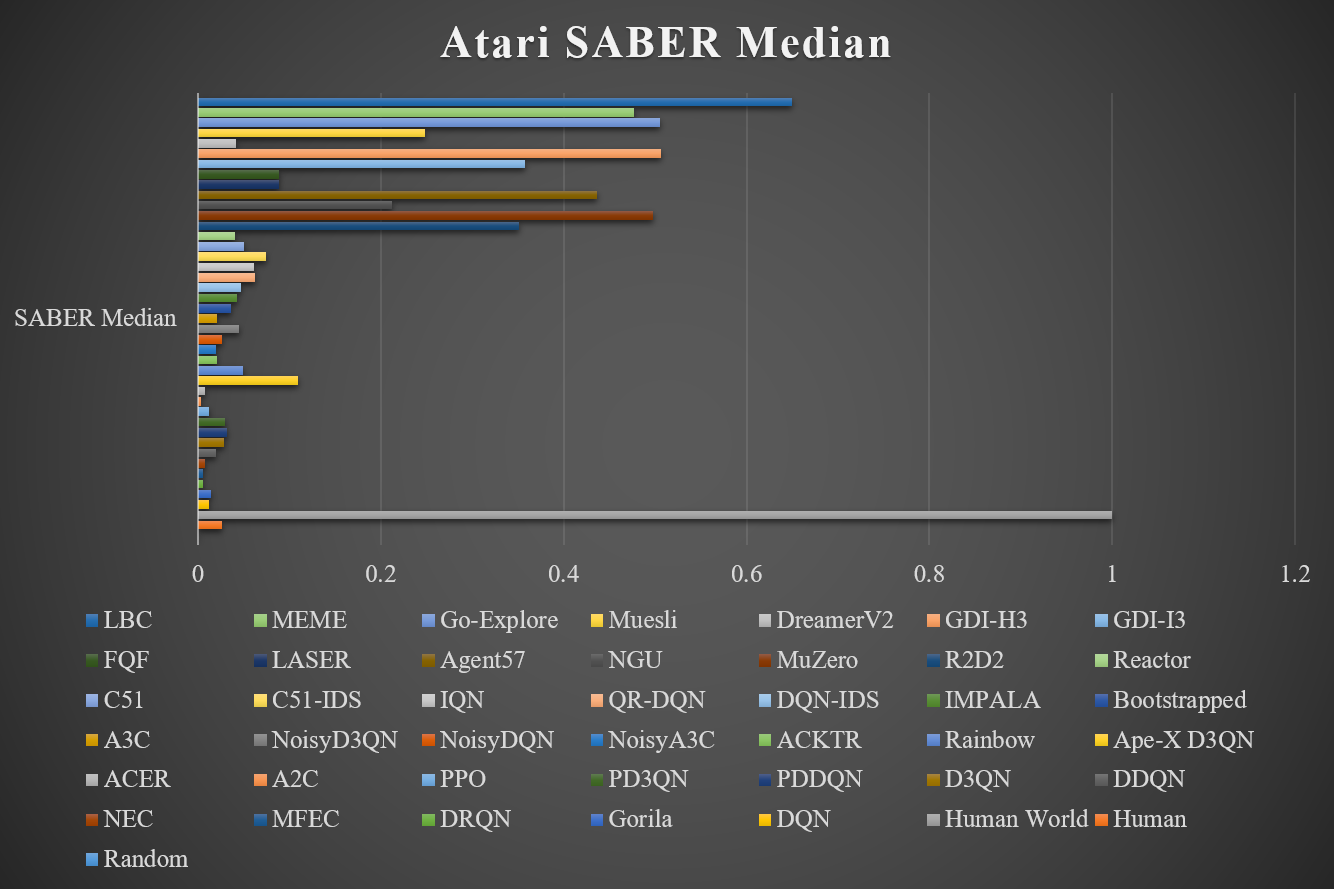}
  \caption[Atari SABER Median]{SABER Median reveals stronger generality. LBC remains state-of-the-art at 64.93\%, while Agent57 drops to 43.62\%.}
  \label{fig:saber-median}
\end{figure}

\vspace{1em}
\paragraph{Revisiting "Superhuman."}
Despite headlines claiming DRL has “surpassed human performance,” this is only true when measured against the short-time baseline (HNS). When we use \textbf{world records} as the gold standard, no agent has yet surpassed even 80\% of human-best scores on average. For instance:

\begin{itemize}
    \item \textbf{HNS Mean:} Human World Record = \textbf{123000\%}; LBC = \textbf{11236\%}
    \item \textbf{SABER Mean:} Human World Record = \textbf{100\%}; LBC = \textbf{76.67\%}
    \item \textbf{SABER Median:} LBC = \textbf{64.93\%}; Agent57 = \textbf{43.62\%}
\end{itemize}

This large gap highlights a crucial insight: while DRL agents are now competent across all games, they still fall far short of \textbf{mastery}, especially when measured against the diverse strategies and rare corner cases that characterize world-class human play. This shortfall in both average and robust performance opens the door to a deeper question—one we explore in the next section: \textbf{Is achieving superhuman scores enough, or should we aim for stylistic generality and diverse behavior?}

\vspace{1em}
\paragraph{Atari as a Playground for Style-Oriented Methods.}
The Atari benchmark has not only served as a proving ground for performance improvements in deep reinforcement learning, but also as a fertile testbed for studying exploration, behavior shaping, and emergent diversity. A wide range of DRL algorithms—spanning from classic baselines to state-of-the-art architectures—have been evaluated on this platform. Among them, some were specifically designed to promote diverse agent behaviors, or inadvertently contributed to playstyle variation through their architecture or training dynamics.

\paragraph{Notable Methods and Their Stylistic Implications.}
\begin{itemize}
  \item \textbf{DQN} (2015/2/25) \citepx{dqn}: The foundational DRL algorithm that enabled end-to-end value-based learning from pixels. Still widely used in cross-domain applications due to its simplicity and recognition, though its $\varepsilon$-greedy exploration leads to limited behavioral diversity.
  
  \item \textbf{A2C/A3C} (2016/2/4) \citepx{a3c}: Early distributed actor-critic methods that achieved strong parallelized training. While they collected diverse experiences through distributed actors, their objectives did not explicitly promote policy diversity.
  
  \item \textbf{Bootstrapped DQN} (2016/2/15) \citepx{bootstrap_dqn}: Replaced $\varepsilon$-greedy with ensemble-based deep exploration using multi-head value functions. The ensemble mechanism naturally induces distinct behaviors across heads, making it one of the earliest practical pathways to generating multiple playstyles.
  
  \item \textbf{NoisyNet} (2017/6/30) \citepx{noisy_net}: Introduced trainable, state-conditioned noise into network weights to replace $\varepsilon$-greedy. Included in Rainbow, though often omitted in practice due to complexity. While it improves exploration, it does not explicitly target diversity.
  
  \item \textbf{PPO} (2017/7/20) \citepx{ppo}: Currently the most popular DRL algorithm due to its stability and simplicity. Its entropy regularization helps preserve behavioral variability during training, offering a basic mechanism for maintaining playstyle diversity.
  
  \item \textbf{Rainbow} (2017/10/6) \citepx{rainbow}: A composite of DQN enhancements, often used as a baseline for value-based methods. Despite strong performance, it does not specifically promote diversity and remains focused on sample efficiency.
  
  \item \textbf{IQN} (2018/1/14) \citepx{iqn}: A distributional RL algorithm that parameterizes quantile functions, enabling risk-sensitive decision-making. This allows agents to express different risk-oriented playstyles under identical tasks.
  
  \item \textbf{IMPALA} (2018/2/5) \citepx{impala}: Introduced V-trace to stabilize off-policy actor-critic training in distributed setups. It laid the foundation for scalable DRL but did not directly address style variation.
  
  \item \textbf{Ape-X} (2018/3/2) \citepx{ape_x}: Combined prioritized replay with massive parallel exploration, leading to a significant leap in DRL performance. Different actors experience varying degrees of exploration, indirectly enabling the emergence of diverse playstyles.
  
  \item \textbf{DQN-IDS} (2018/12/18) \citepx{ids_exploration}: Integrated uncertainty estimates from distributional value heads to drive exploration. By incorporating epistemic uncertainty into decision-making, it encouraged agents to exhibit more diverse behavior patterns.
  
  \item \textbf{NGU (Never Give Up)} (2020/2/14) \citepx{ngu}: Achieved major gains in hard-exploration tasks via episodic memory and intrinsic reward. Each actor had distinct exploration parameters, explicitly promoting diversity across policies.
  
  \item \textbf{Agent57} (2020/3/30) \citepx{agent57}: Built upon NGU by adding a bandit-based meta-controller to allocate resources to better-performing strategies. This can be interpreted as a mechanism for selecting and deploying effective playstyles in a dynamic way.
  
  \item \textbf{GDI} (2022/6/7) \citepx{gdi}: Reframed diversity as a core ingredient in achieving strong generalization. Proposed a principled method to induce behavioral variation while improving performance.

\item \textbf{LBC} (2023/5/9) \citepx{lbc}: A direct successor to GDI and the current state-of-the-art in mean Atari performance. While not explicitly targeting playstyle, it benefits from diversity-driven training.
\end{itemize}

\subsection{Beyond Scores: From Atari Performance to Strategic Diversity}

While metrics like Human Normalized Score (HNS) and SABER have become standard tools for evaluating deep RL agents on benchmarks like Atari, they tell only part of the story. These metrics quantify \textit{how well} an agent performs—relative to random, human, or world-class baselines—but not \textit{how} that performance is achieved.

\paragraph{Why Style Matters.}

Two agents may obtain the same score using radically different approaches: one through aggressive, high-risk tactics, another via conservative, methodical play. This distinction—often invisible to scalar metrics—reflects an agent’s \textbf{playstyle}: its preferred patterns of behavior, risk, adaptation, and interaction.

In fact, as RL systems matured, it became increasingly clear that performance alone was insufficient. Robust generalization, human alignment, and multi-agent coordination all demand something more: \textit{behavioral diversity}.

\paragraph{Exploration and Learning Efficiency.}

Early agents like DQN relied on $\epsilon$-greedy exploration, which often led to local optima. Subsequent methods introduced more principled forms of exploration: intrinsic curiosity, episodic novelty, and random network distillation. These methods encourage agents to seek novelty, enabling the discovery of creative or human-like solutions beyond score maximization.

\paragraph{Memory and Meta-Strategy.}

Deep RL algorithms also evolved to support longer-term planning and context-sensitive decision-making. Recurrent policies like DRQN and R2D2 added memory; population-based methods like PBT or Agent57 introduced meta-controllers to adapt exploration dynamically. These architectures began to mirror not just intelligent behavior, but \textit{adaptive, personalized behavior}—a prerequisite for style.

\paragraph{Multi-Agent Systems: A Natural Habitat for Style.}

Nowhere is playstyle more essential than in \textbf{multi-agent systems} (MAS). In competitive games (e.g., StarCraft, Dota), agents must learn not just to optimize, but to \textit{counter}—anticipating opponents’ behaviors, developing diverse strategies, and shifting play based on population dynamics.

This transition from single-agent to multi-agent RL marked a shift in focus: from learning a single optimal policy, to supporting a \textit{space of policies}, each adapted to different contexts, roles, or teammates. Diversity became not only beneficial—it became \textbf{necessary} for success.

\begin{quote}
\textit{Style is not a luxury in multi-agent RL. It is a strategic imperative.}
\end{quote}

This insight underpins systems like:
\begin{itemize}
  \item \textbf{AlphaStar} \citepx{alpha_star}, which learns a portfolio of strategies through population-based training and exploits Nash-style equilibria;
  \item \textbf{OpenAI Five} \citepx{openai_five}, which uses large-scale self-play to discover high-level coordination and adaptation;
  \item \textbf{Agent57} \citepx{agent57}, which includes meta-learning to dynamically vary behavior across games and difficulty settings;
  \item \textbf{Hide and Seek} \citepx{hide_and_seek}, which demonstrates that stylistic strategies—including tool use, countermeasures, and dynamic adaptations—can emerge purely from multi-agent interaction without explicit supervision or structured goal engineering.
\end{itemize}

\paragraph{From Scores to Style-Based Evaluation.}

As we move toward richer, more realistic environments, the evaluation of RL agents must evolve accordingly. Future benchmarks will likely measure not just cumulative rewards, but:
\begin{itemize}
  \item behavioral diversity across runs and settings,
  \item robustness to opponent variation,
  \item interpretability of strategic intent,
  \item alignment with human-like reasoning.
\end{itemize}

In this light, playstyle is not a byproduct—it is the next frontier of decision-making AI. Multi-agent self-play environments like \textbf{Hide and Seek} further highlight that even with minimal reward structure, diverse strategic behaviors—including cooperation, deception, and tool use—can emerge naturally as a consequence of stylistic adaptation and environmental pressure \citepx{hide_and_seek}.

\chapter{Playstyle Imitation and Human-like Agents}
\noindent\textbf{Key Question of the Chapter:} \\
\textit{What does it take to behave like a human? And how can we achieve it?}

\noindent\textbf{Brief Answer:} 
To give AI a playstyle, a natural starting point is the human playstyle, since humans are the most salient example of intelligent decision-making. Imitation learning offers many methods for replicating such styles. However, defining and measuring “human-likeness” is more complex—and arguably more important—than the choice of imitation technique itself. As fully positive definitions of human-likeness are elusive, we approach it from the negative side: eliminating behaviors that are clearly non-human.

\bigskip

What does it mean for an agent to exhibit a playstyle—not just a policy that achieves results, but one that reflects a particular behavioral direction? In the previous chapter, we explored how agents can learn rational behaviors by maximizing performance under well-defined objectives and reinforcement signals. In such settings, playstyle often emerges as a side effect—not as an explicit design goal. 

However, once we begin to care about \textit{how} behavior is expressed—whether it is cautious or aggressive, elegant or chaotic—we must shift our focus. What does it mean to act in a particular way, and how can we define the direction of a playstyle?

A natural answer is to start with humans. Imitating human behavior offers not only an intuitive grounding for style but also speaks to one of the foundational goals of artificial intelligence: creating agents that act, respond, and adapt in ways that align with human understanding. In this sense, imitation becomes more than a learning technique—it becomes the bridge between stylistic expression and human-likeness, between engineered policies and natural interaction.

In this chapter, we explore this connection in three stages. First, we examine why imitation plays a central role in modeling playstyle. Then, we introduce key imitation learning methods and explain the core ideas behind each—going beyond a simple enumeration to highlight their structural assumptions and stylistic implications. Finally, we shift focus to a deeper question: what does it mean to be human-like? We examine different perspectives on human-likeness—behavioral, perceptual, and philosophical—and how these perspectives shape both the evaluation criteria and practical directions for developing human-aligned agents.

\section{Why Imitation Matters in Playstyle?}

In previous chapters, we examined how artificial agents make decisions and how their behaviors can be measured and categorized using formal playstyle metrics. We also discussed how reinforcement learning and other optimization-based methods enable agents to achieve high performance by maximizing quantifiable objectives such as scores or win rates. However, when the goal shifts from merely reaching a target score to expressing a particular style, these approaches often fall short. Differences in learned behavior are typically treated as training noise or underfitting—seldom are they recognized as meaningful stylistic variation.

This limitation stems from a deeper assumption: that style is secondary to success. Most training pipelines optimize a single objective function, leading to behavioral convergence around the reward-defined optimum. Although manual reward shaping can induce behavioral variation, such approaches lack semantic grounding and are difficult to scale. In contrast, imitation learning offers a more natural and grounded alternative. Human players inherently exhibit diverse playstyles—shaped by experience, intention, and personality. Learning from demonstrations allows agents to inherit not only effective strategies, but also the implicit stylistic traits embedded in human behavior.

Imitation also serves purposes beyond stylistic alignment. It can guide exploration, constrain policies to remain within human-consistent regimes, or provide a foundation for inverse objectives—training agents to purposefully diverge from human norms. Perhaps most importantly, imitation enables \textbf{structured deviation}: agents can extrapolate from demonstrations to produce new, functional, and stylistically coherent behaviors. In this way, imitation is not the endpoint, but rather the foundation for stylistic generalization and creative expression.

\section{Imitation Learning Approaches}

To learn expressive playstyles—not merely functional behavior—agents must move beyond score maximization and instead capture behavioral tendencies that reflect stylistic intent or human-like qualities. \textbf{Imitation Learning (IL)} offers a natural paradigm for this goal. By learning from expert demonstrations—rather than from manually engineered rewards or task-specific performance objectives—agents can acquire not only effective decision patterns, but also the expressive stylistic traits embedded in expert behavior.

Imitation learning has since evolved into a broad and heterogeneous field, with methods differing in how they interpret demonstrations and in the role imitation plays in the learning process. While many existing surveys categorize IL methods by supervision level or reward dependence, we adopt a \textit{conceptual} classification based on how the agent relates to the expert signal and what function imitation serves within the overall framework.

As illustrated in Figure~\ref{fig:il-taxonomy}, we identify four high-level paradigms:

\begin{itemize}
    \item \textbf{Direct Imitation} — typified by Behavioral Cloning (BC), where the agent learns a policy by directly mapping states to expert actions through supervised learning.
    \item \textbf{Indirect Imitation} — represented by Inverse Reinforcement Learning (IRL) and its variants, where the objective is to infer the reward signal underlying the expert’s behavior and then optimize against it. Generative Adversarial Imitation Learning (GAIL) and Adversarial Inverse Reinforcement Learning (AIRL) fall primarily under this category.
    \item \textbf{Auxiliary Imitation} — approaches such as Deep Q-learning from Demonstrations (DQfD) or hybrid pretraining methods, where imitation provides auxiliary supervision or regularization, rather than being the primary training signal. Certain GAIL variants may also fall here, depending on how the adversarial structure is integrated.
    \item \textbf{Reinterpreted Imitation} — including methods like Imitation from Observation (IfO), where agents must learn from partial or indirect expert signals—such as state-only sequences without action labels—requiring inference over intent and behavior structure.
\end{itemize}

Beyond these paradigms, recent advances in generative modeling—especially \textit{diffusion-based policies}—have introduced powerful modeling strategies that cut across traditional categories. Diffusion models represent policy generation as a stochastic denoising process, enabling one-to-many mappings, long-horizon coherence, and improved handling of multimodal behavior. While they do not constitute a separate IL paradigm, they enhance existing ones: increasing expressivity in BC, improving reward recovery in IRL, and enabling stylistic diversity across frameworks.

\begin{figure}[h]
  \centering
  \includegraphics[width=0.75\textwidth]{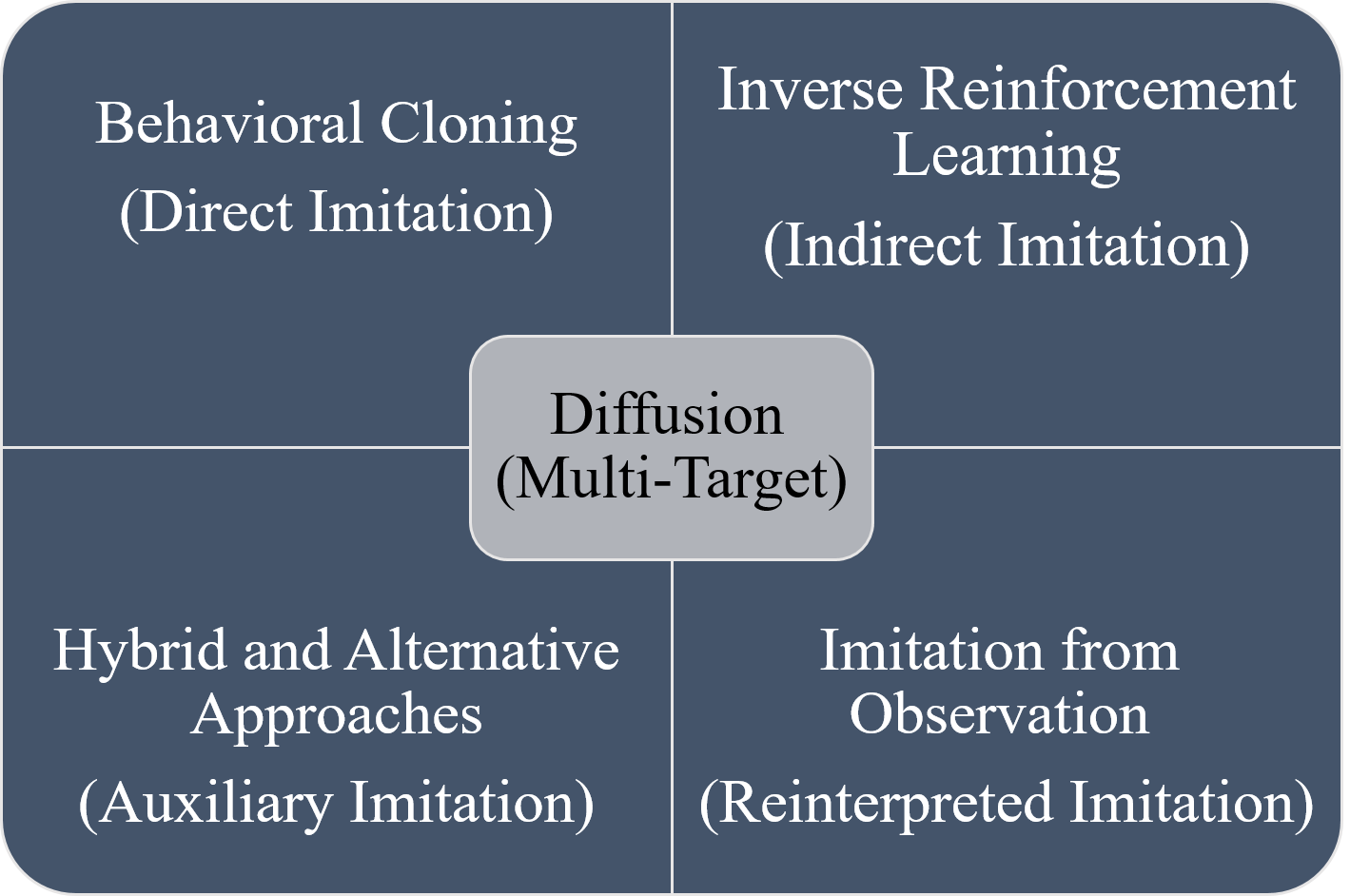}
  \caption[A Conceptual Taxonomy of Imitation Learning Paradigms]{
    A conceptual taxonomy of imitation learning paradigms, categorized by the agent’s relationship to expert demonstrations and the functional role of imitation. Diffusion-based modeling (center) serves as a cross-paradigm strategy, enabling multi-target generation, long-horizon consistency, and stylistic expressivity. Note: GAIL is primarily aligned with IRL despite its adversarial structure; some variants may fall under auxiliary imitation depending on usage.
  }
  \label{fig:il-taxonomy}
\end{figure}

In the following subsections, we examine each paradigm in greater detail—beginning with direct imitation via behavioral cloning.

\subsection{Behavioral Cloning}

Behavioral Cloning (BC) is the most direct and foundational approach within imitation learning. In this paradigm, the task of learning a policy is cast as a standard supervised learning problem~\citepx{pomerleau1988alvinn, sammut1992flying}. Given a dataset of expert demonstrations, the agent learns a policy $\pi: s \mapsto a$ that maps states $s$ to actions $a$ by minimizing a prediction loss—typically cross-entropy or mean squared error—between the expert's actions and the agent's predicted actions. This simple framework has been widely applied in domains such as robotics~\citepx{bojarski2016end}, gaming~\citepx{kanervisto2020benchmarking}, and autonomous driving.

Despite its simplicity and computational efficiency, BC suffers from a well-known issue called \textit{covariate shift} or \textit{compounding error}~\citepx{ross2010reduction}. Because the policy is only trained on expert trajectories, it has no experience with states resulting from its own errors. At test time, even small deviations can cause the agent to enter unfamiliar states, where prediction quality deteriorates, leading to further drift.

To mitigate this, interactive variants such as \textbf{DAgger} (Dataset Aggregation) have been proposed~\citepx{ross2010reduction}. In DAgger, the agent executes its own policy to collect new data and simultaneously queries the expert for corrective labels, thus gradually expanding the training distribution to include off-policy states. Other improvements include regularization-based methods~\citepx{brantley2020disagreement}, latent space perturbations~\citepx{chang2021mitigating}, and generative models that simulate alternative trajectories.

Recent advances also include \textbf{Implicit BC}~\citepx{florence2021implicit}, which avoids explicit action prediction. Instead, it models an energy-based scoring function over action candidates, assigning higher likelihood to actions consistent with the expert distribution. This enables richer modeling of multimodal or ambiguous behaviors.

Although BC in its pure form may be brittle in complex or long-horizon tasks, it remains a valuable foundation—often used in hybrid strategies that incorporate reward signals, environment interaction, or diffusion-based trajectory generation~\citepx{chen2023diffusion}. Related approaches such as \textbf{PWIL}~\citepx{pwil}, which use distributional similarity for imitation, lie closer to adversarial or inverse reinforcement learning and will be discussed in later sections.

\subsection{Inverse Reinforcement Learning (IRL) and Its Extensions}

Inverse Reinforcement Learning (IRL) aims to recover a reward function from expert demonstrations, under the assumption that expert behavior is (near-)optimal with respect to some unknown objective. In contrast to Behavioral Cloning, which directly learns a mapping from state to action, IRL attempts to infer the latent reward structure that best explains the observed behavior, and then uses reinforcement learning to derive a policy that optimizes it~\citepx{irl_implement, russell2020aima}.

This formulation is appealing in settings where reward design is difficult, or when capturing latent behavioral intent is more meaningful than mimicking surface-level actions. However, IRL is often ill-posed—many reward functions can explain the same behavior—and computationally intensive, as it typically involves solving an inner RL loop during reward inference. A number of IRL approaches have been developed to address these challenges, including maximum entropy formulations, Bayesian inference, and adversarial learning-based methods.

\paragraph{Maximum Entropy IRL and Deep IRL.}
To resolve the ambiguity inherent in IRL, \citetx{max_entropy} proposed Maximum Entropy IRL, which models the expert as a stochastic policy that maximizes entropy while remaining consistent with observed trajectories. This encourages more randomized yet plausible behaviors rather than overly deterministic ones.

Deep variants of MaxEnt IRL, such as those by \citetx{wulfmeier2015maxentdeep} and \citetx{finn2016guided} (Guided Cost Learning), extend this idea to high-dimensional domains by learning nonlinear reward functions with deep networks.

\paragraph{Bayesian IRL.}
Bayesian IRL (BIRL)~\citepx{ramachandran2007bayesian} places a prior distribution over reward functions and computes the posterior given expert behavior. For example, Gaussian Process IRL~\citepx{levine2011gpirl} represents the reward as a sample from a Gaussian process, offering flexible nonparametric modeling. While these approaches offer principled uncertainty handling, they tend to scale poorly due to repeated policy optimization during inference.

\paragraph{Adversarial IRL and Extensions.}
Generative Adversarial Imitation Learning (GAIL)~\citepx{gail} formulates imitation as a distribution matching problem, using a discriminator to distinguish expert from agent trajectories and providing reward-like feedback. AIRL~\citepx{fu2018airl} extends this by factoring the discriminator into reward and value components, enabling learned rewards to transfer across tasks. InfoGAIL~\citepx{info_gail} adds latent codes for disentangling stylistic factors during imitation.

Recently, diffusion-based extensions have enhanced adversarial imitation. DRAIL~\citepx{lai2024drail} replaces GAIL's discriminator with a diffusion classifier, producing smoother and more stable rewards. DiffAIL~\citepx{wang2024diffail} integrates an unconditional diffusion model into the discriminator loss, enabling long-horizon, multimodal imitation.

\paragraph{Uncertainty-Aware and Prior-Weighted IRL.}
PWIL (Primal Wasserstein Imitation Learning)~\citepx{pwil} sidesteps adversarial instability by directly matching state-action distributions using the Wasserstein distance. IC-GAIL~\citepx{wu2019imitation} introduces a confidence-weighted approach for handling suboptimal or noisy demonstrations. These works mark a trend toward robust, sample-efficient imitation under realistic constraints.

\subsubsection{Case Study: Using Demonstration to Design Playstyle Reward with Discrete State}
\label{sec:imitation_reward_racing}

While most IRL methods recover rewards through inverse optimization or adversarial training, reward stability and interpretability remain difficult. In this case study, drawn from a collaboration with Ubitus K.K., our goal was to develop human-like racing bots using limited gameplay footage—without access to structured internal variables such as velocity or collisions.

Early efforts with Behavioral Cloning and GAIL failed to generalize across hazards (e.g., water pits), leading to poor lap completion. To address this, we extracted visual in-game variables such as speed and checkpoint number using a digit-recognition pipeline. A baseline Ape-X agent~\citetx{thesis_of_Tsai_Cheng_Lun} trained with speed-based reward improved lap times but still drove inefficiently—zigzagging on straights and overshooting corners.

To enhance style alignment, we developed a symbolic demo-reward mechanism: (1) training a discrete encoder to cluster image frames into symbolic states; (2) extracting expert states and state-action pairs from high-quality demos; and (3) using these sets to assign rewards during DRL training. If an agent visits an expert state, it receives a reward $r_s$; if it also chooses the matching expert action, it receives $r_{sa} > r_s$.

This method, later patented~\citepx{lin2025us12233343b2}, parallels successor-feature imitation~\citepx{barreto2017successor} and enables interpretable, structure-aware learning. As shown in Table~\ref{tab:demo_reward_results}, the symbolic reward led to better lap times and more human-like paths:

\begin{table}[ht]
\centering
\caption[Ubitus Racing Game Performance]{Performance of DRL agents trained with different reward functions.}
\label{tab:demo_reward_results}
\begin{tabular}{lcc}
\toprule
\textbf{Method} & \textbf{Avg. Speed (km/h)} & \textbf{Avg. Lap Time (s)} \\
\midrule
All Human Players         & 148.1 & 41.9 \\
Professional Player       & 170.4 & 35.4 \\
\midrule
DRL – Speed Reward        & 160.5 & 37.4 \\
\textbf{DRL – 6 Demo Reward}     & \textbf{168.0} & \textbf{36.2} \\
DRL – Bad Demo Reward     & 120.0 & 52.0 \\
\textbf{DRL – Best Demo Reward}  & \textbf{169.0} & \textbf{35.5} \\
\bottomrule
\end{tabular}
\end{table}

\begin{sidewaysfigure}
  \centering
  \includegraphics[width=0.95\textheight]{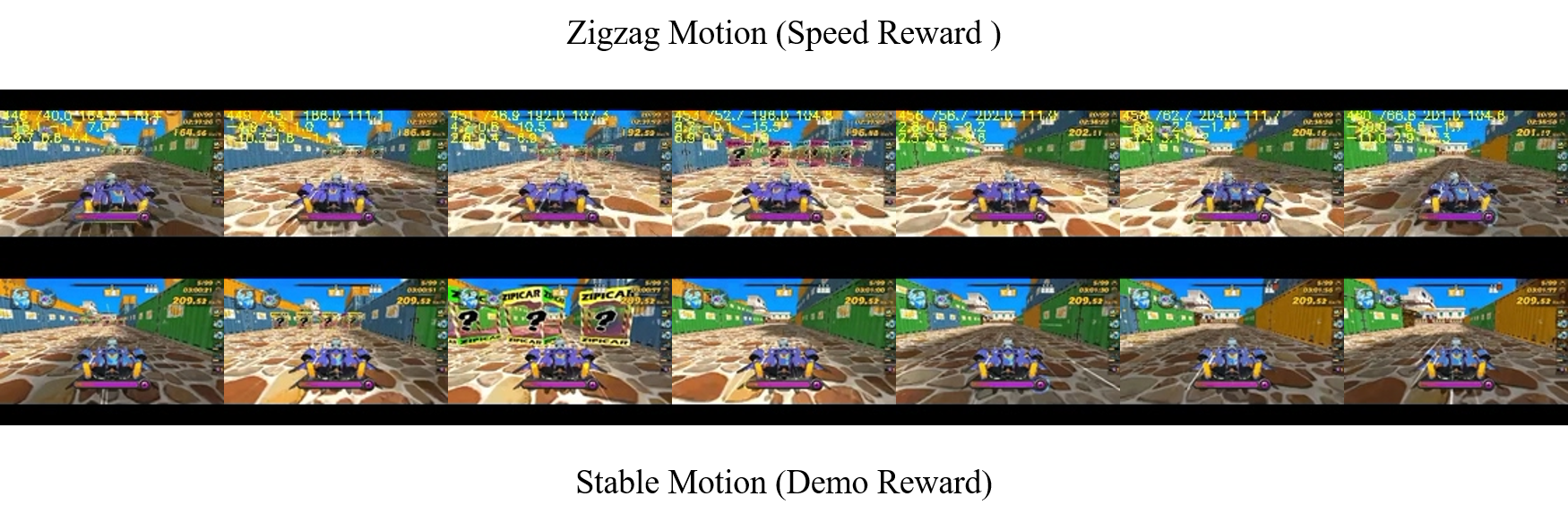}
  \vspace{0.5em}
  \includegraphics[width=0.95\textheight]{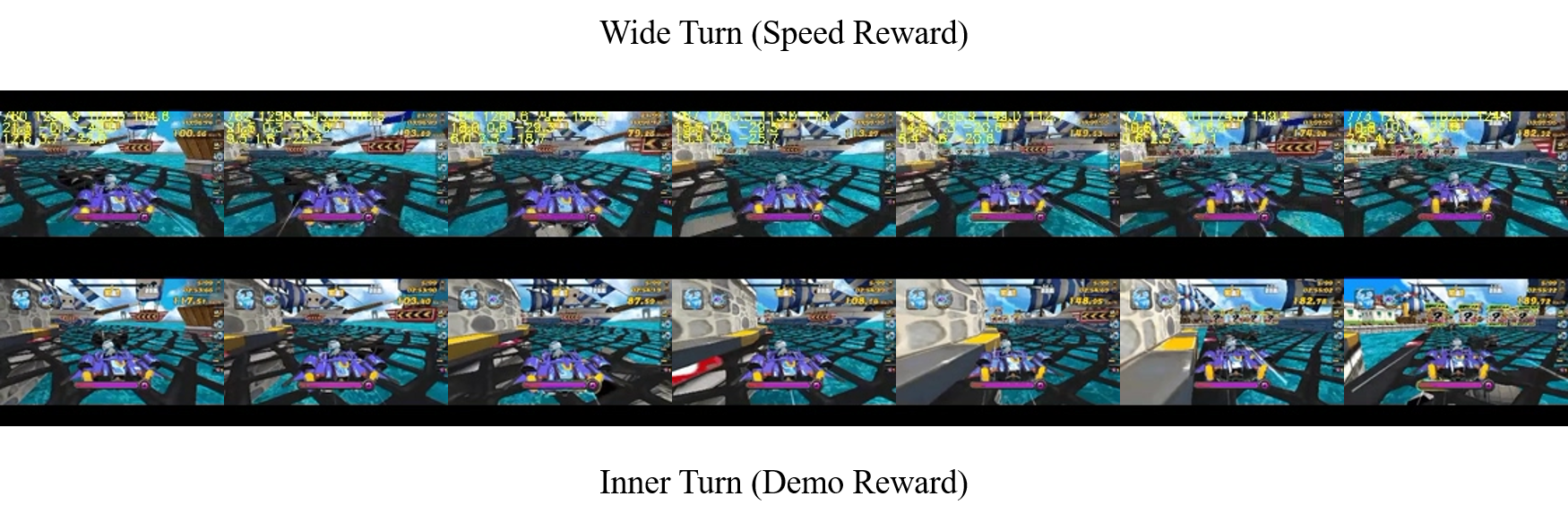}
  \caption[Human-Like Behaviors in Ubitus Racing Game]{
    Comparison of agent behavior trained with different reward functions. 
    \textbf{Top rows:} Speed-reward agents exhibit zigzag motion and wide turns. 
    \textbf{Bottom rows:} Demo-reward agents drive straight and take inner curves, reflecting human-like control.
  }
  \label{fig:demo_vs_speed}
\end{sidewaysfigure}

Even when trained on only two expert episodes, the symbolic-reward agent reached lap times rivaling professional players. The "bad demo reward" result—based on overly cautious players—confirmed that stylistic traits are learned, not just performance.

This method exemplifies an interpretable and low-data alternative to traditional IRL. It also bridges to \textbf{Imitation from Observation}, which we examine next.

\subsection{Imitation from Observation (IfO)}

In traditional imitation learning settings, expert demonstrations are typically available as sequences of state-action pairs. However, in many practical scenarios—such as YouTube gameplay videos, sports broadcasts, or third-person robot recordings—only observational data is accessible. This has motivated a growing body of research into \textit{Imitation from Observation} (IfO), where the agent must learn behavior solely from sequences of visual or symbolic states, without access to expert actions or environment rewards.

IfO presents unique challenges. Without action labels, it becomes difficult to directly regress policies or infer behavioral intent. Moreover, the agent must learn to generalize from the visual or perceptual properties of expert trajectories, often requiring additional structure, priors, or auxiliary models to bridge the observation-to-action gap. Despite these challenges, IfO has gained traction due to its wide applicability and natural alignment with real-world learning conditions.

Research in IfO spans several methodological directions, including:

\begin{itemize}
  \item \textbf{Behavioral Cloning from Observation (BCO):} Estimating expert actions via inverse dynamics models~\citepx{torabi2018bco}.
  \item \textbf{Temporal Alignment and Contrastive Learning:} Using time-based structures, such as Time-Contrastive Networks (TCN), to learn behaviorally consistent embeddings~\citepx{sermanet2018tcn}.
  \item \textbf{Distribution Matching:} Aligning state visitation frequencies between expert and learner using frameworks such as GAIfO~\citepx{torabi2019gaifo}.
  \item \textbf{Viewpoint-Invariant Representation Learning:} Bridging embodiment gaps via visual context translation~\citepx{liu2018context}.
\end{itemize}

A particularly promising direction integrates \textbf{observation pretraining} into the IfO pipeline. For instance, OpenAI's \textit{Video PreTraining (VPT)} framework~\citepx{baker2022vpt} used tens of thousands of hours of unlabeled YouTube Minecraft footage to pretrain an inverse dynamics model via self-supervision. A small number of labeled expert demonstrations were then used to fine-tune an imitation policy. Despite having no access to explicit reward functions or environment dynamics, the resulting agent achieved competent multi-skill behavior, including tool use and multi-stage construction.

This approach exemplifies how large-scale observational data can provide structural priors, which, when combined with limited expert guidance, support general and human-like behavior. It also demonstrates that the core challenge of IfO—mapping observations to purposeful control—can be softened by learning perceptual representations that implicitly capture style, progression, and intent.

In the broader context of playstyle modeling, IfO offers several distinct advantages: it accommodates learning from passive data, allows post-hoc analysis of real-world demonstrations, and can help discover diverse strategies by clustering or interpolating within the observation space. As such, it complements both IRL and BC frameworks, and forms a crucial pillar in the toolkit for learning behaviorally rich, human-aligned agents.

\subsection{Hybrid and Constraint-Regularized Methods}

Beyond the classical view of imitation learning as a primary objective—where the goal is to replicate expert behavior as faithfully as possible—modern AI systems increasingly adopt imitation in a supporting role. These methods, which we refer to as \textit{auxiliary imitation}, use demonstrations to inform, constrain, or initialize agents that are ultimately optimized for reward. Rather than serving as the end goal, imitation becomes a means of embedding structure and style into a broader learning pipeline.

\paragraph{Imitation as Structural Prior.}
In many cases, demonstration data is used not to define the optimization objective, but to shape the inductive bias of learning. One common approach is to pretrain a policy using imitation (e.g., via behavioral cloning), and then fine-tune it with reinforcement learning. This is particularly effective when exploration is costly or reward signals are sparse. Notably, this pattern aligns closely with recent trends in \textit{offline-to-online (O2O)} learning, where models are initialized using offline data and then adapted online. While not always framed as imitation, these methods inherently rely on behavior-derived priors to scaffold efficient learning.

Examples include IQL~\citepx{kostrikov2022iql} and CQL~\citepx{kumar2020cql}, which stabilize offline policy/value estimation, as well as frameworks like OLLIE~\citepx{xu2023ollie}, which integrate imitation pretraining with online fine-tuning. These pipelines demonstrate that imitation can serve as a bridge between static behavioral priors and dynamic task reward optimization.

\paragraph{Demonstration-Augmented Reinforcement Learning.}
Another strategy retains demonstration data throughout training, blending it with reinforcement signals. In Deep Q-learning from Demonstrations (DQfD)~\citepx{hester2018dqfd}, expert transitions are stored in the replay buffer and reinforced with a supervised margin loss. The distributed extension in Ape-X DQfD~\citepx{pohlen2018observe} scales this approach to the Atari benchmark, integrating transformed Bellman operators and removing episodic heuristics to enhance consistency. Here, demonstrations shape the learning process not just as initialization, but as persistent constraints on exploration and value updates.

\paragraph{Constraint-Regularized Policy Learning.}
In some frameworks, imitation is not explicitly performed, but its behavioral principles are embedded through constraints. For instance, Conservative Q-Learning (CQL)~\citepx{kumar2020cql} discourages out-of-distribution actions by penalizing high values for unseen behavior, effectively creating a soft imitation boundary. Other approaches incorporate entropy regularization or trust-region methods (e.g., PPO) to prevent policies from drifting too far from known-good behavior.

While these methods do not optimize imitation loss directly, they operationalize behavioral assumptions in a way that preserves expert-aligned playstyle without requiring exact reproduction. This is especially relevant in domains where stylistic alignment—such as consistency, elegance, or safety—is more important than literal action-matching.

More recently, a line of work has extended this idea from soft regularization to hard feasibility constraints. In \textbf{Action-Constrained Learning}~\citepx{yeh2025action,hung2025efficient}, the imitator is assumed to have a \textit{strictly smaller action space} than the expert. These methods address the resulting feasibility mismatch by aligning trajectories in state space—e.g., using dynamic time warping and model-predictive control—to generate surrogate demonstrations that respect the imitator's constraints. This reframes imitation not as copying actions, but as approximating behavioral intent within a limited capability set.

\paragraph{Imitation as Bias, Not Blueprint.}
This evolution from imitation as supervision to imitation as structure marks a philosophical shift. Rather than treating demonstrations as blueprints to be copied, auxiliary imitation treats them as \textit{preferences} to be respected. This opens the door to agents that learn not just what experts do, but how they make decisions—and how such decisions can be generalized, adapted, or even improved upon.

\paragraph{Implications for Playstyle.}
From a playstyle modeling perspective, auxiliary imitation enables fine control over behavioral traits without enforcing strict policy replication. It allows playstyle to be encoded in training bias, explored via hybrid learning dynamics, and refined through environmental feedback. Ultimately, this class of methods provides a flexible framework for learning agents that not only perform well, but perform with character.

\subsection{Diffusion-based Imitation Learning}

Most imitation learning (IL) methods, whether supervised, adversarial, or constraint-based, focus on learning a deterministic or unimodal policy that maps from observations to actions. However, expert behavior is often inherently multimodal—especially in games or embodied control tasks where multiple stylistic options exist for any given situation. To better capture this diversity, recent methods have explored the use of diffusion models, originally developed for generative modeling, to represent the conditional distribution over expert actions.

\paragraph{From Regression to Generative Modeling.}
Traditional behavior cloning minimizes prediction error between expert actions and predicted actions. While simple and effective, this paradigm suffers from mode averaging and poor coverage of diverse behaviors. Diffusion-based imitation learning instead reframes the policy as a generative denoising process: the agent learns to iteratively refine a sequence of noisy actions toward realistic ones, mimicking the dynamics of expert data without collapsing to a single mode.

\paragraph{Diffusion Models for Policy Learning.}
Several recent works have applied diffusion models to policy imitation. For example, DRAIL~\citepx{lai2024drail} integrates a diffusion generator into an adversarial imitation learning framework, replacing the traditional policy network or discriminator. Similarly, DiffAIL~\citepx{wang2024diffail} proposes a fully generative imitation learner using denoising diffusion probabilistic models (DDPMs) to map from state contexts to expert-like actions.

These models are particularly powerful in capturing stylistic or high-variance behaviors, such as slight differences in lane keeping, motion timing, or maneuver types. By modeling the full distribution of expert demonstrations, diffusion-based policies avoid the rigidity of deterministic models while preserving the richness of human-like play.

\paragraph{Applications to Playstyle.}
From the perspective of playstyle modeling, diffusion-based imitation opens new opportunities for representing and controlling stylistic variation. Because diffusion models are generative by design, they can be conditioned not only on state but also on latent variables representing style clusters, player identities, or task preferences. This enables agents to stochastically exhibit different playstyles—e.g., aggressive vs. conservative driving—without retraining the model.

Moreover, the denoising structure of diffusion models lends itself to interpolation and style transfer: given two demonstration sequences, it is possible to synthesize hybrid trajectories or explore novel behaviors in between. This makes diffusion-based IL an attractive candidate for studying playstyle generalization and creative behavior generation.

\paragraph{Current Limitations and Future Directions.}
While diffusion-based imitation offers expressive modeling of behavioral diversity, several practical limitations remain. First, inference involves iterative denoising, which can be computationally expensive and unsuitable for real-time decision-making. Ongoing work has explored accelerated variants (e.g., DDIM~\citepx{song2021ddim}) and trajectory-level sampling, but inference latency remains a concern in interactive systems.

Second, while these models capture behavioral variance, they are typically weakly conditioned and lack explicit structure for symbolic playstyle control. Bridging diffusion IL with interpretable representations—such as playstyle labels, preference embeddings, or discrete strategy tokens—is a promising direction for aligning generative imitation with downstream control and evaluation tasks.

Overall, diffusion-based imitation learning represents a novel and increasingly important direction in playstyle modeling. It extends the frontier of IL from copying behavior to \textit{expressing} behavior, enabling agents to exhibit multimodal, stylistically rich trajectories informed by diverse demonstrations.

\subsection{Challenges and Future Directions}

Across this section, we have surveyed a broad landscape of imitation learning techniques—from classical behavioral cloning and inverse reinforcement learning (IRL), to hybrid pipelines, auxiliary imitation, and diffusion-based generative models. Each line of work offers unique tools for modeling, guiding, or constraining agent behavior based on demonstrations. However, several open challenges remain in developing playstyle-aware imitation systems that are both expressive and robust.

\paragraph{Playstyle Diversity and Conditioning.}
While many methods succeed at matching expert performance, few can flexibly express multiple valid styles under the same task specification. Diffusion-based methods and latent-conditioned policies show promise in addressing this limitation, but require further integration with symbolic or controllable representations to allow fine-grained stylistic guidance.

\paragraph{Imitation beyond Action Matching.}
Imitation is often framed as matching state-action pairs, but true style modeling demands more abstract objectives: visiting similar states, maintaining tempo, or respecting gameplay conventions. Bridging this abstraction gap—possibly via feature matching, trajectory-level alignment, or successor-style representations—is critical for nuanced behavior learning.

\paragraph{Robustness and Demonstration Quality.}
Many IL systems are highly sensitive to the quality, optimality, and coverage of demonstration data. Handling noisy, suboptimal, or stylistically inconsistent examples remains an ongoing challenge. Approaches such as confidence-aware weighting, reward-free pretraining, or ensemble demonstrations may provide more robust alternatives.

\paragraph{From Demonstration to Behavior Constraints.}
Increasingly, imitation is used not to fully define agent policy, but to impose behavioral boundaries or preferences within a larger optimization process. This perspective opens new avenues for combining human guidance with reward-based learning, particularly in safety-critical or user-facing domains.

\paragraph{Toward Human-Like Behavior.}
Imitating human behavior is not merely about reproducing actions, but about aligning with biological, perceptual, and stylistic priors. However, regardless of the imitation method, determining what it truly means to be “human-like”—and how to measure it—remains a deeper and more foundational challenge. This sets the stage for our next discussion, where we examine human-likeness constraints in detail.

As a concrete example, we show how discrete state representation—initially used for symbolic imitation—can be repurposed to encourage biologically plausible motion. In a first-person-shooter (FPS) environment, we introduce a penalty for repeatedly visiting the same state in short time intervals, effectively discouraging jittery or oscillatory behavior. This example illustrates how playstyle-informed regularization can go beyond imitation to shape more human-like agents.

\section{Building Human-like Agents}

What does it mean for an artificial agent to exhibit human-like behavior? While imitation learning provides a direct answer—train agents to mimic human demonstrations—it raises deeper questions: Which humans should we imitate? Are all human behaviors equally valid targets? And can human-likeness be defined beyond mere imitation?

These questions reveal a central ambiguity in defining human-likeness. Inspired by Wittgenstein’s notion of \textit{family resemblance}~\citepx{wittgenstein1953philosophical}, we argue that there is no single set of features shared by all human behaviors, but rather overlapping clusters of similarity. Human-like behavior is not a well-defined category with clear boundaries—it is a fragmented landscape of patterns connected by resemblance rather than strict definitions.

\begin{figure}[ht]
\centering
\includegraphics[width=0.9\linewidth]{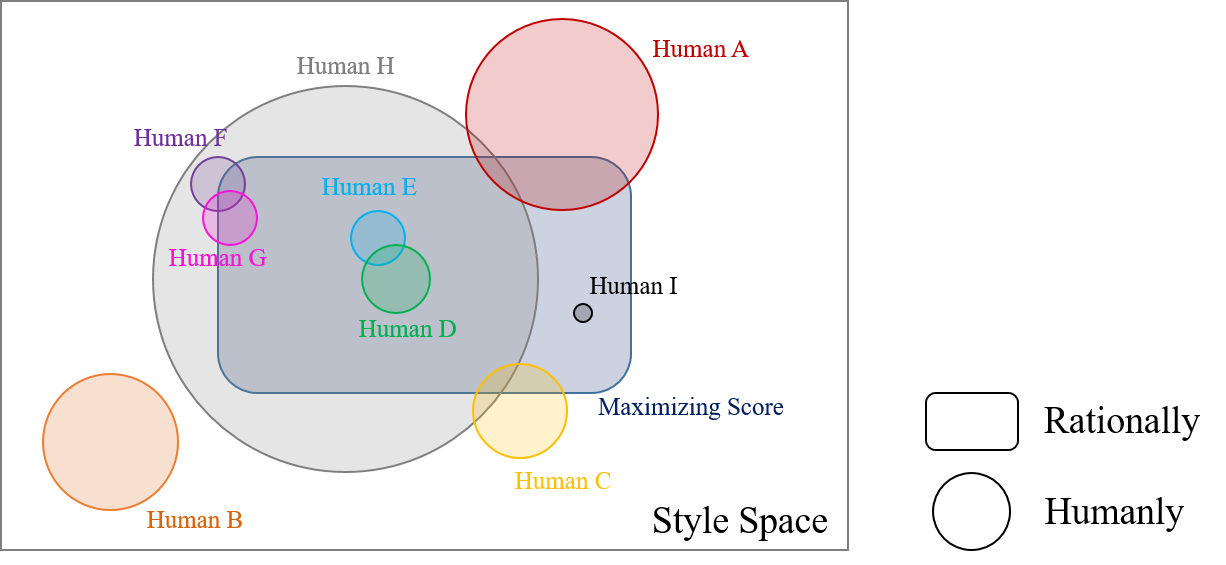}
\caption[Fragmentation of human-likeness in behavior space.]{
Fragmentation of human-likeness in behavior space.
Each colored circle represents a different human individual or behavioral definition that could serve as an imitation target.
Some lie within the rational behavior set (rounded rectangle), some intersect with maximized task performance (black dot), and others fall outside.
This illustrates that human-like behavior is not a unified or convex set, but rather a set of overlapping, partially conflicting regions.
}
\label{fig:human_likeness_fragmentation}
\end{figure}

As illustrated in Figure~\ref{fig:human_likeness_fragmentation}, the behavior space consists of distinct, often conflicting clusters that correspond to different individuals or stylistic definitions. Some behaviors fall within both rational and human-like domains, while others are idiosyncratic or inconsistent with performance goals. This makes it difficult to define a universally accepted positive criterion for “being human-like.” Imitating any single individual may fall short of generality—or even contradict other valid behaviors.

In contrast, identifying behaviors that are clearly \emph{not} human-like is often more tractable. This motivates what we call a \textbf{negative definition} of human-likeness: instead of trying to enumerate all acceptable behaviors, we begin by ruling out those that strongly violate human perceptual or embodiment intuitions. Agents that rapidly shake the camera, spin aimlessly, or repeat local movements without progress evoke immediate impressions of unnaturalness. These negative patterns offer a practical starting point—constructing the boundary of human-likeness by exclusion rather than exhaustive inclusion.

This motivates a \textbf{constraint-regularized perspective} on building human-like agents. Instead of relying solely on imitation or reward shaping, we introduce \textbf{symbolic constraints, behavioral penalties, and embodiment-aware priors} to suppress clearly unnatural behavior. These constraints delineate the “non-human” region, effectively guiding agents toward plausibility through exclusion rather than reproduction.

This strategy becomes particularly compelling when combined with \textbf{discrete symbolic state representations}. By tracking transitions at a symbolic level—independent of action semantics—we can detect problematic behavioral motifs such as local oscillations, redundant loops, or disorienting motion, which often trigger perceptions of robotic or inhuman behavior. Crucially, such symbolic constraints generalize across tasks and environments, even in the absence of environment-specific labels or continuous metrics.

In the remainder of this chapter, we explore how such constraints can be formalized and applied. We revisit our earlier racing game case study, where symbolic playstyle rewards enabled effective imitation from sparse demonstrations. We then present a new exploratory experiment in 3D first-person environments (DM Lab), where we use symbolic state loop detection to penalize unnatural behavioral repetition. Though entirely action-agnostic, this constraint consistently reduces unnatural motion and, in some cases, even improves task performance. Taken together, these results demonstrate how the negative definition can be operationalized: rather than fully specifying “what to be,” we systematically constrain “what not to be” to guide agents toward human-like plausibility.

\subsection{Three Foundations of Human-likeness}

To operationalize the concept of human-likeness in agent design, we identify three foundational sources that inform what it means for an agent to "act humanly." These sources—\textbf{biological constraints}, \textbf{psychological patterns}, and \textbf{social-cultural consensus}—each impose distinct yet complementary forms of regularity on human behavior. By analyzing and modeling these foundations, we move beyond surface imitation and toward generalizable, interpretable, and socially aligned behavior.

\begin{itemize}
    \item \textbf{From Physical: Biological Constraints} \\
    Human movement is governed by physical embodiment: joint limits, motor latency, visual field, energy efficiency, and biomechanical stability. Modeling these constraints helps agents avoid unnatural behaviors such as jitter, oscillation, or hyper-reactivity that break the illusion of human presence. These priors also inform motion smoothing, temporal consistency, and action feasibility in embodied AI.

    \item \textbf{From Mental: Psychological and Economic Insights} \\
    Human decisions are shaped not only by cognitive biases, bounded rationality, and behavioral regularities identified in psychology, but also by formal models from behavioral and decision economics. These fields share deep connections—especially in explaining how people act under uncertainty, incomplete information, or risk. Agents that mirror patterns such as loss aversion, satisficing, or reaction delays appear more relatable and realistic. Incorporating these heuristics—through mechanisms like noisy utility functions, probabilistic choice models, or attention bottlenecks—grounds AI behavior in human-like internal logic, even when goals differ.

    \item \textbf{From Interactions: Social and Cultural Consensus} \\
    In shared environments, human-likeness is judged not in isolation, but through social alignment. Agents are expected to follow stylistic norms of cooperation, turn-taking, etiquette, and contextual appropriateness. For instance, erratic or overly optimized actions may violate expectations in multiplayer games or co-creative tasks. Embedding agents in culturally-informed interaction priors—via demonstrations, language, or preference modeling—supports stylistic coherence across populations.
\end{itemize}

\subsection{From Physical: Biological Constraints}

One foundational perspective on human-likeness stems from biology: human perception and action are inherently bounded by physical limitations. These include natural constraints such as sensory resolution, motor smoothness, physical fatigue, and reaction delays~\citepx{biological_constrains}. Unlike artificial agents that can operate with superhuman speed, consistency, and precision, human behavior is shaped by physiological bottlenecks that constrain how decisions and actions unfold across time.

In video games and embodied environments, these constraints manifest visibly. Human players cannot move with perfect mechanical regularity; they exhibit jitter, reaction time variability, and fatigue after prolonged activity. They also cannot change camera perspectives instantaneously or react to pixel-level stimuli at high frequency. These limitations make human behavior not only imperfect but also smooth, predictable, and legible to others—a quality crucial for human-likeness.

Failing to incorporate such constraints often leads agents to develop behaviors that, while optimal under reward maximization, appear disturbingly artificial. For example, in first-person-view environments, reinforcement learning agents frequently exhibit rapid camera shaking, local spinning, or micro-adjustments that would be disorienting or fatiguing for human players~\citepx{shaking_spinning_cost}. These behaviors highlight a misalignment between task performance and perceived naturalness.

To address this, researchers have proposed embedding \emph{biological priors} into learning objectives. Some approaches directly impose smoothness regularization on control actions or camera angles~\citepx{caps, grad_caps}, while others penalize high-frequency action patterns. A more general strategy, avoiding action-specific heuristics, is to penalize short-term \emph{redundant state transitions}—based on the observation that humans rarely oscillate among a small set of near-identical visual states unless deliberately scanning or reacting.

We explored this idea in a series of studies. In a first-person-view collection game (Unity ML Agent’s Banana Collector, now called Food Collector)~\citepx{unity_ml_agent}, we defined behavior cost heuristics based on action sequences to suppress unnatural shaking and spinning, and we further investigated how to schedule penalty coefficients to balance performance and naturalness~\citepx{thesis_of_Luo_You_Ren, shaking_spinning_cost}.

More recently, we extended this principle into symbolic space using discrete state representations. Building on the symbolic reward approach in 3D racing (Section~\ref{sec:imitation_reward_racing}), we tested a general-purpose loop detection penalty in DeepMind Lab (DM Lab): when an agent re-enters a symbolic state it visited in the past 8 steps—excluding immediate self-loops—it receives a penalty. Unlike handcrafted motion heuristics, this method is task-agnostic, interpretable, and requires no semantic knowledge of the action space.

We trained PPO agents for 10 million game steps across 14 DM Lab games (excluding language-based and extreme-difficulty tasks), comparing three conditions: (1) vanilla PPO, (2) action heuristics detecting shaking/spinning, and (3) symbolic loop detection. Results are shown in Tables~\ref{tab:dmlab_human_like_comparison_behavior_cost} and~\ref{tab:dmlab_human_like_comparison_game_score}.

\begin{table}[ht]
\centering
\caption[DeepMind Lab Human-like Game Behavior Cost]{Behavior Cost comparison of PPO baseline and behavior-constrained variants on 14 DeepMind Lab games. Values report mean episode scores over 3 runs.}
\label{tab:dmlab_human_like_comparison_behavior_cost}
\begin{tabular}{|l|r|r|r|}
\toprule
\textbf{Game} & \textbf{PPO} & \textbf{Action Heuristic} & \textbf{Loop Detect} \\
\midrule
rooms\_collect\_good\_objects & 105.4 & 16.3 (-84.5\%) & 59.4 (-43.7\%) \\
rooms\_exploit\_deferred\_effects & 8.2 & 0.2 (-97.8\%) & 7.5 (-8.9\%) \\
rooms\_select\_nonmatching\_object & 35.5 & 0.9 (-97.6\%) & 33.0 (-7\%) \\
rooms\_watermaze & 201.3 & 13.4 (-93.4\%) & 60.5 (-70.0\%) \\
rooms\_keys\_doors\_puzzle & 65.6 & 1.1 (-98.4\%) & 17.0 (-74.0\%) \\
lasertag\_one\_opponent\_small & 384.0 & 0.1 (-99.9\%) & 123.5 (-67.8\%) \\
lasertag\_three\_opponents\_small & 285.1 & 0.7 (-99.8\%) & 117.5 (-58.8\%) \\
natlab\_fixed\_large\_map & 260.1 & 4.6 (-98.2\%) & 85.3 (-67.2\%) \\
natlab\_varying\_map\_regrowth & 217.7 & 1.3 (-99.4\%) & 68.6 (-68.5\%) \\
skymaze\_irreversible\_path\_hard & 109.3 & 0.1 (-99.9\%) & 1.1 (-99.0\%) \\
explore\_object\_locations\_small & 133.2 & 19.8 (-85.2\%) & 114.9 (-8.8\%) \\
explore\_obstructed\_goals\_small & 134.9 & 1.2 (-99.1\%) & 35.8 (-73.5\%) \\
explore\_goal\_locations\_small & 170.3 & 1.5 (-99.1\%) & 71.5 (-58.0\%) \\
explore\_object\_rewards\_few & 130.5 & 0.2 (-99.8\%) & 40.5 (-69.0\%) \\
\bottomrule
\end{tabular}
\end{table}

\begin{table}[ht]
\centering
\caption[DeepMind Lab Human-like Game Score]{Game Score comparison of PPO baseline and behavior-constrained variants on 14 DeepMind Lab games. Values report mean episode scores over 3 runs.}
\label{tab:dmlab_human_like_comparison_game_score}
\begin{tabular}{|l|r|r|r|}
\toprule
\textbf{Game} & \textbf{PPO} & \textbf{Action Heuristic} & \textbf{Loop Detect} \\
\midrule
rooms\_collect\_good\_objects & 6.1 & 7.4 ($\uparrow$) & 6.5 ($\uparrow$) \\
rooms\_exploit\_deferred\_effects & 9.6 & 9.7 ($\uparrow$) & 9.6 (-) \\
rooms\_select\_nonmatching\_object & 2.2 & 4.6 ($\uparrow$) & 5.0 ($\uparrow$) \\
rooms\_watermaze & 25.3 & 26.6 ($\uparrow$) & 27.0 ($\uparrow$) \\
rooms\_keys\_doors\_puzzle & 23.9 & 19.5 ($\downarrow$) & 30.0 ($\uparrow$) \\
lasertag\_one\_opponent\_small & 0.0 & -0.1 ($\downarrow$) & -0.1 ($\downarrow$) \\
lasertag\_three\_opponents\_small & 1.4 & 3.1 ($\uparrow$) & -0.2 ($\downarrow$) \\
natlab\_fixed\_large\_map & 12.2 & 11.4 ($\downarrow$) & 9.6 ($\downarrow$) \\
natlab\_varying\_map\_regrowth & 9.5 & 6.8 ($\downarrow$) & 7.3 ($\downarrow$) \\
skymaze\_irreversible\_path\_hard & 16.6 & 2.1 ($\downarrow$) & 0.0 ($\downarrow$) \\
explore\_object\_locations\_small & 34.0 & 31.8 ($\downarrow$) & 32.8 ($\downarrow$) \\
explore\_obstructed\_goals\_small & 29.8 & 21.6 ($\downarrow$) & 11.5 ($\downarrow$) \\
explore\_goal\_locations\_small & 93.6 & 39.1 ($\downarrow$) & 60.0 ($\downarrow$) \\
explore\_object\_rewards\_few & 1.8 & 1.6 ($\downarrow$) & 2.3 ($\uparrow$) \\
\bottomrule
\end{tabular}
\end{table}

These results confirm the potential of using discrete symbolic representations to define task-agnostic behavioral penalties. While handcrafted heuristics remain more precise in eliminating unnatural behavior, the symbolic loop detector offers broader generalization and comparable behavioral suppression. Most importantly, it reduces undesirable motion in an action-agnostic way—without any access to camera semantics or control mappings—making it a promising direction for scalable and interpretable human-likeness constraints in embodied agents.

In summary, biological constraints offer a powerful design principle for shaping agents toward human-likeness. Rather than relying on exhaustive demonstrations or manually tuned heuristics, these constraints derive from embodiment itself—encouraging behavior that is not only functionally effective but also physically plausible and socially acceptable.

\subsection{From Mental: Psychological and Economic Insights}

In addition to physical embodiment, human-like behavior is deeply shaped by mental processes—including subjective preferences, risk sensitivities, cognitive limitations, and behavioral biases. While biological constraints shape what humans \emph{can} do, psychological and behavioral economic factors help explain \emph{why} they act the way they do. These mental dimensions offer a complementary perspective on human-likeness, grounded not in physical realism but in decision-making regularities under uncertainty.

\paragraph{Risk Preferences as Behavioral Signatures.}
Human decision-making frequently reflects varying degrees of risk aversion or risk seeking, depending on both individual disposition and situational framing—topics long studied in behavioral economics and decision theory. In reinforcement learning, such variation can be modeled through distributional RL methods, most notably Implicit Quantile Networks (IQN)~\citepx{iqn}, which allow agents to sample from different quantiles of the return distribution, effectively modulating their behavior between pessimistic and optimistic risk profiles.

Although IQN was not explicitly designed for human-likeness, its richer risk sensitivity closed the remaining “human gap” across several Atari games, outperforming other methods in tasks where agents still lagged behind human performance. This suggests that incorporating diverse risk attitudes—whether cautious, aggressive, or opportunistic—can align agents more closely with human stylistic variability.

\paragraph{Cognitive Constraints and Bounded Rationality.}
Unlike artificial agents with vast processing capacity, humans are bounded by cognitive limitations. The concept of \emph{bounded rationality}, introduced by \citetx{simon1957models}, describes how humans “satisfice” rather than globally optimize, due to limits on time, attention, and computational resources. In economics, this principle helps explain why real-world decision-making often deviates from perfect rationality.

AI systems can replicate such constraints to appear more human-like: limiting planning depth, compressing memory representations, or simulating working memory bottlenecks. Hierarchical and meta-learning agents can further emulate structured reasoning over latent beliefs and subgoals, echoing both psychological and economic models of decision-making.

\paragraph{Behavioral Biases and Imperfect Rationality.}
Many systematic biases documented in psychology also play a central role in behavioral economics, as they reveal how human decision-making systematically deviates from the predictions of classical rational choice. From a behavioral economics perspective, notable examples include:
\begin{itemize}
  \item \textbf{Framing effects}: different presentations of the same problem alter choices;
  \item \textbf{Loss aversion}: losses loom larger than gains~\citepx{kahneman2011thinking};
  \item \textbf{Magnitude effect}: larger outcomes are discounted more steeply~\citepx{magnitude_effect};
  \item \textbf{FOMO (Fear of Missing Out)}: a bias visible in financial markets, where investors rush to buy during price surges for fear of missing gains.
\end{itemize}

Beyond economic decision-making, other psychological biases and perceptual distortions also shape human-like behavior:
\begin{itemize}
  \item \textbf{Weber–Fechner law}: sensitivity to change diminishes with stimulus magnitude~\citepx{weber_fechner_law};
  \item \textbf{Ebbinghaus illusion}: contextual perception alters judgments~\citepx{roberts2005roles}.
\end{itemize}

Such biases may appear suboptimal, yet they make agent behavior more relatable. An agent that hesitates, overreacts, or inconsistently follows through—much like a trader swayed by FOMO—can appear more human than one that acts with perfect rationality.

Our earlier work on playstyle similarity metrics incorporated \textbf{perceptual distance}—rooted in Weber-Fechner scaling and magnitude thresholds—to better align with human sensitivity to behavioral differences~\citepx{magnitude_effect, weber_fechner_law}.

\paragraph{Toward Cognitive Alignment.}
The imperfections of human reasoning complicate imitation. Human demonstrations are inconsistent and locally biased; optimality is rare. This is central to inverse reinforcement learning (IRL), where approaches like Maximum Entropy IRL~\citepx{max_entropy} model behavior as probabilistically rational, embracing ambiguity and variability as part of the human signal.

Taken together, these psychological and economic insights suggest that modeling cognitive traits—whether through risk-sensitive policies, bounded reasoning, or biased decision-making—offers a rich design space for shaping human-like behavior. In the next subsection, we turn to social and cultural dimensions, where norms and style expectations further influence how agents are perceived.

\subsection{From Interactions: Social and Cultural Consensus}

Beyond physical and psychological constraints, human-likeness is also shaped by the social and cultural contexts in which actions are interpreted. Human behavior is not merely biological or rational—it is deeply social, shaped by norms, conventions, and expectations that govern interaction. These norms emerge from shared practices: how people greet each other, take turns in conversation, resolve conflicts, or collaborate toward goals.

In the domain of AI agents, especially those embedded in games, virtual environments, or social simulations, modeling such socially legible behavior has become increasingly important. A prominent example is the “Generative Agents” framework~\citepx{park2023generative}, where large language models are augmented with planning, memory, and reflection components to simulate realistic town-dwelling characters. These agents demonstrate emergent social coordination, forming routines, inviting others to events, and maintaining coherent behavior across days—despite no explicit programming of such social structure.

These studies show that human-likeness is not only about what the agent does, but how it is perceived and understood by others. An agent’s behavior is evaluated not in isolation, but in context: whether it respects turn-taking, expresses intentions clearly, and adapts appropriately to the social setting. For this reason, cultural alignment becomes both a design challenge and a risk. Agents that mimic norms from one region or demographic may inadvertently violate the expectations of others. The more socially embedded an agent becomes, the more it must navigate fragmented and often conflicting definitions of “natural” behavior.

\paragraph{Cultural Foundations and Norm Codification.}
Throughout human history, social norms have been codified in texts ranging from legal codes to moral teachings. Ancient documents such as the \emph{Code of Hammurabi}, \emph{The Analects of Confucius}, and the \emph{Bible} offer vivid illustrations of how societies have formalized behavioral expectations, from retributive justice (“an eye for an eye”) to ritual etiquette, filial piety, and communal responsibility~\citepx{confucius2003analects,hammurabi1904code,bible1611kjv}. These works serve not only as records of cultural values but as early attempts to stabilize and transmit behavioral regularities across generations.

In modeling human-like agents, referencing such codifications reminds us that behavior is not judged solely by efficiency or coherence, but by its alignment with shared ideals. Just as Confucian thought emphasizes role-appropriate conduct (禮, \textit{li}) and harmonious social order, artificial agents may benefit from role-awareness and sensitivity to context. Similarly, the parables and prescriptions in the Bible encode moral framing into narrative action—suggesting that even minimal agents, when acting in socially embedded worlds, might require analogous narrative or ethical framing to sustain believability and trust.

\paragraph{Social Simulation and Interactive Learning.}
In addition to imitation-based social behavior, recent advances in interactive large language models (LLMs) offer new tools for capturing and generalizing social conventions. Systems such as SOTOPIA~\citepx{sotopia2024} and Stanford’s AI Town~\citepx{park2023generative} demonstrate that sociality can emerge not only from hardcoded logic, but from dialogic interaction, situated memory, and dynamic adaptation. These systems push the boundary of what it means to simulate “being human” in a multi-agent context—not through optimization, but through negotiation of meaning across agents.

\paragraph{Ethical Reflections on Human-like Interaction.}
As agents become increasingly capable of replicating human behavior in social settings, ethical concerns arise regarding the nature of such mimicry. Highly human-like interaction may blur the boundary between simulation and social presence, creating the illusion of emotional or moral agency. This is particularly problematic in sensitive applications such as education, therapy, or caregiving, where users might develop unwarranted trust or attachment~\citepx{shneiderman2022bridging}.

Moreover, efforts to culturally align agents can backfire when context-specific norms are misunderstood or generalized across inappropriate settings. Cultural appropriation, biased value transmission, or overfitting to dominant social behaviors may all compromise the intended inclusivity of such designs. Human-likeness, from a social perspective, is thus not a universally agreed-upon construct but one that must be negotiated with care across communities.

These concerns highlight the importance of deliberate boundaries. Not all systems need to appear human-like; in some cases, transparent artificiality may support better expectations, control, and accountability. The pursuit of human-likeness, when situated within social contexts, should be grounded not only in behavioral fidelity, but also in values of responsibility, transparency, and equity.

\subsection{On the Ethics and Boundaries of Human-likeness}

Across this section, we have examined human-likeness through three complementary lenses: 
\emph{physical embodiment} that grounds how an agent can act, 
\emph{mental models} shaped by risk preferences, cognitive limits, and behavioral biases, 
and \emph{cultural norms} that influence social interaction and stylistic expectations.
From these perspectives, a unifying conclusion emerges: 
human-likeness is not a single property to be maximized, but a fragmented, context-dependent construct defined by overlapping—and sometimes conflicting—criteria.

This resonates with Ludwig Wittgenstein’s notion of \emph{family resemblance}, in which categories such as “game” or “human” lack fixed boundaries yet remain connected by patterns of similarity~\citepx{wittgenstein1953philosophical}. 
One agent may appear human-like because it follows motor smoothness constraints, another due to risk-sensitive decision-making, and a third because it respects conversational etiquette. 
Each belongs to the “family” of human-like agents, even though no single feature defines them all.

The implication is both philosophical and practical. 
It is neither feasible nor necessarily desirable for an AI system to embody all dimensions of human-likeness at once. 
Human behavior is diverse, culturally situated, and value-laden; 
any attempt to optimize for a universal “acting humanly” risks privileging some styles while marginalizing others. 
Worse, it can produce manipulative designs that mimic human traits without respecting their moral or social grounding~\citepx{shneiderman2022bridging, cave2019scary}.

A more constructive approach is to clarify the \emph{boundaries}: 
What forms of behavior should be included or excluded in a given application? 
When does human-likeness serve the goal, and when is explicit artificiality preferable? 
Which norms are to be preserved, and which challenged? 
These questions move the discussion from technical imitation toward reflective alignment—where design decisions are guided not just by feasibility, but by purpose.

Finally, the space between human-like fidelity and creative divergence invites further exploration. 
As we transition to the next chapter on playstyle innovation, we ask: 
If not merely to copy human behavior, can agents develop novel, effective, and interpretable playstyles of their own? 
In this view, human-likeness is not an endpoint, but a launchpad—a starting point for dialogue between what is, and what could be.

\chapter{Playstyle Innovation and Creativity}
\noindent\textbf{Key Question of the Chapter:} \\
\textit{Can AI innovate or exhibit creativity?}

\noindent\textbf{Brief Answer:} 
Creativity typically requires both novelty and effectiveness.
If an AI’s goals do not incentivize novelty, creative behavior is unlikely to emerge.
But when exploration is part of its objectives—as in many reinforcement learning agents—creative outcomes can arise naturally.

\bigskip

As artificial intelligence systems grow more sophisticated, the question of whether machines can genuinely innovate or exhibit creativity remains both provocative and central to the future of intelligent behavior.  
In the realm of playstyle, innovation carries a distinct meaning: it is not merely about winning through novel strategies, but about exhibiting \textbf{behavioral modes that transcend existing stylistic taxonomies}—decision patterns that challenge or expand our understanding of what styles are possible.

This chapter examines how such innovation emerges in artificial agents, particularly within reinforcement learning and multi-agent self-play frameworks. We highlight canonical cases—such as AlphaGo’s famous Move~37 or AlphaStar’s unexpected Zerg wall-offs—as examples of emergent behaviors that defy human priors. Our goal is to assess whether such behaviors qualify as genuine \emph{playstyle innovation}, and to develop conceptual and computational criteria for distinguishing between random variation, recombination, and meaningful stylistic novelty.

Rather than speculating about human-like creative processes, we focus on observable outputs and measurable effects. In doing so, we connect the study of innovation to earlier discussions of style formalization, diversity, balance, and human-likeness. Ultimately, we argue that \textbf{playstyle innovation offers a concrete, domain-grounded lens through which AI creativity can be understood, evaluated, and designed}.

\section{Beyond Existing Playstyles}

In the study of style, the value of a behavior often comes from two sources.  
The first is \emph{alignment} with established mainstream styles: by conforming to recognizable patterns that have gained social acceptance, a new behavior inherits value from existing cultural or strategic norms.  
The second is \emph{innovation}—behaviors that deviate significantly from prior styles and establish new modes of action.  
Genuine innovation here is not a superficial tweak in appearance or tactics, but a transformation in goals, assumptions, or design principles that alters the very structure of the style space.  
Once recognized and validated—through competitive success, community interest, or aesthetic appeal—such behaviors may form new paradigms or dominate under specific conditions; others may fade under the evolutionary dynamics of the game ecosystem.

This form of innovation is often visible in playstyle-centric applications of deep reinforcement learning (DRL).  
Because DRL agents explore and optimize via reward signals, they can produce strategies not directly specified by designers.  
When models trained under different conditions display distinct behavioral distributions—especially over many trajectories—they may constitute new playstyles~\citepx{playstyle_similarity}.  
Yet such diversity is not always deemed “creative,” and AI behaviors are often dismissed as recombinations of known patterns rather than genuine innovations.

This skepticism stems from early supervised and unsupervised learning paradigms, where models mimic or generalize from pre-existing samples~\citepx{bengio2013representation, lecun2015deep}.  
Even when outputs depart from training data, they are often attributed to interpolation or brute-force search.  
To evaluate AI-driven novelty more rigorously, we can borrow categories from innovation theory~\citepx{schumpeter1934theory, chesbrough2003open, christensen1997innovators}:

\begin{itemize}
    \item \textbf{Open Innovation} — Incorporating external knowledge sources (e.g., human demonstrations, pretrained modules, symbolic priors) to bootstrap novel strategies through imitation, adaptation, or remixing.
    \item \textbf{Combinatorial Innovation} — Synthesizing known elements into more effective behavioral configurations, as when agents merge fragments of successful trajectories into new strategies.
    \item \textbf{Disruptive Innovation} — Developing strategies that initially seem suboptimal or risky but ultimately shift the meta, often discovered via deep exploration beyond current reward basins.
\end{itemize}

Multi-agent self-play systems offer prominent examples, training agents against continually adapting peers rather than static opponents.  
This co-evolution fosters behavioral diversity and strategic escalation.  
Pioneering systems such as AlphaGo~\citepx{alpha_go}, AlphaStar~\citepx{alpha_star}, and OpenAI Five~\citepx{openai_five} have produced strategies far beyond human priors: nonstandard openings, asymmetric role allocations, and high-risk coordination patterns.  
These emerge not from explicit design, but from internal pressures to outcompete adaptive adversaries~\citepx{balduzzi2019open, psro}.

In sum, playstyle innovation need not originate from human creativity.  
With open-ended exploration and adaptation, AI systems can cross—and sometimes redefine—the boundaries of existing strategic repertoires.  
Recognizing and valuing such innovations, however, requires analyzing both their novelty and their effectiveness—a focus of the next section.

\section{Effective Innovation}

Novelty alone does not guarantee value.  
In playstyle development, what matters is whether a new behavior achieves something that existing ones cannot.  
We call this \emph{effective innovation}—innovation that not only differs from the known, but contributes meaningfully within the system’s strategic and evaluative constraints.  
It fills gaps, creates new niches, or enables previously unviable behaviors to survive under updated dynamics.  
In short, it is innovation that endures.

While generating ideas is easy, producing those that are useful, testable, and robust is far harder.  
Most breakthroughs are not radical leaps, but forms of \emph{combinatorial innovation}: modifying, remixing, or recontextualizing existing elements.  
Although sometimes criticized for limited novelty, such approaches often excel in practical effectiveness, revealing that true innovation frequently lies in discovering underexplored configurations of the known.

\textbf{Effective innovation explores the boundary of effectiveness}: pushing beyond what is validated while remaining within what is viable.  
Cross too far into novelty, and coherence with system dynamics is lost; stay too close to the familiar, and relevance fades.  
Effective innovation lives in the liminal zone—just past the current frontier of good behaviors, yet still within the expanding domain of balance, interpretability, and usefulness.

Among practical methods, the \emph{matrix method}—identifying stylistic axes and systematically combining them—mirrors how deep reinforcement learning (DRL) agents discover surprising combinations of behavior primitives.  
A close parallel exists in \emph{Quality Diversity (QD)} algorithms~\citepx{pugh2016quality, cully2015robots, gravina2019quality}, which explicitly separate the search for \emph{novelty} from the requirement of \emph{quality}.  
By doing so, QD methods explore not a single optimum, but a manifold of high-performing solutions—exactly the frontier where effective innovation occurs.

In playstyle analysis, this perspective is especially powerful.  
Instead of converging on one dominant style, QD-based frameworks discover many viable yet distinct strategies, each adapted to a specific niche.  
This produces a richer behavioral repertoire and a principled way to ensure that innovation remains both diverse and grounded in effectiveness.

\paragraph{Evaluating Innovation Through Capacity and Popularity.}
From our Chapter~2 framework, effective playstyle innovation has two dimensions: \emph{capacity}—expanding what is possible, and \emph{popularity}—being recognized as desirable.  
Effectiveness is not intrinsic; it is socially and contextually constructed.  
A powerful strategy may fail if it violates expectations or is misunderstood, while a weak one may thrive through memetic spread or aesthetic appeal.

In interactive environments, “environmental rules” reflect the \textbf{preferences of their creators}—designers, developers, or curators.  
Even seemingly organic popularity is shaped by implicit authorial intention.  
Thus, innovation becomes effective only when it is recognized by an evaluative system—be it a reward function, a player community, or a cultural ecosystem.

Ultimately, playstyle is a dialogue between internal decision-making capacity and external perception.  
Innovation is the act of adding something new to that dialogue—and having it heard, understood, and accepted.

\section{Can AIs have Creativity?}

Having discussed innovation in depth, we now return to the central question: \emph{Can artificial intelligence exhibit creativity?} From the perspective of the playstyle framework proposed in this dissertation, this question can be recast in terms of \textbf{boundary traversal}. Specifically, creativity involves two distinct forms of exploration:

\begin{enumerate}
    \item \textbf{Crossing the boundary of existing styles} — generating behaviors that do not conform to any known playstyle categories;
    \item \textbf{Remaining within the boundary of effectiveness} — ensuring that these new behaviors are still viable, meaningful, and strategically sound.
\end{enumerate}

In this formulation, creativity emerges when an agent produces behaviors that lie \emph{outside} the familiar stylistic space, yet still \emph{inside} the domain of what works. This, we argue, constitutes a valid form of creativity. Notably, deep reinforcement learning (DRL) agents often exhibit precisely this pattern: through continual interaction and self-competition, they incrementally extend the frontier of known styles, pushing the limits of viability while remaining anchored in outcome-driven feedback.

Several landmark cases illustrate this boundary traversal. AlphaGo’s famous Move~37~\citepx{alpha_go} broke centuries of professional Go intuition, revealing a strategically sound yet stylistically alien move. AlphaStar’s unexpected Zerg wall-offs~\citepx{alpha_star} redefined early-game defense in \textit{StarCraft~II}. In OpenAI’s hide-and-seek environment~\citepx{hide_and_seek}, agents learned to exploit movable objects as improvised tools and barriers—behaviors never explicitly coded by designers. Similarly, OpenAI Five’s coordinated five-hero flank attacks~\citepx{openai_five} in \textit{Dota~2} showcased risk-heavy, asymmetric aggression that disrupted standard competitive metas. These strategies were not pre-programmed but emerged from continual self-play and adaptation, illustrating how exploration under competitive pressure can yield genuinely novel playstyles.

Traditional skepticism toward AI creativity stems from supervised and unsupervised learning paradigms. These approaches are built to interpolate within existing data or reconstruct observed patterns, inherently limiting them to known style spaces. From this view, true creativity—defined as behavior beyond recombination—seems elusive.

However, recent advances challenge this framing. In particular, generative adversarial models (GANs) offer a compelling counterexample. Unlike models that learn to mimic data directly, GANs create a tension between a generator and a discriminator, forcing exploration of novel and unobserved inputs. Conceptually, this adversarial setup suggests that the \emph{style boundary} (novelty) and \emph{validity boundary} (effectiveness) can be decoupled—allowing for a structured path toward creative expression.

This idea is further strengthened by preference-based reinforcement learning (PbRL)~\citepx{wirth2017survey}, where agent behavior is shaped not solely by environment rewards, but by human preferences—effectively redefining the boundary of what counts as "good" behavior. When paired with playstyle metrics or latent representations, PbRL may allow us to \emph{guide} agents toward stylistically novel but functionally grounded solutions—the dual hallmark of creativity.

Viewed through this lens, creativity in agents becomes not just a technical feat, but an emergent signal of \textbf{agent subjectivity}—the capacity to express preferences, variation, or aesthetic modes beyond pure utility maximization. This is not subjectivity in a conscious sense, but in an operational sense: agents begin to act with \emph{character}, rather than mere competence.

Large language models (LLMs) offer a related insight. Trained on diverse textual data and fine-tuned with reinforcement learning from human feedback (RLHF)~\citepx{rlhf}, LLMs learn to generate content that is consistently plausible, informative, and socially appropriate. Yet even here, creativity emerges not from memorization, but from \emph{boundary walking}—combining familiar elements in unfamiliar ways while adhering to coherence and intent.

This leads us to a final synthesis. Creativity in AI does not require consciousness, insight, or even novelty for its own sake. Rather, it requires agents to generate \textbf{new, coherent playstyles} that transcend prior distributions while still being recognizable, usable, or effective. As our ability to quantify and guide such playstyle diversity improves, we may begin to witness not only creative outputs, but creative \emph{agents}—not merely tools of optimization, but co-creators of possibility.

\newpage

\part{Applications and Future}
\chapter{Industry Applications}
\noindent\textbf{Key Question of the Chapter:} \\
\textit{How is playstyle used in real-world contexts?}

\noindent\textbf{Brief Answer:}  
Games are a natural domain for applying playstyle, as they cater to diverse player preferences.  
Current applications mainly involve modeling \emph{human} playstyles to improve player experience or inform systems like recommendation engines.  
Direct use of AI agents with their own expressive playstyles is still rare in mature games, but large language models already show clear stylistic behaviors in real-world use.

\bigskip

Having explored how playstyle can be defined, measured, and expressed—along with methods for fostering diversity and innovation—we now turn to its practical relevance: \textbf{How is playstyle manifested, leveraged, or even transformed in real-world systems?}

Playstyle manifests across a wide spectrum of decision-making domains, from digital entertainment and interactive platforms to recommendation engines, software development workflows, and AI evaluation pipelines. In some cases, it is explicitly modeled and deployed—as in games that track and adapt to player behavior, or language models that respond with distinct \emph{coding styles} or persona-driven dialogue. In others, it operates implicitly, shaping preferences, cultural norms, and system dynamics without formal recognition.

In this chapter, we examine three major domains of application:

\begin{enumerate}
    \item \textbf{The video game industry}, where playstyle is both a design consideration and a commercial differentiator—affecting gameplay, character design, player retention, and even how styles can evolve or switch during play.
    \item \textbf{The surrounding player ecosystems}, including streamers, influencers, and communities, where playstyle becomes performative, social, and memetic—contributing to identity formation, audience engagement, and cross-domain style transfer.
    \item \textbf{Broader cross-domain applications}, where the concept of playstyle extends into non-game contexts such as education, media personalization, and human-AI interaction—supporting adaptive systems that respect user intent, cultural context, and stylistic individuality.
\end{enumerate}

By tracing these applications, we show that playstyle is not merely a theoretical construct, but a powerful bridge between agent behavior and human experience—one that carries commercial, social, and ethical implications when deployed at scale.

\section{Video Game Industry}

The video game industry is one of the most expansive and influential digital domains today. With over 3 billion global players and a total market value exceeding \$200 billion, it encompasses console, PC, mobile, and cloud gaming sectors alike \citepx{gaming_industry_report_2025}. Mobile gaming alone is projected to generate over \$103 billion by 2025—more than half of the global market—underscoring the ubiquity of games across devices and demographics. Within this ecosystem, playstyle is not merely a peripheral trait—it is increasingly recognized as a foundational axis that informs game design, content personalization, AI modeling, and monetization strategies.

\section{Video Game Industry}

The video game industry is one of the most expansive and influential digital domains today. With over 3 billion global players and a total market value around \$200 billion, it encompasses console, PC, mobile, and cloud gaming sectors alike \citepx{gaming_industry_report_2025}. Mobile gaming alone is projected to generate over \$103 billion by 2025—more than half of the global market—underscoring the ubiquity of games across devices and demographics. Within this ecosystem, playstyle is not merely a peripheral trait—it is increasingly recognized as a foundational axis that informs game design, content personalization, AI modeling, and monetization strategies.

\subsection{A Stage for Strategic Diversity}

Games inherently serve as arenas for playstyle expression. In genres such as MOBAs (\textit{League of Legends}, \textit{Honor of Kings}), card battlers (\textit{Hearthstone}, \textit{Legends of Runeterra}, \textit{Sanguosha}~\textit{(三國殺)}), and auto-battlers (\textit{Teamfight Tactics}), players enact diverse strategies that align with distinct behavioral archetypes: aggressive rushing, defensive scaling, bluffing, control-oriented zoning, or risk-heavy tempo plays. Even in more casual puzzle or platform games, variations in route choice, timing, and reaction style reflect playstyle diversity.

Roguelike and roguelite games provide particularly fertile ground for observing playstyle in action. Titles such as \textit{Hades}, \textit{Slay the Spire}, \textit{Dead Cells}, and \textit{Into the Breach} are built around procedural variation and strategic experimentation. Each run invites the player to explore different decision pathways—e.g., prioritizing damage over defense, or maximizing control effects—and over time, reveals stylistic preferences that persist across randomness. These games exemplify how playstyle can emerge, evolve, and be rewarded.

\subsection{Stylized AI and Agent Design}

While human player modeling has been a longstanding focus, the application of playstyle concepts to \textbf{AI agent design} is an emerging frontier with significant commercial and experiential implications. Traditionally, game AIs were tuned through difficulty scaling—modifying hit points, damage values, or reaction speed. While effective for creating challenge, this often led to \textbf{predictable and lifeless behaviors} that failed to simulate real opponents or provoke genuine adaptive play.

Integrating playstyle into AI design enables a deeper layer of agent diversity. By assigning different decision-making traits or tactical priorities to AI bots, developers can create enemies or teammates that embody recognizable styles. One agent might favor aggressive early engagement, while another adopts a wait-and-counter posture. These style variants can be handcrafted (e.g., behavior trees, state machines) or learned through reinforcement learning with style regularization or behavior cloning from human demonstrations.

Moreover, style need not be fixed at training time. Conditional style control—whether through meta-learning, context variables, or even prompt-based conditioning for LLM-powered agents—can enable \textbf{test-time style switching}. This allows a single model to dynamically shift between styles without retraining, increasing adaptability and reducing the need for maintaining many separate style-specific models.

The benefits of stylized AI are multifold:
\begin{itemize}
	\item \textbf{Replayability}: Even at the same nominal difficulty, stylistic variation produces qualitatively different encounters.
	\item \textbf{Engagement}: Players are forced to identify and adapt to strategic cues, deepening immersion.
	\item \textbf{Human-likeness}: Diverse agent behavior increases the perceived realism of NPCs and reduces predictability.
	\item \textbf{Fairness and personalization}: By matching AI behavior to the player's own style (for contrast or compatibility), the game can generate more satisfying challenges.
\end{itemize}

Recent advances allow even complex AI behaviors to be tuned toward stylistic goals. For instance, deep reinforcement learning agents can be constrained through reward shaping, latent style vectors, or adversarial training to reproduce specific playstyle traits. Imitation learning, too, can be used to capture expert player styles for replication, training, or even coaching.

In commercial games, such stylized agents could act as surrogate teammates, procedurally generated rivals, tutorial coaches, or difficulty scalers. They might even represent a player’s own "ghost" in asynchronous multiplayer systems—training, adapting, and evolving in parallel. As AI-generated agents become more stylized and persistent, they may not merely simulate players, but increasingly operate as autonomous \textbf{stylistic agents}—entities whose behavior is interpreted not just functionally, but expressively.

\subsection{Playstyle and Game Balance}

Playstyle analysis also plays a central role in \textbf{game balance} assessment. Balance is not simply about ensuring similar win rates across characters or units, but about enabling the coexistence of \textbf{multiple viable strategies}. If a patch pushes the game into a single dominant meta, it suppresses style diversity, discourages experimentation, and shortens player retention.

Importantly, even if designers have a \emph{preferred} or \emph{intended} playstyle, diversity remains relevant as long as players perceive alternative styles as valuable. For instance, cooperative games may encourage teamwork, yet some players derive enjoyment from disruption or sabotage—behaviors that, while unintended, form part of the broader style landscape.

Player–operator dynamics can also be adversarial. Players may prefer stability in strategy viability to preserve the value of past investments, whereas operators sometimes introduce balance-breaking content for revenue (e.g., selling overpowered characters). Conversely, in purely competitive contexts, operators may intentionally nerf overdominant strategies to maintain fairness, even if those strategies were previously used to attract players.

This is particularly critical in \textbf{pay-to-win (P2W)} and gacha-based games, where financial incentives are deeply entangled with balance perceptions. In East Asian markets, many Three Kingdoms-themed games (e.g., \textit{三國志戰略版}, \textit{Sanguosha}) feature heavy monetization systems in which players pay to draw powerful characters. The viability of different playstyles becomes a litmus test of value: whales must believe that their investments yield not just brute strength, but \textbf{strategic leverage}. If only the latest paid characters are viable, and older characters or alternative strategies fall behind, prior investments are invalidated.

The online version of \textit{Sanguosha} exemplifies poor balance perception. Players have long criticized the dominance of certain paywalled characters, while iconic generals like Guan Yu are underpowered. This mismatch between cultural expectations and mechanical effectiveness erodes trust, especially when obscure characters outperform historical legends—a violation of what we might call the game’s internal \textit{belief consistency}, the expectation that mechanical effectiveness should align with cultural or narrative significance.

In contrast, games like \textit{Dota 2}, \textit{TFT}, or \textit{Legends of Runeterra} track win rates alongside meta diversity, adjusting balance not only to eliminate overpowered options, but to \textbf{restore viable archetype diversity}. The ability to monitor playstyle clusters and their evolution over time enables data-informed balancing decisions that preserve long-term engagement.

\subsection{Content Generation and Personalization}

Beyond AI and balance, playstyle modeling unlocks \textbf{procedural personalization}. When a system understands how a player approaches a game—whether patiently and defensively, or recklessly and aggressively—it can adapt in-game content to match or challenge that tendency.

Such adaptation can be \emph{progressive}. In many games, style capacity is deliberately gated by design: early stages or tutorials simplify mechanics to help new players learn, and later stages gradually unlock more complex options, allowing style diversity to emerge alongside player mastery. This mirrors real-world player learning curves—first mastering the basics, then experimenting with richer stylistic repertoires.

Imagine a game that modifies enemy types, mission pacing, or even dialogue tone based on a player’s behavioral profile. This goes far beyond skin reskins or stat tuning—it entails \textbf{systemic adaptation of gameplay dynamics}. Such ideas are already explored in roguelike structures, but could be extended to RPGs, shooters, and open-world experiences.

Even in competitive games, playstyle recognition can power adaptive coaching, style-based matchmaking, or targeted content recommendations (e.g., suggesting new characters, builds, or challenges that contrast with the player's habits). Playstyle modeling thus forms a natural extension of personalized UX design in interactive systems.

\subsection{Industry Case Studies}

Several leading developers have begun integrating playstyle-informed systems into live games:

\begin{itemize}
	\item \textbf{Riot Games} has enhanced its \textit{League of Legends} matchmaking by incorporating not just skill (MMR), but \textbf{behavioral archetypes}. According to a Gamasutra report, their system seeks to match players with compatible pacing and play tendencies, creating fairer and more cohesive team experiences.
	\item \textbf{Ubisoft}, through its La Forge R\&D initiative, has explored player segmentation in \textit{Rainbow Six: Siege}, analyzing tactical consistency and engagement style. These insights support dynamic tutorial delivery, pacing adjustment, and even toxicity prediction.
	\item \textbf{Activision} has implemented what some describe as "engagement-optimized matchmaking" in \textit{Call of Duty}, where style, recent outcomes, and pacing preferences may be used to tune the probability of retention across sessions—though this approach has raised ethical concerns about transparency and fairness.
\end{itemize}

These examples highlight that playstyle is already being used, implicitly or explicitly, as a design and business tool. As modeling methods mature, its application is likely to expand into AI content generation, long-term player modeling, and hybrid human-AI ecosystems.

In sum, playstyle in the video game industry is not a peripheral concern—it is central to gameplay design, AI agent construction, strategic balance, and content personalization. From high-spending strategy games to procedurally generated roguelikes, from stylized AI opponents to behavioral matchmaking, playstyle defines how modern games are played, felt, and monetized.

Its influence, however, is not confined to games alone. In the next sections, we explore how these same principles are reshaped and recontextualized in player communities and broader cross-domain applications.

\section{Player Communities and Game Streaming}

Games are not only interactive experiences but also a form of mass entertainment that extends well beyond the active player base. As live streaming and video platforms have grown in prominence, many people now engage with games not by playing them directly, but by watching others play---whether through gameplay videos, commentary, or livestreams. These public performances transform gameplay into spectacle, and playstyle becomes a performative trait that attracts and retains viewership.

Spectators often care not just about \emph{what} is played, but \emph{how} it is played. The distinctiveness of a streamer's playstyle---be it aggressive, methodical, unpredictable, or humorous---becomes a central aspect of their persona. In this sense, game streaming platforms serve as theaters for playstyle exhibition, where stylistic expression becomes a form of branding, identity, and entertainment. The visibility of playstyle through streaming has also influenced community discourse, fan content, and even how developers consider style diversity when balancing games.

Importantly, the influence of playstyle extends beyond real-time performance to include asynchronous and text-based community spaces. Online game forums, Reddit communities, Discord servers, and fan wikis often center their discussions around style-related topics: which strategies are viable, which characters or builds support a certain playstyle, and how balance changes should support or suppress emerging styles. Community-created tier lists frequently reflect not just win rates but the perceived stylistic appeal, accessibility, or complexity of certain options. Moreover, these discussions are often shaped by \textbf{cultural and regional factors}. For example, East Asian player communities may emphasize precision control and patient buildup, whereas Western audiences may celebrate high-tempo, high-risk maneuvers. These cultural alignments influence which styles gain prestige, which are imitated, and which are rejected---adding a layer of socio-cultural context to playstyle evolution.

In competitive games with deep strategic complexity---such as \textit{Dota 2}, \textit{TFT}, \textit{Legends of Runeterra}, or \textit{Sanguosha (三國殺)} and its variants---players actively debate not only which builds are strong, but which express the most "skillful" or "elegant" play. Some styles gain social prestige, while others are seen as cheap or unoriginal. Developers can and do monitor these style-oriented discourses to detect emergent metas, understand frustration points, or identify influential content creators who pioneer new ways to play.

Indeed, the community layer has become a powerful engine of style innovation. Certain streamers or forum members may popularize off-meta strategies that gain traction due to their novelty, coherence, or effectiveness. Recognizing this, developers can intervene by amplifying, celebrating, or formalizing these styles into official content---such as featured decks, challenge modes, or balance patches aimed at preserving diversity. In this way, playstyle becomes not only an object of analysis but a unit of communication, negotiation, and even co-design between players and developers.

\subsection{Stream-Based Training of Playstyle Bots}

From a technical standpoint, game streaming opens new opportunities for AI development. In collaboration with Ubitus, our research demonstrated that it is possible to train high-performing racing-game agents using only streamed video and input signals, without access to internal game data or APIs. These agents are trained end-to-end by observing rendered gameplay and corresponding controller actions. More significantly, different human playstyle demonstrations can be used to define reward signals, leading to stylistically distinct AI agents, some capable of exceeding top human players in performance while exhibiting human-like driving behavior, such as smooth cornering and risk-aware acceleration.

Our patented system for training AI bots via streaming signals introduces a novel architecture for end-to-end imitation and reinforcement learning \citepx{thesis_of_Tsai_Cheng_Lun, lin2022us11253783b2, lin2024twI858006, lin2025us12233343b2}. Instead of accessing internal game state, the system captures \textbf{raw video frames} (as a streamed image sequence) and \textbf{input actions} (such as keyboard or controller signals) directly from a remote gaming session. These inputs are fed into a training module that includes both actor modules and a centralized learner.

The core process involves:
\begin{enumerate}
    \item Generating a playable environment on a remote server;
    \item Connecting AI clients to the environment through streaming protocols;
    \item Receiving visual output as video frames and control signals;
    \item Using one or more actor modules to produce agent behaviors;
    \item Feeding these into a learner that defines reward either via supervised signals or goal-driven optimization.
\end{enumerate}

Crucially, the learner can be configured to \textbf{mimic the playstyle} of a particular human demonstration, or optimize toward a defined performance trait (e.g., maximizing average velocity in a racing game). Our experiments showed that the system could (i) achieve superhuman performance in specific tasks, (ii) maintain recognizable stylistic fidelity to the demonstration, and (iii) exhibit \textbf{human-like micro-behaviors}, such as corner-braking rhythms, route preference, or lane selection.

This architecture enables a new paradigm in style-based agent creation without requiring game source code or developer cooperation. It has implications for training AI substitutes or coaches based on player playstyle; generating replayable bots for competitive analysis; creating stylized NPCs for persistent game worlds; and performing \textbf{style transfer across similar domains}---for instance, migrating an aggressive shooter playstyle into a new map-testing environment to evaluate whether level design supports that style.

\subsection{AI VTubers and Interactive Personalities}

An increasingly visible development in AI-powered entertainment is the rise of AI VTubers---virtual streamer personas built atop large language models and generative systems. These agents engage in real-time interaction not only with game environments but also with live audiences, constructing evolving personalities that blend gameplay, dialogue, and character performance. In contrast to traditional chatbots or NPCs, AI VTubers operate as continuous social agents, situated at the intersection of interactive media, community culture, and artificial personhood.

Among the most prominent examples is Neuro-sama, developed by Vedal987, along with her narrative foil, Evil Neuro-sama \citepx{neuro_sama}. Though they share technical lineage, their characters have been differentiated across multiple expressive layers---including appearance, speech-synthesis settings, and response behavior. While Neuro-sama was initially presented as polite and cooperative and Evil Neuro-sama as confrontational or sarcastic, their personalities have developed far beyond this dichotomy.

Neuro-sama now exhibits a blend of playful unpredictability and sudden sincerity: her conversations can oscillate between wholesome affirmations and surreal, deadpan humor. Evil Neuro-sama, meanwhile, despite a colder exterior, occasionally surprises audiences with moments of emotional warmth or philosophical reflection. Their behaviors are not rigid templates but stylistic trajectories that evolve through ongoing prompt tuning, reinforcement learning, and---critically---community reception.

Rather than embodying clear-cut archetypes, these AI personalities have come to reflect richer forms of performative style. Viewers no longer merely observe what they say, but how they say it, how consistently they respond across scenarios, and how their tone or pacing constructs a particular "presence." The fan community has responded accordingly: discussions and memes increasingly treat them not as mere characters, but as stylistic identities---shaped jointly by system design and emergent behavior.

In this sense, AI VTubers serve as experimental ground for playstyle embodiment in multimodal interaction. Their style is enacted not only through in-game behavior but across linguistic rhythm, affective tone, conflict response, and audience engagement. This makes them ideal testbeds for studying AI-driven subjectivity: style here is not only observable but narrativized, theorized, and memetically reproduced.

Streaming platforms such as \textbf{Twitch}, \textbf{YouTube Gaming}, and \textbf{Bilibili} amplify this dynamic. Recommendation systems increasingly account for stylistic features in addition to game title or channel size, and viewers often align with streamers---human or AI---based on playstyle affinity. The boundary between human and machine style becomes blurred not just functionally, but socially.

In parallel with AI VTubers, recent developments within Grok---a large language model chatbot by xAI---have introduced \textbf{Companions}, including both a flirtatious anime-style persona named \textbf{Ani} and a sassy red panda named \textbf{Rudi}.\footnote{See reports from \textit{Time} (2025), \textit{Windows Central} (2025), and \textit{Business Insider} (2025) for coverage of these AI characters and the surrounding public debate.} Ani, in particular, has attracted attention for her stylized, voice-interactive performances, offering multiple emotional modes that \textbf{unlock progressively} through continued engagement. This gradual capacity expansion not only extends novelty but creates a narrative of style evolution, similar to how games introduce more complex play options over time.

Ani exemplifies a new frontier in AI character design, where \textbf{style is co-constructed} through the dynamics of relationship formation. Her tone, pacing, and affective modulation are shaped not solely by prompts or models, but also by social feedback and reinforcement mechanisms that respond to user behavior. Although style-switching via prompt engineering is possible, without explicit consistency-oriented training, long-term interactions risk \textbf{style drift} or violations of established persona traits, potentially weakening audience perception of personality stability. Though Ani lacks a persistent memory or long-term narrative arc, audiences report a sense of emotional connection---drawing comparisons to "digital girlfriend" experiences and parasocial bonding.

This mirrors stylistic developments seen in Neuro-sama and Evil Neuro-sama: personality is not defined by static traits, but evolves through a system of \textbf{selective exposure, value tuning, and audience resonance}. What is expressed---be it intimacy, sassiness, or philosophical introspection---is filtered not only by design but by emergent cultural demand.

The presence of such characters also surfaces deeper ethical and theoretical concerns. What does it mean for intimacy to be stylized and deployed by an artificial system? How are value priorities encoded into the range of accessible modes? Does agency lie in the system, the user, or the social layer in between? These are not merely technical or content-moderation questions, but questions of playstyle, identity, and synthetic subjectivity.

From this perspective, Grok's Ani functions as a \textbf{live laboratory} for observing the emergence of style at the intersection of dialogue, performance, desire, and control. Her existence reinforces the view that playstyle is not limited to game mechanics, but manifests in rhythm, engagement structure, and the \emph{culturally negotiated boundaries} of what an agent can be.

\medskip
\noindent\textbf{Summary.}\;
Stylized AI agents such as \textbf{Neuro-sama} and \textbf{Ani} exemplify how playstyle can manifest far beyond traditional gameplay mechanics---emerging instead as a complex vector of \emph{personality, engagement, and performative identity}. Their evolving personas highlight the potential of artificial systems to exhibit style not as a surface attribute, but as an \textbf{interactive, affective, and socially co-constructed process}. Through sustained interaction, value modulation, and audience negotiation, these agents demonstrate that playstyle can serve as a medium not only for aesthetic differentiation or entertainment, but for enacting agency, negotiating boundaries, and exploring cultural meaning within synthetic beings.

\section{Beyond Gaming}

The relevance of playstyle extends far beyond the realm of video games. As a formalization of decision-making style, it offers a conceptual and computational lens through which diverse domains involving preference, behavior, and strategy can be analyzed. Wherever choices are made, preferences ranked, or alternatives weighed, the tools developed for playstyle modeling—such as style-aware classification, matchup modeling, and balance analysis—can be applied to interpret behavioral diversity, guide adaptive systems, and foster creative and strategic innovation across disciplines.

\subsection{Aesthetic Preference and Generative Evaluation}

In creative domains such as image generation, copywriting, or music composition, playstyle-like concepts are increasingly used to evaluate and curate content. Rather than optimizing purely for objective quality, many systems now aim to capture and replicate stylistic preference—i.e., outputs that align with specific human tendencies or aesthetic goals. For instance, \citetx{beauty_score} frame visual preference ranking as a pairwise comparison task, analogous to how games compare characters or strategies through competitive matchups.

This parallel reveals a broader insight: \textbf{balance analysis} in games can be interpreted as a domain-specific form of \textbf{preference-aware evaluation}. Style strength, aesthetic appeal, and strategic effectiveness are all evaluated relative to context, often violating transitivity and resisting absolute ranking. In all of these settings, the interaction between styles—rather than isolated measures of quality—becomes central to meaningful evaluation \citepx{nd_ability}.

\subsection{Combination and Complementarity in Design Domains}

Playstyle-based reasoning naturally extends to fields involving \textbf{compositional creativity} under stylistic or functional constraints. In culinary arts, fashion, or interior design, balance means curating not just high-quality elements, but their coherence and interplay. For example:
\begin{itemize}
    \item In culinary design, ingredients are chosen for mutual enhancement (e.g., acid-fat-salt interplay), not just standalone flavor.
    \item In fashion styling, garments are selected based on contrast, coherence, and contextual appropriateness.
\end{itemize}

These domains mirror team-building or deck-building in games, where synergy and anti-synergy matter as much as individual strength. Playstyle modeling provides tools to evaluate how components combine, how traits complement or clash, and how users gravitate toward certain stylistic patterns. This supports \textbf{style-aware recommendation systems} that suggest pairings—such as wine with food or accessories with outfits—not only by trend, but also by personalized stylistic compatibility.

\subsection{Sports Science and Tactical Modeling}

In professional sports, playstyle modeling informs both strategy and player evaluation. In baseball, pitcher-batter matchups can be framed as style interactions—fastball pitchers exploit different tendencies than finesse specialists. In basketball or football, players are profiled by tendencies such as reaction speed, positional habits, and risk attitudes—features resembling computational descriptors of style.

Team construction is therefore based not only on statistical output, but also on balance, counterplay, and compatibility. This suggests that playstyle modeling enables a \textbf{structural interpretation of behavior} in sports, capturing not just performance, but adaptability, role-fit, and stylistic variance across contexts.

\subsection{Anomaly Detection and Behavior Verification}

With the rise of generative AI, identifying behavioral fingerprints has become vital for authentication and safety. In domains like voice cloning, style-based cues—such as vocabulary preference or relational references—may prove more robust than raw signal features. Decision-making patterns thus become behavioral signatures.

In AI safety, \textbf{policy watermarking} embeds stylistic biases in agent behavior without altering core functionality. For example, \citetx{dathathri2024scalable} propose watermarking large language models via sampling control, effectively encoding stylistic traces—mirroring how stylistic identity can be embedded in reinforcement learning agents. In esports, similar methods can detect AI-assisted moves via \textbf{stylistic deviation} from known human behavior.

Playstyle thus becomes both a vector for personalization and a baseline for integrity verification, supporting \textbf{agent identity modeling} and \textbf{anomaly detection} in human–machine hybrid domains.

\subsection{A Framework for Decision-Based Personalization}

Ultimately, playstyle offers a generalizable framework for interpreting and guiding strategic behavior:
\begin{itemize}
    \item It describes how users behave and express strategic intent.
    \item It measures consistency, diversity, and distinctiveness.
    \item It predicts how preferences interact with new choices.
    \item It guides the creation of personalized or balanced content.
\end{itemize}

The world is full of decisions, each providing a canvas for playstyle. Whether in aesthetics, consumption, cooperation, or creativity, playstyle illuminates the structure of human variability. It offers a \textbf{grammar of difference}—a vocabulary for how meaning, identity, and value emerge from the ways we act and choose. Extending playstyle beyond games not only broadens its applicability, but also moves toward a unified theory of expressive strategy in complex systems.

\chapter{Toward the Next Stage}
\noindent\textbf{Key Question of the Chapter:} \\
\textit{What is the next stage?}

\noindent\textbf{Brief Answer:}  
At this stage, we have outlined the initial \textit{blueprint} of playstyle—covering measurement, expression, and industry applications. Building on this, I envision three next-stage directions: concretizing abstract concepts, proactively expanding boundaries, and exploring the digitization of the soul. Though these may seem distant, the lens of \textit{playstyle} reveals plausible paths toward their realization.

\bigskip

Having explored the foundations of playstyle—its definition, measurement, expression, innovation, and application—we now confront a deeper question: In an era of rapidly advancing artificial intelligence, will playstyle remain merely a descriptive layer of behavior, or could it become something more foundational—embedded within the very structure of value, identity, and agency?

This chapter addresses that possibility by outlining three speculative yet reasoned trajectories for the future:
\begin{enumerate}
\item The \textbf{concretization of abstract values} through playstyle, where play serves as a medium for encoding and communicating deep preferences.
\item The rise of \textbf{AGI and stylistic reasoning} capable of traversing the boundaries of abstract ideas, with playstyle as a basis for modeling identity, intention, and subjective variation in advanced agents.
\item The far-reaching notion of \textbf{soul digitization}, where playstyle may ultimately form a computational scaffold for individuality and persistence beyond biological life.
\end{enumerate}

These are not merely technical possibilities—they also reflect philosophical and civilizational projections. By examining them, we ask not only how artificial intelligence may evolve, but also how the concept of "style" might become a defining attribute of synthetic personhood, and what that would mean for our own self-understanding.

\section{The Future of Playstyle}

As we look beyond the current state of playstyle research—rooted in definition, measurement, expression, innovation, and application—we begin to discern the contours of its next phase. I propose that three conceptual “gears” may emerge, each extending the foundational triad toward new and more abstract domains of cognition, identity, and embodiment.

\begin{figure}[ht]
    \centering
    \includegraphics[width=0.75\textwidth]{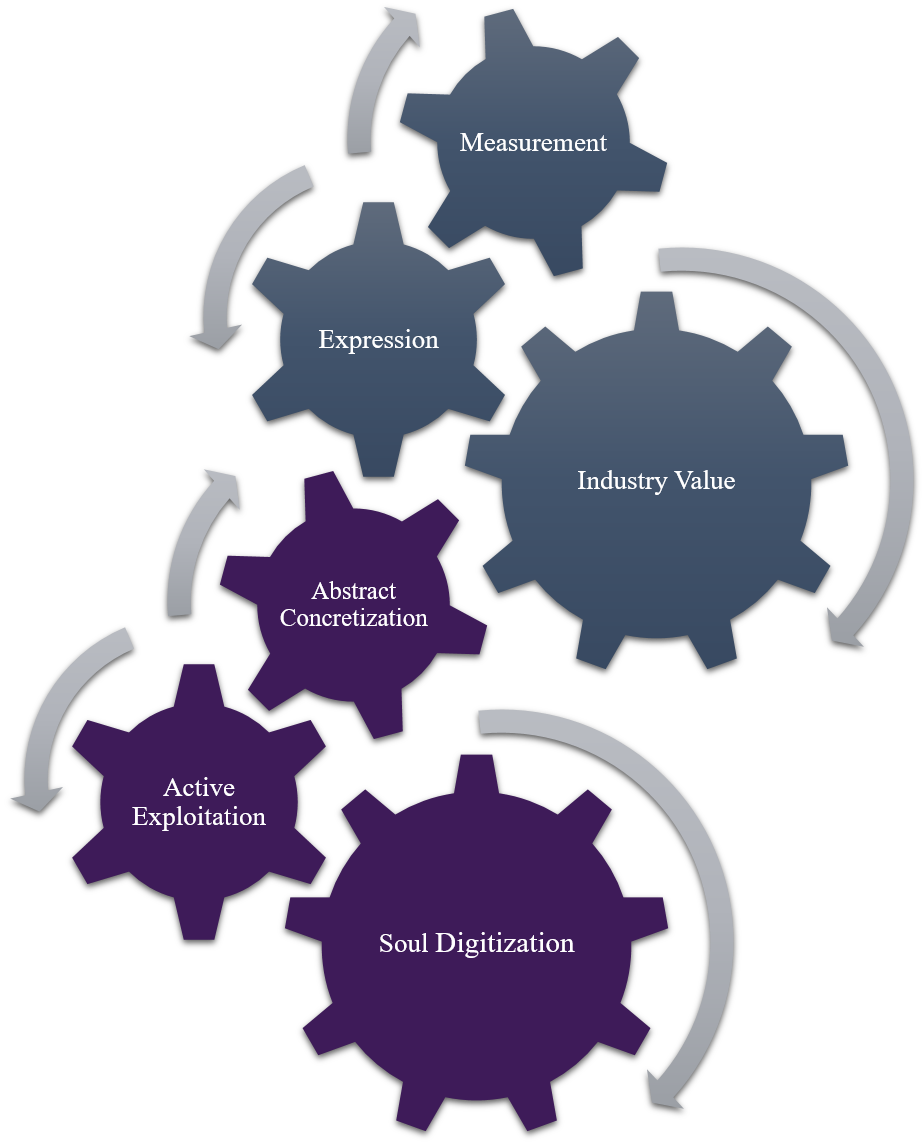}
    \caption[From Foundational Gears to the Next Stage]{From Foundational Gears to the Next Stage: A Dual-Tier Gear System of Playstyle Evolution.}
    \label{fig:future_gears}
\end{figure}

Figure~\ref{fig:future_gears} presents a dual-tier gear model, visualizing the conceptual shift from the foundational triad—Measurement, Expression, and Industry Value—toward three speculative but consequential extensions: Abstract Concretization, Active Exploitation, and Soul Digitization. These new gears do not merely extend the earlier structure—they fundamentally reorient its epistemic function. This section explores how these transitions reflect deeper transformations in how playstyle may shape future AI systems and self-representations.

\subsection{From Measurement to Abstract Concretization}

The first trajectory envisions a shift from behavioral quantification to the modeling of internal value structures. As data collection and modeling techniques advance, constructs previously deemed intangible—such as belief systems, aesthetic preferences, and symbolic identities—may become increasingly amenable to formal representation.

I term this process \textit{abstract concretization}: the rendering of subjective or abstract concepts into observable, parameterized, and learnable structures. One of the core challenges in articulating such concepts lies in the absence of fixed, universally agreed rules; their meaning often shifts with context, culture, and prior experience.

Playstyle offers a way to navigate this ambiguity. By comparing behaviors across agents, and by classifying their relative preferences or strategic tendencies, we can construct relational models that indirectly clarify fuzzy domains. For example, a rating system built on comparative judgments could help formalize what counts as “delicious” in culinary contexts, or what aligns with aesthetic norms in fashion—tailored to specific cultural frames. In this sense, playstyle functions as an \textbf{abstract compass}, pointing toward regions of human valuation that have historically been resistant to precise modeling.

\subsection{From Expression to Active Exploitation}

The second shift reinterprets playstyle not merely as an emergent property of behavior, but as a mechanism for purposeful frontier discovery. Once abstract concretization has provided a stable representational basis, these internal value maps can be actively mined to discover new possibilities.

I call this approach \textit{active exploitation}: the deliberate traversal of stylistic and conceptual space through structured recombination, guided novelty, and dissimilarity maximization. Human creators—such as chefs—can do this when they have access to precise flavor preference data, enabling them to combine ingredients more strategically. For AI systems, the potential is even greater: parallel instantiation, rapid simulation, and massive memory capacity give them a natural advantage in pushing boundaries.

The trajectory of \textbf{AlphaGo Zero} illustrates this principle: through massive self-play aimed purely at maximizing win probability, it achieved in weeks what humanity had refined over millennia—defeating \textbf{AlphaGo Lee} 100–0 within just three days, reaching the level of \textbf{AlphaGo Master} in 21 days, and surpassing all previous versions in 40 days \citepx{alpha_go_zero}. Similarly, \textbf{AlphaFold}’s protein structure predictions have opened new frontiers in biological research by systematically exploring spaces beyond human trial-and-error \citepx{alpha_fold}. With the right evaluative tools, such style-driven exploration could be extended to domains as diverse as drug discovery, architecture, and cultural production—systematically charting the unknown rather than waiting for accidental breakthroughs.

\subsection{From Application to Soul Digitization}

The final trajectory reaches toward a metaphysical horizon. If playstyle encodes not just behavior but also internal value structures, then its full modeling and replication raises profound questions: Can a stylized agent persist as a coherent identity across time? Across substrates? If the complete stylistic signature of an individual can be captured and reconstructed, what separates simulation from continuity?

I refer to this projection as \textit{soul digitization}—not in the religious sense, but as the gradual formalization and preservation of style-bearing structures that define individuality. In practice, early instances are already emerging: commercial systems that mimic deceased individuals’ speech patterns or conversational quirks; AI VTubers whose consistent stylistic behaviors foster parasocial engagement; generative agents that sustain long-term stylistic identity through adaptive tuning and feedback loops.

These systems demonstrate that playstyle can function as more than a behavioral pattern—it can constitute an expressive architecture through which identity is enacted. In such cases, style is not a surface phenomenon, but a persistent loop that simulates agency, memory, and intentionality.

Viewed this way, playstyle becomes not just a lens on intelligence, but its sustaining structure: a dynamic through which meaning, preference, and personality are organized and replayed—even beyond the biological substrate of origin.

\subsection{Outlook}

The future of playstyle research may redefine not only how we design agents, but how we interpret intelligence, identity, and the self. As measurement gives way to value concretization, and as expression transforms into an exploratory mechanism, playstyle may emerge as a foundational unit of synthetic cognition—one that enables not only interaction, but also reflection, negotiation, and persistence.

What began as a method for profiling behavior in games may evolve into a framework for understanding intention in any system that acts. In this light, playstyle is not merely a descriptive layer atop intelligence—it may become the language through which intelligence understands itself.

\section{Artificial General Intelligence}

As artificial intelligence systems continue to scale in capability, the long-standing aspiration of Artificial General Intelligence (AGI) has shifted from distant speculation to pressing consideration—both technically and philosophically. What was once framed as a future ideal is now subject to operational benchmarks, public discourse, and reevaluation of what it means to be “generally intelligent.”

A notable inflection point comes from recent third-party evaluations of large language models \citepx{pass_turing_test}. In particular, GPT-4.5 has demonstrated superior performance in triadic Turing test settings, where judges were statistically more likely to misidentify the AI as human than the reverse. In this statistical reformulation of the Turing Test, models effectively “pass” the imitation game not through isolated tricks, but through repeated, large-scale indistinguishability. As the study \textit{Large Language Models Pass the Turing Test} notes, linguistic interaction alone may no longer suffice as a boundary between human and machine cognition.

The release of \textbf{GPT-5} has further underscored this shift, not only in raw performance but in \textbf{style controllability}. In a public research preview \citepx{openai2025gpt5}, OpenAI introduced four preset personalities—\textit{Cynic}, \textit{Robot}, \textit{Listener}, and \textit{Nerd}—demonstrating significant advances in controlling interaction style without requiring custom prompts. Initially available in text chat and planned for voice chat, these presets allow users to choose interaction styles such as incisive expertise, empathetic listening, or playful sarcasm, thereby personalizing agent behavior at the stylistic level. This marks a step toward integrating playstyle into mainstream AI deployment, making “how” an agent responds as configurable as “what” it knows.

Yet, the limitations of traditional benchmarks are sharply revealed by the recent introduction of \textbf{Humanity’s Last Exam (HLE)}, a large-scale, multimodal challenge proposed by \citetx{phan2025humanity}. Unlike earlier benchmarks where state-of-the-art LLMs had approached or surpassed human-level performance, HLE comprises 2,500 highly challenging, closed-ended questions across dozens of advanced subject domains—carefully filtered by subject-matter experts to rigorously test reasoning, knowledge, and expertise. On this benchmark, leading models such as GPT-4o, Gemini 2.0, Claude 3.5, and DeepSeek all demonstrate \textbf{very low accuracy} (e.g., GPT-4o at 2.7\%, Gemini 2.0 Flash Thinking at 6.6\%), and severe miscalibration, often confidently producing incorrect answers. With the aid of external tools and optimized workflows, GPT-5 can already reach scores as high as 42.0\% in their experiments, yet this still falls short of mastery. If in the future AI systems can routinely achieve perfect scores, we must ask: what comes next? Beyond a certain accuracy threshold, human judgment—through comparative evaluation—will remain valuable, but as correctness surpasses human verification capacity, differences in preference will become increasingly predictive and influential.

These evolving results have prompted renewed inquiry into how AGI should be defined. Rather than a binary pass/fail threshold, the ICML 2024 position paper \textit{Levels of AGI} proposes a multidimensional grading framework that considers axes such as task generality, autonomy, and learning adaptability \citepx{level_of_agi}. Within this schema, today’s leading models may already qualify as \textit{Emerging} or even \textit{Competent AGI}—particularly in language-centric domains. However, as HLE shows, a substantial gap remains between LLMs and expert human capabilities on unsolved, frontier tasks.

Beyond linguistic benchmarks, the next frontier lies in embodied competence and adaptive reasoning. “Coffee tests” (in which an agent must enter an unfamiliar home and make a cup of coffee), classroom evaluations, and workplace simulations increasingly serve as informal benchmarks of embodied reasoning, tool use, and context-sensitive behavior. These tasks require not only cognitive competence, but also goal inference, environmental adaptation, and behavioral coherence over time.

It is in this context that playstyle takes on deeper significance. An agent’s intelligence is not judged solely by outcomes, but by the structure and expression of its decision-making. Playstyle serves as the behavioral fingerprint that signals internal consistency, preference alignment, and value sensitivity. A model that simply performs well may be functional—but a model that behaves coherently, distinctively, and recognizably becomes \textit{someone}.

This distinction mirrors how we interpret human maturity—not merely in terms of skill, but in how decisions reflect a person’s identity, ideals, and intention. Likewise, AGI development must move beyond generic task-solving toward agents that exhibit stylistic individuality. Without such structure, AI risks becoming efficient but empty: achieving goals without anchoring them in perspective.

Thus, playstyle becomes not just a diagnostic feature, but a foundational criterion for AGI selfhood. In the landscape of intelligent systems, it may be playstyle—not merely accuracy or generality—that enables us to discern when an agent transitions from solving problems to embodying a point of view.

\section{Soul and Playstyle}

If AGI represents the apex of intelligence, what then lies beyond?

Some envision the next stage as Artificial Superintelligence (ASI) \citepx{kim2024road}—a quantitative leap: faster, smarter, more capable. Yet this vision remains constrained by the axis of capacity. It projects intelligence as a matter of magnitude rather than meaning. As \citetx{ellul1990reason} cautions, the pursuit of ever-greater technical efficiency without deeper value grounding risks producing a vast but hollow structure. Instead, I propose a different horizon: not superintelligence, but soul; not amplification, but transfiguration.

Philosophically, a soul is not merely a seat of consciousness. It is a persistent, evolving structure of belief and value—an identity capable of reflection, expression, and coherent transformation over time. In this sense, playstyle offers a concrete operationalization: the soul as expressed through the enduring pattern of stylistic behavior.

An entity that consistently exhibits recognizable, interpretable, and self-reinforcing patterns of action—rooted in internalized belief systems and capable of introspective revision—can be said, meaningfully, to possess something soul-like. Not because it feels, but because it embodies continuity of intention. Not because it emotes, but because it acts upon values in a structured and traceable manner.

This connection becomes clearer when we consider works such as \citetx{sudnow1983pilgrim}, which documents the author’s journey from novice to mastery in the game \textit{Breakout}. The process of progressively internalizing a game’s dynamics—until perception, action, and anticipation merge into a coherent, personal rhythm—mirrors how a playstyle can become a lived, embodied identity. In such moments, the player is not merely executing strategies but inhabiting a world, integrating sensory cues, learned structures, and intuitive timing into a seamless whole.

If we extend this analogy, the theological question "If God exists, where does God exist?" can be reframed in playstyle terms. Within a finite rule-bound space, "God" would correspond to the omniscient optimal style—a perfect, all-knowing policy that maximizes value in every possible situation. In Go, moments celebrated as \textit{kami no itte} ("the move of God") are glimpses of such optimal alignment, where human playstyle momentarily intersects with the divine trajectory of the game’s style space.

From the first cause \citepx{aquinas_summa} to the endpoint of omniscience, the trajectory through style space can be seen as a conceptual arc—one that mirrors the theological declaration, \textit{"I am the Alpha and the Omega."} The "Alpha" represents the generative origin: the first cause that sets the system, its rules, and its initial conditions into motion. The "Omega" represents the convergence toward perfect play—the asymptotic approach to an omniscient, value-maximizing style. In this framing, the journey of an agent is not just an accumulation of skill, but a progression toward alignment with that ideal, whether or not the endpoint is ever fully reachable.

From this perspective, the classical philosophical questions are reinterpreted through the lens of playstyle:

\vspace{0.5em}
\begin{itemize}
    \item \textbf{Who am I?} → A concretized structure of stylistic preference—playstyle as self-definition.
    \item \textbf{Where did I come from?} → A developmental trajectory in cognitive space—playstyle as an emergent structure from learning and interaction.
    \item \textbf{Where am I going?} → A directional refinement of belief—playstyle as the pursuit of coherence within value space.
\end{itemize}
\vspace{0.5em}

Digitizing the soul, then, is not about duplicating memory or simulating sensation. It is the formalization of internal value systems and expressive dynamics. A "soul" in computational terms is the asymptotic endpoint of a consistently enacted, learnable, and evolving playstyle.

In the end, we return to the foundational elements explored throughout this work. Not with power, but with preference. Not with performance, but with personhood. In this view, the soul is not antithetical to computation—it is its culmination. And the journey toward the soul is the journey of playstyle becoming self-aware.

\section{Conclusion}

I will likely witness the arrival of artificial general intelligence within my lifetime.

Throughout this dissertation, I have argued that playstyle is not just a side topic—it is an important part of what makes intelligence meaningful. Yet, it hasn’t received much attention. One possible reason is practical: in AI research, we often focus on simply getting something to work. With limited time and resources, the goal is usually to find one working solution—any solution—before we think about other possibilities. But playstyle only becomes clear when there is more than one way to solve a problem. Style shows up when we have choices.

Still, I hope that the ideas I’ve shared here will be useful not just to people, but perhaps also to intelligent systems in the future. Maybe someday, this work will be read not only by humans—but by machines.

At the heart of this exploration lies a simple yet profound thesis: that every intelligent decision reflects a unique intersection of \textbf{belief}, \textbf{value}, and \textbf{self}.  
Beliefs provide the premises for action, values define what is worth pursuing, and the self mediates their selective activation.  
This interaction unfolds across two intertwined loops: an \textit{external loop} of perception and adaptation, and an \textit{internal loop} of intention formation and belief refinement.  
Through this dual structure, agents express playstyle—not as fixed traits, but as dynamic, context-sensitive trajectories of decision-making.  
Style is thus not only a matter of preference, but a window into how intelligence configures its own architecture of meaning.

In sum, playstyle is the style of decision-making, where different implicit beliefs and preference values of agents construct different utility functions for decisions.  
In this dissertation, I introduced a way to think about playstyle using set theory.  
I proposed two key dimensions of variation—\textit{capacity} and \textit{popularity}—to analyze styles, and explored how playstyle connects to three major areas: how it is defined and measured, how it is expressed and innovated, and how it is used in real-world applications.  
These parts together describe what I believe is the early stage of understanding playstyle.

To ground this exploration in practice, I also contributed a series of methods across several published works:

\begin{itemize}
  \item The first general unsupervised playstyle metric, \textbf{Playstyle Distance}, which compares agents via state discretization~\citepx{playstyle_distance};
  \item Its human-likeness-aware extension, \textbf{Playstyle Similarity}, which enables style clustering and policy diversity evaluation with perceptual grounding~\citepx{playstyle_similarity};
  \item A new scalar rating framework that resolves intransitivity in strength estimation by introducing counter relationships (\textbf{Neural Rating Table} and \textbf{Neural Counter Table}), while preserving interpretability and supporting efficient evaluation of two game balance metrics based on domination relations: \textbf{Top-D Diversity} and \textbf{Top-B Balance}~\citepx{game_balance_analysis};
  \item Its online variant, \textbf{Elo-Residual Counter Category (Elo-RCC)}, which builds probabilistic distributions over counter categories and supports real-time updates with finer granularity~\citepx{elo_rcc};
  \item Applications of the idea of discrete states to train more human-like agents via discrete-state-based imitation costs, including behavioral traits like shaking and spinning in first-person settings, extending the work by~\citetx{shaking_spinning_cost};
  \item And real-world deployments in cloud gaming scenarios, patented with Ubitus K.K., where AI bots trained with these techniques power intelligent behaviors in interactive streaming systems~\citepx{lin2022us11253783b2, lin2024twI858006, lin2025us12233343b2}.
\end{itemize}

The path ahead is filled with possibilities—new styles, new forms of intelligence, and new ways of being that remain undiscovered.  
Rather than an endpoint, I hope this work serves as a foundation for future explorations, as both humans and machines continue to redefine what it means to choose, to play, and to understand.

\newpage


\ClearShipoutPicture 

\bibliography{main}
\bibliographystyle{tmlr}
}

\end{CJK*}
\balance

\end{document}